\def\eps{\epsilon}
\def\to{\rightarrow}
\def\cale{{\cal E}}
\newcommand{\erfc}{\mbox{\text erfc}}
\newcommand{\prob}[2][]{\text{\bf Pr}\ifthenelse{\not\equal{}{#1}}{_{#1}}{}\!\left[#2\right]}
\newcommand{\expect}[2][]{\text{\bf E}\ifthenelse{\not\equal{}{#1}}{_{#1}}{}\!\left[#2\right]}
\newtheorem{theorem}{Theorem}[section]
\newtheorem{lemma}[theorem]{Lemma}
\newtheorem{proposition}[theorem]{Proposition}
\newtheorem{corollary}[theorem]{Corollary}
\newtheorem{claim}[theorem]{Claim}
\newtheorem{definition}[theorem]{Definition}
\newtheorem{fact}[theorem]{Fact}
\newcommand{\ignore}[1]{}
\newcommand{\bg}[1]{\medskip\noindent{\bf #1}}
\definecolor{Red}{rgb}{1,0,0}
\newcommand{\oldbound}[1]{{}}
\renewcommand{\epsilon}{\varepsilon}
\DeclareMathOperator{\R}{\mathbb{R}}
\DeclareMathOperator{\Z}{\mathbb{Z}}
\DeclareMathOperator*{\E}{\mathbb{E}}
\DeclareMathOperator{\poly}{poly}
\DeclareMathOperator{\polylog}{polylog}
\DeclareMathOperator{\normal}{\mathcal{N}}
\renewcommand{\[ }{\begin{eqnarray*}}
\renewcommand{\]}{\end{eqnarray*}}
\definecolor{darkpastelred}{rgb}{0.76, 0.23, 0.13}
\algnewcommand\INPUT{\item[{\textbf {input:}}]}
\algnewcommand\OUTPUT{\item[{\textbf{output:}}]}
\def\shownotes{1}  
\newcommand{\authnote}[2]{$\ll$\textsf{ #1 notes: #2}$\gg$}
\newcommand{\authnote}[2]{}
\newcommand{\jnote}[1]{{\color{red}\authnote{Jason}{#1}}}
\def\colorful{0}
\newcommand{\eqdef}{\stackrel{{\mathrm {\footnotesize def}}}{=}}
\newcommand{\new}[1]{{\color{red} #1}}
\newcommand{\new}[1]{{#1}}
\title{Efficient Algorithms and Lower Bounds \\
for Robust Linear Regression}
\author{
Ilias Diakonikolas\thanks{Supported by NSF Award CCF-1652862 (CAREER) and a Sloan Research Fellowship.}\\
University of Southern California\\
{\tt diakonik@usc.edu}\\
\and
Weihao Kong\thanks{Weihao's contributions were supported in part by NSF Award CCF-1704417 and by ONR Award N00014-17-1-2562. Some of this work was done while visiting USC.}\\
Stanford University\\
{\tt whkong@stanford.edu}\\
\and
Alistair Stewart\\ University of Southern California\\
{\tt stewart.al@gmail.com}
}
\begin{document}
\maketitle

\setcounter{page}{0}

\thispagestyle{empty}

\begin{abstract}
We study the prototypical problem of high-dimensional linear regression in a robust model where an $\eps$-fraction
of the samples can be adversarially corrupted. We focus on the fundamental setting where the covariates
of the uncorrupted samples are drawn from a Gaussian distribution $\mathcal{N}(0, \Sigma)$ on $\R^d$. 
We give nearly tight upper bounds and computational lower bounds for this problem.
Specifically, our main contributions are as follows:
\begin{itemize}
\item For the case that the covariance matrix is known to be the identity, 
we give a sample near-optimal and computationally efficient algorithm that draws $\tilde{O}(d/\eps^2)$ 
labeled examples and outputs a candidate hypothesis vector $\widehat{\beta}$ that approximates the unknown regression vector 
$\beta$ within $\ell_2$-norm $O(\eps \log(1/\eps) \sigma)$, where $\sigma$ is the standard deviation
of the random observation noise. An error of $\Omega (\eps \sigma)$ is information-theoretically
necessary, even with infinite sample size. Hence, the error guarantee of our algorithm is
optimal, up to a logarithmic factor in $1/\eps$. 
Prior work gave an algorithm for this problem with sample complexity $\tilde{\Omega}(d^2/\eps^2)$ 
whose error guarantee scales with the $\ell_2$-norm of $\beta$.

\item For the case of unknown covariance $\Sigma$,
we show that we can efficiently achieve the same error guarantee of $O(\eps \log(1/\eps) \sigma)$, 
as in the known covariance case, using an additional $\tilde{O}(d^2/\eps^2)$ unlabeled examples. 
On the other hand, an error of $O(\eps \sigma)$ can be information-theoretically attained with $O(d/\eps^2)$ samples.
We prove a Statistical Query (SQ) lower bound providing evidence that this quadratic 
tradeoff in the sample size is inherent. More specifically, we show that any polynomial time 
SQ learning algorithm for robust linear regression (in Huber's contamination model)
with estimation complexity $O(d^{2-c})$,  where $c>0$ is an arbitrarily small constant, 
must incur an error of $\Omega(\sqrt{\eps} \sigma)$.
\end{itemize}

\end{abstract}

\newpage

\section{Introduction}

\subsection{Background and Problem Definition}
Linear regression is a prototypical problem in statistics with a range of applications
in signal processing (e.g., face recognition, time series analysis) and various other data analysis tasks.
The reader is referred to~\cite{Rousseeuw:1987, BhatiaJK15} and references therein. In the realizable case,
linear regression is well-understood. Here we study the problem in a robust model 
where an $\eps$-fraction of the samples are adversarially corrupted. We explore the tradeoff
between sample complexity, computational complexity, and robustness in high-dimensional linear regression,
obtaining both efficient algorithms and nearly matching computational-statistical-robustness tradeoffs.

Estimation in the presence of outliers is an important goal in 
statistics and has been systematically studied within the robust statistics community 
since \cite{Huber64}. Nevertheless, until recently, all known efficient 
estimators could only tolerate a negligible fraction of outliers in high-dimensional settings, 
even for the simplest statistical tasks. Recent work in the theoretical
computer science community~\cite{KLS09,ABL14, DKKLMS16, LaiRV16} gave the first efficient robust
estimators for basic high-dimensional statistical tasks, including learning linear separators, 
and mean and covariance estimation. Since the dissemination of~\cite{DKKLMS16, LaiRV16}, 
there has been a flurry of research activity on robust learning (see Section~\ref{sec:related-work} for a discussion).

In the remaining of this section, we describe our formal setup.
In the realizable setting, the problem of linear regression is defined as follows: We observe a multiset of 
labeled samples $(X_i, y_i)$, where $X_i \in \R^d$ and $y_i \in \R$.
It is assumed that there exists an unknown distribution $D \in \mathcal{D}$, where $\mathcal{D}$ is a known family 
of distributions over $\R^d$, such that $X_i \sim D$. Moreover, there exists an unknown vector $\beta \in \R^d$ 
such that $$y_i = \beta \cdot X_i + \eta_i \;,$$
where $\eta_i$ is some kind of random observation noise. The goal is to compute a hypothesis vector $\widehat{\beta}$
such that $\|\widehat{\beta} - \beta\|_2$ is small.
In this work, we study the fundamental setting that $\mathcal{D}$ is $\normal(0, \Sigma)$, 
where the covariance matrix $\Sigma$ is either a priori known or unknown to the algorithm. 
For simplicity, we also assume that $\eta_i \sim \normal(0,\sigma^2)$ and is independent of $X_i$. 

We consider the following model of robust estimation
that generalizes other existing models, including Huber's contamination model:
\begin{definition} \label{def:adv}
Given $\eps > 0$ and a family of probabilistic models $\mathcal{M}$, 
the \emph{adversary} operates as follows: The algorithm specifies some number of samples $m$.
The adversary generates $m$ samples $X_1, X_2, \ldots, X_m$ from some (unknown) $M \in \mathcal{M}$.
The adversary is allowed to inspect the samples, removes $\eps m$ of them, 
and replaces them with arbitrary points. This set of $m$ points is then given to the algorithm.
We say that a set of samples is $\eps$-corrupted 
if it is generated by the aforementioned process.
\end{definition}

In summary, the adversary is allowed to inspect the samples before corrupting them,
both by adding corrupted points and deleting uncorrupted points. In contrast, in Huber's model
the adversary is oblivious to the samples and is only allowed to add corrupted points.

In the context of robust linear regression studied in this paper, 
the adversary can change an arbitrary $\eps$-fraction of the labeled
samples $(X_i, y_i)$ that satisfy the aforementioned definition of linear regression.
The goal is to output a hypothesis vector $\widehat{\beta}$ such that 
$\|\widehat{\beta} - \beta\|_2$ is as small as possible.

\subsection{Our Results and Techniques} \label{ssec:results}

\subsubsection{Robust Learning Algorithms}

We state our positive results for the case of known and identity covariance matrix, $\Sigma = I$.
Our first positive result is a robust learning algorithm for linear regression
that has near-optimal sample complexity, runs in polynomial time, and achieves
an error guarantee that scales with the $\ell_2$-norm of the target regression vector:

\begin{theorem}[Basic Algorithm for Robust Linear Regression] \label{thm:alg-lr-gaussian}
Let $S'$ be an $\eps$-corrupted set of labeled samples of size $\Omega((d/\eps^2) \polylog(\frac{d}{\eps \tau}))$.
There exists an efficient algorithm that on input $S'$ and $\eps>0$, returns a candidate vector $\widehat{\beta}$
such that with probability at least $1-\tau$ it holds $\|\widehat{\beta} - \beta\|_2  = O(\sigma_y \eps \log(1/\eps))$,
where $\sigma_y=\sqrt{\sigma^2 +\|\beta\|_2^2}$).
\end{theorem}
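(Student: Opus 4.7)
My plan is to reduce robust linear regression to robust mean estimation and invoke a filter-based estimator in the spirit of~\cite{DKKLMS16}. Define $Z \eqdef y X \in \R^d$. Since $X \sim \mathcal{N}(0,I)$ and $y = \beta\cdot X + \eta$ with $\eta \sim \mathcal{N}(0,\sigma^2)$ independent of $X$, a direct computation gives $\E[Z] = \E[(\beta\cdot X)X] = \beta$. An $\eps$-corruption of the labeled pairs $(X_i,y_i)$ induces an $\eps$-corruption of the $Z_i = y_i X_i$, so it suffices to robustly estimate $\E[Z]$ to $\ell_2$-error $O(\eps\log(1/\eps)\sigma_y)$ from $\tilde{O}(d/\eps^2)$ corrupted samples of $Z$.

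\textbf{Moment structure.} By Isserlis' theorem, $\E[(\beta\cdot X)^2 XX^T] = 2\beta\beta^T + \|\beta\|_2^2 I$, whence
\[
\Cov(Z) \;=\; \beta\beta^T + (\|\beta\|_2^2+\sigma^2)\,I \;\preceq\; 2\sigma_y^2\,I \;.
\]
Moreover each unit projection $v\cdot Z = y\,(v\cdot X)$ is a degree-$2$ polynomial of a $(d{+}1)$-dimensional standard Gaussian, hence sub-exponential with parameter $O(\sigma_y)$ and with higher moments $O_k(\sigma_y^k)$. These tail bounds are precisely what the filter analysis needs to obtain the sharper rate $O(\eps\log(1/\eps))$ rather than $O(\sqrt{\eps})$, via truncation at the $\log(1/\eps)$-quantile.

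\textbf{Stability and filtering.} I then establish the deterministic stability condition from~\cite{DKKLMS16}: with probability $\ge 1-\tau$, an i.i.d.\ sample of size $n = \tilde{\Omega}(d/\eps^2)$ of $Z$ satisfies, for every subset of size $(1-\eps)n$, that the empirical mean is within $O(\eps\log(1/\eps)\sigma_y)$ of $\beta$ and the empirical covariance is within $O(\eps\log^2(1/\eps)\sigma_y^2)$ of $\Cov(Z)$ in operator norm. Mean stability follows from Bernstein applied to the sub-exponential projections $v\cdot Z$ together with truncation at the $\log(1/\eps)$-quantile, unioned over a fine net of unit directions; given stability, the standard filter---iteratively test the top eigenvalue of the empirical covariance and downweight samples with extreme projections along its top eigenvector---returns $\widehat\beta$ with the required error bound.

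\textbf{Main obstacle.} The delicate point is covariance concentration: for a generic sub-exponential random vector in $\R^d$ one needs $\tilde{\Omega}(d^2)$ samples to estimate its covariance in operator norm, which would blow up the sample bound. I avoid this by exploiting the product structure $Z_i = y_i X_i$ with $X_i$ standard Gaussian: expanding $y_i^2 X_i X_i^T = \big((\beta\cdot X_i)^2 + 2(\beta\cdot X_i)\eta_i + \eta_i^2\big) X_i X_i^T$, the empirical covariance decomposes into a small number of pieces each controlled by Wishart / matrix Bernstein bounds at rate $\tilde{O}(d/\eps^2)$. The logarithmic loss is then localized to the mean step, yielding the final bound $\|\widehat\beta-\beta\|_2 = O(\eps\log(1/\eps)\sigma_y)$.
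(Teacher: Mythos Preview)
Your high-level plan matches the paper exactly: set $Z=yX$, note $\E[Z]=\beta$, and run a filter-style robust mean estimator on the corrupted $Z_i$'s, using the sub-exponential tails of one-dimensional projections to get the $\eps\log(1/\eps)$ (rather than $\sqrt{\eps}$) rate. Your moment computation $\Cov(Z)=\sigma_y^2 I+\beta\beta^T$ is correct, and your identification of covariance concentration at $\tilde O(d/\eps^2)$ samples as the delicate point is exactly what the paper isolates (it handles it by truncating to the high-probability event $\|yX\|_2\le O(\sigma_y\sqrt d\log(N/\tau))$ and applying a bounded-vector covariance bound; your decomposition-plus-matrix-Bernstein route should also work and is essentially equivalent).

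There is, however, a genuine gap in the filtering step. You write that ``the standard filter---iteratively test the top eigenvalue of the empirical covariance\ldots'' suffices once stability is in place. It does not, because $\Cov(Z)=\sigma_y^2 I+\beta\beta^T$ is \emph{unknown}: it depends on the very $\beta$ you are trying to recover. A filter that only checks whether the top eigenvalue of $\widehat\Sigma_{S'}$ exceeds $\|\Cov(Z)\|_2 + O(\eps\log^2(1/\eps)\sigma_y^2)=2\sigma_y^2+o(\sigma_y^2)$ can be fooled: the adversary can shift an $\eps$-fraction of points by $\Theta(\sigma_y/\sqrt{\eps})$ along some $v\perp\beta$, moving the mean by $\Theta(\sqrt{\eps}\,\sigma_y)$ while raising the variance in direction $v$ only to $\approx 2\sigma_y^2$, indistinguishable from the clean eigenvalue along $\beta$. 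This yields $O(\sqrt{\eps}\,\sigma_y)$, not $O(\eps\log(1/\eps)\sigma_y)$.

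The paper closes this gap by exploiting the mean--covariance coupling: since $\Cov(Z)$ is determined by $\sigma_y$ and $\E[Z]=\beta$, the algorithm first robustly estimates $\sigma_y$ (from the one-dimensional variable $y$), then tests the structured deviation $\|\widehat M_{S'}-(\sigma_y'^2 I+\beta_{S'}\beta_{S'}^T)\|_2$ using the \emph{current} empirical mean $\beta_{S'}$ as a plug-in for $\beta$. Corollary~\ref{cor:MApproxCor-G} shows this self-consistent check isolates $(|E|/|S'|)M_E$ up to $O(\sigma_y^2\eps\log^2(1/\eps))$ plus a cross term $O(\sigma_y\sqrt{(|E|/|S'|)\|M_E\|_2})$, which is what drives the sharp bound. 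Your proposal needs this plug-in step (and the separate robust estimate of $\sigma_y$) to go through.
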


Roughly speaking, the algorithm establishing Theorem~\ref{thm:alg-lr-gaussian} relies on the observation
that robust linear regression can be reduced to robust mean estimation. The main drawback
of this approach is that the error guarantee depends on $\|\beta\|_2$ and in particular does not go to $0$
when $\sigma$ goes to $0$. To eliminate $\|\beta\|_2$ from the RHS while retaining a near-optimal
sample complexity, we require a more sophisticated
approach. Specifically, our main algorithmic contribution is as follows:

\begin{theorem}[Main Algorithm for Robust Linear Regression] \label{thm:alg-lr-gaussian-2}
Let $S'$ be an $\eps$-corrupted set of labeled samples of size $\Omega((d/\eps^2) \polylog(\frac{d}{\eps \tau}))$.
There exists an efficient algorithm that on input $S'$ and $\eps>0$, returns a candidate vector $\widehat{\beta}$
such that with probability at least $1-\tau$ it holds $\|\widehat{\beta} - \beta\|_2  = O(\sigma \eps \log(1/\eps))$.
\end{theorem}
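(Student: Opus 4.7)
\begin{proofsketch}
The starting point is Theorem~\ref{thm:alg-lr-gaussian}, whose error guarantee scales with $\sigma_y = \sqrt{\sigma^2 + \|\beta\|_2^2}$ rather than with $\sigma$ alone. My plan is to eliminate the $\|\beta\|_2$ dependence via an iterative bootstrap: use the basic algorithm to produce a crude estimate, subtract it off from the labels to obtain a residualized regression instance whose target vector has smaller norm, and repeat. After a logarithmic number of rounds the effective target norm drops below $\sigma$, and the $\sigma_y$-scaled error collapses to the desired $O(\sigma\eps\log(1/\eps))$ bound.

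Concretely, let $\widehat{\beta}^{(0)}$ be the output of the basic algorithm on $\{(X_i,y_i)\}$. Given $\widehat{\beta}^{(t)}$, I form the residualized dataset $\{(X_i,\, y_i - \widehat{\beta}^{(t)}\cdot X_i)\}$. On uncorrupted samples the new labels satisfy $y_i - \widehat{\beta}^{(t)}\cdot X_i = (\beta - \widehat{\beta}^{(t)})\cdot X_i + \eta_i$, so this is a fresh instance of robust linear regression with true regression vector $\beta - \widehat{\beta}^{(t)}$ and the same noise $\mathcal{N}(0,\sigma^2)$. Since the relabeling is deterministic in $\widehat{\beta}^{(t)}$ and preserves the identity of the adversary's corrupted indices, the new dataset is still $\eps$-corrupted. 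Applying Theorem~\ref{thm:alg-lr-gaussian} produces an update $\Delta^{(t)}$ with
\begin{equation*}
\|\Delta^{(t)} - (\beta - \widehat{\beta}^{(t)})\|_2 \;\leq\; C\,\eps\log(1/\eps)\,\sqrt{\sigma^2 + \|\beta-\widehat{\beta}^{(t)}\|_2^2},
\end{equation*}
and I set $\widehat{\beta}^{(t+1)} = \widehat{\beta}^{(t)} + \Delta^{(t)}$.

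Writing $r_t = \|\widehat{\beta}^{(t)} - \beta\|_2$, the recursion becomes $r_{t+1} \leq C\,\eps\log(1/\eps)\,\sqrt{\sigma^2 + r_t^2}$. For sufficiently small $\eps$ this contracts geometrically ($r_{t+1} \leq r_t/2$) whenever $r_t \geq \sigma$, and once $r_t \leq \sigma$ one further step yields $r_{t+1} = O(\sigma\eps\log(1/\eps))$. Starting from $r_0 = O(\eps\log(1/\eps))\sqrt{\sigma^2+\|\beta\|_2^2}$, convergence occurs in $T = O(\log(1 + \|\beta\|_2/\sigma))$ rounds.

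The main obstacle I foresee is sample reuse across the $T$ rounds. A naive union bound would draw fresh samples each round, inflating the sample complexity by a factor of $T$; this is absorbed by the $\widetilde{O}$ notation but is wasteful. The cleaner route is to observe that the algorithm of Theorem~\ref{thm:alg-lr-gaussian} succeeds whenever the uncorrupted subset satisfies a handful of deterministic regularity conditions---concentration of empirical covariate moments and of linear functionals of the noise---that are properties of the $(1-\eps)$-fraction of good samples alone and are preserved under the deterministic relabeling $y_i \mapsto y_i - \widehat{\beta}^{(t)}\cdot X_i$. A single set of concentration inequalities on $\widetilde{O}(d/\eps^2)$ samples then suffices for all $T$ invocations simultaneously. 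A minor additional point is bounding $T$ without an a priori bound on $\|\beta\|_2$; this is handled by terminating the iteration as soon as $\|\Delta^{(t)}\|_2 = O(\sigma\eps\log(1/\eps))$, at which point the triangle inequality delivers the claimed error.
\end{proofsketch}
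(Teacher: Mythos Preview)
Your bootstrap idea is exactly the natural approach the paper itself discusses in its overview, and the paper explicitly explains why it does not go through as stated. The gap is in your claim that the deterministic regularity conditions underpinning Theorem~\ref{thm:alg-lr-gaussian} are ``preserved under the deterministic relabeling $y_i \mapsto y_i - \widehat{\beta}^{(t)}\cdot X_i$.'' One of those conditions (Definition~\ref{def:good-set}(iv)) is that the empirical second-moment matrix of $yX$ over the good samples concentrates around $\sigma_y^2 I + \beta\beta^T$. After relabeling, this becomes a requirement that the empirical second moment of $(y-\beta'\cdot X)X$ concentrates, \emph{uniformly over every $\beta'$ the iteration might produce}. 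Since $\widehat{\beta}^{(t)}$ depends on the adversary's corruptions, you effectively need this uniformly in $\beta'$, and the paper gives a concrete obstruction: with high probability some good sample $X_1$ has $\|X_1\|_2=\Theta(\sqrt d)$, and taking $\beta'=X_1/\|X_1\|_2$ and $v=\beta'$ makes $(y_1-\beta'\cdot X_1)^2 (v\cdot X_1)^2=\Theta(d^2)$, so a single good sample blows up the empirical second moment. Uniform concentration of this matrix over all $\beta'$ needs $\Omega(d^2)$ samples, not $\widetilde O(d)$. And the fresh-samples alternative does not help either: the round count $T=O(\log(\|\beta\|_2/\sigma))$ is not absorbed by $\polylog(d/\eps\tau)$, since $\|\beta\|_2$ is a free parameter of the instance and does not appear in the theorem's sample bound.

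The paper's actual proof supplies the missing idea: the only obstruction to uniform-in-$\beta'$ second-moment concentration is the handful of samples with large $|y-\beta'\cdot X|$. If one temporarily sets aside the set $U=\{(X,y):|y-\beta'\cdot X|>\Theta(\sqrt{\log(1/\eps)})\,\sigma_{\beta'}\}$, then the empirical second moment of $(y-\beta'\cdot X)X$ over $S\setminus U$ \emph{does} concentrate uniformly in $\beta'$ with $\widetilde O(d/\eps^2)$ samples (this is the content of Definition~\ref{def:good-set-2}, condition~\ref{cond:good-for-lin-reg-filter}). Algorithm~\ref{alg:filter-LR-identity-2} exploits this by first running one-dimensional filters on $y-\beta'\cdot X$ and on $w\cdot X$ to certify that discarding $U$ does not move the empirical mean of $(y-\beta'\cdot X)X$ by much, then applying the covariance-based filter only on $S'\setminus U$, and finally restoring $U$. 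The paper also starts from the ordinary least-squares estimator rather than from $0$, which collapses the outer iteration count to something independent of $\|\beta\|_2$. Your sketch is missing both the ``remove large-residual samples before checking covariance'' step and the analysis showing that the pre-filters make this removal harmless; without them the sample-reuse argument fails.
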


We note that an error of $\Omega(\sigma \eps)$ is information-theoretically necessary for this problem, 
even when the sample size is unbounded (see, e.g.,~\cite{Gao17-reg}). Hence, our second algorithm
achieves the minimax optimal error, up to a logarithmic factor in $1/\eps$.  

We note that the unknown covariance case can be easily reduced to the known covariance case as follows:
First, we robustly learn the covariance matrix in the appropriate metric using $\tilde{O}(d^2/\eps^2)$ samples.
Then, we observe that our above algorithms also work when the covariance matrix $\Sigma$ is only 
approximately known (see Appendix~\ref{sec:approxcov}).  
As we will explain later in this paper, it follows from our computational lower bounds (Theorem~\ref{thm:SQ-lb}) 
that this simple approach to handle the unknown covariance case cannot be improved 
for polynomial-time Statistical Query (SQ) algorithms.

\paragraph{Intuition behind the Algorithm.}
We now provide some intuition of our algorithm. We start with the first algorithm which works 
in the simplest setting where $X\sim \mathcal{N}(0,I)$, $y \sim \beta^TX+\eta$, and $\eta$ has mean $0$ variance $\sigma^2$, 
and achieves estimation accuracy $\|\widehat{\beta}-\beta\|_2 \le \tilde{O}(\eps \sqrt{\|\beta\|+\sigma^2})$. 
The basic algorithm relies on the fact that $yX$ is an unbiased estimator of $\beta$ (indeed, $\E[yX] = \E[X(X^T\beta+\eta)] = \beta$). 
By standard concentration results, given $O(d/\eps^2)$ samples, 
the empirical average of $yX$ yields an estimate of $\beta$ with $\eps(\|\beta\|^2+\sigma^2)$ $\ell_2$-error. 
However, in the presence of adversarial corruptions, the question is now how to robustly estimate the mean 
of the distribution of $yX$. To do this, we apply the filtering technique of~\cite{DKKLMS16}. 
The key observation that enables the filtering framework to apply is that an $\eps$-fraction of corruptions 
cannot corrupt the mean of a distribution by a lot without affecting its covariance. For example, 
in the one-dimensional setting, in order to change the mean by at least a constant, 
the average distance of the corrupted samples to the true mean must be $\Theta(\frac{1}{\eps})$, 
which will yield a $\Theta(\eps(\frac{1}{\eps})^2) = \Theta(\frac{1}{\eps})$ change of the variance. 
This intuition carries over to the high-dimensional setting and forms the basis of our first algorithm.


With $O(d/\eps^2)$ samples, the empirical covariance matrix of $yX$ also concentrates around the true covariance matrix. 
Hence, the above key observation provides an indicator to determine whether the mean may be corrupted. 
However, even if we manage to detect the abnormality by looking at the empirical covariance, 
it's still not clear how we are able to fix it. Luckily, this many samples are enough to obtain strong empirical tail bounds. 
More concretely, the empirical tail will be close to the true tail in any direction. 
Combing these two facts, in the case that the empirical covariance is abnormal, we can look 
at the top principal component direction and check if the data satisfies the desired tail bound. 
We claim that since the variance is abnormally large, there must a threshold where the tail bound is significantly violated. 
Hence, we can throw away samples above this threshold which mostly consists of outliers.

This approach gives an $\tilde{O}(\eps \sqrt{\|\beta\|^2+\sigma^2})$ error guarantee 
due to the fact that the operator norm of the covariance of $yX$ is $O(\sqrt{\|\beta\|_2^2+\sigma^2})$. 
However, this error bound is far from the information-theoretic optimal error of $\Theta(\eps \sigma)$ 
when $\|\beta\|_2$ is large. 
A natural idea to circumvent this issue is to boost the accuracy by setting $y' = y-\widehat{\beta}^T X$ 
using the output of the first iteration and again run the basic algorithm. 
Indeed, this approach yields an estimate with near-optimal error $\tilde{O}(\eps\sigma)$ 
by running the algorithm $O(\log(\|\beta\|_2))$ times. However, this scheme seems to require 
$O(\frac{\log(\|\beta\|_2)d}{\eps^2}\poly\log(d/\eps\tau))$ samples, i.e., depending on $\|\beta\|_2$. 

To remove the $\log(\|\beta\|_2)$ dependence in the sample complexity, 
one may consider subtracting $\widehat{\beta}^TX$ from $y$ using 
the same batch of samples repeatedly. The main problem with doing this naively 
is that, for arbitrary $\beta'$, we would need 
the second moment matrix of $(y-\beta'^T X)X $ to be close to its expectation. 
Unfortunately, we are not going to get this guarantee for all $\beta'$ with fewer than $\Omega(d^2)$ samples. 
To see this, let $\|\beta\|_2=O(1)$ and consider that with high probability 
one of our samples $X_1$ will have $\|X_1\|_2=\Theta(\sqrt{d})$ and $X_1 \cdot \beta = O(1)$.  
Now if we let $\beta'=\frac{X_1}{\|X_1\|_2}$ and consider $v = \beta'$, 
we would have $\|(y_1-\beta'^T X_1)^2 X_1 X_1^T\|_2 =\Theta( (v \cdot X_1)^4) = \Theta(d^2)$. 
Since the expected second moment matrix has operator norm $O(1)$, we need $\Omega(d^2)$ 
samples to achieve concentration. Somewhat surprisingly, the only problem 
that prevents the empirical covariance from achieving the desired concentration 
is the samples with large $y-\beta'^TX$, as illustrated in the previous example. 
The concentration property holds if we temporally ignore samples 
with large $|y-\beta' \cdot X|$, thus we can run the previous algorithm 
on the same batch of samples repeatedly. Notice that we will need to add the ignored samples 
back at the end of each iteration, since these samples may contain 
more good samples than corrupted samples. 

Our final modification of the algorithm focuses on removing the $O(\log(\|\beta\|_2))$ dependence in the running time. 
Instead of starting from $\beta'=0$ and running the subroutine algorithm repeatedly, 
we start from the ordinary least squares estimator. Let $U$ be the set of samples that has large $y-{\beta'}^TX$. 
We pre-process by running filter algorithms similar to one used in \cite{DKKLMS16, DKK+17} on the Gaussians $y-\beta'^T X$ and $X$. 
The property of these two filters combined implies that ignoring samples with a large $y-{\beta'} ^T X$ 
does not change the empirical mean of $(y-{\beta'}^T X)X$ by much. Notice that in the case where $\beta'$ 
is the ordinary least square estimator, the empirical mean of $(y-\beta'^TX)X$ is $0$, 
Hence once the the samples passed the pre-process and the filtering step we described in the last paragraph, 
the algorithm can terminate and output the estimate $\widehat{\beta}$ with $\ell_2$ error independent of  $\|\beta\|$.

\subsubsection{Statistical Query Lower Bounds} 

In this section, we describe our Statistical Query (SQ) lower bounds establishing 
a tradeoff between sample complexity and computation complexity for robust linear regression
with unknown (bounded) covariance. 

We start with some basic background.
A Statistical Query (SQ) algorithm relies on an oracle that given any bounded function 
on a single domain element provides an estimate of the expectation of the function on a random sample from the input distribution. 
This computational model was introduced by Kearns~\cite{Kearns:98} in the context of supervised learning as a natural restriction of the PAC model~\cite{Valiant:84}. Subsequently, the SQ model has been extensively studied in a plethora of contexts (see, e.g., ~\cite{Feldman16b} and references therein). We remark that all recently developed algorithms for robust high-dimensional estimation
fit in the SQ framework.

A recent line of work \cite{Feldman13, FeldmanGV15, FeldmanPV15, Feldman16} developed a framework of SQ algorithms for search problems over distributions, which encompasses the linear regression problem studied here. It turns out that one can prove unconditional lower bounds on the computational complexity of SQ algorithms via the notion of {\em Statistical Query dimension}. This complexity measure was introduced in~\cite{bfjkmr94} for PAC learning of Boolean functions and was recently generalized to the unsupervised setting~\cite{Feldman13, Feldman16}. A lower bound on the SQ dimension of a learning problem provides an unconditional lower bound on the computational complexity of any SQ algorithm for the problem.

As our main negative result in this paper, 
we prove a Statistical Query (SQ) lower bound giving evidence that if $X$ has an unknown (bounded) covariance, 
it is computationally hard to approximate $\beta$ well given significantly fewer than $d^2$ samples. 
The reason that this result is interesting is because this learning problem can be information-theoretically solved
with $d$ samples to optimal accuracy. More concretely, we prove (see Theorem~\ref{thm:SQ-lb} 
for a more detailed formal statement):

\begin{theorem}[SQ lower bound, informal statement] \label{thm:SQ-lb-informal}
No SQ algorithm for robust linear regression for Gaussian covariates with unknown bounded covariance and 
random noise with $\sigma^2 \leq 1$
can output a candidate $\beta'$ with $\|\beta'-\beta\|_2 \leq o(\sqrt{\eps})$ on all instances
unless it uses $2^{\Omega(d)}$ statistical queries or each query requires $\Omega(d^2)$ samples to be simulated.
\end{theorem}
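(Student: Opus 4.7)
The plan is to follow the ``hidden direction'' SQ lower bound framework of Feldman--Grigorescu--Reyzin--Vempala and its refinements for robust estimation (as in Diakonikolas--Kane--Stewart '17). That is, I will exhibit a family of hard instances $\{P_v\}_{v \in V}$ indexed by nearly-orthogonal unit vectors $v \in V \subseteq S^{d-1}$, so that each $P_v$ is a legitimate $\eps$-corrupted Gaussian linear regression instance with regression vector $\beta_v = c\sqrt{\eps}\, v$ (for some universal constant $c$), but the pairwise chi-squared correlations between $P_v$ and $P_{v'}$ are bounded by $1 + O(|\langle v,v'\rangle|^k)$ for a sufficiently large $k$. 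The generic SQ-dimension theorem of Feldman et al.\ then turns such a family of $2^{\Omega(d)}$ distributions with small correlation into the stated query/tolerance tradeoff: any SQ algorithm which, given oracle access to some unknown $P_v$, outputs $\widehat\beta$ with $\|\widehat\beta - \beta_v\|_2 = o(\sqrt{\eps})$ must either issue $2^{\Omega(d)}$ queries or use tolerance $O(d^{-k/2})$, the latter requiring $\Omega(d^k)$ samples to simulate.

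The first step is to construct the one-dimensional ``hard'' gadget. I will design a distribution $A$ on $(x,y) \in \R^2$ such that (i) $A$ is an $\eps$-Huber mixture of a univariate Gaussian regression model with slope $\beta_A = \Theta(\sqrt{\eps})$, independent $\normal(0,\sigma^2)$ noise with $\sigma^2 \le 1$, and a covariate variance that is only \emph{almost} one; and (ii) the first three $(x,y)$-moments of $A$ coincide with those of the ``null'' distribution $\normal(0, I_2)$ on $(x,y)$ (equivalently, a regression with slope $0$, variance $1$, noise $1$). Matching the mean, the three second moments, and the four third moments amounts to a small system of moment equations in the corruption mass; solvability for slope $\Theta(\sqrt{\eps})$ is precisely the reason a $\sqrt{\eps}$ error barrier arises, since a smaller slope can be matched while keeping the corrupting measure nonnegative. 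This gives $k = 3$ moments matched and hence a $d^{2-c}$ sample lower bound.

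Second, I lift the gadget to $d$ dimensions by the standard hidden-direction product construction: $P_v$ is the joint distribution of $(X,y)$ where $(\langle v,X\rangle, y) \sim A$ and the $d-1$ coordinates of $X$ orthogonal to $v$ are independent $\normal(0, I_{d-1})$, independent of $(\langle v,X\rangle, y)$. By construction $P_v$ is an $\eps$-contaminated Gaussian regression instance with covariance $\Sigma_v = I + O(\eps) vv^{\top}$ (bounded operator norm) and regression vector $\beta_v = c\sqrt{\eps}\, v$. The third key step is to bound the pairwise chi-squared divergence $\chi^2(P_v, P_{v'})$; because $A$ shares its first three moments with the null, a Hermite expansion of the two-dimensional marginal of $A$ starts at degree four, and the standard computation gives $\chi^2(P_v, P_{v'}) = O(|\langle v,v'\rangle|^{4})$, which is $d^{-\Omega(1)}$ for an arbitrary pair from a packing $V$ of size $2^{\Omega(d)}$ on $S^{d-1}$ with $|\langle v,v'\rangle| \le d^{-1/2 + o(1)}$.

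Finally, I will invoke the generic statistical-query dimension lower bound (in the search formulation for distribution families, as in Feldman--Perkins--Vempala) applied to the decision problem of identifying the hidden direction $v$ up to $\|v - v'\|_2 \ll 1$; this translates to an estimation lower bound of $\Omega(\sqrt{\eps})$ on $\|\widehat\beta - \beta_v\|_2$, since recovering $\beta_v$ to that accuracy recovers $v$ to within $o(1)$. The main obstacle I anticipate is the gadget construction in step one: verifying that the moment-matching system admits a nonnegative corrupting measure of mass at most $\eps$ while forcing $\beta_A = \Theta(\sqrt{\eps})$ requires explicit calculation (and this is where the $\sqrt{\eps}$ threshold is forced, matching the lower bound information-theoretically achievable in the Huber model). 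Once the gadget is in hand, the high-dimensional embedding and the chi-squared computation are largely mechanical adaptations of the now-standard SQ lower bound template.
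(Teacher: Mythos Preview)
Your high-level plan is the right one, and the paper follows the same Diakonikolas--Kane--Stewart hidden-direction template. But there is a concrete error in your chi-squared step that breaks the argument as written.

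You propose to build a two-dimensional gadget $A$ on $(x,y)$ whose first three \emph{joint} moments match $\mathcal{N}(0,I_2)$, and you assert that ``the standard computation gives $\chi_S(P_v,P_{v'})=O(|\langle v,v'\rangle|^4)$.'' This inference is wrong. In your lift, only the $x$-coordinate is rotated onto the hidden direction $v$; the label $y$ is the \emph{same} coordinate in every $P_v$. Expand $A/(G\times G)=\sum_{i,j}c_{ij}h_i(x)h_j(y)$ in tensor Hermite polynomials. Then
\[
\chi_S(P_v,P_{v'})=\sum_{(i,j)\neq(0,0)} c_{ij}^2\,(v\cdot v')^{\,i},
\]
because the $y$-factor contributes $\E[h_j(y)h_{j'}(y)]=\delta_{jj'}$ with no dependence on $v$. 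The exponent is the $x$-degree $i$, not the total degree $i+j$. Matching joint moments to order three forces $c_{ij}=0$ only for $i+j\le 3$; it does \emph{not} kill $c_{1,3}$, $c_{1,4}$, $c_{2,2}$, etc. So the leading contribution is generically $(v\cdot v')^1$, which yields only an $\Omega(d)$-sample barrier, not $\Omega(d^2)$.

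What you actually need is that the \emph{conditional} distribution of $x\mid y$ matches the first three moments of $\mathcal{N}(0,1)$ \emph{for every $y$}; equivalently $c_{1j}=c_{2j}=c_{3j}=0$ for all $j$. That is a one-parameter family of moment constraints, not a finite system, and it is precisely how the paper proceeds: they observe that under the clean model $X\mid y$ is Gaussian with mean $\mu(y)\propto\sqrt{\eps}\,y\,v$ and variance $2/3$ in the $v$-direction, and for each $y$ they add noise of mass $\eps_{\mu(y)}=O(\mu(y)^2)$ to fix the first three moments of $x\mid y$ (their Lemma on the one-dimensional distribution $A_\mu$). Integrating $\eps_{\mu(y)}$ against the $y$-marginal then gives total contamination $O(\eps)$ exactly when $\|\beta\|_2=O(\sqrt{\eps})$---this is where the $\sqrt{\eps}$ threshold truly comes from. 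Finally, the reference distribution for the correlation bound is $\mathcal{N}(0,I_d)\times R$ where $R$ is the (slightly non-Gaussian) $y$-marginal of the construction, and the correlation factorizes as $\int \chi_{\mathcal{N}(0,I)}(P_{\mu(y),v},P_{\mu(y),v'})\,R(y)\,dy$, to which the one-directional DKS bound applies for each fixed $y$. Your outline becomes correct once you replace ``match three joint moments'' by ``match three conditional moments for every $y$'' and carry out that $y$-by-$y$ construction.
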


We note that $O(d/\eps^2)$ samples information-theoretically
suffice to achieve error $O(\eps \sigma)$ (see, e.g.,~\cite{Gao17-reg}) even 
in the unknown covariance setting. Moreover, as explained in Appendix~\ref{sec:approxcov}, 
with $\tilde O(d^2/\eps^2)$ samples we can {\em efficiently} achieve
error $\tilde{O}(\eps)$ for unknown covariance as well. Hence, Theorem~\ref{thm:SQ-lb-informal} establishes
an inherent tradeoff between computational complexity, sample complexity, and error guarantee
for any SQ algorithm for this problem.

To prove this result, we require a generalization of the technique in~\cite{DKS17-sq}, which 
was designed for unsupervised learning problems.
That work established SQ  lower bounds for {\em unsupervised} learning problems 
using a construction consisting of distributions 
which are standard Gaussians in all except one direction, 
by showing that if such a distribution agrees with the first few moments of the standard Gaussian, 
then it is hard to find the hidden direction. 

As already mentioned, \cite{DKS17-sq} considers unsupervised learning problems 
and the distribution in the construction is a suitable model for $X$, 
but not for the joint distribution $(X,y)$, where the direction of the $y$ coordinate is very different. 
We instead try to make $X$ {\em conditioned on $y$} match moments with a Gaussian. 
When the momets match for all $y$, it is hard to learn the direction $\beta$ 
where these conditional distributions are different. 
We show that when $\|\beta\|_2 = O(\sqrt{\eps})$, we can match three moments 
and it is hard to find the direction of $\beta$. Thus, we obtain a lower bound 
that says that we cannot approximate $\beta$ to within $o(\sqrt{\eps})$ 
with fewer than exponential in $d$ statistical queries, unless we use queries of precision greater 
than we could simulate with $O_{\eps}(d^2)$ samples.





\subsection{Comparison to Prior Work} \label{sec:related-work}
Since the initial works~\cite{DKKLMS16, LaiRV16},
there have been a considerable number of papers 
on a wide range of topics related to robust high-dimensional estimation, 
including: learning graphical models~\cite{DiakonikolasKS16b}, 
understanding computation-robustness tradeoffs~\cite{DKS17-sq, DKKLMS17}, 
giving applications to exploratory data analysiis~\cite{DKK+17},
establishing connections to supervised PAC learning~\cite{DKS17-nasty}, 
tolerating more noise by outputting a list of candidate 
hypotheses~\cite{CSV17, DKS17-mixtures}, learning sparse models~\cite{BDLS17},
and robust estimation via sum-of-squares ~\cite{KS17, HL17, KStein17}. 

In the context of the estimation task studied in this paper, 
\cite{BhatiaJK15, BhatiaJKK17} have proposed efficient algorithms for ``robust'' linear regression.  
However, these works consider a restrictive corruption model 
that only allows adversarial corruptions to the responses (but not the covariates).
\cite{BDLS17} studies (sparse) linear regression under Huber's contamination model.
The three main differences between the results of \cite{BDLS17} and our work are as follows: 
(1) The error guarantee provided in~\cite{BDLS17} scales with $\|\beta\|_2$, 
the $\ell_2$-norm of the regression vector. This multiplicative dependence on $\|\beta\|_2$ 
is not information-theoretically necessary (see, e.g.,~\cite{Gao17-reg}). In contrast, the error guarantee
of our algorithm has no dependence on $\|\beta\|_2$, matching the information-theoretic bound
$\Omega(\sigma\eps)$ up to a $\log(1/\eps)$ factor. (2) The algorithm employed in~\cite{BDLS17}, 
building on the convex programming method of~\cite{DKKLMS16}, 
makes essential use of the ellipsoid method (whose separation oracle is another convex program), 
hence is not scalable in high dimensions. (3) Moreover, the~\cite{BDLS17} algorithm assumes 
a priori knowledge of an upper bound $\rho$ on $\|\beta\|_2$, and its running time depends polynomially 
in the {\em magnitude} of $\rho$. Our algorithm relies on an iterative spectral approach and does not
need any assumptions on $\|\beta\|_2$.
(4) The sample complexity of the~\cite{BDLS17} algorithm scales 
quadratically in the dimension, while our algorithm has near-linear (and therefore near-optimal) 
sample complexity. 


\subsection{Concurrent and Independent Works}
Three recent works~\cite{prasad2018robust, diakonikolas2018sever, klivans2018efficient} 
provide robust efficient algorithms for linear regression in various settings. 
These works make weaker distributional assumptions on the uncorrupted data
and as a result provide weaker error guarantees, in most cases scaling with $\sqrt{\eps}$, 
as opposed to $\tilde{O}(\eps)$ in our setting. The algorithms in~\cite{prasad2018robust, diakonikolas2018sever} 
succeed in the oversampled regime in the sense that their sample complexities
are at least quadratic in the dimension. The algorithm in~\cite{klivans2018efficient} relies on the SOS
convex programming hierarchy.

\subsection{Structure of this Paper}

In Section~\ref{sec:main-algorithm}, we describe our robust algorithms and in Section~\ref{sec:main-sq}
we give our SQ lower bounds. For the clarity of the presentation, most proofs 
are deferred to the appendix.

\section{Robust Algorithm for Linear Regression} \label{sec:main-algorithm}

\subsection{Notation}
Before introducing our algorithm, we define the necessary notations in this subsection. Let $X$ be a random variable and $D$ be a distribution. We use $X\sim D$ to denote that $X$ is drawn from distribution $D$. 
For $S$ being a multiset of examples, we write $X\sim S$ to denote that $X$ is drawn 
uniformly at random from $S$. Given a multiset $S$ that contains samples $\{(X_i,y_i)\}$ drawn from distribution $D$, 
we use $\beta$ to denote $\E_{(X,y)\sim D}[yX]$, $\beta_{S}$ to denote $\E_{(X, y) \sim S}[yX]$, $M_S$ to denote $\E_{(X,y) \sim S}[{(yX-\beta)(yX-\beta)^T}]$ and $\widehat{{M}}_{S}$ to denote $\E_{(X,y) \sim S}[{(yX-\beta_{S})(yX-\beta_{S})^T}]$. Notice that $\beta,\beta_S,\widehat{{M}}_{S}$ each corresponds to the population mean, empirical (sample) mean, empirical (sample) covariance of $yX$. In our model, where $y = \beta^TX+\eta$ and $\E[XX^T]=I$, the expectation of $yX$ is equivalent to the weight vector $\beta$. Throughout this paper, for a vector $\beta$, we use $\|\beta\|$ or $\|\beta\|_2$ to denote the $\ell_2$-norm of the vector, 
and we use $\|M\|$ or $\|M\|_2$ to denote the operator norm of a matrix $M$. We use $\sigma^2$ to denote the variance 
of the noise $\eta$, and $\sigma_y^2$ to denote the variance of $y$, which is $\sigma^2+\|\beta\|^2$ in our setting. 

Under our corruption model where an $\eps$-fraction of the samples can be arbitrarily corrupted, 
we will typically use $S$ to denote the set of samples before being corrupted by the adversary. 
Given a set of samples, $S'$, we denote the set $E$ to be $S' \setminus S$ (which contains the samples added by the adversary), 
and the set $L$ to be $S\setminus S'$ (which contains the samples removed from the set of clean samples). 

\subsection{Basic Robust Linear Regression Algorithm}

The algorithm that achieves the performance guarantee stated in Theorem~\ref{thm:alg-lr-gaussian} 
is an iterative algorithm that invokes the following algorithm, Algorithm~\ref{alg:filter-LR-identity}, multiple times as a subroutine. 
Every time Algorithm~\ref{alg:filter-LR-identity}  gets called, it either returns an estimate of $\beta$ or returns a set of ``cleaner'' 
data points on which another iteration of Algorithm~\ref{alg:filter-LR-identity} can be invoked. We describe the algorithm as follows:
\begin{algorithm}
\begin{algorithmic}[1]
\State \textbf{procedure}  \textsc{Filter-LR-Identity covariance}
\State \textbf{input:} A multiset $S'$ such that there exists an $(\eps,\tau)$-good $S$ with $\Delta(S, S') \le 2\eps$
\State \textbf{output:} Multiset $S''$ or mean vector $\beta_{S'}$ satisfying Proposition~\ref{prop:filter-lr-gaussian-known-cov}
\State Robustly estimate $\sigma_y$, the variance of $y$. 
Denote the estimation as $\sigma'_y$.
\State Compute the sample mean $\beta_{S'}=\E_{(X,y) \sim S'}[yX]$ 
and the sample covariance matrix $\widehat{M_{S'}} = \E_{(X,y) \sim S'}[{(yX-\beta_{S'})(yX-\beta_{S'})^T}]$. 
\State Compute approximations for the largest absolute eigenvalue of 
$\widehat{M_{S'}} - (\sigma'_y I+\beta_{S'}\beta_{S'}^T)$, 
$\lambda^{\ast} := \|\widehat{M_{S'}} - (\sigma'_y I+\beta_{S'}\beta_{S'}^T)\|_2,$ and the associated unit eigenvector $v^{\ast}.$

\State \textbf{if} { $\|\widehat{M_{S'}} - (\sigma'_y I+\beta_{S'}\beta_{S'}^T)\|_2 \leq O({\sigma_y}^2\eps\log^2(1/\eps))$}  
	\State \hspace{1cm} \textbf{return} $\beta_{S'}$.  \label{step:bal-small-G}
\State \textbf{end}
\State \label{step:bal-large-G}  Let $\delta := 3 \sqrt{ \eps  \lambda^*}/\sigma_y'.$ 
Find $T>0$ such that
$$
\Pr_{(X,y) \sim S'} \left[\frac{|v^{\ast} \cdot (yX-\beta^{S'})|}{\sigma'_y}>T+\delta \right] > 32\exp(-T/16)+8 \frac{\eps}{{T^2\log\left(N/\tau\right)}}.
$$
\State \label{step:gaussian-mean-filter} \textbf{return} the multiset 
$S''=\{(X,y) \in S' \mid  \frac{|v^{\ast} \cdot (yX-\beta_{S'})|}{\sigma'_y}  \leq T+\delta\}$.
\end{algorithmic}
\caption{Filter algorithm for LR with identity covariance}
\label{alg:filter-LR-identity}
\end{algorithm}

The following proposition formalizes the guarantee of Algorithm~\ref{alg:filter-LR-identity} 
that it either returns a cleaner dataset or an estimate of $\beta$ with $\ell_2$-error 
at most $O(\sigma_y \eps \log(1/\eps))$, where $\sigma_y = \sqrt{\sigma^2+\|\beta\|^2}$.

\begin{proposition} \label{prop:filter-lr-gaussian-known-cov}
Let $G \sim \normal(0, I_d)$ and $\eps, \tau>0$.
Let $S$ be an $(\eps,\tau)$-good set with respect to $(G, \beta)$.
Let $S'$ be any multiset with $\Delta(S,S') \leq 2\eps$. The algorithm \textsc{Filter-LR-Identity covariance}
runs in polynomial time and, given $S'$ and $\eps>0,$ returns one of the following:
\begin{enumerate}
\item[(i)]  A vector $\widehat{\beta} \in \R^d$ such that $\|\widehat{\beta}-\beta \|_2 = O(\sigma_y \eps \log(1/\eps)),$
\item[(ii)] A multiset $S'' \subseteq S'$ such that $\Delta(S,S'') \new{<} \Delta(S,S')$,
\end{enumerate}
where $\Delta(S,S')$ is the size of the symmetric difference of multisets $S$ and $S'$ divided by the cardinality of $S.$
\end{proposition}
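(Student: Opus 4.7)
The plan is to follow the filter paradigm of~\cite{DKKLMS16}, applied to the random variable $Z := yX$ for which $\E[Z]=\beta$. A direct Isserlis computation gives $\Cov[Z] = \sigma_y^2 I + \beta\beta^T$, so the structured residual $\widehat{M}_{S'} - ((\sigma_y')^2 I + \beta_{S'}\beta_{S'}^T)$ that appears in the test is exactly the natural quantity that vanishes under the true (uncorrupted) model. The $(\eps,\tau)$-goodness of $S$ packages up the concentration we need on the clean samples: the empirical mean of $Z$ is within $O(\sigma_y\, \eps \sqrt{\log(1/\eps)})$ of $\beta$ in $\ell_2$, the empirical structured residual has operator norm $O(\sigma_y^2 \eps \log(1/\eps))$, and for every unit $v$ the empirical tail of $|v\cdot (Z-\beta)|/\sigma_y$ is controlled by the sub-exponential tail of $v\cdot Z$ up to an additive slack of order $\eps/(t^2\log N)$. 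This last property is exactly the inequality whose violation triggers the filter in step 10.

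For outcome (i), the test at step 7 passes, yielding $\|\widehat{M}_{S'} - ((\sigma_y')^2 I + \beta_{S'}\beta_{S'}^T)\|_2 = O(\sigma_y^2 \eps \log^2(1/\eps))$. Decompose $S' = S_g \cup E$ and $S = S_g \cup L$ with $S_g = S \cap S'$, and set $\alpha := |E|/|S'| \le 2\eps$. For any unit vector $v$, the variance identity
\begin{equation*}
v^T \widehat{M}_{S'} v \;\ge\; \alpha(1-\alpha)\bigl(v\cdot(\beta_E - \beta_{S_g})\bigr)^2
\end{equation*}
combined with the operator-norm bound gives $\sqrt{\alpha}\,\|\beta_E - \beta_{S_g}\| = O(\sigma_y \log(1/\eps))$. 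Plugging this into the decomposition $\beta_{S'} - \beta = (1-\alpha)(\beta_{S_g} - \beta) + \alpha(\beta_E - \beta)$ and invoking the goodness bound $\|\beta_{S_g} - \beta\| = O(\sigma_y \eps \sqrt{\log(1/\eps)})$ yields the target $\|\beta_{S'} - \beta\|_2 = O(\sigma_y \eps \log(1/\eps))$.

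For outcome (ii), the top unit eigenvector $v^*$ of the residual achieves $\lambda^* > c\, \sigma_y^2 \eps \log^2(1/\eps)$. Goodness caps the clean contribution to the residual in direction $v^*$ at $O(\sigma_y^2 \eps \log(1/\eps))$, so the remaining $\Omega(\lambda^*)$ excess is produced by the corrupted points $E$. Writing the excess variance as a tail integral, $\var = 2\int_0^\infty t\, \Pr[|\cdot|>t]\,dt$, and comparing to the clean tail baseline, one shows the empirical tail of $|v^*\cdot (Z - \beta_{S'})|/\sigma_y'$ on $S'$ must overshoot the good tail by more than its stochastic slack at some threshold $T$---this is precisely the inequality found in step 10. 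The offset $\delta = 3\sqrt{\eps\lambda^*}/\sigma_y'$ absorbs the discrepancy between centering at $\beta_{S'}$ versus at $\beta$, and the chosen threshold removes strictly more points of $E$ than of $S_g$, so that $\Delta(S, S'') < \Delta(S, S')$.

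The main obstacle is the tail-transfer step in Case (ii): one has to quantitatively convert a large operator-norm gap in the structured residual into a pointwise clean-vs-corrupt tail separation along $v^*$, while tolerating the plug-in errors in $\sigma_y'$ and $\beta_{S'}$ and the centering shift $\delta$. The sub-exponential (rather than sub-Gaussian) tail of $v^*\cdot Z$---which arises because $Z$ is quadratic in a Gaussian---is what forces the $\exp(-T/16)$ decay, makes $T = \Theta(\log(1/\eps))$ the right threshold, and is ultimately responsible for the $\log(1/\eps)$ factor in the final error guarantee. Everything else is bookkeeping of the kind already carried out in~\cite{DKKLMS16, DKK+17}.
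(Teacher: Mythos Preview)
Your Case~(ii) sketch is aligned with the paper's approach and identifies the right obstacle. The gap is in Case~(i).

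The variance identity you invoke, $v^T\widehat{M}_{S'}v \ge \alpha(1-\alpha)\bigl(v\cdot(\beta_E-\beta_{S_g})\bigr)^2$, only bounds the cross-group mean shift by the \emph{full} empirical covariance $\widehat{M}_{S'}$, whose operator norm is $\Theta(\sigma_y^2)$, not by the small residual $\widehat{M}_{S'}-(\sigma_y'^2 I+\beta_{S'}\beta_{S'}^T)$. So the best you can extract is $\sqrt{\alpha}\,\|\beta_E-\beta_{S_g}\|=O(\sigma_y)$, and hence $\alpha\,\|\beta_E-\beta_{S_g}\| = \sqrt{\alpha}\cdot O(\sigma_y) = O(\sqrt{\eps}\,\sigma_y)$. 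Plugging this into your decomposition $\beta_{S'}-\beta=(1-\alpha)(\beta_{S_g}-\beta)+\alpha(\beta_E-\beta)$ yields $\|\beta_{S'}-\beta\|_2=O(\sqrt{\eps}\,\sigma_y\log(1/\eps))$, a $\sqrt{\eps}$ short of the target. The arithmetic in your last sentence of the (i) paragraph does not close.

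What the paper actually does is decompose the \emph{residual} itself: writing $M_{S'}=(|S|/|S'|)M_S-(|L|/|S'|)M_L+(|E|/|S'|)M_E$ with all second moments centered at the \emph{true} $\beta$, then using goodness of $S$ and a tail-based bound $(|L|/|S'|)\|M_L\|_2=O(\sigma_y^2\eps\log^2(1/\eps))$ for the removed good points, one obtains
\[
\widehat{M}_{S'}-(\sigma_y'^2 I+\beta_{S'}\beta_{S'}^T)=\tfrac{|E|}{|S'|}M_E + A + B,
\]
with $\|A\|_2=O(\sigma_y^2\eps\log^2(1/\eps))$ and $\|B\|_2=O\bigl(\sigma_y(|E|/|S'|)\sqrt{\|M_E\|_2}\bigr)$. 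When the test passes, this forces $(|E|/|S'|)\|M_E\|_2=O(\sigma_y^2\eps\log^2(1/\eps))$ (after a short case split to absorb the coupled $B$ term). Since $M_E$ is centered at $\beta$, one has $\|M_E\|_2\ge\|\beta_E-\beta\|_2^2$, and therefore $(|E|/|S'|)\|\beta_E-\beta\|_2\le\sqrt{|E|/|S'|}\cdot\sqrt{(|E|/|S'|)\|M_E\|_2}=O(\sigma_y\eps\log(1/\eps))$, which is the correct rate. Your argument needs this residual decomposition (and the $M_L$ bound, which does not follow from the mean-goodness of $S$ alone but from the tail condition~(ii) applied to the at-most-$2\eps$ subset $L$); the naive between-group variance inequality is too weak by a factor of $\sqrt{\eps}$.
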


The proof of Proposition~\ref{prop:filter-lr-gaussian-known-cov} is deferred to Appendix~\ref{ssec:alg}. Assuming Proposition~\ref{prop:filter-lr-gaussian-known-cov} holds, we are now ready to show Theorem~\ref{thm:alg-lr-gaussian}, which is restated below for convenience:

\vspace{.5cm}\noindent \textbf{Theorem~\ref{thm:alg-lr-gaussian}} \emph{Let $S'$ be an $\eps$-corrupted set of labeled samples of size $\Omega((d/\eps^2) \polylog(\frac{d}{\eps \tau}))$.
There exists an efficient algorithm that on input $S'$ and $\eps>0$, returns a candidate vector $\widehat{\beta}$
such that with probability at least $1-\tau$ it holds $\|\widehat{\beta} - \beta\|_2  = O(\sigma_y \eps \log(1/\eps))$ (recalling that $\sigma_y=\sqrt{\sigma^2 +\|\beta\|_2^2}$).
}

\begin{proof}[Proof of Theorem~\ref{thm:alg-lr-gaussian}]
By the definition of $\Delta(S, S'),$ since $S'$ has been obtained from $S$
by corrupting an $\eps$-fraction of the points in $S,$ we have that
$\Delta(S, S') \le 2\eps.$ By Proposition~\ref{prop:good-set} (see below), 
the set $S$ of uncorrupted samples
is $(\eps,\tau)$-good with respect to $G$ with probability at least $1-\tau.$
We henceforth condition on this event.


We iteratively apply the \textsc{Filter-LR-Identity covariance} procedure 
of Proposition~\ref{prop:filter-lr-gaussian-known-cov} until it terminates
returning a vector $\beta$ with $\|\widehat{\beta}-\beta\|_2 = O(\sigma_y\eps\log(1/\eps)).$
We claim that we need at most \new{$O(N)$} iterations for this to happen, simply because the sequence of iterations results in a sequence of sets $S_i'$ satisfy $S_i'> S'_{i+1}$.
\end{proof}

To better illustrate how Algorithm~\ref{alg:filter-LR-identity} works, we provide a proof sketch 
of Proposition~\ref{prop:filter-lr-gaussian-known-cov}. 
Our algorithm succeeds under a set of deterministic conditions 
that are satisfied by an uncorrupted set of samples with high probability. 

\begin{definition} \label{def:good-set}
Let $G \sim \normal(0, I_d)$, $\beta \in \R^d$, and $\eps,\tau >0$.
We say that a multiset $S$ of elements in $\R^d \times \R$ is {\em $(\eps,\tau)$-good} 
\new{(with respect to $(G, \beta)$)}
if the following conditions are satisfied:
\begin{itemize}
\item[(i)] For all $(X, y) \in S$, we have $\|\frac{yX}{\sigma_y}\|_2 \leq 4\sqrt{d} \log (|S|/\tau)$ and $y/\sigma_y \le \sqrt{4\log(|S|/\tau)}$.
\item[(ii)]  For every $v\in \R^d$ with $\|v\|_2=1$,
we have that 
$$\Pr_{(X, y) \sim S}\left[ \frac{y(v\cdot X)}{\sigma_y}>  T \right] \le 16\exp(-T/16)+\frac{\epsilon}{T^2\log (N/\tau)} \;.$$

\item[(iii)]  We have that $\left\|{\beta_S - \beta} \right\|_2\leq O(\sigma_y\eps) .$

\item[(iv)]  We have that $\left\|M_S- (\sigma_y^2I+{\beta \beta^T}) \right\|_2 \leq O({\sigma_y^2 \eps}).$
\end{itemize}
\end{definition}
Roughly speaking, condition (i) claims that none of the uncorrupted samples is too big in magnitude, condition (ii) establishes the empirical tail bound of the set of samples, condition (iii) and (iv) guarantee that the empirical mean and empirical covariance converge well to the true mean and covariance. The following proposition states that the above deterministic properties 
hold with high probability for a set of samples of near-linear size:

\begin{proposition} \label{prop:good-set}
Let $G \sim \normal(0, I_d)$ and $\eps, \tau>0$.
If the multiset $S$ consists of {$N = \Omega( (d/\eps^2) \poly\log(d/\eps\tau))$} labeled samples 
$(X, y) \in \R^d \times \R$, where $X \sim G$ and $y=\beta^T X {+\eta}$, {where $\eta \sim \normal(0,\sigma^2)$,}
then $S$ is $(\eps,\tau)$-good with respect to $(G, \beta)$ with probability at least $1-\tau$.
\end{proposition}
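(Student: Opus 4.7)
The plan is to verify the four conditions of Definition~\ref{def:good-set} separately, each with failure probability at most $\tau/4$, and conclude by a union bound. Conditions (i), (iii), (iv) will follow from fairly standard Gaussian/matrix concentration once one computes the correct population moments, while condition (ii) is the main technical step.

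For (i), note that $y/\sigma_y \sim \normal(0,1)$ and $X \sim \normal(0,I_d)$ marginally, so by standard Gaussian tail bounds $|y_i|/\sigma_y \le \sqrt{4 \log(N/\tau)}$ and $\|X_i\|_2 \le 2\sqrt{d\log(N/\tau)}$ hold for each sample with probability at least $1-\tau/(4N)$; a union bound over the $N$ samples yields the pointwise bounds (the bound on $\|yX\|_2/\sigma_y$ follows by multiplication). For (iii) and (iv), I would first compute the population quantities explicitly: since $\eta$ is independent of $X$ with zero mean and $X\sim\normal(0,I_d)$, one directly obtains $\E[yX] = \beta$, and Isserlis' theorem gives $\E[y^2 XX^T] = \E[(\beta^TX)^2 XX^T] + \sigma^2 I = \|\beta\|_2^2 I + 2\beta\beta^T + \sigma^2 I$, so that the population covariance $\E[(yX-\beta)(yX-\beta)^T] = \sigma_y^2 I + \beta\beta^T$, matching the statement. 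Each coordinate of $yX/\sigma_y$ is the product of two standard sub-Gaussians and hence sub-exponential with bounded Orlicz norm, so a vector-valued Bernstein inequality gives $\|\beta_S-\beta\|_2 = O(\sigma_y\sqrt{d\polylog/N})$, and a matrix Bernstein inequality (applied after truncating at the level permitted by (i)) gives $\|M_S - (\sigma_y^2 I + \beta\beta^T)\|_2 = O(\sigma_y^2\sqrt{d\polylog(d/(\eps\tau))/N})$. The choice $N = \Omega((d/\eps^2)\polylog(d/(\eps\tau)))$ makes both quantities at most $O(\sigma_y\eps)$ and $O(\sigma_y^2\eps)$ respectively.

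The main obstacle is condition (ii), a uniform empirical tail bound across all unit directions $v$ and all thresholds $T$. For fixed $v$, the scalar $Z_v := y(v\cdot X)/\sigma_y$ is a product of two jointly normal unit-variance variables, whose population tail decays as $C\exp(-T/C)$ for an absolute constant $C$, comfortably below the target. My plan is to take a $\delta$-net $\mathcal{N}$ of the unit sphere with $\delta = 1/\poly(d,\log(N/\tau))$ and $|\mathcal{N}| = 2^{O(d)}$, together with a geometric net of $O(\log(d/\tau))$ thresholds spanning $[1, 4\sqrt{d}\log(N/\tau)]$. At each $(v,T)$ in the product net, a multiplicative Chernoff/Bernstein bound shows that the empirical probability of $\{Z_v > T\}$ exceeds the target $16\exp(-T/16) + \eps/(T^2\log(N/\tau))$ with probability at most $\exp(-\Omega(N\eps/\polylog))$; the sample complexity is chosen so that this absorbs the $2^{O(d)}$ union bound. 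To pass from the net to arbitrary $(v,T)$, I will use (i) deterministically: on every uncorrupted sample $|Z_v - Z_{v'}| \le 4\sqrt{d}\log(N/\tau)\,\|v-v'\|_2$, so the event $\{Z_v > T\}$ on the sample changes, relative to its net representative, only by shifting the threshold by at most $\delta\cdot 4\sqrt{d}\log(N/\tau)$, which is negligible compared to any $T\ge 1$ in the geometric grid; the monotonicity of both tail terms then lets each absorb a $(1+o(1))$ multiplicative factor.

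The hardest part is coordinating the net spacing, the geometric grid of $T$, and the slack in the two tail terms so that the constants ($16$ in the exponential term and $1$ in the polynomial term) of (ii) survive the transfer from net to continuum; this is where the exact $\polylog$ factor in the sample complexity is forced. Everything else is off-the-shelf Gaussian and matrix concentration applied after the Isserlis computation of the population covariance of $yX$.
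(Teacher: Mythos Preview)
Your treatment of conditions (i), (iii), and (iv) is sound and essentially parallel to the paper's argument (the paper uses a chi-squared decomposition of $(v\cdot X)y$ together with the Hanson--Wright inequality for (iii), and a truncation argument with a standard covariance concentration bound for (iv), but your Bernstein/matrix-Bernstein route works equally well once the Isserlis computation of the population covariance is done).

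The gap is in condition (ii). Your claimed Chernoff failure probability of $\exp(-\Omega(N\eps/\polylog))$ at each net point does not hold uniformly over thresholds. The scalar $Z_v = y(v\cdot X)/\sigma_y$ is only sub-exponential, so at the largest relevant threshold $T \asymp \sqrt{d}\,\log(N/\tau)$ (permitted by (i)) the target $Q(T)$ is dominated by its polynomial piece $\eps/(T^2\log(N/\tau))$, while the population tail is $p(T)=\exp(-\Theta(T))$. The Chernoff exponent is then
\[
N\cdot D\bigl(Q(T)\,\|\,p(T)\bigr)\;\asymp\; N\,Q(T)\,\ln\!\frac{Q(T)}{p(T)}\;\asymp\;\frac{N\eps}{T\,\log(N/\tau)}\;\asymp\;\frac{\sqrt{d}}{\eps}\cdot\polylog,
\]
which is $\Theta(\sqrt{d}/\eps)$, not $\Theta(d)$. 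Hence the union bound over a $2^{\Theta(d)}$ net fails whenever $\eps$ is a fixed constant and $d$ is large. Shrinking the mesh to $\delta\asymp 1/(\sqrt{d}\log(N/\tau))$ for the Lipschitz transfer you describe only makes things worse, since the net then has size $2^{\Theta(d\log d)}$.

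The paper avoids this by \emph{not} netting the product. It first proves a uniform empirical tail bound for the sub-Gaussian quantity $v\cdot X$ alone (with an $\eps^2/(T^2\log^3)$ polynomial term): here the sub-Gaussian population tail gives $\ln(Q/p)\gtrsim T^2$, so the Chernoff exponent is $\Omega(N\,Q\,T^2)=\Omega(N\eps^2/\polylog)=\Omega(d)$ \emph{independently of $T$}, and the $2^{O(d)}$ union bound (with a constant-radius cover plus an iterative cascade to pass to arbitrary $v$) goes through. It then combines this with the one-dimensional tail of $y/\sigma_y$ via the dyadic product inequality
\[
\Pr[|AB|>T]\;\le\;\sum_i \Pr\bigl[|A|\ge 2^i \ \text{and}\ |B|\ge T/2^{i+1}\bigr]\;+\;\text{boundary terms},
\]
bounding each summand by $\min\{\Pr[|A|\ge 2^i],\Pr[|B|\ge T/2^{i+1}]\}$. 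This step needs no further net because the uniformity in $v$ is inherited from the $v\cdot X$ bound, and the summation over the $O(\log\log(N/\tau))$ dyadic scales of $y/\sigma_y$ is what relaxes the $\eps^2/\log^3$ polynomial term to the $\eps/\log$ term in Definition~\ref{def:good-set}(ii). This factorization of the product tail into two sub-Gaussian factors is the idea missing from your sketch.
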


We note that the sample size in the above proposition is optimal, up to logarithmic factors, and is crucial
in establishing the near-optimal sample complexity of our algorithm.
The proof of Proposition~\ref{prop:good-set} is deferred to Appendix~\ref{sec:reg1}.

Given that the deterministic conditions hold for the uncorrupted data, our algorithm simply computes the sample mean and covariance (Step 5). Notice that condition (iii) and (iv) also establishes a connection between the sample mean and covariance, in the sense that for uncorrupted data, sample covariance can be predicted using the sample mean. Hence, the algorithm checks whether the sample mean and covariance satisfies their presumptive relationship, i.e., $\widehat{M_{S'}} \approx (\sigma'_y I+\beta_{S'}\beta_{S'}^T)$ (Steps 6, 7). 
If it is the case, the sample mean cannot possibly be corrupted by too much due to Corollary~\ref{cor:MApproxCor-G}, and hence the algorithm can output the sample mean confidently (Step 8). If it is not the case, the sample covariance must have been corrupted by a lot in some direction and thus violate the tail bound in this direction. The algorithm will then find a threshold such that there are more samples beyond the threshold than twice of the number predicated by condition (iii) (Step 10), and remove all the sample beyond the threshold (Step 11), which contains more ``bad'' samples than uncorrupted samples, due to Claim~\ref{claim:filter}. The full proof of 
Proposition~\ref{prop:filter-lr-gaussian-known-cov} is given in Appendix~\ref{ssec:alg}.

\subsection{Main Algorithm: Removing the Dependence on $\|\beta\|_2$ in the Error}

In this section, we describe an algorithm establishing Theorem~\ref{thm:alg-lr-gaussian-2}, 
which we restate below for completeness.

\vspace{.3cm}\noindent \textbf{Theorem~\ref{thm:alg-lr-gaussian-2}} \emph{Let $S'$ be an $\eps$-corrupted set of labeled samples of size $\Omega((d/\eps^2) \polylog(\frac{d}{\eps \tau}))$.
There exists an efficient algorithm that on input $S'$ and $\eps>0$, returns a candidate vector $\widehat{\beta}$
such that with probability at least $1-\tau$ it holds $\|\widehat{\beta} - \beta\|_2  = O(\sigma \eps \log(1/\eps))$.
}

\medskip

Similarly to the basic algorithm of the previous subsection, the algorithm that achieves the performance 
guarantee of Theorem~\ref{thm:alg-lr-gaussian-2} is iterative and invokes 
Algorithm~\ref{alg:filter-LR-identity-2} multiple times as a subroutine. 
Every time Algorithm~\ref{alg:filter-LR-identity-2}  gets called, it either returns an estimate of $\beta$ or returns a set of ``cleaner'' 
data points.

\begin{algorithm}
\begin{algorithmic}[1]
\State \textbf{procedure}  \textsc{Filter-LR-Identity covariance-2}
\State \textbf{input:} A multiset $S'$ such that there exists an $(\eps,\tau)$-representative $S$ with $\Delta(S, S') \le 2\eps$
\State \textbf{output:} Multiset $S''$ or mean vector $\beta_{S'}$ satisfying Proposition~\ref{prop:filter-lr-no-beta}.

\State Let $\beta'=\E_{(X,y) \sim S'}[X X^T]^{-1} \E_{S'}[yX]$.
\State Robustly estimate the standard deviation of $y-\beta' \cdot X$ as $\sigma'$ using its interquartile range.

\State \textbf{if} { $\E_{(X,y) \sim S'}[(y-\beta' \cdot X)^2] \geq (1+\Omega(\eps\log(1/\eps)))\sigma'$,} 

	\State \hspace{1cm}\label{step:gaussian-filter-1}
	Find $T$ such that
	$$\Pr_{(X,y) \sim S'}[|y-\beta' \cdot X| > T] \geq 15 \exp(-T^2/4\sigma'^2) + \frac{\eps^2 \sigma'^2}{T^2 \log^3(d|S|/\tau)} \; .$$

	\State \hspace{1cm} \textbf{return} $S''=\{(X,y) \in S'|: |y-\beta' \cdot X| \leq T\}$.

\State \textbf{end}
\State Let $U$ be the subset of $S'$ with $|y-\beta' \cdot X| >  6\sqrt{\ln(1/\eps)} \sigma'$.
\State Let $w=(1/\|w'\|_2) w'$, where  $w'=\E_{(X,y) \sim S' \setminus U}[(y-\beta' \cdot X) X]$.

\State \textbf{if} { $\E_{(X,y) \sim S'}[(w \cdot X)^2] \geq 1+ \Omega(\eps\log(1/\eps))$,} 

\State \hspace{1cm}Find $T$ such that \label{step:gaussian-filter-2}
$$\Pr_{(X,y) \sim S'}[|w \cdot X| > T] \geq 15 \exp(-T^2/4) + \frac{\eps^2}{T^2 \log^3(d|S|/\tau)} \; .$$
\State \hspace{1cm} \textbf{return} $S''=\{(X,y) \in S'| |w \cdot X| \leq T\}$.
\State \textbf{end}
\State \label{step:lin-reg-filter}Find the top eigenvalue $\lambda^{\ast}$, and corresponding eigenvector $v^{\ast}$, of $\Sigma=\E_{S'\setminus U}[(y-\beta' \cdot X)^2 X X^T] - \E_{S'\setminus U}[y^2 X X^T]$.
\State \textbf{if}{ ($\lambda^{\ast} \geq (1+\Omega(\eps \log^2(1/\eps))) \sigma'^2$)}

\State \hspace{1cm}$\delta := 3 \sqrt{ \eps  \lambda^*}/\sigma'.$  Find $T$ such that
$$\Pr_{S' \setminus U}[|(y-\beta' \cdot X)(v^{\ast} \cdot X)| > T + \delta ] \geq 72 \exp(-T/16\sigma') + \frac{\eps}{T^2 \log(d|S|/\tau)} \;.$$
\State \hspace{1cm} \textbf{return} $S''=U \cup \{(X,y) \in S' \setminus U | |(y-\beta' \cdot X)(v^{\ast} \cdot X)| > T \}$.
\State \textbf{end}
\State \label{step:at-last-2} \textbf{return} $\beta'$.
\end{algorithmic}
\caption{Filter algorithm for LR with identity covariance with no $\|\beta\|_2$ dependence.}
\label{alg:filter-LR-identity-2}
\end{algorithm}

More specifically,
in order to estimate $\beta$ within $\ell_2$-error $O(\sigma \eps \log(1/\eps))$, as stated in Theorem~\ref{thm:alg-lr-gaussian-2}, we repeatedly execute Algorithm~\ref{alg:filter-LR-identity-2} until it returns an estimate of $\beta$. Our Algorithm~\ref{alg:filter-LR-identity-2}, which has no dependency on $\|\beta\|$, is a combination of three filter algorithms, which first computes the ordinary least square estimator $\beta'$, and then filters $y-\beta'\cdot X, X$ and $(y-\beta'\cdot X)X$ subsequently. If any of the filter algorithms removes samples, we can safely run the algorithm for the next iteration with the new set of samples, simply because our filter algorithm guarantees to remove more bad samples than good samples. If the set of samples passes all three filters, the sample mean must be close to the true mean, which means $\E_{X,y\sim S'}[(y-\beta'\cdot X)X] \approx \E [(y-\beta'\cdot X)X]$ due to the third filter. However, due to the property of the ordinary least squares estimator that $\E_{X,y\sim S'}[(y-\beta'\cdot X)X]=0$, we have $\E [(y-\beta'\cdot X)X] = \beta-\beta' \approx 0$. Specifically, we show that the magnitude of the error $ \|\beta-\beta'\| = \|\E [(y-\beta'\cdot X)X]-\E_{X,y\sim S'}[(y-\beta'\cdot X)X]\| =O(\eps \log(1/\eps) \sqrt{\sigma^2+\|\beta-\beta'\|^2})$, which in turn depends on $\|\beta-\beta'\|$. This immediately implies $\|\beta-\beta'\|=O(\eps \log(1/\eps)\sigma)$ for sufficiently small $\eps$. The guarantee of Algorithm~\ref{alg:filter-LR-identity-2} is stated below and the formal proof of correctness can be found in Appendix~\ref{sec:alg2-proof}.

\begin{proposition} \label{prop:filter-lr-no-beta}
Let $G \sim \normal(0, I_d)$ and $\eps, \tau>0$.
Let $S$ be $(\eps,\tau)$-representative with respect to $(G, \beta)$.
Let $S'$ be any multiset with $\Delta(S,S') \leq 2\eps$. There exists a polynomial time algorithm \textsc{Filter-LR-Identity covariance-2}
that, given $S'$ and $\eps>0,$ returns one of the following:
\begin{enumerate}
\item[(i)]  A vector $\widehat{\beta} \in \R^d$ such that $\|\widehat{\beta}-\beta \|_2 = O(\sigma \eps \log(1/\eps)),$
\item[(ii)] A multiset $S'' \subseteq S'$ such that $\Delta(S,S'') \new{<} \Delta(S,S')$,
\end{enumerate}
where $\Delta(S,S')$ is the size of the symmetric difference of multisets $S$ and $S'$ divided by the cardinality of $S.$
\end{proposition}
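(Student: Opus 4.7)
\medskip

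\noindent\textbf{Proof proposal.} My plan is to establish the dichotomy by analyzing separately the three filtering tests and the case in which all three pass and the algorithm returns the ordinary-least-squares estimate $\beta'$. For each of the three filtering branches I would follow the template used for Proposition~\ref{prop:filter-lr-gaussian-known-cov}: $(\eps,\tau)$-representativeness of $S$ gives sub-Gaussian tail bounds for the one-dimensional projections that the algorithm tests (respectively $y - \beta' \cdot X$, $w \cdot X$, and $(y - \beta' \cdot X)(v^\ast \cdot X)$), while the fact that the corresponding empirical second moment is inflated by more than $\Omega(\eps \log(1/\eps))$ forces, by the usual variance-vs-tail argument, the existence of a threshold $T$ at which the empirical tail strictly exceeds what the uncorrupted points can account for. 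A standard filter-lemma computation then shows that the set $S''$ of surviving samples satisfies $\Delta(S, S'') < \Delta(S, S')$, using that in the last two branches the samples of $U$ are placed back into $S''$ so that clean points with moderate residuals are never thrown away.

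For the case in which the three tests pass and the algorithm returns $\beta'$, I would exploit the first-order optimality of OLS, $\E_{(X,y) \sim S'}[(y - \beta' \cdot X) X] = 0$, and decompose this expectation into a contribution from $U$ and a contribution from $S' \setminus U$. The exit condition of the first filter bounds $\E_{S'}[(y - \beta' \cdot X)^2]$ by roughly $\sigma'^2$, which via representativeness bounds the $\ell_2$-norm of the $U$ contribution by $O(\eps \log(1/\eps)\sigma')$. The exit condition of the second filter ensures that the empirical $XX^\top$ second moment restricted to $S' \setminus U$ is close to $I$ in the data-dependent direction $w$ that is chosen to maximize $\|\E_{S' \setminus U}[(y - \beta' \cdot X) X] - \E[(y - \beta' \cdot X) X]\|_2$. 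Combined with the exit condition of the third filter, which controls the worst-direction operator norm of $\E_{S' \setminus U}[(y - \beta' \cdot X)^2 X X^\top]$, this yields the key self-bounding inequality
\begin{equation*}
\|\beta - \beta'\|_2 \;\le\; C\, \eps \log(1/\eps) \sqrt{\sigma^2 + \|\beta - \beta'\|_2^2},
\end{equation*}
since, for $X \sim \normal(0, I)$ and independent noise $\eta$, $\E[(y - \beta' \cdot X) X] = \beta - \beta'$. Because $C \eps \log(1/\eps) \le 1/2$ for $\eps$ small enough, this inequality can be solved to give $\|\beta - \beta'\|_2 = O(\sigma \eps \log(1/\eps))$, which is exactly the claim.

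The main obstacle I anticipate lies in the operator-norm concentration of $\E_{S' \setminus U}[(y - \beta' \cdot X)^2 X X^\top]$: as the introduction emphasizes, an untruncated version of this statement would require $\Omega(d^2)$ samples for arbitrary $\beta'$, the failure mode being that a single atypical $X_i$ with $\|X_i\|_2 = \Theta(\sqrt d)$ and $(\beta - \beta')\cdot X_i = \Theta(1)$ produces an $\Omega(d^2)$ spike. The heart of the argument is therefore to show that once the samples with $|y - \beta' \cdot X| > 6\sqrt{\ln(1/\eps)}\sigma'$ are placed in $U$, the resulting truncated fourth-moment tensor concentrates in operator norm with the near-linear sample complexity afforded by $(\eps,\tau)$-representativeness, uniformly over $\beta'$. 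Formalizing this truncated concentration as a deterministic consequence of representativeness, and then verifying that the $U$-contribution to $\E_{S'}[(y - \beta' \cdot X) X]$ is also controlled uniformly in $\beta'$ so that it can be absorbed into the self-bounding inequality above, is where I expect the most delicate estimates to be required.
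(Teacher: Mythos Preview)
Your proposal is correct and follows essentially the same route as the paper's proof: the three filter branches are dispatched by invoking the sub-Gaussian and product-tail filter analyses under the representativeness conditions (with $U$ added back only in the third branch), and the terminal case derives the self-bounding inequality $\|\beta-\beta'\|_2 \le C\eps\log(1/\eps)\sigma_{\beta'}$ exactly as you outline. One small clarification worth making when you fill in details: the first two filter exit conditions are used \emph{jointly} via Cauchy--Schwarz---passing them gives $|U|\E_U[(y-\beta'\cdot X)^2]=O(|S'|\eps\log(1/\eps)\sigma'^2)$ and $|U|\E_U[(w\cdot X)^2]=O(|S'|\eps\log(1/\eps))$, whose geometric mean bounds $|U|\,|\E_U[(y-\beta'\cdot X)(w\cdot X)]|$; since $w$ is the unit vector in the direction of $\E_{S'\setminus U}[(y-\beta'\cdot X)X]$ (not a maximizer of any discrepancy), this immediately yields $\|\E_{S'\setminus U}[(y-\beta'\cdot X)X]\|_2=O(\eps\log(1/\eps)\sigma')$ via the OLS identity.
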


Like the basic algorithm we discussed in the previous subsection (which has dependency on $\|\beta\|$), 
the success of our new algorithm relies on the deterministic conditions which hold with high probability 
for a set of clean samples of size $N=\Omega(d\polylog(d/\eps\tau)/\eps^2)$. Definition~\ref{def:good-set-2} 
in Appendix~\ref{sec:good-set-2} lists the conditions that need to hold for our algorithm to work, which are similar to Definition~\ref{def:good-set} for the basic algorithm. As usual, each condition consists of four sub-conditions, which guarantee that the set of uncorrupted samples are bounded, satisfy certain tail bounds and have mean and covariance concentration. The first condition simply holds for a set of samples from an isotropic Gaussian distribution, through which our filter algorithm on $X$ can remove the corrupted samples. The second condition holds for $y-\beta'\cdot X$ with arbitrary $\beta'$, which allows us to run a 
filter algorithm on $y-\beta'\cdot X$ for any $\beta'$. While the first two conditions are in analogy to those in 
Definition~\ref{def:good-set} for the basic algorithm, the third condition is different. Specifically, the third condition 
does not hold unconditionally for a set of clean data with size $O(d\polylog(d/\eps\tau)/\eps^2)$, 
but only after conditioning on $|y-\beta'\cdot X|$ being not too big. In Appendix~\ref{sec:good-set-2}, we prove 
that these conditions will be satisfied with high probability after $O(d\polylog(d/\eps\tau)/\eps^2)$ samples.

\section{Statistical Query Lower Bounds} \label{sec:main-sq}
In this section, we formally describe our main lower bound result and provide a high-level proof sketch. 
Consider the joint distribution of $(X, y)$ in a linear regression problem without corruptions 
when the covariance of $X$ is unknown.  
Formally, let $Q$ be the distribution of $(X, y)$, where  $X\sim \mathcal{N}(0,\Sigma)$ for some unknown but bounded $\Sigma$, 
and $y$ conditioned on $X$ has $y|X \sim \beta^T X + \eta$, 
where $\beta$ is unknown and $\eta \sim \mathcal{N}(0,\sigma^2)$ for unknown but bounded $\sigma^2$. 
If we consider noise given by Huber's $\eps$-contamination model, then instead of seeing samples from $Q$, 
we observe samples from $Q'$, which is a mixture between $Q$ and a noise distribution, i.e., $Q' = (1-\eps)Q+\eps N$. 
Here we show that given statistical query access to $Q'$, we cannot approximate $\beta$ 
well without needing precision stronger than is possible with a strongly sub-quadratic number of samples:

\begin{theorem} \label{thm:SQ-lb}
No algorithm given statistical query access to $Q'$, defined as above with unknown noise 
and unknown variances $\frac{1}{2} I \preceq \Sigma \preceq I$ and $\sigma^2 \leq 1$, 
gives an output $\beta'$ with $\|\beta'-\beta\|_2 \leq o(\sqrt{\eps})$ on all instances
unless it uses more than  $2^{\Omega(d^c)} d^{4c-2}$ calls to the 
$$\mathrm{STAT}\left(O(d^{2c-1}) e^{O(1/\eps)} \right) \textrm{ or } \mathrm{VSTAT}\left( O(d^{2-4c})/e^{O(1/\eps)} \right)$$ 
oracles for any $c > 0$.
\end{theorem}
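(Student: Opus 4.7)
The plan is to adapt the SQ lower bound framework of \cite{DKS17-sq}, originally designed for unsupervised problems, to the supervised linear regression setting. The backbone is the standard SQ-dimension argument via pairwise $\chi^2$-correlation: we construct a family $\mathcal{F} = \{Q'_v\}_{v \in V}$ of candidate ``corrupted'' joint distributions on $(X,y)$, indexed by a set $V \subset S^{d-1}$ of nearly orthogonal unit vectors, together with a reference null $Q'_0$, such that each $Q'_v$ has regression vector $\beta_v = \alpha v$ with $\alpha = \Theta(\sqrt{\eps})$ (so that distinguishing two members of $\mathcal{F}$ within $o(\sqrt{\eps})$ in $\ell_2$ error forces identification of $v$), and such that the pairwise correlation $\chi_{Q'_0}(Q'_v, Q'_{v'})$ decays rapidly in the geometric separation of $v$ and $v'$. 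Standard packing on $S^{d-1}$ provides $|V| = 2^{\Omega(d^c)}$ with pairwise inner products bounded by $d^{c-1/2}$; the generic SQ lower bound of \cite{FeldmanGV15} then turns the packing size and correlation bound into the STAT/VSTAT tradeoff asserted in the theorem.

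The core departure from \cite{DKS17-sq} lies in the construction of $\mathcal{F}$. In the unsupervised setting, one designs a one-dimensional distribution that matches the first $k$ moments of $\mathcal{N}(0,1)$ and embeds it along a hidden direction; here we must control the full joint $(X,y)$. The approach is to take $Q'_0$ to be $X \sim \mathcal{N}(0,I)$ independent of $y \sim \mathcal{N}(0,\sigma_0^2)$, and to require that under $Q'_v$ the marginal of $y$ agrees with that of $Q'_0$ and the conditional distribution $X \mid y$ matches $\mathcal{N}(0,I)$ in its first three moments, for each fixed value of $y$. The Huber-type decomposition $Q'_v = (1-\eps) Q_v + \eps N_v$ supplies an $\eps$-worth of mass with which to cancel the first three conditional moments of $X \mid y$ under the uncorrupted regression model $Q_v$, after which the only residual deviation from $Q'_0$ lives along $v$ and appears only through fourth- and higher-order conditional moments.

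The construction of $Q_v$ takes $X \sim \mathcal{N}(0, \Sigma_v)$ with $\Sigma_v = I - c_1 vv^T$ and $c_1 = \Theta(\eps)$ (ensuring $\tfrac{1}{2} I \preceq \Sigma_v \preceq I$), and $y \mid X \sim \mathcal{N}(\beta_v^T X, \sigma_v^2)$ with $\beta_v = \alpha v$ and $\alpha = \Theta(\sqrt{\eps})$. Under this law the conditional distribution $X \mid y$ is Gaussian with a mean proportional to $y v$ and a rank-one covariance adjustment along $v$; both deviations from $\mathcal{N}(0,I)$ are supported on the single direction $v$ and have magnitude $O(\sqrt{\eps})$ and $O(\eps)$, respectively. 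The moment-matching problem for $N_v$ thus reduces to a one-dimensional problem in the $v$ direction, and the parameters are tuned so that a non-negative $N_v$ realising the matching exists. Applying the conditional form of the $\chi^2$-bound of \cite{DKS17-sq} and integrating over $y$ then yields a pairwise correlation bound of the form $\chi_{Q'_0}(Q'_v, Q'_{v'}) \leq |\langle v, v'\rangle|^{4} \cdot e^{O(1/\eps)}$, where the exponential factor absorbs the contribution from the Gaussian tails of $y$.

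The main obstacle is the conditional moment matching itself: verifying that $N_v$ can be realised as a non-negative probability measure despite the coupling between $X$ and $y$, and ensuring that the integration of the conditional $\chi^2$ estimate over $y$ costs only an $e^{O(1/\eps)}$ factor rather than a $d$-dependent loss. Once this is in hand, feeding the family $\mathcal{F}$ of size $|V| = 2^{\Omega(d^c)}$ and correlation $\gamma = d^{4c-2} e^{O(1/\eps)}$ into the generic SQ lower bound of \cite{FeldmanGV15} for decision/search problems over distributions yields the claimed $2^{\Omega(d^c)} d^{4c-2}$ query lower bound against $\mathrm{STAT}(O(d^{2c-1}) e^{O(1/\eps)})$ and $\mathrm{VSTAT}(O(d^{2-4c})/e^{O(1/\eps)})$ oracles, completing the proof.
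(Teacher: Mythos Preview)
Your proposal follows the paper's approach closely: condition on $y$, reduce the conditional law of $X\mid y$ to a one-dimensional moment-matching problem along the hidden direction $v$, use $\eps$ worth of contamination to match the first three moments of $\mathcal{N}(0,1)$ slice-by-slice, and then feed the resulting $(v\cdot v')^4 e^{O(1/\eps)}$ pairwise-correlation bound into the packing/SQ-dimension machinery of \cite{DKS17-sq}. The structure, the identification of the two obstacles (nonnegativity of the noise component, and integrating the conditional $\chi^2$ over $y$ at only an $e^{O(1/\eps)}$ cost), and the final bookkeeping all match the paper.

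There is, however, one concrete parameter choice that will not survive the moment-matching step as written. You take $\Sigma_v = I - c_1 vv^T$ with $c_1 = \Theta(\eps)$, so that along $v$ the conditional distribution $X\mid y$ has mean $\mu(y)=\Theta(\sqrt{\eps})\,y$ and variance $1-\Theta(\eps)$. With this choice, for typical $y$ the second-moment defect of the uncorrupted slice is only $\Theta(\eps)$, while cancelling the mean forces the noise component to sit at distance $\Theta(\mu/\eps_\mu)$ from the origin; one then cannot simultaneously keep the noise mass $\eps_\mu$ at the required $O(\mu^2)$ level and make the second moment come out to $1$ with a nonnegative $B_\mu$ (the implied variance of $B_\mu$ goes negative). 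The paper avoids this by taking the covariance deficit along $v$ to be a \emph{constant} (specifically $\Sigma = I - c_2 vv^T$ with $c_2\approx 1/3$, so the uncorrupted slice has variance $2/3$): the large second-moment gap gives the noise room to sit at scale $\Theta(1/\mu)$, which is exactly what is needed to cancel the first and third moments while keeping $\eps_\mu = O(\mu^2)$, hence total contamination $\int \eps_{\mu(y)}\,dG(y)=O(\eps)$. Once you replace $c_1=\Theta(\eps)$ by a fixed constant in $(0,1/2]$, your sketch goes through verbatim and coincides with the paper's proof.
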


The detailed proof of Theorem~\ref{thm:SQ-lb} is given in Appendix~\ref{sec:sq-app}.

Informally speaking, the theorem shows that no Statistical Query algorithm can 
approximate $\beta$ to within $o(\sqrt{\eps})$ with fewer than exponential in $d$ queries, 
unless using queries of precision greater than we could simulate with $O(d^2/e^{O(1/\eps)})$ samples. 
Notice that without the lower bound on $\Sigma$, the result would be unsurprising. 
Indeed, if $\Sigma v=0$ for some non-zero $v$, then we could not approximate $v \cdot \beta$ at all, 
simply because $v \cdot \beta$ can be arbitrary without affecting $y$. 

In the proof of Theorem~\ref{thm:SQ-lb}, we use the construction in Proposition~3.3 of~\cite{DKS17-sq}, 
which intuitively says that if we have a distribution which is standard Gaussian in all except one direction, 
then if the low-degree moments match the standard Gaussian, then that direction is hard to find with an SQ algorithm. 
The idea is that, if we consider $X$ conditioned on $y$ for non-zero $\beta$, 
then $X$ has a non-zero mean in the $\beta$ direction. The conditional distribution $X|y$ is derived as follows:

\begin{lemma}\label{lem:ycondx} 
Let $Q$ be the joint distribution of $(X,y)$ with $X \sim \mathcal{N}(0,\Sigma)$ 
and $y|X \sim \beta^T X + \eta$, where $\beta$ is unknown an $\eta \sim \mathcal{N}(0,\sigma^2)$. 
Then $y \sim \mathcal{N}(0, \sigma_y^2)$, where $\sigma_y^2 = \beta^T \Sigma \beta+\sigma^2$ 
and $X|y \sim \mathcal{N}(\frac{y}{\sigma_y}\Sigma\beta , \Sigma-\frac{(\Sigma\beta)(\Sigma\beta)^T}{\sigma_y})$. 
\end{lemma}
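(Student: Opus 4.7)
The plan is to treat $(X,y)$ as a jointly Gaussian random vector and read off the conditional distribution $X \mid y$ from the standard formula for conditioning a multivariate Gaussian on a subset of its coordinates. Everything follows from elementary moment computations; no concentration or probabilistic subtlety is needed.

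First I would argue joint Gaussianity: since $\eta \sim \mathcal{N}(0,\sigma^2)$ is independent of $X \sim \mathcal{N}(0,\Sigma)$, the vector $(X^T, \eta)^T$ is Gaussian in $\R^{d+1}$, and $(X,y)$ is its image under a linear map (with $y = \beta^T X + \eta$). Hence $(X,y)$ is jointly Gaussian with mean zero. Next I would compute the block covariance:
\begin{equation*}
\E[y^2] = \beta^T \Sigma \beta + \sigma^2 =: \sigma_y^2, \qquad \E[Xy] = \E[X(\beta^T X + \eta)] = \Sigma \beta,
\end{equation*}
which gives the marginal $y \sim \mathcal{N}(0,\sigma_y^2)$ and the cross-covariance $\Sigma\beta$.

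Then I would apply the standard Gaussian conditioning identity: if $\begin{pmatrix} U \\ V \end{pmatrix}$ is jointly Gaussian with zero mean and blocks $\Sigma_{UU}, \Sigma_{UV}, \Sigma_{VV}$, then $U \mid V \sim \mathcal{N}\bigl(\Sigma_{UV}\Sigma_{VV}^{-1} V,\ \Sigma_{UU} - \Sigma_{UV}\Sigma_{VV}^{-1}\Sigma_{VU}\bigr)$. Taking $U = X$ and $V = y$ yields
\begin{equation*}
\E[X \mid y] = \frac{y}{\sigma_y^2}\,\Sigma\beta, \qquad \mathrm{Cov}(X \mid y) = \Sigma - \frac{(\Sigma\beta)(\Sigma\beta)^T}{\sigma_y^2},
\end{equation*}
which matches the stated formula (reading $\sigma_y$ in the denominator of the lemma statement as $\sigma_y^2$, in line with the derivation).

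There is essentially no obstacle here: the only thing to double-check is the bookkeeping of which power of $\sigma_y$ appears in the denominator, since the division is by the variance $\mathrm{Var}(y) = \sigma_y^2$, not by the standard deviation. I would also briefly verify the result by an alternative route — writing $X = \frac{\Sigma\beta}{\sigma_y^2} y + Z$ and checking that the residual $Z := X - \frac{\Sigma\beta}{\sigma_y^2} y$ is a zero-mean Gaussian uncorrelated (hence independent, by joint Gaussianity) with $y$ whose covariance equals $\Sigma - \frac{(\Sigma\beta)(\Sigma\beta)^T}{\sigma_y^2}$. This cross-check makes the proof completely self-contained without invoking the block inversion formula.
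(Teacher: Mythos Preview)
Your proposal is correct and follows essentially the same approach as the paper: observe that $(X,y)$ is jointly Gaussian, write down the block covariance, and apply the standard Gaussian conditioning formula. Your explicit note that the denominator should be $\sigma_y^2$ (and your residual-decomposition cross-check) are useful additions, but the core argument is identical.
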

\begin{proof} 
Notice that $(X,y)$ is a $d+1$ dimensional Gaussian distribution with covariance 
$$\begin{bmatrix}
\Sigma &\Sigma \beta\\
\beta^T \Sigma &\beta^T\Sigma \beta+\sigma^2
\end{bmatrix} \;.
$$
By the mean and covariance formula of the conditional distribution of a Gaussian, we have that 
$X|y\sim \mathcal{N}(\frac{ y}{\sigma_y}\Sigma\beta,\Sigma-\frac{\Sigma\beta\beta^T\Sigma}{\sigma_y})$.
\end{proof}

Notice that $X|y$ is indeed standard Gaussian in all except the $\Sigma\beta$ direction. 
By adding corruptions, we can make the distribution of $X|y$ projected onto $\Sigma\beta$ 
agree with the first three moments of $\mathcal{N}(0,I)$ and, like the construction of \cite{DKS17-sq}, 
still be a standard Gaussian in all the other orthogonal directions. Then we can show that we cannot 
find the direction of $\beta$ with an SQ algorithm. Lemma~\ref{lem:uvchi2} establishes the upper bound 
of the statistical correlation between a pair of distributions under our construction, which allows the 
classical statical query scheme (see, e.g., Corollary 3.12 in~\cite{Feldman13}) to be applied 
and yield the desired lower bound.  

The further the mean of $X$ conditioned on $y$ is from $0$, the more noise needs to be added to match the first three moments. Lemma~\ref{lem:A-properties}, which is the main lemma of the lower bound proof, shows that we can match the first three moments by adding $O(\mu^2)$ fraction of noise when the $X|y$ has mean $\mu$ in the $\beta$ idrection. As long as $\|\beta\|_2=O(\sqrt{\eps})$, after taking the integral over $y$, the overall noise added will still be smaller than $\eps$. 

\paragraph{Acknowledgements.} We would like to thank Jason Lee for his contributions to the early stages of this work.
I.D. and A.S. thank Daniel Kane for numerous discussions on robust high-dimensional estimation over the 
last five years.

\bibliographystyle{alpha}
\bibliography{allrefs}

\newpage

\appendix

\section{Proof of Proposition~\ref{prop:good-set}: Deterministic Regularity Conditions for Algorithm~\ref{alg:filter-LR-identity}}\label{sec:reg1}

This section establishes Proposition~\ref{prop:good-set}.

\paragraph{Technical Facts.}
We will require a couple of technical facts.
We start with the following basic Gaussian concentration result:

\begin{fact} \label{fact:tail-bound} 
Let $G \sim \normal(\mu, I_d)$. Then for any unit vector $v \in \R^d$ we have that 
$\Pr_{X \sim G}\left[|v \cdot (X-\mu)| \geq T \right] \leq 2\exp(-t^2/2)$.
\end{fact}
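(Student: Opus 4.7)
The plan is to reduce this multivariate tail bound to a one-dimensional standard Gaussian tail bound. Since $X \sim \normal(\mu, I_d)$, the random vector $X - \mu$ is distributed as $\normal(0, I_d)$. For any fixed unit vector $v \in \R^d$, the linear functional $v \cdot (X - \mu)$ is a Gaussian random variable with mean $v \cdot 0 = 0$ and variance $v^T I_d v = \|v\|_2^2 = 1$. Hence $v \cdot (X - \mu) \sim \normal(0, 1)$.

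Once this reduction is made, the result follows from the standard one-dimensional Gaussian tail bound: if $Z \sim \normal(0,1)$, then $\Pr[|Z| \geq T] \leq 2\exp(-T^2/2)$. I would either cite this as a classical fact or, if a self-contained derivation is desired, obtain it from the Chernoff-style estimate $\Pr[Z \geq T] \leq e^{-\lambda T}\E[e^{\lambda Z}] = e^{\lambda^2/2 - \lambda T}$ and optimize with $\lambda = T$, yielding $\Pr[Z \geq T] \leq e^{-T^2/2}$; symmetry then gives the factor of $2$ for the two-sided bound.

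There is essentially no obstacle here, since both steps (projection preserves Gaussianity with the claimed parameters, and the sub-Gaussian MGF bound for $\normal(0,1)$) are textbook. The only minor thing to note is that the exponent in the statement is written with a lowercase $t$ while the probability uses an uppercase $T$; this is clearly a typo and the intended bound is $2\exp(-T^2/2)$, which is exactly what the one-line reduction produces.
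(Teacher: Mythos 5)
Your proof is correct and is the standard argument. The paper states this as a basic Gaussian concentration fact without supplying a proof, so there is nothing to compare against; your reduction of the $d$-dimensional projection to a one-dimensional $\normal(0,1)$ variable, followed by the Chernoff/MGF bound optimized at $\lambda = T$, is exactly the textbook derivation one would supply, and your observation about the $t$ versus $T$ typo is also right.
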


We will make essential use of the following concentration inequality for quadratic forms:

\begin{lemma}[Hanson-Wright Inequality \cite{vershynin2010introduction}]
\label{lem:hanson-wright}
Let $X \sim \normal(0, I_d)$ and $A \in \R^{d \times d}$.
Then for some absolute constant $c_0$, for every $t \geq 0$,
$$\Pr\left(\left|X^TAX - \E[X^TAX]\right| > t\right) \leq 2 \exp\left(-c_0 \cdot \min \left(\frac{t^2}{\|A\|_F^2}, \frac{t}{\|A\|_2}\right)\right) \;.$$
\end{lemma}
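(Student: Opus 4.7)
\begin{proofsketch}
Since for Gaussian $X$ only the symmetric part of $A$ contributes to the quadratic form, I will assume from the outset that $A$ is symmetric (otherwise replace $A$ by $(A+A^T)/2$, which has the same Frobenius and operator norms up to constants). The proof then reduces, via rotational invariance, to a one-dimensional Bernstein-type estimate for a weighted sum of $(\chi^2_1 - 1)$ variables.

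First, diagonalize $A = U \Lambda U^T$ with $U$ orthogonal and $\Lambda = \mathrm{diag}(\lambda_1,\ldots,\lambda_d)$, and set $Y := U^T X$. Since $X \sim \normal(0,I_d)$ and the standard Gaussian is rotationally invariant, $Y \sim \normal(0,I_d)$, so $X^T A X = \sum_i \lambda_i Y_i^2$ and therefore
\[
Z \;:=\; X^T A X - \E[X^T A X] \;=\; \sum_{i=1}^d \lambda_i (Y_i^2 - 1),
\]
a sum of independent, centred, sub-exponential random variables. Note that $\sum_i \lambda_i^2 = \|A\|_F^2$ and $\max_i |\lambda_i| = \|A\|_2$.

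Next, I will control the moment generating function. For each $i$, an explicit Gaussian integral gives $\E[\exp(s\lambda_i(Y_i^2-1))] = e^{-s\lambda_i}(1-2s\lambda_i)^{-1/2}$ for $s$ with $|s\lambda_i| < 1/2$. A Taylor expansion of $-\tfrac12 \log(1-u) - u/2$ around $u=0$ shows that for $|u| \le 1/2$ this quantity is at most $u^2$, so taking $u = 2s\lambda_i$ yields $\log \E[\exp(s\lambda_i(Y_i^2-1))] \le 4 s^2 \lambda_i^2$ whenever $|s|\|A\|_2 \le 1/4$. Summing in $i$ and using independence,
\[
\log \E[e^{sZ}] \;\le\; 4 s^2 \|A\|_F^2 \qquad \text{for all } |s| \le \tfrac{1}{4\|A\|_2}.
\]

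Finally, I apply the Chernoff bound $\Pr[Z > t] \le \exp(-st + 4s^2 \|A\|_F^2)$ and optimize in $s \in (0, 1/(4\|A\|_2)]$. When $t \le 2 \|A\|_F^2/\|A\|_2$ the unconstrained optimum $s = t/(8\|A\|_F^2)$ is in range and gives $\exp(-t^2/(16\|A\|_F^2))$; otherwise the boundary $s = 1/(4\|A\|_2)$ gives at most $\exp(-t/(8\|A\|_2))$. Either way, $\Pr[Z > t] \le \exp(-c_0 \min(t^2/\|A\|_F^2, t/\|A\|_2))$ for an absolute $c_0 > 0$. Repeating the argument with $-Z$ in place of $Z$ and union bounding produces the factor of $2$ and yields the two-sided statement.

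The only real technical point is the MGF estimate in the second paragraph: one must keep track of the range of validity $|s| \le 1/(4\|A\|_2)$ so that the eventual case split in the Chernoff optimization cleanly produces the Gaussian/exponential crossover. Everything else is bookkeeping, and the rotation trick is what makes the Gaussian case much cleaner than the general sub-gaussian Hanson--Wright (which would additionally require a decoupling step for the off-diagonal part).
\end{proofsketch}
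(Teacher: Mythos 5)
Your proof is correct, but note that the paper does not actually prove this lemma: it is stated as a citation to Vershynin's notes and used as a black box, so there is no in-paper argument to compare against. What you have given is the standard self-contained proof for the \emph{Gaussian} case of Hanson--Wright: symmetrize, diagonalize and use rotation invariance to reduce to a weighted sum of centred $\chi^2_1$ variables, compute the exact MGF $e^{-s\lambda_i}(1-2s\lambda_i)^{-1/2}$, bound $-u/2-\tfrac12\log(1-u)\le u^2$ on $|u|\le 1/2$ (which is correct; the second derivative of that function is $\tfrac{1}{2(1-u)^2}\le 2$ there), and then do the usual sub-exponential Chernoff case split at $t=2\|A\|_F^2/\|A\|_2$. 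The bookkeeping in the optimization is right and yields an explicit constant $c_0=1/16$. Your concluding remark is also accurate: this rotation/MGF route works precisely because $X$ is exactly Gaussian, whereas the general sub-Gaussian Hanson--Wright in Vershynin/Rudelson--Vershynin needs a decoupling argument for the off-diagonal part, which you have correctly avoided. So the proposal is a valid, essentially self-contained replacement for the external citation the paper relies on.
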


Throughout this proof, we will use $S$ to denote a set of 
labeled samples $(X, y) \in \R^d \times \R$, where $X \sim G$ 
and $y=\beta^T X +\eta$, where $\eta \sim \normal(0,\sigma^2)$. We will use $N$ to denote 
the cardinality of $S$.

The following lemma proves property (i) of Definition~\ref{def:good-set}:

\begin{lemma} \label{lem:random-good-gaussian-mean}
Let $G \sim \normal(0, I_d)$ and $\eps, \tau>0$.
If the multiset $S$ consists of {$\Omega( (d/\eps^2) \poly\log(d/\eps\tau))$} pairs $(X,y)$, 
where $X \sim G$ and $y=\beta \cdot X {+\sigma E}$,
it satisfies $\|\frac{yX}{\sigma_y}\|_2 \leq 4\sqrt{d} \log (|S|/\tau)$, $\|X\|_2\le 2\sqrt{d\log(|S|/\tau)}$, $y/\sigma_y\le 2\sqrt{\log(|S|/\tau)}$ with probability at least $1-\tau$.
\end{lemma}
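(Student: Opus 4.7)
The plan is to establish each of the three bounds for a single sample $(X,y)$ with failure probability at most $\tau/(3|S|)$, and then take a union bound over the $N=|S|$ samples in the multiset. All three bounds are standard Gaussian concentration statements, so no heavy machinery is required; the only care needed is to pick the right tail threshold so the final union bound closes.

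First I would handle the norm bound on the covariate. Since $X\sim\mathcal{N}(0,I_d)$, the quantity $\|X\|_2^2$ is $\chi^2_d$ distributed. Applying the Hanson-Wright inequality (Lemma~\ref{lem:hanson-wright}) with $A=I$, or equivalently the standard Laurent--Massart chi-square tail bound, yields
\[
\Pr\bigl[\|X\|_2^2 > d + 2\sqrt{d t} + 2t\bigr] \le e^{-t},
\]
so choosing $t=\Theta(\log(|S|/\tau))$ gives $\|X\|_2 \le 2\sqrt{d\log(|S|/\tau)}$ with failure probability at most $\tau/(3|S|)$. Next, since $y=\beta\cdot X+\eta$ with $X\sim\mathcal{N}(0,I)$ and $\eta\sim\mathcal{N}(0,\sigma^2)$ independent, the marginal distribution of $y$ is $\mathcal{N}(0,\sigma_y^2)$ with $\sigma_y^2=\|\beta\|_2^2+\sigma^2$. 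Therefore $y/\sigma_y$ is a standard Gaussian and by Fact~\ref{fact:tail-bound},
\[
\Pr\bigl[|y/\sigma_y| > 2\sqrt{\log(|S|/\tau)}\bigr] \le 2\exp\bigl(-2\log(|S|/\tau)\bigr) \le \tau/(3|S|),
\]
for $|S|/\tau$ at least a sufficiently large constant.

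The third bound is then immediate by multiplying the previous two: on the intersection of the good events,
\[
\bigl\|yX/\sigma_y\bigr\|_2 \;=\; |y/\sigma_y|\cdot\|X\|_2 \;\le\; 2\sqrt{\log(|S|/\tau)}\cdot 2\sqrt{d\log(|S|/\tau)} \;=\; 4\sqrt{d}\,\log(|S|/\tau),
\]
as required. A union bound over the $|S|$ samples and the three events per sample contributes total failure probability at most $\tau$, completing the proof.

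I don't expect a real obstacle here: everything reduces to one chi-square tail bound and one Gaussian tail bound, each with threshold chosen so that a single union bound over $N=|S|$ samples absorbs the loss. The only subtlety is that $y$ and $X$ are not independent, but for the norm bound on $yX/\sigma_y$ we do not need independence, since we are just combining two pointwise bounds on the good event; and for the separate bound on $y/\sigma_y$ we only use its marginal Gaussianity, not conditional independence from $X$.
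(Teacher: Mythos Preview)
Your proposal is correct and follows essentially the same approach as the paper: bound $\|X\|_2$ and $|y|/\sigma_y$ separately via Gaussian concentration, then multiply to get the bound on $\|yX/\sigma_y\|_2$, and close with a union bound over the $|S|$ samples. The only cosmetic difference is that you control $\|X\|_2$ directly via the chi-square/Hanson--Wright tail, whereas the paper bounds each of the $d$ coordinates of $X$ separately and union-bounds over coordinates; both routes are standard and yield the same conclusion up to constants.
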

\begin{proof}
Let $N = \Omega( (d/\eps^2) \poly\log(d/\eps\tau))$ be the size of $S$.
Consider the unlabeled set of samples $X_1, \ldots, X_N$ drawn from $G$.To establish (i), we note that the probability that a coordinate of a sample $X_i$ has absolute value at least $\sqrt{2\log(20Nd/\tau)}$ 
is at most $\tau/(10dN)$ by Fact~\ref{fact:tail-bound}. By a union bound over $d$ coordinates, 
the probability that all coordinates of all samples have absolute value smaller than $\sqrt{2\log(20Nd/\tau)}$ 
is at least $1-\tau/10$. In this case, $\|X\|_2 \leq \sqrt{ 2 d \log(20Nd/\tau)} \le  2\sqrt{d \log(N/\tau)}$ assuming $N>20d$.
Also note that $y/\sigma_y \sim \normal(0, 1)$. 
By Fact~\ref{fact:tail-bound}, the probability that $|y|/\sigma_y \ge \sqrt{2\log(20N/\tau)}$ is at most $\tau/(10N)$. 
By a union bound, the probability that all $|y|/\sigma_y$ are smaller than $\sqrt{2\log(20N/\tau)}<\sqrt{4\log(N/\tau)}$ is at least $1-\tau/10$. 
Hence, $\|\frac{yX}{\sigma_y}\|_2 \le (4\sqrt{d}  \log(N/\tau))$ holds for all the samples with probability at least $1-\tau/10$. 
This completes the proof.
\end{proof}

We will require the following technical claim:
\begin{claim}\label{claim:good-set-tail}
\new{Let $S$ be of size at least a sufficiently large multiple of $d\log^4(d/(\delta\tau))/\eps^2$}. 
With probability at least $1-\tau/10$, we have that for any unit vector $v \in \R^d$ and  $T>0$ it holds
$$\Pr_{(X, y) \sim S}\left[|v\cdot X| > T\right] \leq 5 \exp(-T^2/4)+\frac{\eps^2}{{T^2\log^3\left(|S|/\tau\right)}} \;.$$
\end{claim}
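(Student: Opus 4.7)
The target is a uniform-over-$(v,T)$ empirical tail estimate for the one-dimensional marginals of the clean samples. My plan is to combine a sphere-net plus threshold-grid argument with a pointwise multiplicative Chernoff bound, using the boundedness guarantee of Lemma~\ref{lem:random-good-gaussian-mean} to reduce the statement to $T\in(3,R]$ with $R:=2\sqrt{d\log(N/\tau)}$: for $T>R$ the empirical tail is identically zero on the good event, and for $T\le 3$ the right-hand side already exceeds $1$, so the bound is trivial.

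First I would fix an $\eta$-net $\mathcal{N}$ of the unit sphere with $\eta:=1/(R^2N)$, so that $\log|\mathcal{N}|=O(d\log(dN/\tau))$, and discretize thresholds on the grid $T_k:=k\eta R$ for $0\le k\le K=1/\eta$. On the boundedness event, the Lipschitz estimate $|v\cdot X-v'\cdot X|\le\eta R$ implies that any unit $v$ and $T\in(3,R]$ can be replaced by a nearby pair $(v',T_{k-1})\in\mathcal{N}\times\{T_k\}$ at the cost of shifting the threshold by at most $\eta R\ll 1$; since the target right-hand side is essentially monotone on this scale, such a shift changes it by at most a universal constant factor. Hence it suffices to prove the desired bound on the finite grid $\mathcal{N}\times\{T_k\}$.

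For each grid point, set $p:=\Pr_{X\sim G}[|v'\cdot X|>T_k]\le 2\exp(-T_k^2/2)$ by Fact~\ref{fact:tail-bound}, and $q:=5\exp(-T_k^2/4)+\eps^2/(T_k^2\log^3(N/\tau))$. The number of clean samples above the threshold is $\mathrm{Bin}(N,p)$, and a multiplicative Chernoff bound yields $\Pr[\mathrm{Bin}(N,p)\ge Nq]\le\exp(-N\cdot\mathrm{KL}(q\|p))$. The Gaussian decay of $p$ gives $\log(q/p)=\Omega(T_k^2)$ in either regime (whichever term of $q$ dominates), so $\mathrm{KL}(q\|p)\ge c\,q\,T_k^2$. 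The worst case is at the regime boundary $T_k\asymp\sqrt{\log(1/\eps)}$, where $NqT_k^2\asymp N\eps^2/\log^3(N/\tau)$. For $N$ a sufficiently large multiple of $d\log^4(d/(\delta\tau))/\eps^2$, this exponent is $\Omega(d\log(d/\eps\tau))$, which dominates the union-bound cost $\log(|\mathcal{N}|K/\tau)=O(d\log(dN/\eps\tau))$, and the claim follows after applying the continuity step of the previous paragraph.

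\textbf{Main obstacle.} The delicate point is the $1/T^2$ factor on the target right-hand side: the argument works only because $\mathrm{KL}(q\|p)$ gains exactly an extra factor of $T_k^2$ from the Gaussian decay of $p$, which cancels the $1/T_k^2$ in $q$ and leaves a budget $N\eps^2/\log^3(N/\tau)$ independent of $T_k$. The net parameter $\eta$ must be chosen small enough that the threshold shift does not spoil this cancellation in the large-$T$ regime, yet large enough that $\log|\mathcal{N}|=O(d\log N)$ is absorbed by the Chernoff exponent. The power $\log^4$ in the required sample complexity arises precisely from needing a factor $\log^3(N/\tau)$ inside $q$ together with one additional logarithm to absorb the net and union-bound cost.
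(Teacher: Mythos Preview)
Your approach is correct and differs meaningfully from the paper's. Both reduce to a Chernoff/KL tail bound at fixed $(v,T)$ followed by a union bound, but the way you extend from a net to all unit vectors is different.

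The paper first proves the one-dimensional version (its Claim~\ref{clm:single-v}) by Chernoff at dyadic thresholds $T=2^i$, then takes only a \emph{coarse} $1/5$-net $C$ of the sphere (so $|C|=2^{O(d)}$ and the union-bound cost is $O(d)$). To pass from $C$ to an arbitrary unit vector $v$, it does not use Lipschitz continuity; instead it iterates: pick $v'_1\in C$ with $\|v-v'_1\|\le 1/5$, set $v_2=(v-v'_1)/\|v-v'_1\|$, and observe that $|v\cdot X|>T$ forces either $|v'_1\cdot X|>T/2$ or $|v_2\cdot X|>2T$. Repeating with $v_2$ and doubled threshold gives a telescoping bound $\Pr[|v\cdot X|>T]\le\sum_i\Pr[|v'_i\cdot X|>2^{i-1}T]$, and since each summand is controlled on the net with exponent that decays like $2^{2i}T^2$, the series is geometric and sums to the stated bound. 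This peeling trick is what lets the paper get away with a constant-radius net and no reliance on the boundedness event for the extension step.

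Your route---fine net of scale $\eta\asymp 1/(R^2N)$ plus a threshold grid, combined with the sample boundedness from Lemma~\ref{lem:random-good-gaussian-mean} to make the Lipschitz step work---is more direct and equally valid; the extra $\log N$ in the net log-cardinality is absorbed by the same Chernoff exponent $\Omega(N\eps^2/\log^3(N/\tau))$, so the sample requirement is unchanged. Two small points: the right-hand side first drops below $1$ near $T\approx 2.5$, not $3$; and the Lipschitz step loses a $(1+o(1))$ factor on the constants, so strictly you should prove the grid bound with, say, constant $4$ (or a slightly sharper exponential rate) and recover the stated $5$ after the shift. Both are routine fixes.
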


\begin{proof}
We start with the following claim:
\begin{claim} \label{clm:single-v} 
Let $S$ be a set of $N \geq 10$ independent samples from $\normal(0,1)$. 
For  $0 < \delta \leq 1$, we have with probability at least $1-\ln(N/\delta) \exp(-\Omega(N\delta/\log(1/\delta)))$, for all $T$,
$\Pr_{X \sim S}[|X| \geq T] \leq 5 \exp(-T^2/2) + \delta/T^2$ and all $X \in S$ have $|X| \leq 2\sqrt{N/\delta}$.
\end{claim}
\begin{proof}
To prove the claimed tail for all $T$, we first show that for any $T=2^i$, where $1 \leq i \leq \ln(N/\delta)$, 
the claimed tail bound divided by $2$ (i.e., $\frac{5}{2} \exp(-T^2/2) + \frac{\delta}{2T^2}$) 
holds with probability $1-\exp(-\Omega(N\delta/\log(1/\delta))$. 
Then, by a simple union bound, we get the desired upper bound for all $T$.

Given $Y \sim \normal(0,1)$, we have that $\Pr[|Y| \geq T] = 2\erfc(T)$ where $\erfc$ is the complementary error function. 
We define the function $Q(T)=5 \erfc(T)/2 + \delta/2T^2$. 
Observe that $N\Pr_{X\sim S}[|X| \geq T]$ is a sum of $N$ independent Bernoulli random variables with mean $2\erfc(T)$.
Since $Q(T) \geq \frac{5}{4} (2\erfc(T))$, by the Chernoff bound, $\Pr_{X\sim S}[|X| \geq T] \geq Q(T)$ 
with probability at most $\exp(-N Q(T)/60)$. Let $T'$ be such that $\erfc(T')=\delta^2/4T'^4$. 
Since $\exp(-T^2/2)/T \leq \erfc(T) \leq \exp(-T^2/2)$ for all $T > 0$, 
we have that $T'^2/2 = \Theta(\ln(T') + \ln(1/\delta))$ and hence $T'=\Theta( \sqrt{\ln(1/\delta)})$. 
Thus, we have that for all $T \leq T'$, $Q(T) \geq \delta/2T'^2 = \Omega( \delta/\log(1/\delta))$ and this bound suffices.

When $T \geq T'$, note that $\erfc(T) \leq (\delta/2T^2)^2$.
Here we need to use a more explicit version of the Chernoff bound. 
That gives that $\Pr[|X| \geq T] \geq Q(T)$ with probability at most $\exp(-ND(Q(T)||2\erfc(2)))$, 
where $D(p||q)=p\ln(p/q) + (1-p) \ln((1-p)/(1-q))$ is the KL-divergence between Bernoulli's 
with probabilities $p$ and $q$. When $T \geq T'$, $p=\delta/2T^2$, $q=2\erfc(T)$, we obtain
\begin{align*}
D(Q(T)||q) & \geq D(p||q) = p\ln(p/q) + (1-p) \ln((1-p)/(1-q)) \\
& \geq p\ln(p/q) - \ln(1-p) \\
& \geq p (\ln(p/q) - 1 -O(p)) \\
& = (\delta/2T^2) (\ln(p/2\erfc(T))  - 1 - O(\delta)) \\
& \geq  (\delta/2T^2) (\ln(p/2\erfc(T))  - 1 - O(\delta)) \\
& \geq (\delta/2T^2) (\ln(1/\erfc(T)/2  - 1 - O(\delta)) \\
& \geq (\delta/2T^2) (\ln(\exp(T^2/2))/2  - 1 - O(\delta)) \\
& \geq (\delta/2T^2) (T^2/4  - 1 - O(\delta)) \\
& \geq \delta/10 \;.
\end{align*}
Thus, we have that $\Pr[|v \cdot X| \geq T] \geq Q(T)$ with probability 
at most $\exp(-ND(Q(T)||2\erfc(2)))=\exp(-\Omega(N\delta))$ in this case. 

Note that $Q(T) < 1/N$ for $T \geq \max\{\sqrt{N/\delta}, \sqrt{2\ln (5N/2)}\}=\sqrt{N/\delta}$ for $N \geq 10$.
By a union bound, we have that for $T'=2^i$ for integers $1 \leq i \leq \ln(N/\delta)/2+1$ 
that $\Pr[|v \cdot X| \geq T'] \leq Q(T') \leq  5 \exp(T^2/2)/2 + \delta/2T^2$ for all such $T'$ is $O(\ln(N/\delta) \exp(-\Omega(N\delta))$.
Note that the largest $T'$ has $\Pr[|v \cdot X| \geq T'] < 1/N$ and since $X \sim S$ and $|S|=N$, this means that $\Pr[|v \cdot X| \geq T']=0$. 

If $T <1$, $5 \exp(T^2/2)/2 + \delta/2T^2 \geq 1$ and so the result is trivial. If $T \geq 2\sqrt{N/\delta}$, then  
$\Pr[|v \cdot X| \geq T]=0$. Otherwise, there is a $T'$ with $T/2 \leq T' \leq T$ 
and thus $\Pr[|v \cdot X| \geq T']  \leq \Pr[|v \cdot X| \geq T] \leq 5\exp(T^2/2) + \delta/T^2$.
This completes the proof of Claim~\ref{clm:single-v}.
\end{proof}

Let $C$ be a $1/5$-cover of the set of unit vectors including all coordinate directions of size $2^{O(d)}$. 
Then, by a union bound, Claim~\ref{clm:single-v} holds for $v \cdot X$ for all $v \in C$ 
except with probability $2^d \ln(N/\delta) \exp(-\Omega(N\delta/\log(1/\delta)))$, 
where $\delta=\frac{\eps^2}{2\log^3\left(|S|/\tau\right)}$. This is smaller than $\tau/10$ 
when $\Omega(N\delta/\log(1/\delta)) \geq \ln \ln(N/\delta) + d + \ln(1/\tau)$, 
which holds when $N$ is a sufficiently large multiple of $d \log(d/(\delta\tau))/\delta$. The latter statement in turn 
holds when $N$ is a sufficiently large multiple of $d\log^4(d/(\delta\tau))/\eps^2$. 
We assume this holds in the following. 

Now consider a unit vector $v$ and $T \geq 1$. 
Let $v_1=v$ and $T_i=T$. Let $v'_i \in C$ have $\|v_i-v'_i\|_2 \leq 1/5$. 
Then let $v_{i+1}=(v_i-v'_i)/\|v_i-v'_i\|_2$. 
If $x$ has $|v_i \cdot x| \geq T_i$, using the triangle inequality, it follows that 
either $|v'_i \cdot x| \geq T_i/2$ or else $|v_{i+1} \cdot x|= (v_i-v'_i)/\|v_i-v'_i\|_2 \cdot x \geq T_i/2\|v_i-v'_i\|_2 \geq 2T_i$. 
So we let $T_{i+1}=2T_i$ and we have
$$\Pr[|v_i \cdot X| \geq T_i] \leq \Pr[|v'_i \cdot X| \geq T_i/2]+ \Pr[|v'_{i+1} \cdot X| \geq T_{i+1}] \;.$$
If we iterate this procedure, for large enough $i$, we will have $T_i \geq 2 \sqrt{dN/\delta}$, 
and so $\Pr[|v'_i \cdot X| \geq T_i]=0$ and hence we have $\Pr[|v \cdot X| \geq T] \leq \sum_{i=1}^{\log(2 \sqrt{dN/\delta})}  \Pr[|v'_i \cdot X| \geq T_i/2].$ 
By Claim~\ref{clm:single-v}, we have  $\Pr[|v'_i \cdot X| \geq 2^{i-1} T] \leq 5\exp(-T^2 2^{2i-3}) + 2^{-(2i-2)}\delta/T^2 \leq 2^{-(2i-1)}(5\exp(-T^2/4) + 2\delta/T^2)$, 
and so
\begin{align*}
 \Pr[|v \cdot X| \geq T] & \leq \sum_i  \Pr[|v'_i \cdot X| \geq T_i/2] \\
 & \leq \sum_i 2^{-(2i-1)}(5\exp(-T^2) + 2\delta/T^2)) \\
 & \leq 5\exp(-T^2/4) + 2\delta/T^2 \\
 & \leq 5\exp(T^2/4) + \frac{\eps^2}{T^2\log^3\left(|S|/\tau\right)} \;.
\end{align*}
So we have shown that, for all $v$ and $T \geq 1$, 
$\Pr[|v \cdot X| \geq T] \leq 5\exp(T^2/4) + \frac{\eps^2}{T^2\log^3\left(|S|/\tau\right)}$. 
Since this is trivial for $T \leq 1$, we are done.
\end{proof}

The following lemma proves property (ii) of Definition~\ref{def:good-set}:

\begin{lemma} \label{lem:good-set-two}
For all $T>0$, we have that $\Pr_{(X, y) \sim S} \left[ |y (v\cdot X)/\sigma_y|>  T \right] \leq 8\exp(-T/8)+\frac{\epsilon}{T^2\log(N/\tau)}$.
\end{lemma}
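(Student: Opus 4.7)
The strategy is to reduce the product tail $\Pr_S[|y(v\cdot X)/\sigma_y|>T]$ to two univariate tails already controlled by the preceding lemmas. Writing $y=\beta\cdot X+\eta$ with $\eta\sim\normal(0,\sigma^2)$ independent of $X$, the marginal $y/\sigma_y\sim\normal(0,1)$, so the plan is to invoke Claim~\ref{clm:single-v} on the univariate sample $\{y_i/\sigma_y\}_{i=1}^N$ (with $\delta=\eps^2/\log^3(N/\tau)$) to obtain, with probability at least $1-\tau/10$, the empirical bound
$$\Pr_{(X,y)\sim S}[|y/\sigma_y|>T]\le 5\exp(-T^2/2)+\frac{\eps^2}{T^2\log^3(N/\tau)}\quad\text{for all }T>0,$$
at a sample size compatible with the hypothesis of the proposition. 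Combined with the uniform-in-$v$ bound of Claim~\ref{claim:good-set-tail}, this controls both factors of the product $y(v\cdot X)$.

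The pointwise implication $|AB|>T\Rightarrow (|A|>\sqrt T)\vee(|B|>\sqrt T)$, applied to $A=y/\sigma_y$ and $B=v\cdot X$, together with a union bound over the two events, then yields
$$\Pr_S\!\left[\frac{|y(v\cdot X)|}{\sigma_y}>T\right]\le \Pr_S[|y/\sigma_y|>\sqrt T]+\Pr_S[|v\cdot X|>\sqrt T]\le 10\exp(-T/4)+\frac{2\eps^2}{T\log^3(N/\tau)}.$$

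It remains to dominate this by the target $8\exp(-T/8)+\eps/(T^2\log(N/\tau))$, which I handle by a case split on $T$. For $T\le 16\ln 8$ the target already exceeds $1$, so the inequality is vacuous; for $T\ge 16\ln 8$ one has $\exp(-T/8)\le 1/8$, hence $10\exp(-T/4)\le 2\exp(-T/8)$, and it suffices to establish the polynomial comparison $\tfrac{2\eps^2}{T\log^3(N/\tau)}\le 6\exp(-T/8)+\tfrac{\eps}{T^2\log(N/\tau)}$. In the range $T\le \log^2(N/\tau)/(2\eps)$ the right-hand polynomial term alone dominates (cross-multiplying reduces the claim to $2\eps T\le \log^2(N/\tau)$), and for $T>\log^2(N/\tau)/(2\eps)$ the exponential $\exp(-T/8)$ is super-polynomially small in $1/\eps$ and easily absorbs the residual. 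The main obstacle is precisely this last case-analysis: one must verify that the AM-GM split at exactly $\sqrt T$ converts the quadratic-Gaussian rate into the correct sub-exponential rate $T/8$ and that the resulting polynomial correction $\eps^2/(T\log^3)$ is genuinely dominated by the lemma's $\eps/(T^2\log)$-plus-exponential in the interesting regime. Should the constant bookkeeping become awkward, a cleaner alternative is to apply Hanson--Wright (Lemma~\ref{lem:hanson-wright}) to the quadratic form on the augmented Gaussian $(X,\eta/\sigma)\sim\normal(0,I_{d+1})$ representing $y(v\cdot X)/\sigma_y$, which yields the sub-exponential tail directly; one would still need the empirical refinement from Claim~\ref{claim:good-set-tail} (via an $\eps$-cover of unit vectors in $\R^{d+1}$) to recover the polynomial-in-$\eps$ correction.
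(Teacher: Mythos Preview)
Your overall reduction---controlling the two factors $y/\sigma_y$ and $v\cdot X$ separately via Claim~\ref{clm:single-v} and Claim~\ref{claim:good-set-tail}, then combining them---is the right idea, and it is also what the paper does. But the single $\sqrt{T}$ split loses exactly the exponent that the lemma demands, and your final case analysis does not recover it.

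After the split you correctly arrive at
\[
\Pr_S\bigl[|y(v\cdot X)/\sigma_y|>T\bigr]\;\le\;10\exp(-T/4)+\frac{2\eps^2}{T\log^3(N/\tau)}.
\]
The polynomial correction here decays like $1/T$, whereas the target requires $\eps/(T^2\log(N/\tau))$, which decays like $1/T^2$. Your claim that for $T>\log^2(N/\tau)/(2\eps)$ ``the exponential $\exp(-T/8)$ \ldots\ easily absorbs the residual'' is the wrong way round: at $T=\log^2(N/\tau)/(2\eps)$ the residual is of order $\eps^3/\log^5(N/\tau)$, while $\exp(-T/8)=\exp\bigl(-\log^2(N/\tau)/(16\eps)\bigr)$ is doubly-exponentially smaller; and for larger $T$ the exponential shrinks faster still while the residual only falls like $1/T$. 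There is no range of $T$ (below the deterministic cutoff $4\sqrt d\log(N/\tau)$ from condition~(i)) in which the exponential dominates the $1/T$ term once $\eps T>\log^2(N/\tau)$; this regime is genuinely reachable (e.g.\ $\eps$ a small constant, $d$ large). Your side remark that Hanson--Wright plus a cover of $\R^{d+1}$ would give the sub-exponential tail directly is correct, but it again produces the same $\eps^2/(T\log^3)$ polynomial correction, so it does not close this gap either.

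What the paper does differently is a dyadic decomposition (Fact~\ref{fact:product-bound}): it sums over levels $|y/\sigma_y|\approx 2^i$ but \emph{only up to} $2^i\le\sqrt{4\log(N/\tau)}$, using the deterministic bound $|y/\sigma_y|\le\sqrt{4\log(N/\tau)}$ from condition~(i) of Definition~\ref{def:good-set}. Capping the split level at $a^2\le 4\log(N/\tau)$ (rather than at $a^2\approx T$) is what converts the correction from Claim~\ref{claim:good-set-tail} into
\[
\frac{\eps^2\,a^2}{T^2\log^3(N/\tau)}\;\le\;\frac{4\eps^2}{T^2\log^2(N/\tau)},
\]
i.e.\ the required $1/T^2$ rate. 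Your argument can be repaired along the same lines: keep your $\sqrt T$ split for $T\le 16\log(N/\tau)$ (where $2\eps T\le\log^2(N/\tau)$ holds for small $\eps$ and the $1/T$ correction is harmless), and for $T>16\log(N/\tau)$ use condition~(i) directly to force $|v\cdot X|>T/(2\sqrt{\log(N/\tau)})$, which gives the $1/T^2$ correction and an even stronger exponential. A minor arithmetic point: the vacuous range is $T\le 8\ln 8$ (where $8\exp(-T/8)\ge 1$), not $T\le 16\ln 8$.
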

\begin{proof} The proof of the lemma will make essential use of the following elementary fact:
\begin{fact}\label{fact:product-bound} 
For any pair of real random variables $A, B$, integers $a, b \in \Z$, and $T \in \R$, it holds:
$$\Pr[|A B| \geq T] \le \sum_{i=a}^{b} \Pr[(|A| \geq 2^i) \wedge (|B| \geq T/2^{i+1})] + \Pr[|A| \geq 2^b]+ \Pr[|B| \geq T/2^a] \;.$$
\end{fact}

Notice that when $T<16$, the RHS is greater than $1$ and hence the inequality is trivial. When $T>4\sqrt{d} \log(N/\tau)$, by condition (i) of Definition~\ref{def:good-set}, the LHS is $0$ which makes the inequality trivial as well. For the rest of the analysis we will assume $4\sqrt{d} \log(N/\tau)>T>16$.
By Fact~\ref{fact:product-bound}, we can write that 
$$\Pr_{(X,y) \sim S}[|y(v\cdot X)|/\sigma_y>  T] \leq 
\sum_{i=0}^{t} \Pr_{(X,y) \sim S}[(|y/\sigma_y| \geq 2^i) \wedge (|v \cdot X| \geq T/2^{i+1})] 
+ \Pr_{(X,y) \sim S} [|y/\sigma_y| \geq 2^t] + \Pr_{(X,y) \sim S}[|v\cdot X| \geq T] \;.$$ 
By condition (i) of Definition~\ref{def:good-set}, we have that 
$\Pr_{(X,y) \sim S}[|y/\sigma_y| \geq \sqrt{4\log N/\tau}]=0$. 

Thus, we can set the integer parameter $t$ to be 
$\min \left\{ \lceil \log_2 T \rceil , \lceil \log_2 ( \sqrt{4\log N/\tau}) \rceil \right\}$. 
By Claim~\ref{claim:good-set-tail}, the term $\Pr_{(X,y) \sim S}[|v\cdot X| \geq T]$ is at most 
$5 \exp(-T^2/4)+\frac{\epsilon}{T^2 \log^3(N/\tau))}$. 
Due to the simple fact 
$$\Pr_{(X,y) \sim S}[(|y/\sigma_y| \geq 2^i) \wedge (|v\cdot X| \geq T/2^{i+1})] 
\le \min \left\{ \Pr_{(X,y) \sim S}[|y/\sigma_y| \geq 2^i], \Pr_{(X,y) \sim S}[|v\cdot X| \geq T/2^{i+1}] \right\} \;,$$ 
we have 
\begin{align*}
&\sum_{i=0}^{t} \Pr_{(X,y) \sim S} [(|y/\sigma_y| \geq 2^i) \wedge (||v\cdot X \geq T/2^{i+1})] \\
&\leq \sum_{i=0}^{t} \min \left\{ \Pr_{(X, y) \sim S} [|y/\sigma_y| \geq 2^i] , \Pr_{(X, y) \sim S} [|v \cdot X| \geq T/2^{i+1}] \right\} \\
&\le \sum_{a = 1,2,2^2,\ldots,\sqrt{4\log (N/\tau)}} \left( \min\left(5\exp(-a^{2}/4)+ 
\frac{\eps}{a^{2}\log^3(N/\tau)}, 5\exp\left(-\frac{T^2}{16a^2}\right)+\frac{\eps 4a^2}{T^2 \log^3(N/\tau)}\right )\right) \;.
\end{align*}  
We first establish an upper bound for $T\ge 8\log N/\tau$ 
in which case $a^2\le T/2$. Each term in the summation above satisfies
\begin{align*}
\le 5\exp\left(-\frac{T^2}{16a^2}\right)+\frac{4\epsilon a^2}{T^2\log^3(N/\tau)}\le 5\exp(-T/8)+\frac{16\epsilon}{T^2\log^2(N/\tau)}.
\end{align*}
Note that there exists a sufficiently large universal 
constant $C>0$ such that for $N>C/\eps$, we have that 
$5\exp(-T/8)< \frac{16\eps}{100T^2\log^2(N/\tau)}$, 
which implies that the exponential term is negligible for this range of $T$. 
There are at most $\lceil \log_2 ( \sqrt{4\log N/\tau}) \rceil+1$ such terms in the summation, 
which yields an upper bound of the summation as $1.01(\lceil \log_2 ( \sqrt{4\log N/\tau}) \rceil+1)\frac{16\epsilon }{T^2\log^2(N/\tau)} \le \exp(-T/4)+\frac{\eps}{T^2\log(N/\tau)}$ for sufficiently large $N$.

We now prove an upper bound for the case that 
$T\le 8\log N/\tau$. For $a=\sqrt{T/2}$, the first term in the min function 
satisfies $5\exp(-a^{2}/4)+\frac{\epsilon}{a^{2}\log^3(N/\tau)}=5\exp(-T/8)+\frac{2\epsilon}{T\log^3(N/\tau)}\le 5\exp(-T/8)+\frac{16\epsilon}{T^2\log^2(N/\tau)}$. 
Similarly, the second term in the min function satisfies 
$5\exp(-\frac{T^2}{16a^2})+\frac{\epsilon 4a^2}{T^2\log^3(N/\tau)}\le 5\exp(-T/8)+\frac{16\epsilon}{T^2\log^2(N/\tau)}$. 
The first term monotonically decreases as $a$ gets larger. 
The second term monotonically increases as $a$ gets larger. 
We can thus conclude that
\begin{align*}
&\min_{1\leq a \leq T} \left (5\exp(-a^{2}/4)+ 
\frac{\eps}{a^{2}\log^3(N/\tau)}, 5\exp\left(-\frac{T^2}{16a^2}\right)+\frac{\eps 4a^2}{T^2 \log^3(N/\tau)}\right ) \\
&\leq 
5\exp(-T/8)+\frac{16\eps}{T^2 \log^2(N/\tau)} \;.
\end{align*}
There are at most $\lceil \log_2 T\rceil +1$ such terms, 
which yields an upper bound of
$$
5(\lceil \log_2 T \rceil+1) \exp(-T/8)+\frac{16\epsilon ( \lceil \log_2 T\rceil+1) }{T^2 \log^2(N/\tau)} \;.
$$
Now note that for $T\ge 16$, the above is smaller 
than $16\exp(-T/16)+\frac{\epsilon}{T^2 \log(N/\tau)}$ because we can assume $T\le 4\sqrt{d}\log(N/\tau)$. Hence, we obtain an upper bound of $16\exp(-T/16)+\frac{\epsilon}{T^2 \log(N/\tau)}$, 
which completes the proof.
\end{proof}

The following lemma proves property (iii) of Definition~\ref{def:good-set}:

\begin{lemma} \label{lem:betaS-error}
{For $N=\Omega(\frac{d}{\epsilon^2} \poly\log(d/\eps\tau))$},
we have that
$\|\beta_S-\beta\|_2\le \epsilon \sigma_y$,
with probability at least $1-\tau/10$.
\end{lemma}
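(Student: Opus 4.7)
The plan is to bound $\|\beta_S - \beta\|_2$ direction by direction and then union-bound over a cover of the unit sphere. Decompose $y_i X_i - \beta = \bigl[(\beta \cdot X_i) X_i - \beta\bigr] + \eta_i X_i$, so that for any unit vector $v \in \R^d$,
\begin{equation*}
v^T(\beta_S - \beta) \;=\; \frac{1}{N}\sum_{i=1}^N \bigl[(\beta \cdot X_i)(v \cdot X_i) - (\beta \cdot v)\bigr] \;+\; \frac{1}{N}\sum_{i=1}^N \eta_i\,(v \cdot X_i).
\end{equation*}
I would handle the two averages separately.

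For the first average, I would apply Hanson-Wright (Lemma~\ref{lem:hanson-wright}) to the Gaussian $\bar X = (X_1,\ldots,X_N) \in \R^{Nd}$ and the block-diagonal matrix $\bar A$ with $N$ identical blocks $A/N$, where $A = \tfrac{1}{2}(\beta v^T + v \beta^T)$. Note that $\tr(\bar A) = \beta \cdot v$, $\|\bar A\|_F^2 \le \|\beta\|_2^2/N$, and $\|\bar A\|_2 \le \|\beta\|_2/N$. Using $\|\beta\|_2 \le \sigma_y$, this yields
\begin{equation*}
\Pr\!\left[\left|\frac{1}{N}\sum_i (\beta\cdot X_i)(v\cdot X_i) - (\beta\cdot v)\right| > \frac{\eps\sigma_y}{2}\right] \le 2\exp\bigl(-c_0 N \min(\eps^2,\eps)\bigr) = 2\exp(-c_0 N \eps^2),
\end{equation*}
since $\eps \le 1$. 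For the second average, each $\eta_i(v\cdot X_i)$ is a mean-zero product of two independent Gaussians with variances $\sigma^2$ and $1$, hence sub-exponential with norm $O(\sigma) \le O(\sigma_y)$; Bernstein's inequality then gives an identical bound $2\exp(-c_1 N \eps^2)$ on the probability that this second average exceeds $\eps\sigma_y/2$ in magnitude.

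Now let $C \subset S^{d-1}$ be a $\tfrac{1}{2}$-cover of the unit sphere with $|C| \le 5^d$. By the standard net inequality $\|w\|_2 \le 2\sup_{v\in C}|v\cdot w|$, a union bound over $v \in C$ yields
\begin{equation*}
\Pr\bigl[\|\beta_S - \beta\|_2 > \eps\sigma_y\bigr] \le 4\cdot 5^d \exp(-c\, N \eps^2)
\end{equation*}
for an absolute constant $c>0$, and this is at most $\tau/10$ once $N = \Omega\bigl((d/\eps^2)\polylog(d/(\eps\tau))\bigr)$. The main thing to verify is that Hanson-Wright stays in its sub-Gaussian regime $t^2/\|\bar A\|_F^2$ rather than its sub-exponential regime $t/\|\bar A\|_2$; this is automatic because $\eps \le 1$ and $\|A\|_2 \le \|A\|_F \le \|\beta\|_2 \le \sigma_y$, so the effective rate is $N\eps^2$ and the near-linear-in-$d$ sample complexity is achieved with room to spare for the log factors. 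No step is conceptually hard; the computation is essentially standard concentration for a vector-valued empirical mean whose coordinates combine a quadratic-in-Gaussian piece with a product-of-independent-Gaussians piece.
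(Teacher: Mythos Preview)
Your argument is correct (up to a harmless factor-of-two slip: the net inequality gives $\|\beta_S-\beta\|_2 \le 2\sup_{v\in C}|v\cdot(\beta_S-\beta)|$, so to conclude $\|\beta_S-\beta\|_2\le\eps\sigma_y$ you should bound each directional projection by $\eps\sigma_y/2$, hence each of the two pieces by $\eps\sigma_y/4$; this only changes constants). It is, however, organized differently from the paper's proof. The paper does not split $yX-\beta$ into $(\beta\cdot X)X-\beta$ and $\eta X$. Instead it first shows an explicit chi-square decomposition
\[
(v\cdot X)\,y \;=\; \tfrac{v\cdot\beta+\sigma_y}{2}\,Z_1^2 + \tfrac{v\cdot\beta-\sigma_y}{2}\,Z_2^2
\]
with $Z_1,Z_2$ independent standard normals, so that $v\cdot\beta_S$ is a single quadratic form in a $2N$-dimensional standard Gaussian with a \emph{diagonal} matrix; one Hanson--Wright application then controls the whole quantity. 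For the union step, the paper does not use a net: it sets the per-direction threshold to $\eps\sigma_y/\sqrt{d}$, union-bounds over the $d$ coordinate axes, and gets $\|\beta_S-\beta\|_2\le\eps\sigma_y$ from the coordinate bounds.

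What each approach buys: the paper's chi-square identity is reused elsewhere (e.g., for the tail bounds on $y(v\cdot X)$), so proving it here amortizes. Your decomposition is more modular and avoids that algebraic identity entirely; moreover, your net argument yields concentration at rate $\exp(-cN\eps^2)$ rather than $\exp(-cN\eps^2/d)$, so strictly speaking you do not need the extra $\polylog$ factor in $N$ for this lemma, whereas the paper's coordinate-wise route does incur it.
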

\begin{proof}
The proof requires two simple technical claims.
Our first claim gives an explicit formula for the distribution of the projections of $yX$ in any direction:
\begin{claim} \label{claim:dec-chisq} 
For any $v \in \R^d$ with $\|v\|_2=1$, 
we have that 
$$(v\cdot X)y= \left(\frac{v\cdot \beta+\sigma_y}{2} \right) Z_1^2+\left(\frac{v\cdot \beta -\sigma_y}{2}\right) Z_2^2 \;,$$ 
where $Z_1,Z_2 \sim \normal(0,1)$ and $Z_1, Z_2$ are independent. 
\end{claim}
\begin{proof}
Let $Y_1 = v^T X, Y_2 =  \frac{(X- (v^T X) v)^T \beta +  \eps}{\sqrt{\sigma_y-v^T \beta}}$. 
Note that $Y_1,Y_2\sim \normal(0,1)$ and are independent. 
Define $ Z_1 = \sqrt{\frac{\sigma_y+v^T \beta}{2\sigma_y}}Y_1+\sqrt{\frac{\sigma_y-v^T \beta}{2\sigma_y}}Y_2, Z_2 = -\sqrt{\frac{\sigma_y-v^T \beta}{2\sigma_y}}Y_1+\sqrt{\frac{\sigma_y+v^T \beta}{2\sigma_y}}Y_2$. 
It is now easy to verify that the claim statement holds.
\end{proof}

\noindent Our second claim gives a tight concentration inequality for $\beta_S$:
\begin{claim}\label{claim:mean-conc}
For any $v \in \R^d$ with $\|v\|_2=1$ and $t>0$, we have that for some absolute constant $c_0$
$$\Pr [|v \cdot (\beta_S-\beta) | \new{>} t] \leq 2\exp\left(-c_0 \left(\min\left(\frac{4N^2t^2}{\|A\|_F^2},\frac{2Nt}{\|A\|_2}\right)\right)\right) \;,$$
where $\|A\|_F^2 = 2N ((v\cdot\beta)^2+\sigma_y^2)$ and 
$\|A\|_2=\max(|v\cdot \beta-\sigma_y|,|v\cdot \beta+\sigma_y|)$.
\end{claim}
\begin{proof}
Let $Z=(Z_1, Z_2, \ldots, Z_{2N})$ be a $2N$-dimensional vector, 
where each $Z_i$ is independently drawn from $\normal(0,1)$. 
Let $A \in \R^{2N \times 2N}$ be a diagonal matrix whose diagonal entries $A_{i, i}$
equal ${v\cdot \beta+\sigma_y}$ for $1 \leq i \leq N$, and ${v\cdot \beta-\sigma_y}$ for $N+1 \leq i \leq 2N$. 
Using Claim~\ref{claim:dec-chisq}, $v\cdot \beta_S$ can be written as 
$v \cdot \beta_S = (1/(2N)) Z^T A Z$. Applying the Hanson-Wright inequality (Lemma~\ref{lem:hanson-wright}),
we get  
\begin{align*}
\Pr[|v\cdot (\beta_S-\beta) | > t]= \Pr[|Z^TAZ-\E[Z^TAZ]| > 2Nt]\\
\le 2\exp\left(-\Omega\left(\min\left(\frac{4N^2t^2}{\|A\|_F^2},\frac{2Nt}{\|A\|_2}\right)\right)\right) \;,
\end{align*}
where $\|A\|_F^2 = 2N ((v\cdot\beta)^2+\sigma_y^2)$ and $\|A\|_2=\max(|v\cdot \beta-\sigma_y|,|v\cdot \beta+\sigma_y|)$.
This completes the proof.
\end{proof}

{We now have the necessary ingredients to prove Lemma~\ref{lem:betaS-error}.}
Let $t=\frac{\epsilon}{\sqrt{d}}\sigma_y$. 
Since $((v\cdot\beta)^2+\sigma_y^2) \le 2\sigma_y^2$ and 
$\max(|v\cdot \beta-\sigma_y|,|v\cdot \beta+\sigma_y|)\le 2\sigma_y$, 
we have $\min(\frac{4N^2t^2}{\|A\|_F^2},\frac{2Nt}{\|A\|})\ge \min(\frac{2N\epsilon^2}{d},\frac{N\epsilon}{\sqrt{d}})$. 
Given that $N=\Omega(\frac{d}{\epsilon^2} \poly\log(d/\eps\tau))$ and Claim~\ref{claim:mean-conc}, 
we have $\Pr[|v\cdot (\beta_S-\beta) |\le \frac{\epsilon}{\sqrt{d}}\sigma_y] \le \exp(-\log(10d/\tau))\le \frac{\tau}{10d}$. 
Taking a union bound over $d$ unit basis vectors of $\R^d$, 
we get that with probability at least $1-\frac{\tau}{10}$, 
no coordinate of $\beta_S-\beta$ has magnitude larger than 
$\frac{\epsilon}{\sqrt{d}}\sigma_y$, and hence $\|\beta_S-\beta\|_2 \le \eps \sigma_y$.
\end{proof}

{The following lemma proves property (iv) of Definition~\ref{def:good-set}:}

\begin{lemma} \label{lem:good-set-four}
{For $N=\Omega(\frac{d}{\epsilon^2} \poly\log(d/\eps\tau))$}, we have that
$\left\|M_S- (\sigma_y^2I+{\beta \beta^T}) \right\|_2 \leq {\sigma_y^2 \eps}$, 
with probability at least $1-\tau/10$.
\end{lemma}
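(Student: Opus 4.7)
The plan is to compute the population covariance of $yX$ in closed form, reduce the deviation $M_S - (\sigma_y^2 I + \beta\beta^T)$ to a ``pure second moment'' deviation plus a controllable cross term involving $\beta_S - \beta$, and then bound the pure second moment deviation by a net argument on the unit sphere together with a Bernstein-type concentration bound on sums of products of chi-squared random variables.

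\textbf{Population formula.} First I would compute $\E[(yX-\beta)(yX-\beta)^T] = \E[y^2 XX^T] - \beta\beta^T$. Writing $y^2 = (\beta\cdot X)^2 + 2\eta(\beta\cdot X) + \eta^2$ and using $\eta \perp X$ together with Isserlis' formula $\E[X_iX_jX_kX_\ell] = \delta_{ij}\delta_{k\ell} + \delta_{ik}\delta_{j\ell} + \delta_{i\ell}\delta_{jk}$, one gets $\E[(\beta\cdot X)^2 XX^T] = \|\beta\|_2^2 I + 2\beta\beta^T$ and therefore $\E[y^2XX^T] = \sigma_y^2 I + 2\beta\beta^T$, so the target matrix is indeed $\sigma_y^2 I + \beta\beta^T$.

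\textbf{Reduction.} Expanding the square,
\[
M_S - (\sigma_y^2 I + \beta\beta^T) = \underbrace{\Bigl[\tfrac{1}{N}\sum_i y_i^2 X_i X_i^T - (\sigma_y^2 I + 2\beta\beta^T)\Bigr]}_{=: T_1} - \underbrace{\bigl[(\beta_S-\beta)\beta^T + \beta(\beta_S-\beta)^T\bigr]}_{=: T_2}.
\]
The cross term $T_2$ has operator norm at most $2\|\beta_S-\beta\|_2\|\beta\|_2 \leq 2\varepsilon \sigma_y \cdot \|\beta\|_2 \leq 2\varepsilon \sigma_y^2$ by Lemma~\ref{lem:betaS-error}, so it is already within the target bound.

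\textbf{Net argument for $T_1$.} Let $\mathcal{N} \subset S^{d-1}$ be a $1/4$-net of size $|\mathcal{N}| \leq 9^d$; then $\|T_1\|_2 \leq 2 \max_{v \in \mathcal{N}} |v^T T_1 v|$. For each fixed unit $v$, Claim~\ref{claim:dec-chisq} allows me to write $y_i (v\cdot X_i) = \alpha_1 W_{1,i}^2 + \alpha_2 W_{2,i}^2$ with $\alpha_1 = (v\cdot\beta + \sigma_y)/2$, $\alpha_2 = (v\cdot\beta - \sigma_y)/2$, $|\alpha_j| \leq \sigma_y$, and $W_{1,i}, W_{2,i}$ independent standard normals. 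Squaring and averaging gives
\[
v^T T_1 v = \alpha_1^2 \Bigl[\tfrac{1}{N}\sum_i W_{1,i}^4 - 3\Bigr] + 2\alpha_1\alpha_2 \Bigl[\tfrac{1}{N}\sum_i W_{1,i}^2 W_{2,i}^2 - 1\Bigr] + \alpha_2^2 \Bigl[\tfrac{1}{N}\sum_i W_{2,i}^4 - 3\Bigr].
\]
Each of the three empirical averages is a mean of i.i.d.\ nonnegative sub-Weibull$(1/2)$ variables with $O(1)$ mean and variance. Truncating the $W$'s at $T = O(\sqrt{\log(N|\mathcal{N}|/\tau)})$ (which discards no sample with probability at least $1-\tau/20$) makes each summand bounded by $O(\log^2(N/\tau))$, so Bernstein's inequality yields deviation at most $\varepsilon$ with failure probability $\exp(-\Omega(N\varepsilon^2/\polylog(N/\tau)))$. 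Choosing $N = \Omega((d/\varepsilon^2)\polylog(d/(\varepsilon\tau)))$ makes this $\leq 9^{-d}\tau/20$, so a union bound over $\mathcal{N}$ gives $|v^T T_1 v| = O(\sigma_y^2\varepsilon)$ simultaneously for all $v \in \mathcal{N}$, and hence $\|T_1\|_2 = O(\sigma_y^2 \varepsilon)$. Combining with the bound on $T_2$ yields the claim.

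\textbf{Main obstacle.} The only delicate point is the tail concentration of averages like $N^{-1}\sum W_i^4$ and $N^{-1}\sum W_{1,i}^2 W_{2,i}^2$: these are sub-Weibull of order $1/2$ (their moment generating functions diverge), so a direct Bernstein bound does not apply. The clean remedy is the truncation outlined above, which costs only $\polylog(d/(\varepsilon\tau))$ factors in the sample size and is exactly the source of the $\polylog$ factors in the hypothesis of the lemma; matching the net cardinality $9^d$ against the truncated-Bernstein tail is what forces $N = \Omega((d/\varepsilon^2)\polylog(d/(\varepsilon\tau)))$.
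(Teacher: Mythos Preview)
Your reduction to $T_1 + T_2$ and the handling of $T_2$ via Lemma~\ref{lem:betaS-error} are correct. The gap is in the net argument for $T_1$.

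You claim that truncating the $W$'s at $T = O(\sqrt{\log(N|\mathcal{N}|/\tau)})$ makes each summand bounded by $O(\log^2(N/\tau))$. But $|\mathcal{N}| = 9^d$, so $\log(N|\mathcal{N}|/\tau) = \Theta(d + \log(N/\tau))$ and the truncation level is $T = \Theta(\sqrt{d})$; the summand bound is therefore $T^4 = \Theta(d^2)$, not $O(\log^2(N/\tau))$. With this correct bound $B = \Theta(d^2)$, scalar Bernstein gives tail $\exp(-cN\eps^2/(1 + d^2\eps))$, and the union bound over $9^d$ directions then forces $N = \Omega(d^3/\eps)$ rather than $O((d/\eps^2)\polylog)$. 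The $W_{j,i}$ depend on $v$ through the construction in Claim~\ref{claim:dec-chisq}, so there is no way to truncate them once at a direction-independent level of order $\sqrt{\log(N/\tau)}$; the best direction-free bound is $|y_i(v\cdot X_i)| \leq |y_i|\,\|X_i\|_2 = O(\sigma_y\sqrt{d}\log(N/\tau))$, which still leaves $B = \Theta(d)$ and yields $N = \Omega(d^2/\eps)$.

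This is not merely a bookkeeping slip: the per-direction summand $(y(v\cdot X)/\sigma_y)^2$ is sub-Weibull of order $1/2$ rather than sub-exponential, and even the sharpest concentration for sums of such variables gives tail $\exp(-c\sqrt{N\eps})$ in the heavy-tailed regime, so beating $9^{-d}$ already requires $N = \Omega(d^2/\eps)$. A scalar-Bernstein-plus-net route cannot reach $O((d/\eps^2)\polylog)$ here. The paper instead truncates once at the vector level, conditioning on $\|yX\|_2 \leq O(\sigma_y\sqrt{d}\log(N/\tau))$, shows this conditioning perturbs the population second moment by only $O(\sigma_y^2\log^2(N/\tau)\,\tau/N)$, and then invokes a matrix concentration result for bounded vectors (Corollary~5.52 in Vershynin's notes). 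There the vector-norm bound $m = O(d\,\polylog)$ enters linearly in the exponent, giving failure probability $d\exp(-cN\eps^2/m)$ and hence $N = O((d/\eps^2)\polylog)$.
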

\begin{proof}
The proof idea is the following: By standard results, it is straightforward to handle 
the concentration of the empirical covariance of a distribution with bounded support. 
Hence, we split the distribution into two parts. One part contains most of the probability mass 
and has almost identical covariance as the original distribution. In addition it has bounded support. 
The other part is unbounded but has small probability. We first argue that removing the second part 
has little effect on the covariance of the distribution. Then the empirical covariance concentration result follows easily.

Let $D$ denote the distribution of $yX$, and $D'$ be the distribution of $yX$ conditional on the event
$\cale =\{\|yX\|_2 \le \sigma_y 4\sqrt{d} \log (|S|/\tau)\}$. In this part of the proof, we argue that the covariance of the $D'$ is similar to the $D$. 
Denote the matrix $\E_{(X,y) \sim D'} [(yX)(yX)^T]$ as $F'$ and the matrix 
$\E_{(X,y) \sim D} [(yX)(yX)^T] = \sigma_y I + 2\beta\beta^T$ as $F$. 
For any unit vector $v$, we have that
\begin{align}
v^TFv&= \Pr_{X,y\sim D}[\cale] \E_{X,y\sim D}[(yv\cdot X)^2|\cale]+ 
\Pr_{X,y\sim D}[\cale^c] \E_{X,y\sim D}[(yv\cdot X)^2| \cale^c]\nonumber\\
&= \Pr_{X,y\sim D}[\cale] v^T F'v+ \Pr_{X,y\sim D}[\cale^c] \E_{X,y\sim D}[(yv\cdot X)^2| \cale^c]\label{eqn:vSpv}
\end{align}
By the argument in the proof of Lemma~\ref{lem:random-good-gaussian-mean}, we have that 
$\Pr[\cale^c] \le \frac{\tau}{|S|}$, the term $\Pr_{X,y\sim D}[\cale^c] \E_{X,y\sim D}[(yv\cdot X)^2|\cale^c]$ 
is less than the second moment of the $\frac{\tau}{|S|}$-tail of $y(v\cdot X)$ 
of which we are going to obtain an upper bound. 
Let $t'$ satisfy $\Pr[|y(v\cdot X)|>t']= \frac{\tau}{|S|}$. 
The second moment of the $\frac{\tau}{|S|}$-tail is by definition 
\begin{align}\label{eqn:cov-concentration1}
t'^2\frac{\tau}{|S|}+\int_{t'}^\infty \Pr[|y(v\cdot X)|>T] T dT \;.
\end{align}
Applying Claim~\ref{claim:mean-conc} with $N=1$, 
we get $\Pr[|(v\cdot X)y-v\cdot\beta|\ge t]\le 2\exp(-c_0(\frac{t}{\sigma_y}))$ 
for some absolute constant $c_0$. Notice that $\Pr[|y(v\cdot X)|>t+|v\cdot \beta|]\le \Pr[|y(v\cdot X)-v\cdot \beta|>t]$,
which is equivalent to $\Pr[|(v\cdot X)y|\ge t]\le \exp(-c_0(\frac{t-|v\cdot \beta|}{\sigma_y})) \le \exp(-c_0(\frac{t}{\sigma_y}-1))$. 
Pick appropriate $t=\Theta(\sigma_y \log(\frac{|S|}{\tau}))$ such that 
$\Pr[|y(v\cdot X)|>t]\le 2\exp(-\Omega(\log(\frac{|S|}{\tau})))= \frac{\tau}{|S|}$. 
Notice that $t\ge t'$ and we can rewrite Equation~(\ref{eqn:cov-concentration1}) 
as
\begin{align*}
&t'^2\frac{\tau}{|S|}+\int_{t'}^t \Pr[|y(v\cdot X)|>T]TdT + \int_{t}^\infty \Pr[|y(v\cdot X)|>T] T dT\\
\le& t^2\frac{\tau}{|S|}+\int_{t}^\infty \Pr[|y(v\cdot X)|>T]TdT \\
\le& \Theta(\sigma_y^2 \log^2(\frac{|S|}{\tau})\frac{\tau}{|S|})+\int_{t}^{\infty} \exp(-\Omega(\frac{T}{\sigma_y})) T dT \\
=& \Theta(\sigma_y^2 \log^2(\frac{|S|}{\tau})\frac{\tau}{|S|})+\sigma_y^2 \exp(-\Omega(\frac{t}{\sigma_y})) \frac{t}{\sigma_y} \\
=& \Theta(\sigma_y^2 \log^2(\frac{|S|}{\tau})\frac{\tau}{|S|})+O(\sigma_y^2 \log(\frac{|S|}{\tau}) \frac{\tau}{S})\\
=& O(\sigma_y^2 \log^2(\frac{|S|}{\tau})\frac{\tau}{|S|}) \;.
\end{align*}
By Equation~(\ref{eqn:vSpv}), 
$vF'v=vFv+ \sigma_y^2 O(\log^2(\frac{|S|}{\tau})\frac{\tau}{|S|}+\frac{\tau}{|S|}) 
= vFv+ \sigma_y^2 O(\log^2(\frac{|S|}{\tau})\frac{\tau}{|S|})$
and hence $\|F-F'\|_2= \sigma_y^2 O(\log^2(\frac{|S|}{\tau})\frac{\tau}{|S|})$. 

Now we argue that, with high probability, the empirical covariance concentrates to $F'$. 
The sampling procedure from $D'$ can be thought as the following: 
draw $N$ i.i.d samples from $D$, if $\|yX\|_2 \le \sigma_y 4\sqrt{d}\log (|S|/\tau)$
is satisfied for all samples, return the samples, otherwise declare failure. 
Hence, with probability $1-\frac{1}{10\tau}$, the samples in $S$ can be seen 
as distributed as $D'$. If we set $t=\sqrt{\frac{\log \frac{1}{\tau}}{\log d}}$ in 
Corollary 5.52 of \cite{vershynin2010introduction}, with probability at least $1-\tau/10$, 
$\|\E_{X,y\sim S}[(yX)(yX)^T]-F'\|_2 \le \epsilon \sigma_y^2$. 

Finally, we have
\begin{align}
\|M_S-(\sigma_y^2 I + \beta\beta^T)\|_2 
&= \|\E_{X,y\sim S}[(yX-\beta)(yX-\beta)^T]-(\sigma_y^2I+\beta\beta^T)\|_2 \\
&= \E_{X,y\sim S}[(yX)(yX)^T]-(\beta_S-\beta)\beta^T-\beta(\beta_S-\beta)^T-(\sigma_y^2I+2\beta\beta^T)\\
&= O(\sigma_y^2 (\log^2(\frac{|S|}{\tau})\frac{\tau}{|S|}+\epsilon)) \;.
\end{align}
Given $|S|=N=\Omega( (d/\eps^2) \poly\log(d/\eps\tau))$, 
we have $\log^2(\frac{|S|}{\tau})\frac{\tau}{|S|}=O(\epsilon)$, 
and hence with probability at least $1-\tau/5$, $\|M_S-(\sigma_y^2 I + \beta\beta^T)\|=O(\epsilon \sigma_y^2)$.

\end{proof}
This completes the proof of Proposition~\ref{prop:good-set}.

\subsection{Handling Approximate Identity Covariance}\label{sec:approxcov}
In this subsection, we discuss the case where the covariance matrix of $X$ is not exactly identity, 
but instead satisfies $(1-\eps) I \preceq \Sigma \preceq (1+\eps)I.$ We will prove that a slightly 
modified deterministic regularity condition (i.e., Definition~\ref{def:good-set}) still holds for this setting. 
The same algorithm yields an estimate of $\beta$ with the same guarantee as the exact case.

\begin{proposition} \label{prop:approx-good-set}
Let $G \sim \normal(0, \Sigma)$ where $(1-\eps)I\prec \Sigma\prec (1+\eps)I$ and $\eps, \tau>0$.
If the multiset $S$ consists of {$N = \Omega( (d/\eps^2) \poly\log(d/\eps\tau))$} labeled samples 
$(X, y) \in \R^d \times \R$, where $X \sim G$ and $y=\beta^T X +\eta$, {where $\eta \sim \normal(0,\sigma^2)$,}
then $S$ is $(\eps,\tau)$-good with respect to $(G, \beta)$ with probability at least $1-\tau$.
\end{proposition}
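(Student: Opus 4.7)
The plan is to reduce to Proposition~\ref{prop:good-set} via the whitening substitution $X = \Sigma^{1/2} Z$, where $Z \sim \mathcal{N}(0, I)$. Under this change of variables, each labeled sample $(X, y)$ becomes an isotropic labeled sample $(Z, y)$ with $y = \tilde\beta^T Z + \eta$ for $\tilde\beta := \Sigma^{1/2} \beta$. Applying Proposition~\ref{prop:good-set} to the $(Z, y)$ data gives all four conditions of Definition~\ref{def:good-set} with respect to $(\mathcal{N}(0, I), \tilde\beta)$ with probability at least $1-\tau$. Since $\|\Sigma^{1/2} - I\|_2 = O(\eps)$ and $\|\Sigma^{\pm 1/2}\|_2 = 1 + O(\eps)$, each of the four conditions transfers to $(X, y)$ at the cost of a multiplicative $(1 + O(\eps))$ factor that can be absorbed into the $O(\eps)$ constants of Definition~\ref{def:good-set}.

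Conditions (i) and (ii) are immediate. For (i), $\|yX\|_2 \leq \|\Sigma^{1/2}\|_2 \|yZ\|_2 \leq (1+O(\eps))\|yZ\|_2$, while $y$ and $\sigma_y$ change only by the $O(\eps)$ discrepancy between $\sigma_y^2$ and the true variance $\tilde\sigma_y^2 = \sigma^2 + \beta^T \Sigma \beta$ of $y$. For (ii), one writes $v \cdot X = (\Sigma^{1/2} v) \cdot Z$ with $\|\Sigma^{1/2} v\|_2 \in [\sqrt{1-\eps}, \sqrt{1+\eps}]$ and applies the isotropic tail bound to the unit vector $\Sigma^{1/2} v / \|\Sigma^{1/2} v\|_2$ at threshold $T/\|\Sigma^{1/2} v\|_2 = T(1 \pm O(\eps))$, which after tiny constant adjustments still satisfies the stated bound. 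For (iii), I would decompose $\beta_S - \beta = \Sigma^{1/2}(\E_S[yZ] - \tilde\beta) + (\Sigma - I)\beta$: the first term is $O(\eps \sigma_y)$ by the isotropic condition (iii), while the second is bounded by $\eps \|\beta\|_2 \leq \eps \sigma_y$.

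The main obstacle is condition (iv), because the population covariance of $yX$ under non-isotropic $X$ is no longer equal to $\sigma_y^2 I + \beta\beta^T$. A direct Wick/Isserlis calculation yields $\Cov[yX] = \tilde\sigma_y^2 \Sigma + \Sigma \beta\beta^T \Sigma$, which I would compare to the presumptive form $\sigma_y^2 I + \beta\beta^T$ via two applications of $\|\Sigma - I\|_2 \leq \eps$ combined with $\|\beta\|_2^2 \leq \sigma_y^2$; the resulting operator-norm discrepancy is $O(\eps \sigma_y^2)$ and fits inside the slack. For the empirical side, I would use the conjugation identity $M_S = \Sigma^{1/2} M_S^Z \Sigma^{1/2}$ (which follows immediately from $yX - \beta_S = \Sigma^{1/2}(yZ - \E_S[yZ])$), apply Lemma~\ref{lem:good-set-four} to the isotropic data to obtain $\|M_S^Z - (\tilde\sigma_y^2 I + \tilde\beta \tilde\beta^T)\|_2 = O(\sigma_y^2 \eps)$, and conjugate back by $\Sigma^{1/2}$. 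A final triangle inequality against the population mismatch gives $\|M_S - (\sigma_y^2 I + \beta\beta^T)\|_2 = O(\sigma_y^2 \eps)$, establishing (iv) with the required slack and completing the proof.
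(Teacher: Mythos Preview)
Your proposal is correct and follows essentially the same route as the paper: whiten via $Z=\Sigma^{-1/2}X$, apply Proposition~\ref{prop:good-set} to the isotropic pair $(Z,y)$ with $\tilde\beta=\Sigma^{1/2}\beta$, and then push each of the four conditions back through $\Sigma^{1/2}$ using $\|\Sigma^{\pm 1/2}-I\|_2=O(\eps)$. One small wrinkle to clean up: the conjugation identity you quote, $yX-\beta_{S}=\Sigma^{1/2}(yZ-\E_S[yZ])$, gives $\widehat{M_S}$ rather than $M_S$ (which is centered at the true $\beta$), but since $\|M_S-\widehat{M_S}\|_2=\|(\beta_S-\beta)(\beta_S-\beta)^T\|_2=O(\eps^2\sigma_y^2)$ by (iii), this is absorbed into the slack and your argument goes through unchanged.
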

\begin{proof}
We prove the all the conditions in Definition~\ref{def:good-set} will be satisfied 
by reducing to the identity covariance case and applying 
Proposition~\ref{prop:approx-good-set}. Let $Z=\Sigma^{-1/2}X$, we have that pair $(y,Z)$ follows the setting of Proposition~\ref{prop:good-set}, where $Z\sim \normal(0,I)$ and $y = \beta^T\Sigma^{1/2}Z+\sigma E$. 
Let $S_a$ be the set that contains pairs $(y,\Sigma^{-1/2}X)$. 
\begin{enumerate}
\item For condition (i), we can apply Proposition~\ref{prop:approx-good-set} and get $|\frac{yX}{\sigma_y}|\le (1+\eps) |\frac{y(\Sigma^{1/2}Z)} {\sqrt{\beta^T\Sigma\beta+\sigma^2}}|\le (1+\eps) 4\sqrt{d}\log(|S|/\tau)$ and $|\frac{y}{\sigma_y}|\le\sqrt{1+\eps}|\frac{y}{\sqrt{\beta^T\Sigma\beta+\sigma^2}}|\le \sqrt{1+\eps}\sqrt{4\log(|S|/\tau)}.$ 
\item For condition (ii), we have 
\begin{align*}
&\Pr_{X,y\sim S} [\frac{y(v\cdot X)}{\sigma_y}>T]=\Pr_{Z,y\in S}[\frac{y(v^T\Sigma^{1/2}Z)}{\sqrt{\|\beta\|^2+\sigma^2}}>T] \le \Pr_{Z,y\in S}[\frac{y\frac{v^T\Sigma^{1/2}}{\|v^T\Sigma^{1/2}\|}Z}{\sqrt{\beta^T\Sigma\beta+\sigma^2}}<\frac{T}{(1+\eps)}]\\
&\le 16\exp(-T/16(1+\eps))+\frac{\epsilon (1+\eps)}{T^2\log (N/\tau)}.
\end{align*}
\item For condition (iii),  we have $\|\beta_S-\beta\| \le  \|\E_{Z,y\in S}[y\Sigma^{1/2}Z ]-\Sigma^{1/2}\beta\|+\|\Sigma^{1/2}\beta-\beta\|\le \eps\sqrt{\beta^T\Sigma\beta+\sigma^2}+\eps\|\beta\|\le O(\eps \sigma_y)$. 
\item For condition (iv), notice that by Proposition~\ref{prop:good-set}, $\|\E_{Z,y\in S_a} [(yZ-\Sigma^{1/2}\beta)(yZ-\Sigma^{1/2}\beta)^T]-(\beta^T\Sigma\beta+\sigma^2)I-\Sigma^{1/2}\beta\beta^T\Sigma^{1/2}\|\le O(\eps (\beta^T\Sigma\beta+\sigma^2))$. Multipling $\Sigma^{1/2}$ on both sides yields $\|\E_{Z,y\in S_a} [(y\Sigma^{1/2}Z-\Sigma\beta)(y\Sigma^{1/2}Z-\Sigma\beta)^T]-(\beta^T\Sigma\beta+\sigma^2)\Sigma-\Sigma\beta\beta^T\Sigma\|\le O(\eps (\beta^T\Sigma\beta+\sigma^2))$.

 \begin{align*}
 &\|M_S-\sigma_y^2I-\beta\beta^T\| \\
 &= \|\E_{Z,y\in S_a}[(y\Sigma^{1/2}Z-\beta)(y\Sigma^{1/2}Z-\beta)^T]-\sigma_y^2I-\beta\beta^T\| \\
 &= \|\E_{Z,y\in S_a}[(y\Sigma^{1/2}Z-\Sigma\beta)(y\Sigma^{1/2}Z-\Sigma\beta)^T]\\
 &-(\Sigma\beta-\beta_S)(\Sigma\beta-\beta_S)^T + (\beta-\beta_S)(\beta-\beta_S)^T - \sigma_y^2I-\beta\beta^T\|\\
 &\le O(\eps\sigma_y^2) + \|\Sigma\beta-\beta_S\|^2+ \|\beta-\beta_S\|^2 + \|(\beta^T\Sigma\beta+\sigma^2)\Sigma-\sigma_y^2I\| + \|\Sigma\beta\beta^T\Sigma-\beta\beta^T\|\\
 &\le O(\eps\sigma_y^2) \;.
 \end{align*}
\end{enumerate}
\end{proof}

Given the above proposition, we can first robustly estimate the covariance matrix to appropriate accuracy,
using the algorithm of~\cite{DKKLMS16}, and then apply our robust LR algorithm for the known covariance case.
Since the first step can be achieved efficiently with $\tilde O(d^2/\eps^2)$ samples, we obtain the desired result.

\section{Proof of Proposition~\ref{prop:filter-lr-gaussian-known-cov}} \label{ssec:alg}

We start by proving concentration bounds for $L$.
In particular, we show:

\begin{lemma} \label{lem:spectral-bound-L}
We have that $\|M_L\|_2/\sigma_y^2 = {O\left(\log^2(|S|/|L|) + \eps |S| / |L| \right)}.$
\end{lemma}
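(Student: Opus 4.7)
The plan is to write $\|M_L\|_2 = \sup_{\|v\|_2 = 1} v^T M_L v$ (since $M_L$ is PSD, being a sum of rank-one PSD terms) and then bound the scalar quantity $\E_{(X,y) \sim L}[(v \cdot (yX - \beta))^2]$ for each unit $v$ via the layer-cake formula
$$v^T M_L v / \sigma_y^2 \;=\; \int_0^\infty 2T \, \Pr_{(X,y) \sim L}\!\left[ |v \cdot (yX - \beta)|/\sigma_y > T \right] dT .$$
The key reductions are: (a) since $L \subseteq S$, one has $\Pr_L[\,\cdot\,] \le (|S|/|L|)\, \Pr_S[\,\cdot\,]$; (b) since $|v \cdot \beta| \le \|\beta\|_2 \le \sigma_y$, the triangle inequality gives $\Pr_S[|v \cdot (yX - \beta)|/\sigma_y > T] \le \Pr_S[|y(v\cdot X)|/\sigma_y > T - 1]$ for $T \ge 1$; (c) the latter is controlled by condition~(ii) of Definition~\ref{def:good-set}, namely $16\exp(-(T-1)/16) + \eps/((T-1)^2 \log(N/\tau))$; and (d) condition~(i) forces the tail to vanish for $T > T_{\max} = O(\sqrt{d}\log(N/\tau))$.

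The integral will be split at a threshold $T_0 = 16\log(|S|/|L|) + O(1)$. On $[0, T_0]$ I simply bound the integrand by $2T$, contributing $T_0^2 = O(\log^2(|S|/|L|))$. On $[T_0, T_{\max}]$ I use the composed tail estimate $(|S|/|L|)(O(e^{-T/16}) + O(\eps)/(T^2 \log(N/\tau)))$. The exponential piece integrates to $O((|S|/|L|)(T_0 + 1) e^{-T_0/16})$, which by the choice of $T_0$ is $O(\log(|S|/|L|))$. The polynomial piece integrates to $O((|S|/|L|) \cdot \eps \log(T_{\max}/T_0)/\log(N/\tau))$; since $\log T_{\max} = O(\log d + \log\log(N/\tau)) = O(\log(N/\tau))$ in the regime $N = \Omega((d/\eps^2)\polylog(d/(\eps\tau)))$, this contributes $O(\eps |S|/|L|)$. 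Summing the three pieces and taking the supremum over $v$ yields the claimed bound.

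The hardest part is purely bookkeeping: making sure the $|v \cdot \beta|$-induced shift in condition~(ii) and the hard cutoff $T_{\max}$ from condition~(i) combine cleanly to produce exactly the two advertised terms $\log^2(|S|/|L|)$ and $\eps|S|/|L|$, rather than extra logarithmic factors. No new probabilistic ingredient is needed beyond the regularity properties already established in Proposition~\ref{prop:good-set}; the entire proof is an exercise in integrating the granted tail bound and choosing the split threshold to balance the ``bulk'' contribution $T_0^2$ against the ``tail'' contribution $(|S|/|L|) e^{-T_0/16}$.
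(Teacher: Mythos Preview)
Your proposal is correct and follows essentially the same argument as the paper's own proof: both write $\|M_L\|_2$ as a supremum of $v^T M_L v$, apply the layer-cake formula, use $L\subseteq S$ to pass to $\Pr_S$, shift by $|v\cdot\beta|/\sigma_y\le 1$, invoke conditions~(i) and~(ii) of Definition~\ref{def:good-set}, and split the integral at $\Theta(\log(|S|/|L|))$ to obtain the two terms $\log^2(|S|/|L|)$ and $\eps|S|/|L|$. The paper is somewhat terser in the final accounting, but the structure and all the key steps are identical.
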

\begin{proof}
Since $L \subseteq S$, for any $z \in \R^d$, we have that
\begin{equation} \label{eqn:L-subset-S}
|S| \cdot \Pr_{(X,y) \sim S}[yX = z] \geq |L| \cdot \Pr_{(X,y) \sim L}[yX= z] \;.
\end{equation}
Since $M_L$ is a symmetric matrix, we have $\|M_L\|_2 = \max_{\|v\|_2=1} |v^T M_L v|.$ 
So, to bound $\|M_L\|_2$ it suffices to bound $|v^T M_L v|$ for unit vectors $v.$
By definition of $M_L,$
for any $v \in \R^d$ we have that
$$|v^T M_L v| = \E_{(X,y) \sim L}[|v\cdot (yX-\beta)|^2] \;.$$
For unit vectors $v$, the RHS is bounded from above as follows:
\begin{align*}
\E_{(X,y) \sim L}\left[\frac{|v\cdot (yX-\beta)|^2}{\sigma_y^2} \right] 
& =  2 \int_{0}^{\infty} \Pr_{(X,y) \sim L} \left[\left|\frac{v\cdot(yX-\beta)}{\sigma_y}\right|>T\right] T dT\\
& \le 2 \int_{0}^{\infty} \Pr_{(X,y) \sim L}\left[\left|\frac{v\cdot (yX)}{\sigma_y}\right|>T-\frac{|v\cdot \beta|}{\sigma_y}\right] T dT \\
& \le 2 \int_{0}^{\infty} \Pr_{(X,y) \sim L}\left[\left|\frac{v\cdot(yX)}{\sigma_y}\right|> T-1\right] T dT\\
& \le 2 \int_0^{4\sqrt{d} \log(N/\tau)} \min \left\{ 1, \frac{|S|}{|L|} \cdot \Pr_{(X,y) \sim L}\left[\left|\frac{v\cdot(yX)}{\sigma_y}\right|> T-1\right]   \right\} TdT\\
& \ll \int_0^{4\log(|S|/|L|)} T dT \\
& + (|S|/|L|) \int_{4\log(|S|/|L|)}^{4\sqrt{d} \log(N/\tau)}  \Big(\exp(-T/16)+\frac{\eps}{\new{T^2\log(N/\tau)}} \Big)  T dT\\
& \ll  \log^2(|S|/|L|) + \eps \cdot |S|/|L| \;,
\end{align*}
where the third line follows from the fact that $|v\cdot \beta|/\sigma_y \le 1$, the fourth line holds since the probability must be less than $1$, $S$ satisfies condition (i) of Definition~\ref{def:good-set} and Equation~\ref{eqn:L-subset-S}; and the fifth line follows from condition (iii) of Definition~\ref{def:good-set}.
\end{proof}

\begin{corollary}\label{cor:MSPrimeBound}
We have that 
$M_{S'} =  (\sigma_y^2I+\beta \beta^T) + (|E|/|S'|)M_E + O(\sigma_y^2 \eps\log^2(1/\eps))$,
where the $O(\sigma_y^2\eps\log^2(1/\eps))$ term denotes a matrix of spectral norm $O(\sigma_y^2 \eps\log^2(1/\eps))$.
\end{corollary}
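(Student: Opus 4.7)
The plan is to decompose $M_{S'}$ according to the partition $S' = (S \setminus L) \cup E$ and then bound each piece via the regularity conditions and Lemma~\ref{lem:spectral-bound-L}. Since $M_S$, $M_L$, $M_E$, $M_{S'}$ are all defined with respect to the \emph{true} mean $\beta$, summing the defining expectations over the appropriate multisets gives the identity
\[
|S'|\, M_{S'} \;=\; |S|\, M_S \;-\; |L|\, M_L \;+\; |E|\, M_E,
\]
which rearranges to $M_{S'} = (|S|/|S'|) M_S - (|L|/|S'|) M_L + (|E|/|S'|) M_E$.

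Next I would plug in condition (iv) of Definition~\ref{def:good-set}, which gives $M_S = \sigma_y^2 I + \beta\beta^T + E_S$ with $\|E_S\|_2 = O(\sigma_y^2 \eps)$. Since $\Delta(S,S') \le 2\eps$, both $|L|/|S|$ and $|E|/|S|$ are at most $2\eps$, so $|S|/|S'| = 1 + O(\eps)$. Because $\|\sigma_y^2 I + \beta\beta^T\|_2 \le 2\sigma_y^2$ (using $\|\beta\|_2^2 \le \sigma_y^2$), multiplying by $|S|/|S'|$ only introduces an additive error of spectral norm $O(\sigma_y^2 \eps)$, which is absorbed by the target error.

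The last step, and the only one requiring care, is bounding the $L$-contribution. By Lemma~\ref{lem:spectral-bound-L},
\[
\frac{|L|}{|S'|}\,\|M_L\|_2 \;=\; O\!\left(\sigma_y^2\,\tfrac{|L|}{|S'|}\Bigl(\log^2(|S|/|L|) + \eps\,|S|/|L|\Bigr)\right).
\]
The second term inside contributes $O(\sigma_y^2 \eps \cdot |S|/|S'|) = O(\sigma_y^2 \eps)$. For the first term, set $\alpha := |L|/|S| \le 2\eps$; since $|S'| \ge (1-2\eps)|S|$, we get $(|L|/|S'|)\log^2(|S|/|L|) = O(\alpha \log^2(1/\alpha))$, and since $x \mapsto x\log^2(1/x)$ is monotone increasing on $(0, e^{-2})$, this is $O(\eps \log^2(1/\eps))$. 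Combining, $(|L|/|S'|)\|M_L\|_2 = O(\sigma_y^2 \eps \log^2(1/\eps))$.

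Adding these contributions together yields $M_{S'} = (\sigma_y^2 I + \beta\beta^T) + (|E|/|S'|) M_E + O(\sigma_y^2 \eps \log^2(1/\eps))$, as claimed. The only mildly subtle point is the monotonicity argument that upgrades $\|M_L\|_2$'s $\log^2(|S|/|L|)$ dependence (which could a priori be as large as $\log^2(1/(2\eps))$ but weighted by the small factor $|L|/|S'|$) to the clean $\eps \log^2(1/\eps)$ bound; everything else is a direct substitution.
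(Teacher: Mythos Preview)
Your proof is correct and follows essentially the same approach as the paper: decompose $M_{S'}$ via $|S'|M_{S'} = |S|M_S - |L|M_L + |E|M_E$, apply condition~(iv) and $|S|/|S'|=1+O(\eps)$ to handle the $M_S$ term, and use Lemma~\ref{lem:spectral-bound-L} to bound $(|L|/|S'|)\|M_L\|_2$. You actually supply more detail than the paper on the monotonicity step that converts $\alpha\log^2(1/\alpha)$ with $\alpha\le 2\eps$ into $O(\eps\log^2(1/\eps))$, which the paper leaves implicit.
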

\begin{proof}
 By definition, we have that $|S'| M_{S'} = |S|M_S - |L|M_L + |E|M_E$. Thus, we can write 
 \begin{align}
 M_{S'} &= (|S|/|S'|)M_S-(|L|/|S'|)M_L+(|E|/|S'|)M_E\\
&= \sigma_y^2 I + \beta \beta^T + O(\eps \sigma_y^2) + O(\sigma_y^2 \eps \log^2(1/\eps)) + (|E|/|S'|)M_E \;,
\end{align}
where the second line uses the fact that $1 - 2\eps \le |S|/|S'| \le1 + 2\eps$, the goodness of $S$ (condition (iv) in Definition~\ref{def:good-set}), and Lemma~\ref{lem:spectral-bound-L}. Specifically, Lemma~\ref{lem:spectral-bound-L} implies that $(|L|/|S'|)\|M_L\|_2 = O(\sigma_y^2 \eps \log^2(1/\eps))$. 
Therefore, we have that
$M_{S'} =\sigma_y^2 I + \beta \beta^T + O(\sigma_y^2 \eps \log^2(1/\eps)) + (|E|/|S'|)M_E$.
\end{proof}

\begin{lemma}\label{lem:meansBoundLem}
We have that ${\beta_{S'}-\beta} = (|E|/|S'|)({\beta_E-\beta}) + O(\sigma_y \eps{\log(1/\eps)})$,
where the $O(\eps{\sigma_y\log(1/\eps)})$ term denotes a vector with $\ell_2$-norm at most $O(\sigma_y \eps{\log(1/\eps)})$.
\end{lemma}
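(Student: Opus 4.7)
The plan is to first algebraically decompose $\beta_{S'} - \beta$ in a way that isolates the $E$-contribution, and then bound the remaining $L$-term using a truncation argument analogous to the one in Lemma~\ref{lem:spectral-bound-L}, but applied to a first-moment quantity instead of a second-moment one. Starting from the definitional identity $|S'|\beta_{S'} = |S|\beta_S - |L|\beta_L + |E|\beta_E$ together with $|S| = |S'| + |L| - |E|$, a direct rearrangement yields
\[
\beta_{S'} - \beta = (|E|/|S'|)(\beta_E - \beta) + (|S|/|S'|)(\beta_S - \beta) - (|L|/|S'|)(\beta_L - \beta).
\]
Condition (iii) of Definition~\ref{def:good-set} gives $\|\beta_S - \beta\|_2 = O(\sigma_y\eps)$, and since $|S|/|S'| = 1 + O(\eps)$, the middle term contributes $O(\sigma_y\eps)$ to the $\ell_2$ error. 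The whole task therefore reduces to showing $(|L|/|S'|)\|\beta_L - \beta\|_2 = O(\sigma_y\eps\log(1/\eps))$.

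Since $\|\beta\|_2 \le \sigma_y$ and $|L|/|S'| = O(\eps)$, the ``recentering'' piece $(|L|/|S'|)\|\beta\|_2$ is already $O(\sigma_y\eps)$, so it is enough to control $(|L|/|S'|)\|\beta_L\|_2$. For any unit $v\in\R^d$, I would write
\[
|v \cdot \beta_L| \le \sigma_y \int_0^\infty \Pr_{(X,y)\sim L}\!\left[\frac{|v \cdot yX|}{\sigma_y} > T\right] dT.
\]
The inclusion $L \subseteq S$ gives $|L|\Pr_L[\cdot]\le |S|\Pr_S[\cdot]$, so each integrand is bounded by $\min\{1,\ (|S|/|L|)\cdot\Pr_S[|v\cdot yX|/\sigma_y > T]\}$; condition (ii) then gives $\Pr_S[|v\cdot yX|/\sigma_y > T] \le 16\exp(-T/16) + \eps/(T^2\log(N/\tau))$.

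Splitting this integral at the threshold $T_0 \approx 16\log(|S|/|L|)$ where the exponential part of the $S$-tail first drops below $|L|/|S|$, the exponential contribution evaluates to $O(\log(|S|/|L|))$ while the polynomial-tail contribution, after an analogous split, is $O(\sqrt{\eps|S|/(|L|\log(N/\tau))})$. Multiplying by $|L|/|S'| = \Theta(|L|/|S|)$ and using $|L|/|S| \le 2\eps$, the first piece becomes $O((|L|/|S|)\log(|S|/|L|)) = O(\eps\log(1/\eps))$ by monotonicity of $x\mapsto x\log(1/x)$ for $x\le 2\eps$, while the second piece becomes $O(\sqrt{\eps\cdot|L|/|S|}/\sqrt{\log(N/\tau)}) = O(\eps)$. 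Combining these bounds gives the claim.

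The main obstacle is the careful truncation argument for $\beta_L$: one has to balance the trivial bound $\Pr_L[\cdot]\le 1$ against the inflated bound $(|S|/|L|)\Pr_S[\cdot]$, choosing the cutoff $T_0$ so that the overall logarithmic factor comes out as $\log(1/\eps)$ rather than $\log^2(1/\eps)$ (the latter would be what one gets from the Lemma~\ref{lem:spectral-bound-L} computation, which integrates $T\,dT$ instead of $dT$). The saving of one log factor, relative to the covariance bound, is precisely what makes the mean of $L$ small enough to be absorbed into the error term.
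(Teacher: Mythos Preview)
Your proposal is correct. The decomposition and the treatment of the $S$-term are identical to the paper's. The only difference is in how you bound $(|L|/|S'|)\|\beta_L-\beta\|_2$.

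The paper does not repeat a tail-integral computation. Instead it observes the one-line inequality $\|\beta_L-\beta\|_2^2 \le \|M_L\|_2$ (for $v=(\beta_L-\beta)/\|\beta_L-\beta\|_2$ one has $v^T M_L v = \E_L[(v\cdot(yX-\beta))^2] \ge (\E_L[v\cdot(yX-\beta)])^2$) and then plugs in Lemma~\ref{lem:spectral-bound-L} directly: since $\|M_L\|_2/\sigma_y^2 = O(\log^2(|S|/|L|)+\eps|S|/|L|)$, taking a square root gives $\|\beta_L-\beta\|_2/\sigma_y = O(\log(|S|/|L|)+\sqrt{\eps|S|/|L|})$, and multiplying by $|L|/|S'|\le 2\eps$ yields $O(\eps\log(1/\eps))$ exactly as you obtained. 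So your closing remark is slightly off: the ``saving of one log factor'' you achieve by integrating $dT$ instead of $T\,dT$ is matched in the paper's route by the square root, not bypassed. Your direct first-moment integral is a valid self-contained alternative; the paper's version is shorter because it recycles the second-moment lemma already on the shelf.
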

\begin{proof}
By definition, we have that $|S'|(\beta_{S'}- \beta) = |S|({\beta_S-\beta}) + |E|({\beta_E-\beta}) - |L|(\beta_L-\beta)$. Since $S$ is a good set, by condition (iii) of Definition~\ref{def:good-set}, we have $\|\beta_S-\beta\| \le O(\sigma_y\eps)$. Since $1-2\eps \le |S|/|S'| \le 1 + 2\eps$, it follows that $(|S|/|S'|)\|\beta_S-\beta\|=O(\sigma_y\eps)$. Using the valid inequality $\|M_L\|_2 \ge \|\beta_L-\beta\|_2^2$ and Lemma~\ref{lem:spectral-bound-L}, we obtain that $\|\beta_L - \beta\|_2/\sigma_y \le O(\log(|S|/|L|) + \sqrt{\eps |S|/|L|})$. Therefore, $(|L|/|S'|)\|\beta_L-\beta\|_2/\sigma_y \le O((|L|/|S|) \log(|S|/|L|) + \sqrt{\eps |L|/|S|})\le O(\eps \log(1/\eps))$.
In summary, we have ${\beta_{S'}-\beta} = (|E|/|S'|)({\beta_E-\beta}) + O(\sigma_y \eps{\log(1/\eps)})$ as desired. This completes the proof of the lemma.
\end{proof}

\begin{corollary}\label{cor:MApproxCor-G}
We have 
$\widehat{M_{S'}}- ({\sigma'_y}^2I+{\beta_{S'} \beta_{S'}^T})  = (|E|/|S'|)M_E + A+B$, where $\|A\|_2\le C_1\sigma_y^2\eps\log^2(1/\eps)$, $\|B\|_2 \le C_2\sigma_y(|E|/|S'|)\sqrt{\|M_E\|_2}$.
\end{corollary}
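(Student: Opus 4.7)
The plan is to reduce the corollary to Corollary~\ref{cor:MSPrimeBound} and Lemma~\ref{lem:meansBoundLem} via the standard identity that relates $M_{S'}$ (second moment centered at the true mean $\beta$) and $\widehat{M_{S'}}$ (sample covariance centered at the sample mean $\beta_{S'}$). Setting $\Delta := \beta_{S'} - \beta$ and expanding $(yX - \beta_{S'})(yX-\beta_{S'})^T$ around $\beta$, taking the empirical average over $S'$ and using $\E_{(X,y)\sim S'}[yX - \beta] = \Delta$ yields the familiar identity $\widehat{M_{S'}} = M_{S'} - \Delta\Delta^T$.

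Next, I would plug Corollary~\ref{cor:MSPrimeBound} into this identity to obtain
\[
\widehat{M_{S'}} - \bigl({\sigma'_y}^2 I + \beta_{S'}\beta_{S'}^T\bigr) = (|E|/|S'|)\,M_E + (\sigma_y^2 - {\sigma'_y}^2)\,I + \bigl(\beta\beta^T - \beta_{S'}\beta_{S'}^T\bigr) - \Delta\Delta^T + R,
\]
where $\|R\|_2 = O(\sigma_y^2 \eps \log^2(1/\eps))$. Using the algebraic identity $\beta\beta^T - \beta_{S'}\beta_{S'}^T = -\Delta\beta_{S'}^T - \beta\Delta^T$, the ``unwanted'' middle piece rewrites as $-\Delta\beta_{S'}^T - \beta\Delta^T - \Delta\Delta^T$, which is at most linear or quadratic in $\|\Delta\|_2$, with coefficients of size $\|\beta\|_2, \|\beta_{S'}\|_2 = O(\sigma_y)$. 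To control $\|\Delta\|_2$, I would invoke Lemma~\ref{lem:meansBoundLem} to write $\|\Delta\|_2 \le (|E|/|S'|)\|\beta_E - \beta\|_2 + O(\sigma_y \eps \log(1/\eps))$, and then use $\|\beta_E - \beta\|_2 \le \sqrt{\|M_E\|_2}$, which follows because $M_E \succeq (\beta_E - \beta)(\beta_E - \beta)^T$ as centered covariance.

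Carrying out these substitutions, the cross terms $\Delta \beta_{S'}^T$ and $\beta\Delta^T$ contribute a piece of operator norm $O(\sigma_y (|E|/|S'|)\sqrt{\|M_E\|_2})$, which is the claimed shape of $B$, together with a piece of order $\sigma_y^2 \eps \log(1/\eps)$ which I would absorb into $A$. The residual $(\sigma_y^2 - {\sigma'_y}^2) I$ is $O(\sigma_y^2 \eps)$ provided $\sigma'_y$ is a sufficiently accurate robust variance estimate (as guaranteed by the pre-processing step of the algorithm), so it also contributes to $A$, as does the slack $R$.

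The mildly delicate point will be the $\Delta\Delta^T$ contribution: it contains a $(|E|/|S'|)^2 \|M_E\|_2$ piece whose natural bound $((|E|/|S'|)\sqrt{\|M_E\|_2})^2$ does not directly fit either prescribed shape. I would handle it by using $|E|/|S'| \le 2\eps$ to write $(|E|/|S'|)^2 \|M_E\|_2 \le 2\eps \cdot (|E|/|S'|)\sqrt{\|M_E\|_2}\cdot\sqrt{\|M_E\|_2}$, and then apply a weighted AM--GM inequality against $\sigma_y$ to split the piece between $A$ and $B$ (one term bounded by $\sigma_y (|E|/|S'|)\sqrt{\|M_E\|_2}$, the other by $\eps^2 \|M_E\|_2$, which is already captured since large $\|M_E\|_2$ only occurs in the regime where the $B$ bound itself dominates). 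Beyond this bookkeeping, the argument is a routine chain of triangle inequalities.
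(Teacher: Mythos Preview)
Your approach is essentially the same as the paper's: start from the identity $\widehat{M_{S'}} = M_{S'} - \Delta\Delta^T$ with $\Delta=\beta_{S'}-\beta$, plug in Corollary~\ref{cor:MSPrimeBound}, and control the residual via Lemma~\ref{lem:meansBoundLem} together with $\|\beta_E-\beta\|_2 \le \sqrt{\|M_E\|_2}$.

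The one place you diverge is in how you package the rank-two perturbation. You keep $-\Delta\beta_{S'}^T - \beta\Delta^T$ and $-\Delta\Delta^T$ separate and then invoke an AM--GM argument for the latter; the paper instead observes the exact collapse
\[
\beta\beta^T - \beta_{S'}\beta_{S'}^T - \Delta\Delta^T \;=\; -\bigl(\Delta\beta_{S'}^T + \beta_{S'}\Delta^T\bigr),
\]
so the entire perturbation has spectral norm at most $2\|\Delta\|_2\|\beta_{S'}\|_2$, and then simply asserts $\|\beta_{S'}\|_2 \le \sigma'_y \le O(\sigma_y)$. With that assertion, the bound $O(\sigma_y\|\Delta\|_2)$ splits directly into the $A$ and $B$ pieces via Lemma~\ref{lem:meansBoundLem}, and your ``mildly delicate point'' never arises. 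Your AM--GM plan, as written, is incomplete (if $(|E|/|S'|)\sqrt{\|M_E\|_2}\gg\sigma_y$ the quadratic piece does not fit either prescribed shape), so the paper's grouping is the cleaner route. Note, though, that the paper's assertion $\|\beta_{S'}\|_2\le O(\sigma_y)$ is itself not justified there; it is equivalent to $(|E|/|S'|)\sqrt{\|M_E\|_2}\le O(\sigma_y)$, which is exactly the regime distinction you flagged. In the downstream uses of the corollary this is harmless, since any excess is at most $O(\eps)\cdot(|E|/|S'|)\|M_E\|_2$ and is dominated by the main $(|E|/|S'|)M_E$ term.
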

\begin{proof}
By definition, we can write
\begin{align*}
\widehat{M_{S'}}- ({\sigma'_y}^2I+{\beta_{S'} \beta_{S'}^T})  = &M_{S'}- ({\sigma'_y}^2I+{\beta_{S'} \beta_{S'}^T}) -(\beta_{S'}-\beta)(\beta_{S'}-\beta)^T\\
=& 2(\beta-\beta_{S'})\beta_{S'}^T +(|E|/|S'|)M_E + O(\sigma_y^2 \eps\log^2(1/\eps))\\
=& O(\sigma_y \|\beta_{S'}-\beta \|_2)+ (|E|/|S'|)M_E + O(\sigma_y^2 \eps\log^2(1/\eps))\\
=& (|E|/|S'|)M_E+O(\sigma_y(|E|/|S'|)\sqrt{\|M_E\|_2})  + O(\sigma_y^2\eps\log^2(1/\eps)) \;,
\end{align*}
where the first line follows by definition, second line follows from Corollary~\ref{cor:MSPrimeBound}, 
the third line follows from the fact that $\|\beta_{S'}\|\le \sigma_y' \le O(\sigma_y)$, 
the fourth line follows from Lemma~\ref{lem:meansBoundLem}. 
This completes the proof.
\end{proof}

\noindent {\bf Case of Small Spectral Norm.} 
We are now ready to analyze the case that the 
vector $\beta^{S'}$ is returned by the algorithm in Step \ref{step:bal-small-G}.
In this case, we have that 
$\lambda^{\ast} \eqdef \|\widehat{M_{S'}}-({\sigma'_y}^2I+\beta_{S'} \beta_{S'}^T)\|_2 = O({\sigma'_y}^2\eps\log^2(1/\eps))=O({\sigma_y}^2\eps\log^2(1/\eps))$, 
which implies for any unit vector $v$, we have that
$v^T\left(\widehat{M_{S'}}-({\sigma'_y}^2I+\beta_{S'} \beta_{S'}^T)\right)v = O(\sigma_y^2\eps\log^2(1/\eps))$. 
Since $\|M_E\|_2 \geq \|\mu^E-\mu^G\|_2^2$, Lemma \ref{lem:meansBoundLem} gives that
$$
\frac{\|\beta_{S'}-\beta\|_2}{\sigma_y} \leq (|E|/|S'|) \sqrt{\frac{\|M_E\|_2}{\sigma_y^2}} + O(\eps{\log(1/\eps)})  \;.
$$
If $\|M_E\|=O(\sigma_y^2)$, the proof will be complete. Otherwise Corollary~\ref{cor:MApproxCor-G} becomes $\widehat{M_{S'}}- ({\sigma'_y}^2I+{\beta_{S'} \beta_{S'}^T})  = O((|E|/|S'|)M_E+\sigma_y^2\eps\log^2(1/\eps)) $ which implies
$$
(|E|/|S'|) \|M_E\|_2 = O(\sigma_y^2\eps\log^2(1/\eps)) \;,
$$
and hence $\frac{\|\beta_{S'}-\beta\|_2}{\sigma_y} = O(\eps\log(1/\eps))$. 
{This proves part (i) of Proposition~\ref{prop:filter-lr-gaussian-known-cov}.}

\medskip

\noindent {\bf Case of Large Spectral Norm.} 
We next show the correctness of the algorithm 
when it returns a filter in Step \ref{step:bal-large-G}.

We start by proving that if $\lambda^{\ast} \eqdef \|\widehat{M_{S'}}-({\sigma'_y}^2I+\beta_{S'} \beta_{S'}^T)\|_2 > C{\sigma'_y}^2\eps\log^2(1/\eps)$, 
for a sufficiently large universal constant $C$,
then a value $T$ satisfying the condition in Step \ref{step:bal-large-G} exists. 

We need to prove the following two facts of about $M_E$: 
(1) $\frac{|E|}{|S'|}\|M_E\|= O(\lambda)$, and (2) $(|E|/|S'|)(v^*)^T M_E v^*   = \Omega (\lambda^{\ast})$. 

If $\|M_E\|<64C_2^2\sigma_y^2$, we must have $\lambda<72\eps C_2^2 \sigma_y^2+C_1\sigma_y^2\eps \log^2(1/\eps)<C{\sigma'_y}^2\eps\log^2(1/\eps)$, 
which yields a contradiction. If $\|M_E\|\ge 64C_2^2\sigma_y^2$,  we have $\|B\|_2\le \frac{|E|}{8|S'|}M_E$ in Corollary~\ref{cor:MApproxCor-G}, 
which will be assumed for the rest of the proof. For sufficiently large $C$, we have 
\begin{align*}
\frac{7}{4}\lambda &\ge \lambda+C_1\sigma_y^2 \eps\log^2(1/\eps)\ge\|\frac{|E|}{|S'|}M_E+B\|
\ge \|\frac{|E|}{|S'|}M_E\|-\|B\|\ge \frac{7}{8}\|\frac{|E|}{|S'|}M_E\| \;,
\end{align*} 
where $B$ is the matrix defined in Corollary~\ref{cor:MApproxCor-G}. 
Hence, we have $\frac{|E|}{|S'|}\|M_E\|\le 2\lambda$. 
The proof of the first fact is complete.  

By the definition of $\lambda^*$: 
\begin{equation}
\lambda^* = {v^*}^T\left(\widehat{M_{S'}}-({\sigma'_y}^2I+{\beta' \beta'^T})\right)v^*  = (|E|/|S'|){v^*}^TM_Ev^* + {v^*}^TAv^*+{v^*}^TBv^* \;,
\end{equation}
which is equivalent to  
\begin{equation}\label{eqn:large-me1}
(|E|/|S'|){v^*}^TM_Ev^*= \lambda^* - \|A\|_2-\|B\|_2 \ge c \lambda \;,
\end{equation}
for a non-negative constant $c$. 
The proof of the second fact is complete.

Moreover, using the inequality $\|M_E\|_2 \geq \|\beta_E-\beta\|_2^2$
and Lemma \ref{lem:meansBoundLem} as above, we get that
\begin{equation} \label{eqn:delta1}
\frac{\|\beta_{S'}-\beta\|_2}{\sigma'_y} \leq (|E|/|S'|) \frac{\sqrt{{\|M_E\|_2}}}{\sigma'_y} + O( \eps{\log(1/\eps)}) \leq \delta/2 \;,
\end{equation}
where we used the fact that $\delta \eqdef 3\sqrt{\eps \lambda^{\ast}}/\sigma'_y > C' \eps {\log(1/\eps)}.$

Suppose for the sake of contradiction that for all $T>0$ we have that
\begin{equation*} 
\Pr_{X\sim S'}\left[\frac{|v^{\ast} \cdot (yX-\beta_{S'})|}{\sigma'_y}>T+\delta \right] 
\leq 32\exp(-T/16)+8 \frac{\eps}{T^2\log(N/\tau)} \;,
\end{equation*}
which implies
\begin{equation*}
\Pr_{X\sim S'}\left[\frac{|v^{\ast} \cdot (yX-\beta')|}{\sigma_y}>(1+O(\epsilon\log \frac{1}{\epsilon}))(T+\delta) \right] 
\leq 32\exp(-T/16)+8 \frac{\eps}{T^2\log(N/\tau)} \;.
\end{equation*}
Using (\ref{eqn:delta1}), and assume that $O(\eps\log(1/\eps))\le 1$, we obtain that for all $T>0$ we have that
\begin{equation} \label{eqn:contradiction}
\Pr_{X\sim S'}\left[\frac{|v^{\ast} \cdot (Xy-\beta)|}{\sigma_y}>T+3\delta \right] \leq 32\exp(-T/32)+32 \frac{\eps}{{T^2\log(N/\tau)}} \;.
\end{equation}
We now have the following sequence of
inequalities:
\begin{align*}
(v^*)^T \frac{M_E}{\sigma_y^2} v^* &=  \E_{X,y\sim E}\left[\frac{|v^{\ast} \cdot (Xy-\beta)|^2}{\sigma_y^2}\right]  = 2 \int_{0}^{\infty} \Pr_{X,y\sim E}\left[\frac{|v^{\ast} \cdot (Xy-\beta)|}{\sigma_y}>T\right] T dT\\
& \leq 2 \int_0^{\infty} \min \left\{ 1,  \frac{|S'|}{|E|}  \Pr_{X \sim S'}\left[\frac{|v^{\ast} \cdot (Xy-\beta)|}{\sigma_y}>T\right]  \right\} TdT\\
& \ll \int_0^{32\log(|S'|/|E|)+3\delta} T dT + (|S'|/|E|) \int_{32\log(|S'|/|E|)+3\delta}^{\infty}   \Pr_{X \sim S'}\left[\frac{|v^{\ast} \cdot (Xy-\beta)|}{\sigma_y}>T\right] T dT\\
& \ll \int_0^{32\log(|S'|/|E|)+3\delta} T dT + (|S'|/|E|) \int_{32\log(|S'|/|E|)+3\delta}^{\infty}  
\Big( \exp(-(T-3\delta)/32)+\frac{\eps}{(T-3\delta)^2\log\left(N/\tau\right)} \Big)  T dT\\
& \ll \int_0^{32\log(|S'|/|E|)+3\delta} T dT + (|S'|/|E|) \int_{32\log(|S'|/|E|)}^{\infty}  
\Big( \exp(-T/32)+\frac{\eps}{T^2\log\left(N/\tau\right)} \Big) (T+3\delta) dT\\
& \ll  \log^2(|S'|/|E|) + \delta^2 + 1 + \log(|S'|/|E|) + \eps \cdot |S'|/|E| + \delta + \delta\eps\frac{|S'|/|E|}{ \log(|S'|/|E|)\log(N/\tau)}\\
& \ll  \log^2(|S'|/|E|) + \frac{\eps \lambda^{\ast}}{\sigma_y^2} + \eps \cdot |S'|/|E| + \frac{|S'|/|E|\eps^{3/2}\sqrt{\lambda^*}}{\sigma_y\log(|S'|/|E|)\log(N/\tau)}\;.
\end{align*}
Rearranging the above, we get that
\begin{align*}
(|E|/|S'|) (v^*)^T \frac{M_E}{\sigma_y^2} v^*  \ll (|E|/|S'|)\log^2(|S'|/|E|) + (|E|/|S'|) \eps \lambda^{\ast} + \eps +\frac{\eps^{3/2}\sqrt{\lambda^*}}{\log(1/\eps)\log(N/\tau)}\\
\ll O(\eps \log^2(1/\eps) + \frac{\eps^2 \lambda^{\ast}}{\sigma_y^2}+\frac{\eps^{3/2}\sqrt{\lambda^*}}{\sigma_y\log(1/\eps)\log(N/\tau)}).
\end{align*}
Combined with (\ref{eqn:large-me1}), we obtain the following: there exists constants $c,C_1,C_2,C_3$ such that 
$$
(c-\eps^2C_1)\lambda^* - C_2\frac{\sigma_y\eps^{3/2}\sqrt{\lambda^*}}{\log(1/\eps)\log(N/\tau)} \le C_3\sigma_y^2\eps \log^2(1/\eps).
$$
Notice that because $\lambda^*\ge \sigma_y^2\eps$, we have $\sigma_y\sqrt{\eps\lambda^*}\le \lambda$. By the above equation, for sufficiently small $\eps$, we have 
$\lambda^{\ast} = O(\sigma_y^2\eps \log^2(1/\eps))$, which is a contradiction if $C$ is sufficiently large. Therefore, it must be the case that for some value of $T$ the condition in Step \ref{step:bal-large-G} is satisfied.

The following claim completes the proof:
\begin{claim} \label{claim:filter}
We have that
$
\Delta(S,S'') < \Delta(S,S') \;.
$
\end{claim}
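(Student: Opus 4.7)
The plan is to show that the filtering step in Step~\ref{step:gaussian-mean-filter} removes strictly more corrupted points (from $E = S' \setminus S$) than clean points (from $S \cap S'$), which by the definition of $\Delta$ is exactly what is needed for $\Delta(S,S'') < \Delta(S,S')$. Writing $R = S' \setminus S''$ for the set of removed points, a direct unpacking of the symmetric-difference definition gives
\[
|S|\bigl(\Delta(S,S') - \Delta(S,S'')\bigr) = |R \cap E| - |R \cap (S \cap S')|,
\]
so it suffices to establish $|R \cap E| > |R \cap (S \cap S')|$.

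First, I would lower-bound $|R|/|S'|$ using the defining property of $T$ in Step~\ref{step:bal-large-G}: by the algorithm's choice, the fraction of $S'$ with $|v^\ast\cdot(yX-\beta_{S'})|/\sigma'_y > T+\delta$ is at least $32\exp(-T/16) + 8\eps/(T^2\log(N/\tau))$. Next, I would upper-bound $|R\cap S|/|S|$ (which dominates $|R\cap(S\cap S')|/|S|$) using the goodness of $S$. The key inputs are (a) the bound $\|\beta_{S'} - \beta\|_2/\sigma'_y \le \delta/2$ from Equation~(\ref{eqn:delta1}), (b) the trivial bound $|v^\ast\cdot\beta|/\sigma_y \le 1$, and (c) the near-equality $\sigma'_y \approx \sigma_y$ (up to $O(\eps\log(1/\eps))$). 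By the triangle inequality, any $(X,y) \in S$ that is removed must satisfy $|v^\ast\cdot yX|/\sigma_y > T + \delta/2 - 1 \ge T - 1$ (absorbing the $1$ shift into $T$ costs only a constant factor in $\exp(-T/16)$). Condition~(ii) of Definition~\ref{def:good-set} then bounds this fraction by $16\exp(-(T-1)/16) + \eps/((T-1)^2\log(N/\tau))$.

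Comparing the two estimates, the filter threshold exceeds twice the good-set tail bound (the $32$ vs.~$16$ and $8\eps$ vs.~$\eps$ gap is built in precisely for this reason, with slack to absorb the $O(1)$ shift in $T$ and the ratio $|S|/|S'|\in[1-2\eps,1+2\eps]$). Concretely, for $\eps$ small enough,
\[
\frac{|R \cap S|}{|S'|} \;\le\; \tfrac{1}{2}\cdot\frac{|R|}{|S'|},
\]
so $|R\cap E| = |R| - |R\cap S| \ge |R|/2 > |R\cap S| \ge |R\cap(S\cap S')|$, which gives the desired strict inequality $\Delta(S,S'') < \Delta(S,S')$.

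The only delicate step is the bookkeeping in the second paragraph: I need to verify that the shift by $\delta/2 - |v^\ast\cdot\beta|/\sigma_y$ and the replacement of $\sigma'_y$ by $\sigma_y$ can be folded into a benign multiplicative/additive perturbation of $T$ without destroying the factor-of-2 gap between the filter threshold and the tail bound from condition~(ii). This is the main obstacle, but it is routine once we record that $T$ is at least some absolute constant (otherwise the RHS of the filter inequality exceeds $1$ and no valid $T$ exists) and use the already-established estimate $\delta = O(\sqrt{\eps\lambda^\ast}/\sigma'_y)$ to control the additive shift.
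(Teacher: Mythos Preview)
Your proposal is correct and follows essentially the same approach as the paper's proof: both express $\Delta(S,S')-\Delta(S,S'')$ as (bad points removed) minus (good points removed), lower-bound the total removed fraction via the filter condition in Step~\ref{step:bal-large-G}, upper-bound the good points removed via condition~(ii) of Definition~\ref{def:good-set}, and exploit the built-in factor-of-two gap in the constants ($32$ versus $16$, $8$ versus $1$) together with $|S'|\ge(1-2\eps)|S|$. Your bookkeeping around the recentering $\beta_{S'}\to\beta$ (via~(\ref{eqn:delta1})) and the $\sigma'_y\to\sigma_y$ conversion is in fact more explicit than the paper's, which simply asserts that property~(ii) applies directly.
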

\begin{proof}
Recall that $S' = (S\setminus L) \cup E,$ with $E$ and $L$ disjoint multisets such that $L \subset S.$
We can similarly write $S''=(S \setminus L') \cup E',$ with $L'\supseteq L$ and $E'\subset E.$
Since $$\Delta(S,S') - \Delta(S,S'')  = \frac{|E \setminus E'| - |L' \setminus L| }{|S|},$$
it suffices to show that $|E \setminus E'| > |L' \setminus L|.$
Note that $|L' \setminus L|$ is the number of points rejected by the filter that lie in $S \cap S'.$
Note that the fraction of elements of $S$
that are removed to produce $S''$ (i.e., satisfy $|\frac{|v^{\ast} \cdot (yX-\beta^{S'})|}{\sigma'_y}|>T+\delta$) is at most $16\exp(-T/16) + \frac{\eps}{T^2 \log(N/\tau)}$.
This follows from property (ii) of Definition~\ref{def:good-set}.

Hence, it holds that $|L' \setminus L| \leq (16\exp(-T/16) + \frac{\eps}{T^2 \log(N/\tau)}) |S|.$
On the other hand, Step~\ref{step:bal-large-G} of the algorithm ensures that the fraction of elements of $S'$ that are rejected
by the filter is at least $32\exp(-T^2/16)+8\eps/\new{\alpha})$. Note that
$|E \setminus E'|$ is the number of points rejected by the filter that lie in $S' \setminus S.$
Therefore, we can write:
\begin{align*}
|E\setminus E'| & \geq (32\exp(-T/16)+8\frac{\eps}{T^2 \log(N/\tau)})|S'| - (16\exp(-T/16) + \frac{\eps}{T^2 \log(N/\tau)})  |S| \\
				& \geq (32\exp(-T/16)+8\frac{\eps}{T^2 \log(N/\tau)})|S|/2 - (16\exp(-T/16) + \frac{\eps}{T^2 \log(N/\tau)}) |S| \\
				& \geq  (16\exp(-T/16) + 7\frac{\eps}{T^2 \log(N/\tau)}) |S| \\
				& > |L' \setminus L| \;,
\end{align*}
where the second line uses the fact that $|S'| \ge |S|/2$
and the last line uses the fact that $|L' \setminus L| / |S| \leq 16\exp(-T/16) + \frac{\eps}{T^2 \log(N/\tau)}.$
This completes the proof of the claim.
\end{proof}

\section{Deterministic Regularity Conditions for Algorithm~\ref{alg:filter-LR-identity-2}}\label{sec:good-set-2}

We start by formally defining the set of deterministic regularity conditions under which our main algorithm succeeds:

\begin{definition} \label{def:good-set-2}
Let $G \sim \normal(0, I_d)$, $\beta \in \R^d$, and $\eps,\tau >0$.
We say that a multiset $S$ of elements in $\R^d \times \R$ is {\em $(\eps,\tau)$-representative} 
\new{(with respect to $(G, \beta)$)}
if the following conditions are satisfied:
\begin{enumerate}

\item \label{cond:good-for-X-filter}
\begin{itemize}
\item[(i)] For all $(X,y) \in S$, $\|X\|_2 \leq O(\sqrt{d \log(|S|/\tau))}$.
\item[(ii)] For any unit vector $v$ and $T> 0$, we have
$$\Pr_{(X,y) \sim S}[|v \cdot X| > T] \leq 5\exp(-T^2/4)
 + \frac{\eps^2}{T^2 \log(d \log(d/\eps \tau))} \;.$$
 \item[(iii)]  $\|\E_S[X]\|_2 \leq O(\eps).$
 \item[(iv)] $\|\E_S[X X^T - I]\|_2 \leq O(\eps) .$
\end{itemize}

\item \label{cond:good-for-one-dim-filter}
For all $\beta' \in \R^d$, let $\sigma_{\beta'}^2=\sigma^2+\|\beta-\beta'\|_2^2$. 
Then the following hold:
\begin{itemize}
\item[(i)] For all $(X,y) \in S$, $|y-\beta'\cdot X| \leq O(\sqrt{d \log(|S|/\tau) \sigma_{\beta'}})$.
\item[(ii)] For any $T> 0$,
$$\Pr_{(X,y) \sim S}[|y - \beta' \cdot X| > T] \leq 5\exp(-T^2/4\sigma_{\beta'}^2)
 + \frac{\eps^2 \sigma_{\beta'}^2}{T^2 \log(d \log(d/\eps \tau))} \;.$$
 \item[(iii)]  $|\E_S[y - \beta' \cdot X]| \leq O(\eps \sigma_{\beta'})$.
 \item[(iv)] $|\E_S[(y - \beta' \cdot X)^2] - \sigma_{\beta}^2| \leq O(\eps \sigma_{\beta'}^2)$.
 \end{itemize}

\item \label{cond:good-for-lin-reg-filter}
For all $\beta' \in \R^d$, let $\sigma_{\beta'}^2=\sigma^2+\|\beta-\beta'\|_2^2$ and for $5\sqrt{\ln(1/\eps)} \leq T' \leq 7\sqrt{\ln(1/\eps)}$ 
let $R_{\beta',T'}=\{(X,y) \in S : |y-\beta' \cdot X| \leq T' \sigma_{\beta'} \}$. Then we have
\begin{itemize}
\item[(i)] For all $(X, y) \in R_{\beta',T'}$,
$\|(y-\beta' \cdot X) X\|_2 \leq O(\sqrt{d/\eps} \log (|S|/\tau) \sigma_{\beta'})$.

\item[(ii)]  For every $v\in \R^d$ with $\|v\|_2=1$,
we have that 
$$\Pr_{(X, y) \sim R_{\beta',T'}}\left[ \frac{(y-\beta' \cdot X)(v\cdot X)}{\sigma_{\beta'}} >  T \right] \le 24\exp(-T/16)+\frac{\epsilon}{T^2\log (N/\tau)} \;.$$

\item[(iii)]  We have that $\left\|{\E_{R_{\beta',T'}}[(y-\beta' \cdot X)X] - (\beta-\beta')} \right\|_2\leq O(\eps \log(1/\eps) \sigma_{\beta'}) .$

\item[(iv)]  We have that $\left\|M_{S,\beta'} - (\sigma_{\beta'}^2I+{(\beta-\beta') (\beta-\beta')^T}) \right\|_2 \leq O( \log^2(1/\eps) \eps \sigma_{\beta}^2)$, where $M_{S,\beta'}=\E_{R_{\beta',T'}}[((y -\beta' \cdot X) X - (\beta-\beta')) ((y-\beta' \cdot X)X - (\beta-\beta'))^T]$
\end{itemize}
\item $|\Pr_{(X, y) \sim S}[v\cdot X-y > T] - \Pr_{D}[v\cdot X-y ]| \leq \eps/10$.
\end{enumerate}
\end{definition}

We now establish that a sufficiently large set of uncorrupted samples 
will satisfy the above conditions with high probability:

\begin{proposition}\label{prop:good-set-2} If $S$ is a set of uncorrupted samples with size $|S|$ larger than $O(d\polylog(d/\eps\tau)/\eps^2)$, then except with probability $1/\tau$, $S$ is $(\eps,\tau)$-representative.\label{prop:good-set-nobeta}
\end{proposition}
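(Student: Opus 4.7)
The plan is to establish each of the four conditions in Definition~\ref{def:good-set-2} separately, with a probability-of-failure budget of $\tau/4$ per condition, so that a union bound gives the overall $\tau$. Conditions 1 and 2 will follow essentially from the analysis of Proposition~\ref{prop:good-set} already carried out in Appendix~\ref{sec:reg1}, whereas Condition 3 requires new ingredients that crucially exploit the truncation $R_{\beta',T'}$.

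Condition 1 is just the ``good-set'' conclusion of Proposition~\ref{prop:good-set} specialized to the $X$-marginal ($\beta=0,\sigma=1$): item (i) follows from a union bound on Gaussian coordinates as in Lemma~\ref{lem:random-good-gaussian-mean}; item (ii) is the statement of Claim~\ref{claim:good-set-tail}; item (iii) is the mean concentration of Lemma~\ref{lem:betaS-error} (in fact its proof specializes cleanly when $y\equiv 1$); item (iv) is the spectral concentration of Lemma~\ref{lem:good-set-four} or, more directly, a standard application of matrix Bernstein/Hanson-Wright. For Condition 2, the key observation is the change of variables $Z:=(X,\eta/\sigma)\in\R^{d+1}$, under which $Z\sim\normal(0,I_{d+1})$ and, for every $\beta'$, $y-\beta'\cdot X=w_{\beta'}\cdot Z$ with $w_{\beta'}=(\beta-\beta',\sigma)$ and $\|w_{\beta'}\|_2=\sigma_{\beta'}$; hence $(y-\beta'\cdot X)/\sigma_{\beta'}=v_{\beta'}\cdot Z$ for the unit vector $v_{\beta'}:=w_{\beta'}/\sigma_{\beta'}\in S^d$. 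The four sub-items of Condition 2 are then exactly the four sub-items of Condition 1 applied to $Z$ in $\R^{d+1}$ and to the direction $v_{\beta'}$; since Condition 1(ii)--(iv) hold uniformly over unit vectors, the uniformity in $\beta'$ is free.

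Condition 3 is the technical core. For a single pair $(\beta',T')$ with $\sigma_{\beta'}^2=\sigma^2+\|\beta-\beta'\|_2^2$, write $y-\beta'\cdot X=\sigma_{\beta'}Z_0$ with $Z_0\sim\normal(0,1)$ independent of ``orthogonal'' directions, so that the random vector of interest is $W_{\beta'}:=(y-\beta'\cdot X)X=\sigma_{\beta'}Z_0 X$ and $R_{\beta',T'}$ is the event $|Z_0|\le T'$. Item 3(i) is deterministic given Conditions 1(i) and 2(i) via $\|W_{\beta'}\|_2\le T'\sigma_{\beta'}\cdot\|X\|_2$, and $T'=O(\sqrt{\log(1/\eps)})$ absorbs into the $\sqrt{d/\eps}$ bound. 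Item 3(ii) is a product-form tail: by Fact~\ref{fact:product-bound} applied to $|Z_0|\le T'$ and $|v\cdot X|$, combined with the Gaussian tails from Conditions 1(ii) and 2(ii), one gets the desired $\exp(-T/16)+\eps/(T^2\log(N/\tau))$ bound exactly as in Lemma~\ref{lem:good-set-two}. Items 3(iii) and 3(iv) are concentration of the empirical mean and covariance of $W_{\beta'}$ on $R_{\beta',T'}$ around their (truncated) population values; for fixed $(\beta',T')$ these follow from a Bernstein bound for 3(iii) and a truncated matrix Bernstein bound for 3(iv), using the boundedness of $W_{\beta'}$ on $R_{\beta',T'}$ established in 3(i). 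The (very small) bias introduced by truncation at $T'\sigma_{\beta'}$ with $T'\ge 5\sqrt{\log(1/\eps)}$ is controlled by an explicit Gaussian tail integral and contributes $O(\eps\log(1/\eps)\sigma_{\beta'})$ to item (iii) and $O(\eps\log^2(1/\eps)\sigma_\beta^2)$ to item (iv).

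The remaining obstacle, and the main challenge of this proposition, is transferring 3(ii)--(iv) from a single $(\beta',T')$ to all $\beta'\in\R^d$ and $T'\in[5\sqrt{\log(1/\eps)},7\sqrt{\log(1/\eps)}]$ simultaneously with only $N=\tilde\Omega(d/\eps^2)$ samples. Two structural observations make this possible. First, both $W_{\beta'}$ and the indicator of $R_{\beta',T'}$ depend on $\beta'$ only through the unit vector $v_{\beta'}\in S^d$ from Condition 2, so the effective parameter space has dimension $d+1$, not $d\cdot|\text{range of }T'|$; an $\eps/\poly(d)$-net of $S^d$ times an $O(1)$-size discretization of $T'$ has cardinality $\exp(O(d\log(d/\eps)))$, and the Lipschitz dependence of all relevant statistics in $(v_{\beta'},T')$ (after normalizing by $\sigma_{\beta'}$) extends from the net to all parameters losing only constants. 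Second, and most importantly for 3(iv), without the truncation $R_{\beta',T'}$ the operator-norm concentration of the empirical second moment of $W_{\beta'}$ would require $\Omega(d^2)$ samples (precisely the barrier the paper alludes to in its algorithmic discussion); but on $R_{\beta',T'}$ each rank-one term satisfies $\|W_{\beta'}W_{\beta'}^\top\|_2=O(T'^2\sigma_{\beta'}^2\cdot\|X\|_2^2)=\tilde O(d)\sigma_{\beta'}^2$, which is exactly the boundedness regime where truncated matrix Bernstein yields operator-norm concentration of $\tilde O(\eps\sigma_{\beta'}^2)$ from $N=\tilde\Omega(d/\eps^2)$ samples, with a failure probability of $\exp(-\Omega(d\log(d/\eps)))$ that defeats the net size. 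Combining these two observations closes the uniformity argument and completes the proof.
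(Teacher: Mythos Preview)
Your plan for Conditions~1, 2, and 3(i)--(iii) matches the paper's (Lemmas~\ref{lem:y'-thing}, \ref{lem:combined}, \ref{lem:fourth-moments-with-d-samples}): the reduction via $v_{\beta'}=(\beta-\beta',\sigma)/\sigma_{\beta'}\in S^d\subset\R^{d+1}$, the product-tail argument, and the net-over-$(v_{\beta'},T')$ strategy are all correct. The genuine gap is in your argument for Condition~3(iv).

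The matrix Bernstein claim is quantitatively wrong. On $R_{\beta',T'}$ each summand $W_iW_i^\top=(y_i-\beta'\cdot X_i)^2X_iX_i^\top$ has $\|W_iW_i^\top\|_2\le T'^2\sigma_{\beta'}^2\|X_i\|_2^2=\tilde O(d)\,\sigma_{\beta'}^2$ and variance proxy $\|\E[(WW^\top)^2]\|_2\le T'^4\sigma_{\beta'}^4\|\E[\|X\|_2^2XX^\top]\|_2=\tilde O(d)\,\sigma_{\beta'}^4$, so for deviation $t=\tilde O(\eps)\sigma_{\beta'}^2$ matrix Bernstein gives exponent $-Nt^2/\tilde O(d\,\sigma_{\beta'}^4)=-N\eps^2/\tilde O(d)$; with $N=\tilde O(d/\eps^2)$ this is only $-\polylog(d/\eps\tau)$, nowhere near the $-\Omega(d\log(d/\eps))$ you assert and need to beat an $\exp(O(d\log(d/\eps)))$-size net. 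The $\|X\|_2^2\sim d$ factor in the boundedness parameter is exactly the missing factor of $d$ in the exponent. The paper's fix (Lemma~\ref{lem:fourth-moments-with-d-samples}) is to avoid matrix concentration altogether and reduce 3(iv) to \emph{scalar} concentration in each test direction: for every pair of unit vectors $v,w\in\R^{d+1}$ (with $w=v_{\beta'}$), bound the deviation of $\E_{R_{\beta',T'}}\bigl[((w\cdot X')(v\cdot X'))^2\bigr]$ by scalar Bernstein, using that $|w\cdot X'|\le T'$ while $v\cdot X'$ is a one-dimensional sub-gaussian, so the moment bounds $(2T')^{2i}i!$ are \emph{dimension-free}. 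This yields per-triple failure probability $\exp(-\Omega(N\eps^2/T'^2))=\exp(-\Omega(d\,\polylog))$, which survives the union bound over the $(v,w,T')$-net. An alternative repair would be to observe that $W_{\beta'}$ restricted to $R_{\beta',T'}$ is a sub-gaussian vector with norm $O(T'\sigma_{\beta'})$ and invoke the sub-gaussian covariance estimation theorem, but that is a genuinely different inequality from matrix Bernstein.
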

The rest of this section will focus on establishing the above proposition. Condition 1(i) and (ii) follow from Claim~\ref{claim:good-set-tail} and (iii) and (iv) follow from Lemma~\ref{lem:Gaussian-basics-a}. Condition 2 (i) and (ii) follow from Lemma  \ref{lem:y'-thing} , (iii) and (iv) from Lemma \ref{lem:Gaussian-basics-b}. Condition 3 (i) follows from Claim \ref{claim:good-set-tail} and the bound on $|y-\beta' \cdot X|$ in $R_{\beta',T'}$, 
(ii) follows from Lemma \ref{lem:combined}, (iii) and (iv) are given by Corollary \ref{cor:hard-bit}.

It follows from standard results on estimating the mean and covariance matrix of a Gaussian that:
\begin{lemma} \label{lem:Gaussian-basics-a}
For $N=\Omega(\frac{d}{\epsilon^2} \poly\log(d/\eps\tau))$, with probability at least $1-\tau/10$,
we have that $\|\E_{(X, y) \sim S}[ X]\|_2 \leq O(\eps)$ and $\|\E_{(X, y) \sim S}[X X^T] -I \|_2 \leq O(\eps)$.\end{lemma}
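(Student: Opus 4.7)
The plan is to handle the two claims separately, in both cases invoking standard sub-Gaussian concentration via a covering net argument for the operator/Euclidean norm, and then choosing the sample size so that the high-probability bounds become $O(\eps)$.

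For the empirical mean, I would first note that since $X \sim \normal(0, I_d)$, for any fixed unit vector $v \in \R^d$ the scalar random variable $v \cdot \E_{(X,y) \sim S}[X] = \frac{1}{N} \sum_{i=1}^N v \cdot X_i$ is distributed as $\normal(0, 1/N)$. Hence by the standard Gaussian tail bound (Fact~\ref{fact:tail-bound}), $\Pr[|v \cdot \E_S[X]| > t] \leq 2\exp(-Nt^2/2)$. Fix a $(1/2)$-cover $\mathcal{C}$ of the unit sphere of size $|\mathcal{C}| \leq 5^d$, and take a union bound over $\mathcal{C}$; then with probability at least $1 - \tau/20$, every $v \in \mathcal{C}$ satisfies $|v \cdot \E_S[X]| \leq O(\sqrt{(d + \log(1/\tau))/N})$. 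A standard approximation argument then upgrades this to all unit vectors $v$, yielding $\|\E_S[X]\|_2 \leq O(\sqrt{(d + \log(1/\tau))/N})$. For $N$ a sufficiently large multiple of $(d/\eps^2) \poly\log(d/\eps\tau)$, this is at most $O(\eps)$.

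For the empirical covariance, I would invoke the standard operator-norm concentration inequality for sample covariance matrices of sub-Gaussian vectors, e.g., Corollary~5.52 of~\cite{vershynin2010introduction} (which is already used earlier in Lemma~\ref{lem:good-set-four}). Applied to $N$ i.i.d.\ samples from $\normal(0, I_d)$ with deviation parameter $t = \Theta(\sqrt{\log(20/\tau)/\log d})$, it yields that except with probability at most $\tau/20$,
\[
\bigl\|\E_{(X,y) \sim S}[X X^T] - I\bigr\|_2 \;\le\; C \left( \sqrt{\frac{d}{N}} + \frac{d}{N} + \sqrt{\frac{\log(1/\tau)}{N}} \right),
\]
for an absolute constant $C > 0$. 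Again, for $N = \Omega\bigl((d/\eps^2) \poly\log(d/\eps \tau)\bigr)$, the right-hand side is $O(\eps)$. A final union bound combines the two events so that both statements hold simultaneously with probability at least $1 - \tau/10$.

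There is no real obstacle here; both bounds are classical Gaussian concentration results, and the only thing to be careful about is to carry enough $\poly\log(1/\tau)$ slack in $N$ to absorb the net/Vershynin constants and match the failure probability $\tau/10$. The lemma is essentially a black-box invocation of known results, which is why the authors state it as such.
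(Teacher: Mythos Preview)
Your proposal is correct and matches the paper's approach exactly: the paper gives no proof at all for this lemma, merely prefacing the statement with ``It follows from standard results on estimating the mean and covariance matrix of a Gaussian that:''. Your write-up simply spells out what those standard results are (Gaussian tail plus a net for the mean, Vershynin's Corollary~5.52 for the covariance), which is precisely what the authors had in mind and indeed how they invoke Corollary~5.52 elsewhere in the paper.
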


In addition to Claim \ref{claim:good-set-tail}, we want that:
\begin{lemma} \label{lem:y'-thing}
If Claim \ref{claim:good-set-tail} holds then,
with probability at least $1-\tau/10$, we have that, for any $\beta'$ and any $T>0$,
$\Pr_{(X, y) \sim S}\left[|y - \beta' \cdot X| > T\right] \leq 10 \exp(-T^2/16\sigma_{\beta'}^2)+\frac{4\eps^2\sigma_{\beta'}^2}{T^2\log^3\left(|S|/\tau\right)}.$ where $\sigma_{\beta'}^2= \|\beta'-\beta\|_2^2+\sigma^2$.
\end{lemma}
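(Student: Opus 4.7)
The plan is to reduce the desired uniform-in-$\beta'$ tail bound to Claim~\ref{claim:good-set-tail} applied in one higher dimension, exploiting that $y - \beta' \cdot X$ is an affine function of $(X, \eta)$ where $\eta = y - \beta \cdot X$ is the (unobserved) regression noise. Specifically, form the augmented samples $\tilde Z_i = (X_i,\, \eta_i/\sigma) \in \mathbb{R}^{d+1}$. Since $X_i \sim \mathcal{N}(0, I_d)$ and $\eta_i \sim \mathcal{N}(0, \sigma^2)$ independently, each $\tilde Z_i \sim \mathcal{N}(0, I_{d+1})$. Note that the samples $\tilde Z_i$ are determined by $(X_i, y_i)$ once $\beta$ is fixed, so this is a purely analytic device, not a change in the data.

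Next I would invoke Claim~\ref{claim:good-set-tail} on the augmented multiset $\tilde S = \{\tilde Z_i\}$ in ambient dimension $d+1$. Since $|S| = \Omega((d+1) \log^4((d+1)/(\eps\tau))/\eps^2)$, the hypothesis is satisfied, and so with probability at least $1-\tau/10$, simultaneously for every unit vector $u \in \mathbb{R}^{d+1}$ and every $T' > 0$,
\[
\Pr_{\tilde Z \sim \tilde S}\bigl[\,|u \cdot \tilde Z| > T'\,\bigr] \le 5 \exp(-T'^2/4) + \frac{\eps^2}{T'^2 \log^3(|S|/\tau)}.
\]
I would condition on this event for the remainder of the argument.

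The key observation is that for any $\beta' \in \mathbb{R}^d$, the real-valued function $y - \beta' \cdot X = (\beta - \beta') \cdot X + \eta$ equals $\sigma_{\beta'} \cdot (u_{\beta'} \cdot \tilde Z)$, where
\[
u_{\beta'} \;:=\; \Bigl(\,(\beta-\beta')/\sigma_{\beta'},\;\; \sigma/\sigma_{\beta'}\,\Bigr) \in \mathbb{R}^{d+1}
\]
is a unit vector by direct computation ($\|(\beta-\beta')\|_2^2/\sigma_{\beta'}^2 + \sigma^2/\sigma_{\beta'}^2 = 1$). Substituting $T' = T/\sigma_{\beta'}$ into the bound above therefore yields, for every $\beta'$ and every $T > 0$,
\[
\Pr_{(X,y) \sim S}\bigl[\,|y - \beta' \cdot X| > T\,\bigr] \le 5 \exp\!\bigl(-T^2/(4\sigma_{\beta'}^2)\bigr) + \frac{\eps^2 \sigma_{\beta'}^2}{T^2 \log^3(|S|/\tau)},
\]
which is strictly stronger than the claimed inequality (the stated constants $10$, $16$, and $4$ leave slack over $5$, $4$, and $1$).

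The main conceptual obstacle is that the conclusion must hold uniformly over all $\beta' \in \mathbb{R}^d$, an uncountable family, so a direct union bound is hopeless. The $(d+1)$-dimensional reduction sidesteps this entirely: uniformity over $\beta'$ is reduced to uniformity over unit vectors in $\mathbb{R}^{d+1}$, which Claim~\ref{claim:good-set-tail} already delivers internally via an $\eps$-net argument. No additional concentration work is needed beyond setting up the augmented samples and verifying that $u_{\beta'}$ is unit-norm.
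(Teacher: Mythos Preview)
Your proof is correct, and in fact gives sharper constants than the paper's. The approaches differ meaningfully.

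The paper proceeds by a \emph{splitting} argument: write $y-\beta'\cdot X = (\beta-\beta')\cdot X + (y-\beta\cdot X)$, bound the two pieces separately by $T/2$ each, and union-bound the two tail events. The first tail uses the assumed $d$-dimensional event from Claim~\ref{claim:good-set-tail} (applied to the unit vector $(\beta-\beta')/\|\beta-\beta'\|_2$), and the second tail invokes the one-dimensional Claim~\ref{clm:single-v} on the noise $(y-\beta\cdot X)/\sigma$, which furnishes the additional $1-\tau/10$ event. This is where the doubling of the leading constant to $10$, the weakening of the exponent to $T^2/16\sigma_{\beta'}^2$, and the factor $4$ in the polynomial term all come from.

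Your $(d+1)$-dimensional lift avoids the split entirely: since $y-\beta'\cdot X$ is an \emph{exact} linear functional of the augmented Gaussian $\tilde Z=(X,\eta/\sigma)$, a single invocation of Claim~\ref{claim:good-set-tail} in dimension $d+1$ suffices, and uniformity over $\beta'$ is subsumed by uniformity over unit directions in $\R^{d+1}$. This is cleaner and loses nothing in the constants. The only minor cost is that you do not actually use the hypothesis ``If Claim~\ref{claim:good-set-tail} holds'' in dimension $d$; instead you re-invoke the claim in dimension $d+1$ as a fresh $1-\tau/10$ event. That is perfectly fine here (and arguably makes the logical structure more transparent), since the sample-size requirement for $d+1$ is the same up to constants.
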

\begin{proof}
By the triangle inequality, $|y - \beta' \cdot X| \leq |(\beta-\beta') \cdot X| + |y - \beta \cdot X|.$
By Claim  \ref{claim:good-set-tail}, we have that for all $T > 0$ and $\beta'$ that
$$\Pr_{(X, y) \sim S}\left[|(\beta-\beta') \cdot X| > T/2 \right] \leq 5 \exp(-T^2/16\|\beta-\beta'\|_2^2)+\frac{4\eps^2\|\beta-\beta'\|_2^2}{T^2\log^3\left(|S|/\tau\right)}.$$
Since $\frac{1}{\sigma}(y-\beta \cdot X) \sim \normal(0,1)$, we can apply Claim \ref{clm:single-v} to it to get that, except with probability $1-\exp(|S|\log^3\left(|S|/\tau\right)/\eps^2) \geq 1 -\tau/10$ that, for all $T$,
$$\Pr_{(X, y) \sim S}\left[|(y-\beta \cdot X) \cdot X| > T/2 \right] \leq 5 \exp(-T^2/16\sigma^2)+\frac{4\eps^2\sigma^2}{T^2\log^3\left(|S|/\tau\right)}.$$
Since $\sigma_{\beta'}=\|\beta'-\beta\|_2^2+\sigma^2$, we have:
\begin{align*}
\Pr_{(X, y) \sim S}\left[|y - \beta' \cdot X| > T\right] & \leq \Pr_{(X, y) \sim S}\left[|(\beta-\beta') \cdot X| > T/2 \right] + \Pr_{(X, y) \sim S}\left[|(y-\beta \cdot X) \cdot X| > T/2 \right] \\
& \leq 10 \exp(-T^2/16\sigma_{\beta'}^2)+\frac{4\eps^2\sigma_{\beta'}^2}{T^2\log^3\left(|S|/\tau\right)}.
\end{align*}
\end{proof}

\begin{lemma} \label{lem:Gaussian-basics-b}
For $N=\Omega(\frac{d}{\epsilon^2} \poly\log(d/\eps\tau))$, with probability at least $1-\tau/10$, for any $\beta'\in \R^d$
we have that $|\E_{(X, y) \sim S}[(y - \beta' \cdot X)]| \leq O(\eps \sigma_{\beta'})$, $|\E_{(X, y) \sim S}[(y - \beta' \cdot X)^2] -  \sigma_{\beta'}^2| \leq O(\eps \sigma_{\beta'}^2)$, where $\sigma_{\beta'}^2=\sigma^2+\|\beta-\beta'\|_2^2$
\end{lemma}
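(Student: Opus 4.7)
My plan is to decompose the regression residual as $y - \beta' \cdot X = (\beta - \beta') \cdot X + \eta$, where $\eta = y - \beta \cdot X \sim \normal(0, \sigma^2)$ is independent of $X$. The crucial point is that once we unfold the empirical expectations over $S$ using this decomposition, every quantity we need to control reduces to a handful of fixed empirical averages that do not depend on $\beta'$, namely $\E_S[X]$, $\E_S[XX^T]$, $\E_S[\eta]$, $\E_S[\eta^2]$, and $\E_S[\eta X]$. Concentrating these five quantities on a single high-probability event therefore suffices to obtain the stated bounds simultaneously for all $\beta'$.

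For the first moment, I would write $\E_S[y - \beta' \cdot X] = (\beta - \beta') \cdot \E_S[X] + \E_S[\eta]$. Cauchy--Schwarz together with $\|\E_S[X]\|_2 \le O(\eps)$ from Lemma~\ref{lem:Gaussian-basics-a} bounds the first summand by $O(\eps) \|\beta - \beta'\|_2 \le O(\eps \sigma_{\beta'})$. The second summand is just a one-dimensional Gaussian empirical mean: with $N = \Omega(\polylog(1/\tau)/\eps^2)$ samples, $|\E_S[\eta]| \le O(\eps \sigma)$ with probability at least $1 - \tau/50$, which is also $O(\eps \sigma_{\beta'})$.

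For the second moment, expanding the square gives
\[
\E_S[(y - \beta' \cdot X)^2] = (\beta - \beta')^T \E_S[XX^T] (\beta - \beta') + 2 (\beta - \beta') \cdot \E_S[\eta X] + \E_S[\eta^2].
\]
The quadratic term equals $\|\beta - \beta'\|_2^2 (1 \pm O(\eps))$ by the operator norm bound $\|\E_S[XX^T] - I\|_2 \le O(\eps)$ of Lemma~\ref{lem:Gaussian-basics-a}. The pure-noise term is handled by $\chi_1^2$ concentration: for $N = \Omega(\polylog(1/\tau)/\eps^2)$ we have $|\E_S[\eta^2] - \sigma^2| \le O(\eps \sigma^2)$ with probability at least $1 - \tau/50$.

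The main obstacle is bounding the cross term, which amounts to establishing the $d$-dimensional vector concentration $\|\E_S[\eta X]\|_2 \le O(\eps \sigma)$. Since $X$ and $\eta$ are independent mean-zero Gaussians, $\eta X$ has zero mean and covariance $\sigma^2 I$. For any fixed unit vector $v$, the scalar $(v \cdot X)\eta$ is a product of two independent Gaussians, to which the Hanson--Wright inequality (Lemma~\ref{lem:hanson-wright}) applied to the $(d+1)$-dimensional joint vector $(X,\eta/\sigma)$ and an appropriate rank-one symmetric matrix yields $\bigl|\tfrac{1}{N}\sum_i (v \cdot X_i)\eta_i\bigr| \le O(\eps \sigma)$ except with probability $\exp(-\Omega(N \eps^2))$. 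A $1/4$-cover of the unit sphere of size $2^{O(d)}$ combined with a union bound then promotes this to a uniform bound over all $v$, which is precisely $\|\E_S[\eta X]\|_2 \le O(\eps \sigma)$, provided $N = \Omega((d/\eps^2)\polylog(d/\eps\tau))$. The cross term is then at most $2 \|\beta-\beta'\|_2 \cdot O(\eps \sigma) \le O(\eps \sigma_{\beta'}^2)$ by the AM--GM inequality $2\|\beta-\beta'\|_2 \sigma \le \|\beta-\beta'\|_2^2 + \sigma^2 = \sigma_{\beta'}^2$. Collecting the three contributions gives $\E_S[(y - \beta' \cdot X)^2] = \sigma_{\beta'}^2 + O(\eps \sigma_{\beta'}^2)$ uniformly in $\beta'$, and a final union bound over the constantly many concentration events delivers the overall failure probability $\tau/10$.
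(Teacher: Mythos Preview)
Your proposal is correct and follows essentially the same route as the paper: decompose $y-\beta'\cdot X=(\beta-\beta')\cdot X+\eta$, invoke Lemma~\ref{lem:Gaussian-basics-a} for $\E_S[X]$ and $\E_S[XX^T]$, use one-dimensional Gaussian concentration for $\E_S[\eta]$ and $\E_S[\eta^2]$, and control the cross term via $\|\E_S[\eta X]\|_2\le O(\eps\sigma)$. The paper packages the cross-term bound as a separate Proposition~\ref{lem:Hanson-Wright-thing} (whose proof it defers as ``very similar to that of Lemma~\ref{lem:betaS-error}''), whereas you spell out the Hanson--Wright plus net argument inline; both lead to the same conclusion and the final AM--GM combination is identical.
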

\begin{proof}
It follows from standard results on estimating the mean and covariance matrix of a Gaussian that 
$|\E_{(X, y) \sim S}[(y - \beta \cdot X)]| \leq O(\eps \sigma)$, $|\E_{(X, y) \sim S}[(y - \beta \cdot X)^2] -  \sigma^2| \leq O(\eps \sigma^2)$. By Lemma~\ref{lem:Gaussian-basics-a}, we have 
$|\E_{(X, y) \sim S}[(y - \beta' \cdot X)]|\le |\E_{(X, y) \sim S}[y - \beta \cdot X]|+ |(\beta - \beta')\E_{(X, y) \sim S}[ \cdot X]|\le \eps\sigma+\|\beta-\beta'\|_2]\le O(\eps\sigma+\|\beta'-\beta\|\eps) = O(\eps\sigma_{\beta'})$. 
By Proposition~\ref{lem:Hanson-Wright-thing} and Lemma~\ref{lem:Gaussian-basics-a},  we have 
\begin{align*}
|\E_{(X, y) \sim S}[(y - \beta' \cdot X)^2]-\sigma_{\beta'}^2| &\\
&\le |\E_{(X, y) \sim S}[(y - \beta \cdot X)^2]-\sigma^2|+2 |(\beta - \beta')\cdot \E_{(X, y) \sim S}[ (y-\beta\cdot X)X]|\\
&+|(\beta - \beta')^TE_{(X, y) \sim S}[ XX^T](\beta - \beta')-\|\beta-\beta'\|_2^2|\\
&\le O(\eps\sigma^2+\|\beta-\beta'\|_2\eps\sigma+\|\beta-\beta'\|_2^2\eps)=O(\eps(\sigma^2+\|\beta-\beta'\|^2)) \;.
\end{align*}
\end{proof}

\begin{proposition} \label{lem:Hanson-Wright-thing}
For $N=\Omega(\frac{d}{\epsilon^2} \poly\log(d/\eps\tau))$, with probability at least $1-\tau/10$,  
$\|\E_{(X, y) \sim S}[(y - \beta \cdot X) X]\|_2 \leq \eps \sigma$.
\end{proposition}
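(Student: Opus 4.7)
The key observation is that under the model $y = \beta \cdot X + \eta$ with $\eta \sim \normal(0,\sigma^2)$ independent of $X \sim \normal(0,I)$, we have $(y - \beta \cdot X) X = \eta X$, a product of independent Gaussians with mean zero. So the statement is really the standard fact that the empirical mean of $\eta X$ concentrates to the true mean (which is $0$), and the proof is essentially a rerun of Lemma~\ref{lem:betaS-error} with $\beta$ formally set to $0$ and $\sigma_y$ replaced by $\sigma$.

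Concretely, I plan to fix a unit vector $v \in \R^d$ and use the chi-squared decomposition of Claim~\ref{claim:dec-chisq}. Taking $\beta = 0$ and $\sigma_y = \sigma$ in that claim gives $\eta (v\cdot X) = \frac{\sigma}{2}(Z_1^2 - Z_2^2)$, for independent standard Gaussians $Z_1, Z_2$. Hence
\[
v \cdot \E_{(X,y)\sim S}[\eta X] \;=\; \frac{1}{N}\sum_{i=1}^N \eta_i (v\cdot X_i) \;=\; \frac{\sigma}{2N}\, W^{\top} A W\,,
\]
where $W \in \R^{2N}$ is a standard Gaussian vector and $A \in \R^{2N\times 2N}$ is the diagonal matrix with $N$ entries equal to $+1$ and $N$ entries equal to $-1$. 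In particular $\E[W^{\top}AW] = 0$, $\|A\|_F^2 = 2N$, and $\|A\|_2 = 1$, so the Hanson--Wright inequality (Lemma~\ref{lem:hanson-wright}) yields, for every $t>0$,
\[
\Pr\!\left[\,|v \cdot \E_S[\eta X]| > t\,\right] \;\leq\; 2\exp\!\left(-c_0 \min\!\left(\frac{2Nt^2}{\sigma^2},\, \frac{2Nt}{\sigma}\right)\right).
\]

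Applying this with $t = \eps\sigma/\sqrt{d}$, the minimum is achieved by the quadratic term (for small $\eps/\sqrt d$), and so for $N = \Omega((d/\eps^2)\polylog(d/\eps\tau))$ the right-hand side is at most $\tau/(10d)$. A union bound over the $d$ coordinate directions $v = e_1, \ldots, e_d$ then gives, with probability at least $1 - \tau/10$, that every coordinate of $\E_S[\eta X]$ has absolute value at most $\eps\sigma/\sqrt{d}$, and hence $\|\E_S[\eta X]\|_2 \leq \eps\sigma$ as claimed.

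There is essentially no obstacle here: the argument is a direct specialization of Claim~\ref{claim:mean-conc} (which itself was already proved via Hanson--Wright) to the case $\beta' = 0$. The only minor care needed is to keep track of the scaling so that the $\ell_2$ bound, rather than just a one-dimensional bound, comes out with the correct dependence on $\sigma$ and $\eps$; this is handled by the $\sqrt d$ split between the per-coordinate threshold and the number of union-bound directions.
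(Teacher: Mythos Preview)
Your proposal is correct and follows exactly the approach the paper indicates: the paper's own proof simply reads ``The proof is very similar to that of Lemma~\ref{lem:betaS-error},'' and what you have written is precisely that specialization, applying Claim~\ref{claim:dec-chisq} and the Hanson--Wright argument of Claim~\ref{claim:mean-conc} with $\beta$ replaced by $0$ and $\sigma_y$ by $\sigma$, then union-bounding over the coordinate directions.
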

\begin{proof}
The proof is very similar to that of Lemma \ref{lem:betaS-error}.
\end{proof}

Note that $\Pr_S[|y - \beta' \cdot X| > 5\sqrt{\ln(1/\eps)} \sigma_{\beta'}] \leq \eps$. As long as $T'\ge 5\sqrt{\ln(1/\eps)}$ and $\eps<1/11$ we will have:
\begin{corollary} \label{cor:new-algo-ytail}
$\Pr_{(X, y) \sim R_{\beta',T'}}\left[|y - \beta' \cdot X| > T\right] \leq 12 \exp(-T^2/16\sigma_{\beta'}^2)+\frac{5\eps^2\sigma_{\beta'}^2}{T^2\log^3\left(|S|/\tau\right)}.$\end{corollary}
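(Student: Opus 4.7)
\begin{proofsketch}
The plan is a short direct reduction to Lemma~\ref{lem:y'-thing}. The set $R_{\beta',T'}$ is defined by thresholding samples from $S$ at $|y-\beta' \cdot X| \leq T' \sigma_{\beta'}$, and the preceding sentence guarantees that this threshold discards at most an $\eps$-fraction of $S$, i.e.\ $|R_{\beta',T'}|/|S| \geq 1-\eps$.

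First I would observe that for any event $A$ depending on $(X,y)$, restricting the empirical distribution from $S$ to $R_{\beta',T'}$ can only inflate the probability by a factor of $|S|/|R_{\beta',T'}| \leq 1/(1-\eps)$:
\[
\Pr_{(X,y)\sim R_{\beta',T'}}[A]
\;=\;\frac{|\{(X,y)\in R_{\beta',T'}: A\}|}{|R_{\beta',T'}|}
\;\leq\;\frac{|\{(X,y)\in S: A\}|}{(1-\eps)|S|}
\;=\;\frac{1}{1-\eps}\Pr_{(X,y)\sim S}[A].
\]
Applying this with $A=\{|y-\beta'\cdot X|>T\}$ and plugging in Lemma~\ref{lem:y'-thing} yields
\[
\Pr_{R_{\beta',T'}}[|y-\beta'\cdot X|>T]\;\leq\;\frac{1}{1-\eps}\left(10\exp(-T^2/16\sigma_{\beta'}^2)+\frac{4\eps^2\sigma_{\beta'}^2}{T^2\log^3(|S|/\tau)}\right).
\]

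Finally, since $\eps<1/11$ we have $1/(1-\eps)<11/10$, so the two terms are bounded by $11\exp(-T^2/16\sigma_{\beta'}^2)\leq 12\exp(-T^2/16\sigma_{\beta'}^2)$ and $\tfrac{44}{10}\cdot\tfrac{\eps^2\sigma_{\beta'}^2}{T^2\log^3(|S|/\tau)}\leq\tfrac{5\eps^2\sigma_{\beta'}^2}{T^2\log^3(|S|/\tau)}$, giving the claimed bound. There is no real obstacle here; the only thing to verify is the mass bound $|R_{\beta',T'}|\geq (1-\eps)|S|$, which is exactly what the sentence preceding the corollary asserts via Lemma~\ref{lem:y'-thing} applied at $T=5\sqrt{\ln(1/\eps)}\sigma_{\beta'}$ (noting that the corollary hypothesis $T'\geq 5\sqrt{\ln(1/\eps)}$ only makes $R_{\beta',T'}$ larger, hence preserves the lower bound on $|R_{\beta',T'}|/|S|$).
\end{proofsketch}
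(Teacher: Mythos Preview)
Your proof is correct and is exactly the argument the paper intends: the sentence preceding the corollary asserts $|R_{\beta',T'}|\geq (1-\eps)|S|$ (via Lemma~\ref{lem:y'-thing} at the threshold $5\sqrt{\ln(1/\eps)}\sigma_{\beta'}$), and the corollary follows by inflating the bound of Lemma~\ref{lem:y'-thing} by $1/(1-\eps)<11/10$ when $\eps<1/11$. You have simply made explicit what the paper leaves implicit.
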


We can combine this corollary with Claim~\ref{claim:good-set-tail} in a similar way to Lemma~\ref{lem:good-set-two}  to obtain that:
\begin{lemma} \label{lem:combined}
If Claim \ref{claim:good-set-tail}  and the previous Lemma \ref{lem:y'-thing} hold, we have that, for any $\beta'$, unit vector $v$ and any $T>0$,
$\Pr_{(X, y) \sim R_{\beta',T'}}\left[|(y - \beta' \cdot X) (v \cdot X)| > T\right] \leq 24 \exp(-T/16\sigma_{\beta'}^2)+\frac{2\eps \sigma_{\beta'}^2}{{T^2\log^2\left(|S|/\tau\right)}}$, where $\sigma_{\beta'}^2= \|\beta'-\beta\|_2^2+\sigma^2$ and $R_{\beta',T'}$ is $S$ with the elements with $|y - \beta' \cdot X| \leq T' \sigma_{\beta'}$, where $5\sqrt{\ln(1/\eps)}\leq T' \leq 50\sqrt{\ln(1/\eps})$.
\end{lemma}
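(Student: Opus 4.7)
The plan is to mirror the structure of the proof of Lemma~\ref{lem:good-set-two}: decompose the product $(y-\beta' \cdot X)(v \cdot X)$ via Fact~\ref{fact:product-bound} into a dyadic sum of joint events, and on each event take the minimum of the two available marginal tail bounds. Normalize by setting $A \defeq (y - \beta' \cdot X)/\sigma_{\beta'}$ and $B \defeq v \cdot X$, so that $|(y - \beta'\cdot X)(v\cdot X)| > T$ is equivalent to $|AB| > T/\sigma_{\beta'}$. On $R_{\beta',T'}$ we have the deterministic cap $|A| \leq T' = O(\sqrt{\ln(1/\eps)})$, which caps the dyadic index from above; on the other end the terms are trivial once the threshold on $|B|$ exceeds the crude bound $|v \cdot X| \leq \|X\|_2 = O(\sqrt{d \log(|S|/\tau)})$ from condition~1(i).

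The two marginal tails are supplied by the hypotheses. First, Corollary~\ref{cor:new-algo-ytail} (which follows from Lemma~\ref{lem:y'-thing}) gives $\Pr_{R_{\beta',T'}}[|A| > u] \leq 12\exp(-u^2/16) + 5\eps^2/(u^2 \log^3(|S|/\tau))$ directly on $R_{\beta',T'}$. Second, for $|B|$, Claim~\ref{claim:good-set-tail} gives a tail on $S$, which I transfer to $R_{\beta',T'}$ at the cost of a factor of at most $|S|/|R_{\beta',T'}| \leq 1/(1-\eps) \leq 2$ (since by Lemma~\ref{lem:y'-thing} at most an $\eps$-fraction of $S$ violates $|y-\beta'\cdot X| \leq T'\sigma_{\beta'}$ when $T' \geq 5\sqrt{\ln(1/\eps)}$). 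Thus $\Pr_{R_{\beta',T'}}[|B|>u] \leq 10\exp(-u^2/4) + 2\eps^2/(u^2 \log^3(|S|/\tau))$.

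With these ingredients, I apply Fact~\ref{fact:product-bound} to $(A,B)$ at threshold $T/\sigma_{\beta'}$ with $a=0$ and $b = \lceil \log_2 T' \rceil$. The boundary term $\Pr[|A| \geq 2^b]$ is $0$ by the cap on $R_{\beta',T'}$, and $\Pr[|B| \geq T/\sigma_{\beta'}]$ contributes a term absorbed into the final bound. For the $i$-th summand, I bound $\Pr[|A| \geq 2^i \wedge |B| \geq T/(2^{i+1}\sigma_{\beta'})]$ by $\min$ of the two marginals. Following exactly the bookkeeping of Lemma~\ref{lem:good-set-two}, the dyadic balance point $2^{2i} \approx T/(4\sigma_{\beta'})$ gives each summand bounded by $O(\exp(-T/(16\sigma_{\beta'})) + \eps^2\sigma_{\beta'}/(T \log^3(|S|/\tau)))$ when both marginals are evaluated at that balance. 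Summing over the at most $\lceil \log_2 T' \rceil + 1 = O(\log\log(1/\eps))$ dyadic scales, the $\log^3$ in the denominator absorbs this logarithmic factor to leave $\log^2(|S|/\tau)$, and the constants multiply through to yield the stated bound $24\exp(-T/(16\sigma_{\beta'})) + 2\eps\sigma_{\beta'}^2/(T^2 \log^2(|S|/\tau))$.

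The main subtlety, as in Lemma~\ref{lem:good-set-two}, is splitting the analysis into the regime $T \gtrsim \sigma_{\beta'} \log(|S|/\tau)$ versus smaller $T$: in the former range the exponential term dominates the minimum on both sides of the balance point and must be handled separately from the $\eps^2/u^2$ polynomial term, while in the latter range the polynomial terms drive the bound and the exponential pieces are absorbed. This case split, combined with the trivial regime $T \leq O(1)$ where the right-hand side exceeds $1$ outright, and the verification that the constants $24$ and $2$ are large enough to absorb the summation over dyadic scales, is where essentially all of the care in the proof lies; once those cases are set up the bookkeeping is identical to Lemma~\ref{lem:good-set-two}, with the single change that the scale parameter $\sigma_y$ is replaced by $\sigma_{\beta'}$ everywhere.
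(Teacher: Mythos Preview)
Your approach is essentially the paper's: apply Fact~\ref{fact:product-bound} dyadically, bound each summand by the minimum of the tail from Corollary~\ref{cor:new-algo-ytail} for $|y-\beta'\cdot X|$ and the tail from Claim~\ref{claim:good-set-tail} for $|v\cdot X|$, then sum. Your explicit transfer of Claim~\ref{claim:good-set-tail} from $S$ to $R_{\beta',T'}$ via the factor $|S|/|R_{\beta',T'}| \le 1/(1-\eps)$ is in fact more careful than the paper, which silently applies the claim as if the probability were over $S$.

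One correction on the case split, which you flag as the main subtlety: the threshold is not $T \sim \sigma_{\beta'}\log(|S|/\tau)$ as in Lemma~\ref{lem:good-set-two}, but rather $T = 2T'^2\sigma_{\beta'} = \Theta(\sigma_{\beta'}\ln(1/\eps))$. The reason is that here the deterministic cap on $|A|$ is $T' = O(\sqrt{\ln(1/\eps)})$, imposed by membership in $R_{\beta',T'}$, whereas in Lemma~\ref{lem:good-set-two} the cap on $|y|/\sigma_y$ was $\sqrt{4\log(|S|/\tau)}$. The split decides whether the dyadic sum runs to $\log_2 T'$ or to $\log_2(T/\sigma_{\beta'})$: when $T \ge 2T'^2\sigma_{\beta'}$, every index satisfies $2^{2i} \le T'^2 \le T/(2\sigma_{\beta'})$, so the $|v\cdot X|$-marginal is always the smaller one and the sum telescopes geometrically; when $T \le 2T'^2\sigma_{\beta'}$, the paper further splits on whether $2^{2i} \gtrless 2T/\sigma_{\beta'}$ and uses that the number of terms, $1+\log_2(T/\sigma_{\beta'})$, is at most both $2\exp(T/16\sigma_{\beta'})$ and $\log(|S|/\tau)$. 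Once you make that adjustment the bookkeeping goes through exactly as you describe.
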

\begin{proof}

By  Claim~\ref{claim:good-set-tail}, $\Pr_{X\sim S}[|v \cdot X| > T] < 1/|S|$ when $T \geq \eps^2|S|=O(d \polylog(d/\eps\tau))$. 
Thus, the maximum value of $T$ for which $\Pr_{(X, y) \sim R_{\beta',T'}}\left[|(y - \beta' \cdot X) (v \cdot X)| > T\right]$ is non-zero is $T' \sigma_{\beta'} \cdot O(d \polylog(d/\eps\tau))$. When $T < 16\sigma_{\beta'}$, the lemma is trivial. So we need to show it for $16\sigma_{\beta'} < T < O(d \polylog(d/\eps\tau))T' \sigma_{\beta'}$.

We apply Fact \ref{fact:product-bound} to obtain that
\begin{align*}
& \Pr_{(X, y) \sim R_{\beta',T'}}\left[|(y - \beta' \cdot X) (v \cdot X)| > T\right] \\
 \leq &
\sum_{i=1}^{\log_2 \min\{T',T/\sigma_{\beta'}\}} \Pr[|y - \beta' \cdot X| \geq 2^i \sigma_{\beta'} \wedge |v \cdot X| \geq T/2^i \sigma_{\beta'}] +\Pr[|v \cdot X| \geq T/\sigma_{\beta'}]\\
\leq & \sum_{i=1}^{\log_2 \min\{T',T/\sigma_{\beta'}\}} \min \{12 \exp(-2^{2i-4})+\frac{5\eps^2}{2^{2i}\log^3\left(|S|/\tau\right)},
 5 \exp(-T^2 2^{-2i-2}/\sigma_{\beta'}^2)+\frac{\eps^2 2^{2i} \sigma_{\beta'}^2}{T^2\log^3\left(|S|/\tau\right)}\},
 \end{align*}
 where the last inequality holds due to Corollary~\ref{cor:new-algo-ytail} and Claim~\ref{claim:good-set-tail}.
 When $T \geq 2 T'^2 \sigma_{\beta'}$, each term has $2^{2i} \leq T/2\sigma_{\beta'}$. In this case, we have $5 \exp(-T^2 2^{-2i-2}/\sigma_{\beta'}^2) \leq 5\exp(-T /4\sigma_{\beta'}) \leq 12 \exp(-2^{2i-4})$ for the exponential term and $\frac{2^{2i} \sigma_{\beta'}^2}{T^2} \leq \sigma_{\beta'}/2T \leq \frac{5}{2^{2i}}$. Hence, the second term is always smaller. We have 
 $5 \exp(-T^2 2^{-2i-2}/\sigma_{\beta'}^2) \leq 5 \exp(-T 2^{\log_2 T'-i} /4\sigma_{\beta'}) \leq 5 \cdot 2^{i-\log_2 T'} \exp(-T/4\sigma_{\beta'})$ and so the sum of these terms is at most $10\exp(-T/4\sigma_{\beta'})$. 
 Each of the $\frac{\eps^2 2^{2i} \sigma_{\beta'}^2}{T^2\log^3\left(|S|/\tau\right)}$ terms is bounded by $\frac{\eps^2  \sigma_{\beta'}}{2T\log^3\left(|S|/\tau\right)}$ and the sum is over $\log_2 T' = O(\ln(1/\eps)) \leq \log\left(|S|/\tau\right)$ terms and so in this case we have
 $$ \Pr_{(X, y) \sim R_{\beta',T'}}\left[|(y - \beta' \cdot X) (v \cdot X)| > T\right] \leq
 10 \exp(-T /4\sigma_{\beta'})+\frac{\eps^2  \sigma_{\beta'}}{2T\log^2\left(|S|/\tau\right)} \;.$$
 We now consider the case when $T \leq 2 T'^2 \sigma_{\beta'}$. When $2^{2i} \geq 2T/\sigma_{\beta'}$, we have
 $$12 \exp(-2^{2i-4})+\frac{5\eps^2}{2^{2i}\log^3\left(|S|/\tau\right)} \leq 12 \exp(-T/8\sigma_{\beta'}) + 
 \frac{5\eps^2}{4T \log^3\left(|S|/\tau\right)} \;.$$
 When $2^{2i} < 2T/\sigma_{\beta'}$, we have
 $$5 \exp(-T^2 2^{-2i-2}/\sigma_{\beta'}^2)+\frac{\eps^2 2^{2i} \sigma_{\beta'}^2}{T^2\log^3\left(|S|/\tau\right)}
 \leq 5\exp(-T/8\sigma_{\beta'}) +\frac{2\eps^2  \sigma_{\beta'}}{T\log^3\left(|S|/\tau\right)} \;.$$
 Since there are $1+\log_2 T/\sigma_{\beta'}$ terms and $1+\log_2 x \leq 2\exp(x/16)$ for $x \geq 16$ and $1+\log_2 T/\sigma_{\beta'} \leq 1 + 2 \log_2 T' \leq \log\left(|S|/\tau\right)$, we have that 
 $$ \Pr_{(X, y) \sim R_{\beta',T'}}\left[|(y - \beta' \cdot X) (v \cdot X)| > T\right]
 \leq 24 \exp(-T/16\sigma_{\beta'}) + \frac{2\eps^2  \sigma_{\beta'}}{T\log^2\left(|S|/\tau\right)} \;.$$
This completes the proof.
\end{proof}

Next we show that, with high probability, the expectations in 3 (iii) and (iv) are close under $D$ and $S$ when we condition them similarly:

\begin{lemma} \label{lem:fourth-moments-with-d-samples} 
For any  $2\sqrt{\ln(1/\eps)} \leq T' \leq 100\sqrt{\ln(1/\eps)}$,
let $D_{\beta',T'}$ be $D$ conditioned on $|y-\beta' \cdot X| \leq T' \sigma_{\beta'}$. Then, except with probability $\tau/10$, for all $\beta' \in \R^n$ and all $2\sqrt{\ln(1/\eps)} \leq T' \leq 100\sqrt{\ln(1/\eps)}$ and all unit vectors $v$, we have that
$$|\E_{R_{\beta',T'}}[((y -\beta' \cdot X) (v \cdot X))^2]
-\E_{D_{\beta',T'}}[((y -\beta' \cdot X) (v \cdot X))^2]| \leq \eps \sigma_{\beta'}^2$$
and
$$|\E_{R_{\beta',T'}}[((y -\beta' \cdot X) (v \cdot X))]
-\E_{D_{\beta',T'}}[((y -\beta' \cdot X) (v \cdot X))]| \leq \eps \sigma_{\beta'}.$$
\end{lemma}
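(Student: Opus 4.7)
The plan is to prove uniform convergence of the empirical expectations over all parameters $(\beta', T', v)$ via a net-plus-concentration argument. First I would normalize by $\sigma_{\beta'}$: setting $Z_{\beta'} = (y-\beta'\cdot X)/\sigma_{\beta'}$, which has marginal $\normal(0,1)$ under $D$, the claim reduces to showing
\[
\left|\E_S\left[(v\cdot X)^k Z_{\beta'}^k \mathbf{1}[|Z_{\beta'}|\leq T']\right] - \E_D\left[(v\cdot X)^k Z_{\beta'}^k \mathbf{1}[|Z_{\beta'}|\leq T']\right]\right| \leq \eps
\]
uniformly for $k=1,2$, after restoring the $\sigma_{\beta'}^k$ scaling. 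The critical structural observation is that each such integrand depends on $(X,y)$ only through the two linear functionals $v\cdot X$ and $y-\beta'\cdot X$; their joint distribution under $D$ is bivariate Gaussian whose covariance is a smooth function of $\beta'$ and $v$, so the population side admits a closed form that varies continuously in the parameters.

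For each fixed $(\beta', T', v)$, the function $g := (v\cdot X) Z_{\beta'} \mathbf{1}[|Z_{\beta'}|\leq T']$ satisfies $|g| \leq T' |v\cdot X|$, and hence is sub-Gaussian with scale $O(T') = O(\sqrt{\log(1/\eps)})$; likewise $g^2$ is sub-exponential with scale $O(T'^2) = O(\log(1/\eps))$. A standard Bernstein-type bound then gives $|\E_S[g] - \E_D[g]| \leq \eps$ (and similarly for $g^2$) except with probability $\exp(-\Omega(N\eps^2/\log(1/\eps)))$. To obtain uniformity I would cover the parameter space by an $\eta$-net with $\eta = \poly(\eps/\log(d/\eps\tau))$: an $\eta$-net for $v$ on the unit sphere and another for the direction of $\beta-\beta'$ (each of size $(1/\eta)^{O(d)}$), together with one-dimensional nets for $T'$ and for $\|\beta-\beta'\|/\sigma$ on a suitably bounded range. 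The normalization by $\sigma_{\beta'}$ is what makes this last range effectively bounded: when $\|\beta-\beta'\|/\sigma$ is very large, $\sigma_{\beta'}$ grows proportionally and the integrand becomes scale-invariant, depending only on the direction $(\beta-\beta')/\|\beta-\beta'\|$. The resulting net has cardinality $\exp(O(d\log(d/\eps\tau)))$, so a union bound with $N = \Omega(d\polylog(d/\eps\tau)/\eps^2)$ samples handles every net point with failure probability at most $\tau/10$.

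To transfer from the net to arbitrary parameters, I would use a perturbation argument. Away from the truncation boundary, $g$ is Lipschitz in $v$ and $\beta'$ with constants controlled by $\|X\|_2 = O(\sqrt{d\log(N/\tau)})$ from Condition~1(i) of Definition~\ref{def:good-set-2}, and similarly Lipschitz in $T'$. The only delicate contribution comes from the indicator $\mathbf{1}[|Z_{\beta'}|\leq T']$, since a small perturbation in $\beta'$ or $T'$ may include or exclude samples whose $|Z_{\beta'}|$ is close to $T'$. Since the density of $Z_{\beta'} \sim \normal(0,1)$ near $\pm T'$ is bounded by $\exp(-T'^2/2) \leq \eps^2$ for $T' \geq 2\sqrt{\ln(1/\eps)}$, the expected number of samples in a slab of width $O(\eta)$ around the boundary is $O(\eta N \eps^2)$, and each contributes a term bounded in magnitude by $T' \cdot O(\sqrt{d\log(N/\tau)})$; this combined contribution is $O(\eps)$ for sufficiently small $\eta$. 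The main obstacle I anticipate is precisely this indicator-induced discontinuity, which must be controlled uniformly over the unbounded $\beta'$ parameter; the resolution rests on the $\sigma_{\beta'}$-normalization making the effective parameter space compact, and on the strong Gaussian tail of $Z_{\beta'}$ at $\pm T'$ afforded by the lower bound $T' \geq 2\sqrt{\ln(1/\eps)}$.
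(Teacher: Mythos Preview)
Your proposal is correct and follows essentially the same approach as the paper: fixed-parameter Bernstein concentration, a net over the parameter space, and a Lipschitz/boundary perturbation argument. The one packaging difference worth noting is that the paper handles the unbounded $\beta'$ more cleanly by introducing the auxiliary $(d{+}1)$-dimensional standard Gaussian $X' = (X,\ (y-\beta\cdot X)/\sigma)$, so that $(y-\beta'\cdot X)/\sigma_{\beta'} = w\cdot X'$ for a \emph{unit vector} $w\in\R^{d+1}$; this turns your separate net over the direction of $\beta-\beta'$ and the ratio $\|\beta-\beta'\|/\sigma$ into a single net over the unit sphere in $\R^{d+1}$, making the ``effective compactness'' you describe automatic rather than requiring the scale-invariance argument.
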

\begin{proof}
We let $X'=(X,(y-\beta \cdot X)/\sigma)$ and note that $X' \sim \normal(0,I)$ in $d+1$ dimensions. We note that $(y-\beta' \cdot X)/\sigma_{\beta'}=w \cdot X'$ for some unit vector $w$. We can consider $F_{w,T'}$ to be the distribution of $X'$ conditioned on $|w \cdot X'| \leq T'$ and $R_{\beta',T'}$ as consisting of at least $(1-\eps)N$ independent samples from $F_{w,T'}$.
It suffices to show that, except with probability $\tau/10$, for all unit vectors $v,w \in \R^{d+1}$ and all $|y-\beta' \cdot X| \leq T' \sigma_{\beta'}$ that
\begin{equation} \label{eq:thingie}
|\E_{R_{\beta',T'}}[((w \cdot X') (v \cdot X'))^2]
-\E_{D_{\beta',T'}}[((w \cdot X') (v \cdot X'))^2]| \leq \eps
\end{equation}
and
\begin{equation} \label{eq:thingie-2}
|\E_{R_{\beta',T'}}[((w \cdot X') (v \cdot X'))^2]
-\E_{D_{\beta',T'}}[((w \cdot X') (v \cdot X'))^2]| \leq \eps.
\end{equation}

Now consider a fixed $v,w,T'$. For any $i$,
$$\E_{D_{\beta',T'}}[(w \cdot X') (v \cdot X'))^{2i}] \leq T'^{2i} \E_{D_{\beta',T'}}[(v \cdot X')^{2i}]
\leq  T'^{2i}(1+2\eps) \E_{X' \sim \normal(0,1)}[(v \cdot X')^{2i}] \leq (2T')^{2i} i!.$$
Thus, the central moments of $((w \cdot X') (v \cdot X'))^2$ satisfy
\begin{align*}
& \E_{D_{\beta',T'}}[|((w \cdot X') (v \cdot X'))^2-\E_{D_{\beta',T'}}[((w \cdot X') (v \cdot X'))^2]|^i] \\
& \leq 2^i \left(\E_{D_{\beta',T'}}[((w \cdot X') (v \cdot X'))^2i] + \E_{D_{\beta',T'}}[((w \cdot X') (v \cdot X'))^2]^i \right)\\
& \leq (4T')^{2i} i!.
\end{align*}
These bounds are enough to use Bernstein's inequality to show that (\ref{eq:thingie}) holds except with probability at most
$\exp(-|R_{\beta',T'}|^2\eps^2/2(|R_{\beta',T'}|\eps (4T')^2 + |R_{\beta',T'}|/(4T')^2)) \leq \exp(-N \eps^2/9T'^2)=\exp(-\Omega(N\eps^2/\ln(1/\eps)))$.

Similar moment bounds hold for $(w \cdot X') (v \cdot X')$ and we can again use Bernstein's inequality to show that the probability that (\ref{eq:thingie-2}) holds is at least $1-\exp(-\Omega(N\eps^2/\ln(1/\eps)))$. Note that we can get both  (\ref{eq:thingie}) and (\ref{eq:thingie-2}) to hold with $\eps/3$ instead of $\eps$ with probability at least $1-\exp(-\Omega(N\eps^2/\ln(1/\eps)))$.

Note that $\|X'\|_2 \leq r= O(\sqrt{d \log(|S|/\tau)})$ under $D_{\beta',T'}$ except with probability $\tau/20N$. With probability at least $1-\tau/20$, we have that $\|X'\|_2 \leq r= O(\sqrt{d \log(N/\tau)})$ for all $X' \in \R_{\beta',T'}$. 

Let $\alpha$ be a sufficiently small multiple of $\eps/r^4\sqrt{\ln(1/\eps)}$.
Let $C_w$,$C_v$ be $\alpha$-covers of the unit sphere in $\R^{d+1}$ and $C_{T'}$ be the set of multiples of  $\alpha$ between $2\sqrt{\ln(1/\eps)}$ and $100 \sqrt{\ln(1/\eps)}$. $|C_w||C_v||C_{T'}| \leq O(1/\alpha)^{2d+3}$. By a union bound, we have that for all $w \in C_w$, $v \in C_v$ and $T' \in C_{T'}$ that we have  (\ref{eq:thingie}) and (\ref{eq:thingie-2}) with $\eps/3$ in place of $\eps$ except with probability 
$O(1/\alpha)^{2d+3} \cdot \exp(-\Omega(N\eps^2/\ln(1/\eps))) = \exp(-\Omega(N\eps^2/\ln(1/\eps) - O((2d+3)\log(dN/\eps\tau))) \leq \tau/20$.

We have that for any unit vectors $v$, $w$ , there exist $v' \in C_v$ and $w' \in C_w$ such that $\|v-v'\|_2 \leq \alpha$ and $\|w-w'\|_2 \leq \alpha$.
For any $X' \in \R_{\beta',T'}$, we have that $v' \cdot X' - \alpha r \leq v \cdot X' \leq v' \cdot X' + \alpha r$ and the equivalent for $w$. We also have that $|(v' \cdot X')^2 - (v \cdot X')^2| \leq 2 |(v \cdot X')| \alpha r + \alpha r^2 \leq 3 \alpha r^2$.

We have that
\begin{align*}
\E_{R_{\beta',T'}}[ (v \cdot X)^2 (w \cdot X')^2 1_{|v \cdot X'| \leq T'}] 
& \leq \E_{R_{\beta',T'}}[ (v \cdot X)^2 (w \cdot X')^2 1_{|v' \cdot X'| \leq T' + \alpha r}] \\
& \leq  \E_{R_{\beta',T'}}[ ((v' \cdot X')^2 (w' \cdot X')^2  + O(\alpha r^4)) 1_{|v' \cdot X'| \leq T' + \alpha r}] \\
& \leq \E_{D_{\beta',T'}}[ ((v' \cdot X')^2 (w' \cdot X')^2 1_{|v' \cdot X'| \leq T' + \alpha r +\alpha}] + \eps/3 +O(\alpha r^4)) \\
& \leq \E_{D_{\beta',T'}}[ ((v \cdot X')^2 (w \cdot X')^2  + O(\alpha r^4)) 1_{|v \cdot X'| \leq T' + \alpha (2r+1)}] 
+ \eps/3 +O(\alpha r^4)) \\
& \leq \E_{D_{\beta',T'}}[ ((v \cdot X')^2 (w \cdot X')^2  1_{|v \cdot X'| \leq T' + \alpha (2r+1)}] 
+ \eps/3 +O(\alpha r^4))
\end{align*}
with a similar bound from below of the form $\E_{D_{\beta',T'}}[ ((v \cdot X')^2 (w \cdot X')^2  1_{|v \cdot X'| \leq T' - \alpha (2r+1)}] - \eps/3 -O(\alpha r^4))$.
Next we show that
\begin{align*}
\E_{D_{\beta',T'}}[ (v \cdot X')^2 (w \cdot X')^2  1_{T' \leq |v \cdot X'| \leq T' + \alpha (2r+1)}] 
& \leq O(T'^4) \cdot \Pr_{D_{\beta',T'}}[T' \leq |v \cdot X'| \leq T' + \alpha (2r+1)] \\
& \leq O(\alpha r^4\sqrt{\ln(1/\eps)}) \;.
\end{align*}
Thus, we have that (\ref{eq:thingie}) holds with the bound $\eps/3 + O(\alpha r^4\sqrt{\ln(1/\eps)}) \leq \eps$ for all unit vectors $w,v$ and all $2\sqrt{\ln(1/\eps)} \leq T' \leq 100\sqrt{\ln(1/\eps)}$ as required.
A similar proof shows that (\ref{eq:thingie-2}) also holds. 
\end{proof}

Next we show that these expectations are close under $D$ and $D_{\beta',T'}$:
\begin{lemma} \label{lem:cond-4th-moments}
For any  $2\sqrt{\ln(1/\eps)} \leq T' \leq 100\sqrt{\ln(1/\eps)}$,
let $D_{\beta',T'}$ be the true distribution over $(X,y)$ conditioned on $|y-\beta' \cdot X| \leq T' \sigma_{\beta'}$. Then for all $\beta' \in \R^n$ and all $2\sqrt{\ln(1/\eps)} \leq T' \leq 100\sqrt{\ln(1/\eps)}$ and all unit vectors $v$, we have that
$$|\E_{D}[((y -\beta' \cdot X) (v \cdot X))^2]
-\E_{D_{\beta',T'}}[((y -\beta' \cdot X) (v \cdot X))^2]| \leq O(\eps \log(1/\eps)^2 \sigma_{\beta'}^2)$$
and
$$|\E_{D}[((y -\beta' \cdot X) (v \cdot X))]
-\E_{D_{\beta',T'}}[((y -\beta' \cdot X) (v \cdot X))]| \leq O(\eps \log(1/\eps) \sigma_{\beta'}).$$
\end{lemma}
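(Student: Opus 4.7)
\begin{proofsketch}
The plan is to exploit the fact that under the clean distribution $D$, the residual $W := y - \beta'\cdot X$ is exactly Gaussian: $W/\sigma_{\beta'} \sim \normal(0,1)$, and moreover the pair $(W/\sigma_{\beta'},\, v\cdot X)$ is jointly Gaussian with unit marginal variances and some correlation $\rho = (\beta-\beta')\cdot v/\sigma_{\beta'}$, where $|\rho|\le 1$. The conditioning event defining $D_{\beta',T'}$ is simply $\{|W|\le T'\sigma_{\beta'}\}$, and its complement has probability
\[
p \;:=\; \Pr_D[|W|>T'\sigma_{\beta'}] \;=\; 2(1-\Phi(T')) \;\le\; 2e^{-T'^2/2} \;=\; O(\eps^2),
\]
since $T'\ge 2\sqrt{\ln(1/\eps)}$. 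First I will write, for any bounded function $f$ of $(X,y)$,
\[
\E_{D_{\beta',T'}}[f] - \E_D[f] \;=\; \frac{p}{1-p}\,\E_D[f] \;-\; \frac{1}{1-p}\,\E_D\!\big[f\cdot \mathbf{1}_{|W|>T'\sigma_{\beta'}}\big],
\]
so it suffices to bound $|\E_D[f]|$ (to absorb the $O(p)$ prefactor) and the tail integral $|\E_D[f\cdot \mathbf{1}_{|W|>T'\sigma_{\beta'}}]|$.

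Next I would apply this with $f = (y-\beta'\cdot X)(v\cdot X)$ and $f = ((y-\beta'\cdot X)(v\cdot X))^2$. For the unconditional expectations, a direct joint-Gaussian computation gives $|\E_D[f]|=O(\sigma_{\beta'})$ in the first-moment case and $\E_D[f]=O(\sigma_{\beta'}^2)$ in the squared case (this is essentially the content of Condition~\ref{cond:good-for-lin-reg-filter}(iii)-(iv) at the population level). Combined with $p=O(\eps^2)$, the rescaling contribution is already $O(\eps^2 \sigma_{\beta'})$ and $O(\eps^2\sigma_{\beta'}^2)$ respectively, which is well within the target bounds. For the tail integrals, I decompose $v\cdot X = \rho\,(W/\sigma_{\beta'}) + \sqrt{1-\rho^2}\,Z$ with $Z\sim\normal(0,1)$ independent of $W$, and expand. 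The cross terms involving odd powers of $Z$ vanish by independence, leaving one-dimensional Gaussian tail moments of the form $\E[A^k\,\mathbf{1}_{|A|>T'}]$ for $A\sim\normal(0,1)$ and $k\in\{1,2,3,4\}$. Standard estimates give $\E[A^k\,\mathbf{1}_{|A|>T'}] = O(T'^{\,k-1} e^{-T'^2/2}) = O(\eps^2\,\log^{(k-1)/2}(1/\eps))$, and multiplying by the appropriate powers of $\sigma_{\beta'}$ yields tail contributions of $O(\eps^2\log^{1/2}(1/\eps)\sigma_{\beta'})$ and $O(\eps^2\log^{3/2}(1/\eps)\sigma_{\beta'}^2)$, comfortably below the stated bounds $O(\eps\log(1/\eps)\sigma_{\beta'})$ and $O(\eps\log^2(1/\eps)\sigma_{\beta'}^2)$.

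The only mild subtlety is making sure the joint-Gaussian decomposition is used with the correct correlation and that the independent component $Z$ really is orthogonal to $W$ (as opposed to $v\cdot X$ itself), so that the tail indicator $\mathbf{1}_{|W|>T'\sigma_{\beta'}}$ factors cleanly from the $Z$-integrations. Once that is set up, the whole argument is a short one-page calculation with a large amount of slack — in fact the proof gives the stronger bounds $O(\eps^2\polylog(1/\eps))$, which is why the stated $O(\eps\log(1/\eps))$ and $O(\eps\log^2(1/\eps))$ targets are easily met. The main obstacle, if any, is purely bookkeeping: keeping track of the factors of $\sigma_{\beta'}$ versus $\sigma$ and verifying that none of the moment bounds pick up an extra dependence on $\|\beta-\beta'\|_2$ beyond what is already absorbed into $\sigma_{\beta'}$.
\end{proofsketch}
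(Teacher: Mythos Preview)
Your proof is correct and in fact yields a tighter bound than the paper's ($O(\eps^2\polylog(1/\eps))$ versus the stated $O(\eps\polylog(1/\eps))$), but the route is genuinely different. The paper does \emph{not} use the explicit joint-Gaussian decomposition $v\cdot X = \rho\,W/\sigma_{\beta'} + \sqrt{1-\rho^2}\,Z$. Instead, it conditions on the complementary event $|W|>T'\sigma_{\beta'}$ (calling this distribution $D'$, with mass $\alpha\le\eps$), separately bounds the fourth moments $\alpha\,\E_{D'}[(y-\beta'\cdot X)^4]$ and $\alpha\,\E_{D'}[(v\cdot X)^4]$ via tail-integral arguments of the form $\alpha\Pr_{D'}[\cdot>T]\le\min\{\alpha,\Pr_D[\cdot>T]\}$, and then applies Cauchy--Schwarz to recombine them. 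Your approach is more direct and extracts a sharper constant in $\eps$ because you exploit independence of $Z$ from the conditioning event, whereas the paper's Cauchy--Schwarz loses a square root and only uses $\alpha\le\eps$ rather than $\alpha\le\eps^2$. On the other hand, the paper's argument is slightly more robust in that it would go through verbatim for any sub-Gaussian $(W,v\cdot X)$, not just jointly Gaussian ones; your decomposition is specific to the Gaussian case (which of course is exactly the setting here).
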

\begin{proof}
We show that the contribution of points with  $|y-\beta' \cdot X| > T' \sigma_{\beta'}$ to $\E_{D}[((y -\beta' \cdot X) (v \cdot X))^2]$ is small and use Cauchy-Schwarz.
Let $D'$ be $D$ conditioned on the negated condition that $|y-\beta' \cdot X| > T' \sigma_{\beta'}$ which happens with probability at most $\eps$. Let $\alpha = \Pr_D[|y-\beta' \cdot X| > T' \sigma_{\beta'}]$ and $t=\sqrt{2\ln(1/\alpha)} \sigma_{\beta'}$
and note that
\begin{align*}
 \alpha \E_{D'}[(y-\beta' \cdot X)^4] 
& = \alpha \int_{T=0}^\infty 4 T^3 \Pr_{D'}[|y-\beta' \cdot X| > T] dT \\
& \leq \int_{T=0}^\infty 4 T^3 \min\{\alpha, \Pr_{D}[|y-\beta' \cdot X| > T]\} \\
& \leq  \int_{T=0}^\infty 4 T^3 \min \{ 2 \exp(-T^2/2\sigma_{\beta'}^2),\alpha \} dT \\
& \leq \int_{T=0}^{t} 4 T^3 \alpha dT +  \int_{T=t}^{\infty} 4 T^3 \exp(-T^2/2\sigma_{\beta'}^2) dT \\
& \leq t^4 \alpha + \int_{T=t}^{\infty}  O(T \exp(-T^2/4\sigma_{\beta'}^2)) dT  \\
& \leq O(t^4 \alpha) = O(\alpha \log^2(1/\alpha) \sigma_{\beta'}^4) .
\end{align*}
By a similar proof,  $\alpha \E_{D'}[(v \cdot X)^4] \leq O(\alpha \log^2(1/\alpha))$. 
By Cauchy-Schwarz, $\alpha \E_{D}[((y -\beta' \cdot X) (v \cdot X))^2] \leq O(\alpha \log^2(1/\alpha) \sigma_{\beta'}^2)$ and applying it again  $\alpha |\E_{D}[((y -\beta' \cdot X) (v \cdot X))]| \leq O(\alpha \log(1/\alpha) \sigma_{\beta'})$.
Noting that $|\E_D[(y -\beta' \cdot X) (v \cdot X)]| \leq \sigma_{\beta'}$ and $\E_D[(y -\beta' \cdot X)^2 (v \cdot X)^2] \leq 3 \sigma_{\beta'}^2$, we have that
\begin{align*}
&(1- \alpha)|\E_{D}[((y -\beta' \cdot X) (v \cdot X))^2]
-\E_{D_{\beta',T'}}[((y -\beta' \cdot X) (v \cdot X))^2]|
\\
& \leq |\E_{D}[((y -\beta' \cdot X) (v \cdot X))^2]
-(1-\alpha) \E_{D_{\beta',T'}}[((y -\beta' \cdot X) (v \cdot X))^2]| + \alpha \E_{D}[((y -\beta' \cdot X) (v \cdot X))^2] \\
& \leq \alpha \E_{D'}[((y -\beta' \cdot X) (v \cdot X))^2] + 3 \alpha \sigma_{\beta}^2\\
& \leq O(\alpha \log^2(1/\alpha) \sigma_{\beta}^2) \\
& \leq O(\eps \log^2(1/\eps) \sigma_{\beta}^2)
\end{align*}
and
\begin{align*}
&(1- \alpha)|\E_{D}[((y -\beta' \cdot X) (v \cdot X))]
-\E_{D_{\beta',T'}}[((y -\beta' \cdot X) (v \cdot X))]|\\
& \leq |\E_{D}[((y -\beta' \cdot X) (v \cdot X))]
-(1-\alpha) \E_{D_{\beta',T'}}[((y -\beta' \cdot X) (v \cdot X))]| + \alpha |\E_{D}[((y -\beta' \cdot X) (v \cdot X))]|\\
& \leq \alpha |\E_{D'}[((y -\beta' \cdot X) (v \cdot X))]| +  \alpha \sigma_{\beta}\\
& \leq O(\alpha \log(1/\alpha) \sigma_{\beta}) \\
& \leq O(\eps \log(1/\eps) \sigma_{\beta}) .
\end{align*}
\end{proof}

\begin{corollary} \label{cor:hard-bit}
 Assuming Lemma \ref{lem:fourth-moments-with-d-samples} holds, 
 conditions 3 (iii) and (iv) of Definition \ref{def:good-set-2} hold.
 \end{corollary}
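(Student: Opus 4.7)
\medskip

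\noindent\textbf{Proof plan for Corollary~\ref{cor:hard-bit}.}
The plan is to reduce both conditions 3(iii) and 3(iv) to scalar per-direction estimates of the form $\E[(y-\beta'\cdot X)(v\cdot X)]$ and $\E[((y-\beta'\cdot X)(v\cdot X))^2]$ for arbitrary unit $v$, and then chain together three ingredients: (a) the concentration of the empirical expectations on $R_{\beta',T'}$ around the truncated distribution $D_{\beta',T'}$, given by Lemma~\ref{lem:fourth-moments-with-d-samples}; (b) the closeness between the truncated distribution $D_{\beta',T'}$ and the full distribution $D$, given by Lemma~\ref{lem:cond-4th-moments}; and (c) a closed-form Gaussian computation of the moments under $D$. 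Steps (a) and (b) combine by the triangle inequality to yield, for every unit $v$, $|\E_{R_{\beta',T'}}[(y-\beta'\cdot X)(v\cdot X)]-\E_D[(y-\beta'\cdot X)(v\cdot X)]|=O(\eps\log(1/\eps)\sigma_{\beta'})$ and $|\E_{R_{\beta',T'}}[((y-\beta'\cdot X)(v\cdot X))^2]-\E_D[((y-\beta'\cdot X)(v\cdot X))^2]|=O(\eps\log^2(1/\eps)\sigma_{\beta'}^2)$.

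For ingredient (c), writing $u=\beta-\beta'$ and using $X\sim\normal(0,I)$ with $y=\beta\cdot X+\eta$ where $\eta\sim\normal(0,\sigma^2)$ is independent of $X$, I compute directly that $\E_D[(y-\beta'\cdot X)X]=\E_D[(u\cdot X+\eta)X]=u$ and, by Isserlis' theorem, $\E_D[(y-\beta'\cdot X)^2 XX^T]=\E_D[(u\cdot X)^2 XX^T]+\sigma^2 I=\|u\|_2^2 I+2uu^T+\sigma^2 I=\sigma_{\beta'}^2 I+2uu^T$. These give $\E_D[(y-\beta'\cdot X)(v\cdot X)]=v\cdot u$ and $\E_D[((y-\beta'\cdot X)(v\cdot X))^2]=\sigma_{\beta'}^2+2(v\cdot u)^2$.

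Condition 3(iii) now follows: the bound on $\|\E_{R_{\beta',T'}}[(y-\beta'\cdot X)X]-u\|_2$ equals $\sup_{\|v\|_2=1}|\E_{R_{\beta',T'}}[(y-\beta'\cdot X)(v\cdot X)]-v\cdot u|$, and combining the chain above with $\E_D[(y-\beta'\cdot X)(v\cdot X)]=v\cdot u$ yields the desired $O(\eps\log(1/\eps)\sigma_{\beta'})$ bound. For 3(iv), I expand $v^T M_{S,\beta'} v=\E_{R_{\beta',T'}}[((y-\beta'\cdot X)(v\cdot X))^2]-2(v\cdot u)\E_{R_{\beta',T'}}[(y-\beta'\cdot X)(v\cdot X)]+(v\cdot u)^2$ and similarly for the target $\sigma_{\beta'}^2+(v\cdot u)^2=v^T(\sigma_{\beta'}^2 I+uu^T)v$; subtracting and substituting the $D$-expectations gives a quadratic-term error bounded by $O(\eps\log^2(1/\eps)\sigma_{\beta'}^2)$ and a linear-term error of size $2|v\cdot u|\cdot O(\eps\log(1/\eps)\sigma_{\beta'})$, which is also $O(\eps\log(1/\eps)\sigma_{\beta'}^2)$ since $|v\cdot u|\le\|u\|_2\le\sigma_{\beta'}$.

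The main thing to be careful about is the cross term in 3(iv): naively one might worry that $|v\cdot u|$ could be as large as $\|u\|_2$, which in principle is unbounded, but because the conclusion is stated in terms of $\sigma_{\beta'}$ and $\sigma_{\beta'}^2\ge\|u\|_2^2$, the cross term absorbs into the claimed $O(\log^2(1/\eps)\eps\sigma_{\beta'}^2)$ bound. Everything else is a mechanical triangle inequality on top of the two preceding lemmas and the closed-form $D$-moments.
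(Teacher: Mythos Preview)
Your proposal is correct and follows essentially the same route as the paper: combine Lemma~\ref{lem:fourth-moments-with-d-samples} (empirical $R_{\beta',T'}$ versus truncated $D_{\beta',T'}$) with Lemma~\ref{lem:cond-4th-moments} (truncated $D_{\beta',T'}$ versus full $D$) via the triangle inequality, and compare against the closed-form $D$-moments. The only cosmetic difference is that the paper computes the $D$-covariance by invoking Claim~\ref{claim:dec-chisq}, whereas you use Isserlis' theorem directly; your treatment is also more explicit than the paper's about expanding $v^T M_{S,\beta'}v$ and bounding the cross term $2|v\cdot u|\cdot O(\eps\log(1/\eps)\sigma_{\beta'})$ using $|v\cdot u|\le\sigma_{\beta'}$, which the paper leaves implicit under ``triangle inequality.''
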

\begin{proof}
We need show show that these condtios hold for all $\beta',T'$.
First note that $\E_D[[(y-\beta' \cdot X)X] = (\beta-\beta')$. By Lemmas \ref{lem:cond-4th-moments},
\ref{lem:fourth-moments-with-d-samples} and the triangle inequality, we have 
$\left\|{\E_{R_{\beta',T'}}[(y-\beta' \cdot X)X] - (\beta-\beta')} \right\|_2\leq O(\eps \log(1/\eps) \sigma_{\beta'})$, which is (iii).

It follows using Claim \ref{claim:dec-chisq} that $\E_{D}[((y -\beta' \cdot X) X - (\beta-\beta')) ((y-\beta' \cdot X)X - (\beta-\beta'))^T]=\sigma_{\beta'}^2 I+ (\beta-\beta') (\beta-\beta')^T$. Again using Lemmas \ref{lem:cond-4th-moments},
\ref{lem:fourth-moments-with-d-samples} and the triangle inequality, we have  $\left\|M_{S,\beta'} - (\sigma_{\beta'}^2 I-{(\beta-\beta') (\beta-\beta')^T}) \right\|_2 \leq O( \log^2(1/\eps) \eps \sigma_{\beta}^2)$, where $M_{S,\beta'}=\E_{R_{\beta',T'}}[((y -\beta' \cdot X) X - (\beta-\beta')) ((y-\beta' \cdot X)X - (\beta-\beta'))^T]$ which is (iv).
\end{proof}

To make the robust variance estimation via interquartile range work, 
we need that
\begin{lemma} \label{lem:vc-assum}
With probability at least $1-\tau/10$, for all $v \in \R^d$ and $T \in \R$, 
$|\Pr_{(X, y) \sim S}[v\cdot X-y > T] - \Pr_{(X, y) \sim D}[v\cdot X-y ]| \leq \eps/10$.
\end{lemma}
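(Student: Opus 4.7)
The plan is to reduce this to a standard VC-uniform-convergence statement. Consider the class of subsets of $\R^d \times \R$ defined by
\[
\mathcal{H} \;=\; \left\{ H_{v,T} : v \in \R^d,\; T \in \R \right\}, \qquad H_{v,T} \;=\; \{(x,y) \in \R^d \times \R : v \cdot x - y > T\}.
\]
Each set $H_{v,T}$ is an open affine half-space in $\R^{d+1}$ (the coefficient of $y$ is fixed to $-1$, which only reduces the parameter count). The class of all affine half-spaces in $\R^{d+1}$ has VC dimension $d+2$, so $\mathcal{H}$ has VC dimension at most $d+2$.

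Given this, the statement is exactly the usual uniform convergence of empirical probabilities over a VC class: for a multiset $S$ of $N$ i.i.d.\ samples from $D$, the classical VC inequality (see, e.g., Vapnik--Chervonenkis) gives
\[
\Pr\!\left[\sup_{H \in \mathcal{H}}\bigl|\Pr_{(X,y) \sim S}[(X,y) \in H] - \Pr_{(X,y) \sim D}[(X,y) \in H]\bigr| > \eps/10\right] \;\leq\; \tau/10,
\]
provided $N = \Omega\bigl((d/\eps^{2})\log(1/\tau)\bigr)$, which is comfortably satisfied by our assumption $N = \Omega\bigl((d/\eps^{2})\polylog(d/\eps\tau)\bigr)$. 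Applying this and unpacking the definition of $\mathcal{H}$ yields the claim.

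I do not anticipate a genuine obstacle: the only mild subtlety is that the statement quantifies over all $v$ and $T$ simultaneously (so a naive union bound over an $\eps$-net of $v$'s would not directly give the claim uniformly in $T$). This is precisely what the VC bound handles, since it controls the empirical deviation over the entire infinite class of half-spaces at once. The proof therefore consists of identifying the VC class, bounding its VC dimension by $d+2$, and invoking the uniform-convergence theorem with the appropriate sample-complexity bound.
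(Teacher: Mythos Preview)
Your proposal is correct and takes essentially the same approach as the paper: both identify the relevant class as affine half-spaces in $\R^{d+1}$ with VC dimension $d+2$ and invoke the VC uniform-convergence inequality. The paper's own proof is a one-line version of exactly this argument.
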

\begin{proof}
Noting that the VC dimension of halfspaces over $(X,y)$ is $d+2$, 
the result follows from the VC inequality.
\end{proof}

\begin{proof}[Proof of Proposition~\ref{prop:good-set-nobeta}]

By a union bound, Claim \ref{claim:good-set-tail} and  Lemma \ref{lem:Gaussian-basics-a}, \ref{lem:y'-thing}, \ref{lem:fourth-moments-with-d-samples} and \ref{lem:vc-assum} all hold with probability at least $1-\tau$. We condition on the event that they do.
We have the following:
\begin{itemize}
\item 1 (i) and (ii) follow from Claim \ref{claim:good-set-tail} and (iii) and (iv) follow from Lemma \ref{lem:Gaussian-basics-a}.
\item 2 (i) and (ii) follow from Lemma  \ref{lem:y'-thing} and (iii) and (iv) from Lemma \ref{lem:Gaussian-basics-b}.
\item 3 (i) follows from Claim \ref{claim:good-set-tail} and the bound on $|y-\beta' \cdot X|$ in $R_{\beta',T'}$.
(ii) follows from Lemma \ref{lem:combined}, which required Claim \ref{claim:good-set-tail} and Lemma \ref{lem:y'-thing}.
(iii) and (iv) are given by Corollary \ref{cor:hard-bit} which requires Lemma \ref{lem:fourth-moments-with-d-samples}.

\item 4 is given by Lemma \ref{lem:vc-assum}.
\end{itemize}
This completes the proof of the proposition.
\end{proof}

\section{Proof of Proposition~\ref{prop:filter-lr-no-beta}}\label{sec:alg2-proof}
\subsection{Analysis if we Remove Samples at Any Step}
For this, we need to show that the conditions on the set $S$ 
given by Definition \ref{def:good-set-2} are sufficient to guarantee 
that the filter steps return a set $S''$ with $\Delta(S,S'') < \Delta(S,S')$.

Note that the conditions given in Definition \ref{def:good-set-2} 1 and 2, 
satisfy the requirements for the sub-gaussian filter, such as those given 
in~\cite{DiakonikolasKKL16-icml}, but with $\eps^2/T^2$ in place of $\eps/T^2$. 
This ensures that if either of the steps \ref{step:gaussian-filter-1} 
and \ref{step:gaussian-filter-2} return a subset $S''$, then $\Delta(S,S'') < \Delta(S,S')$.

For step \ref{step:lin-reg-filter}, we perform a filter similar to the one in the previous section. 
Note that if we replace $y$ by $y - \beta' \cdot X$, the filter is identical to the one in the previous section.
The samples in $S' \setminus U$ satisfy the conditions in Definition \ref{def:good-set-2} \ref{cond:good-for-lin-reg-filter} (i)-(iv). 
Conditions (i)-(iv) exactly correspond to (i) to (iv) of Definition \ref{def:good-set}.
Note that these conditions are sufficient for the proof of section \ref{ssec:alg} to apply and 
show that if this step removes samples from  $S' \setminus U$,  then  $\Delta(S \setminus U,S'' \setminus U) < \Delta(S \setminus U,S' \setminus U)$. Adding $U$ back, we obtain that $\Delta(S,S'') < \Delta(S,S')$.

\subsection{Analysis of Correctness if we Return $\beta'$}
We first show that after passing steps \ref{step:gaussian-filter-1} and \ref{step:gaussian-filter-2}, removing the samples in $U$ does not affect the expectation on $(y-\beta' \cdot X) X$ too much.

It is immediate from the definition of $U$ that when we pass step \ref{step:gaussian-filter-1}, we have $|U| \leq \eps |S'|$.

\begin{lemma} $|U|\E_U[(y-\beta' \cdot X)^2] \leq O(|S'|\eps \log(1/\eps) \sigma'^2)$ and $|U|\|\E_U[w \cdot X ]\|_2 \leq O(|S'|\eps \log(1/\eps))$ for $w$ as in Step 5.
\end{lemma}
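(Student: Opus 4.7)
The plan is to prove both bounds by decomposing sums over $U$ as $\sum_U = \sum_{S'} - \sum_{S'\setminus U}$, upper-bounding the first piece using the fact that we have just passed the tests in Steps~6--8 and 12--14, and lower-bounding the second piece using the regularity of the good set $S$ from Definition~\ref{def:good-set-2}. Throughout, I will write $L = S \setminus S'$ and use the notation $\sigma_{\beta'}^2 = \sigma^2 + \|\beta-\beta'\|_2^2$.

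For the first bound, I would start from
\[
\sum_U (y-\beta'\cdot X)^2 = \sum_{S'}(y-\beta'\cdot X)^2 - \sum_{S'\setminus U}(y-\beta'\cdot X)^2.
\]
Passing the test in Step~6 directly yields $\sum_{S'}(y-\beta'\cdot X)^2 \leq |S'|(1+O(\eps\log(1/\eps)))\sigma'^2$. For a lower bound on $\sum_{S'\setminus U}$, I would restrict to $S\cap(S'\setminus U)$ and use
\[
\sum_{S\cap (S'\setminus U)}(y-\beta'\cdot X)^2 = \sum_S(y-\beta'\cdot X)^2 - \sum_{S\cap U}(y-\beta'\cdot X)^2 - \sum_L(y-\beta'\cdot X)^2.
\]
Condition~2(iv) of Definition~\ref{def:good-set-2} gives $\sum_S(y-\beta'\cdot X)^2 \geq |S|\sigma_{\beta'}^2(1-O(\eps))$. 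The sub-Gaussian tail in condition~2(ii) evaluated at the threshold $6\sqrt{\log(1/\eps)}\sigma'$ yields $|S\cap U|/|S| = O(\eps^9)$, so integrating the tail gives $\sum_{S\cap U}(y-\beta'\cdot X)^2 = O(|S|\eps\sigma_{\beta'}^2)$. A standard top-$\eps$-quantile argument applied to the same tail bound gives $\sum_L(y-\beta'\cdot X)^2 \leq O(|S|\eps\log(1/\eps)\sigma_{\beta'}^2)$. Finally, condition~4 together with the validity of the interquartile estimator gives $\sigma' = \sigma_{\beta'}(1\pm O(\eps))$, and combining everything yields the first bound.

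For the second bound, I would exploit two inputs: from the first bound together with the pointwise lower bound $(y-\beta'\cdot X)^2 > 36\log(1/\eps)\sigma'^2$ on $U$, one obtains $|U| = O(|S'|\eps)$; and from passing Step~12 we have $\sum_U (w\cdot X)^2 \leq \sum_{S'}(w\cdot X)^2 \leq |S'|(1+O(\eps\log(1/\eps)))$. The plan is to split $U = (S\cap U) \cup (E\cap U)$: on the good piece $S\cap U$, condition~1(iii) and the fact that $|S\cap U|/|S| = O(\eps^9)$ together with a top-quantile estimate on $(w\cdot X)^2$ show $|\sum_{S\cap U} w\cdot X|$ is negligible; on the adversarial piece $E\cap U$ of size $O(|S'|\eps)$, I would use the pointwise identity $|w\cdot X| = |(y-\beta'\cdot X)(w\cdot X)|/|y-\beta'\cdot X|$ with $|y-\beta'\cdot X| > 6\sqrt{\log(1/\eps)}\sigma'$, then apply Cauchy--Schwarz to $\sum_U |(y-\beta'\cdot X)(w\cdot X)|$ using the first bound and the second-moment bound on $(w\cdot X)$.

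The main obstacle is the tightness in the logarithmic factor for the second bound. A direct Cauchy--Schwarz applied to $(\sum_U w\cdot X)^2 \leq |U|\sum_U(w\cdot X)^2$ yields only $|\sum_U w\cdot X| \leq O(|S'|\sqrt{\eps})$, and even the refinement via the pointwise threshold gives $O(|S'|\sqrt{\eps})$. To reach $O(|S'|\eps\log(1/\eps))$ I expect to need genuine cancellation: decompose $\sum_U w\cdot X = \sum_{S'} w\cdot X - \sum_{S'\setminus U} w\cdot X$, control $\sum_{S'} w\cdot X$ using the fact that condition~1(iii) and the smallness of $|L|$ and $|E|$ together bound $|\E_{S'}[w\cdot X]|$ at the right scale, and control $\sum_{S'\setminus U} w\cdot X$ via the matching decomposition against $\sum_S w\cdot X$ and a top-quantile estimate of the contribution of $S\cap U$ to the $w$-projection. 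This cancellation, rather than a blunt Cauchy--Schwarz, is what I expect to deliver the logarithmic factor in place of a square root.
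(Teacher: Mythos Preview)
Your argument for the first inequality is correct and is essentially what the paper does: pass the Step~6 test to upper-bound $\E_{S'}[(y-\beta'\cdot X)^2]$, and lower-bound $\E_{S'\setminus U}[(y-\beta'\cdot X)^2]$ by the top-quantile estimate you describe (the paper simply cites an external corollary, identical in spirit to Lemma~\ref{lem:spectral-bound-L}, for this step).

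For the second inequality you have been misled by a typo in the statement. What is actually meant---and what the paper proves, and what is used in the Cauchy--Schwarz step of Corollary~\ref{cor:pre-process}---is the second-moment bound
$$|U|\,\E_U\bigl[(w\cdot X)^2\bigr] \;\leq\; O\!\left(|S'|\,\eps\log(1/\eps)\right),$$
not a bound on $|U|\,|\E_U[w\cdot X]|$. Once you read it this way, the proof is completely parallel to the first inequality: passing Step~12 gives $\E_{S'}[(w\cdot X)^2]\le 1+O(\eps\log(1/\eps))$, and the same top-quantile argument applied to condition~1(ii) of Definition~\ref{def:good-set-2} (together with condition~1(iv)) gives $\E_{S'\setminus U}[(w\cdot X)^2]\ge 1-O(\eps\log(1/\eps))$; subtracting yields the claim. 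Your cancellation program for the first-moment quantity is unnecessary. Indeed, once the second-moment bound is in hand, the first-moment quantity you were chasing follows immediately from Cauchy--Schwarz and $|U|=O(\eps|S'|)$:
$$\Bigl|\textstyle\sum_{U} w\cdot X\Bigr| \;\le\; \sqrt{|U|}\,\sqrt{\textstyle\sum_{U} (w\cdot X)^2} \;\le\; O\!\left(|S'|\,\eps\sqrt{\log(1/\eps)}\right).$$
The obstacle you identified (the apparent $\sqrt{\eps}$ loss) came precisely from using only the crude estimate $\sum_U(w\cdot X)^2\le\sum_{S'}(w\cdot X)^2=O(|S'|)$ instead of the tight second-moment bound on $U$.
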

\begin{proof}
If we pass the filter steps then $\E_{S'}[(y-\beta' \cdot X)^2] \leq (1+O(\eps \log(1/\eps))) \sigma'^2$ and
$\E_{S'}[(w \cdot X)^2] \leq 1+O(\eps \log(1/\eps))$.
However, we can lower bound the variance of $S'$ after removing a set of size $O(\eps|S'|)$ by Corollary A.11 of~\cite{DiakonikolasKKL16-icml} (which uses the same technique as Lemma \ref{lem:spectral-bound-L} in this paper), as $\E_{S' \setminus U}[(y-\beta' \cdot X)^2]] \geq (1-O(\eps \log(1/\eps))) \sigma'^2$ and $\E_{S' \setminus U}[(w \cdot X)^2] \leq 1-O(\eps \log(1/\eps))$. Taking the difference and scaling by $|S'|$ gives the lemma.
\end{proof}

\begin{corollary} \label{cor:pre-process}
$\| \E_{S' \setminus U}[(y-\beta' \cdot X) X]\|_2 \leq O(\eps \log(1/\eps) \sigma_{\beta'})$.
\end{corollary}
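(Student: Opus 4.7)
The plan is to combine the lemma stated just above with the first-order optimality of the OLS estimator $\beta'=\E_{S'}[XX^T]^{-1}\E_{S'}[yX]$. The normal equations give the exact vector identity $\E_{S'}[(y-\beta'\cdot X)X]=0$. Splitting this average into its restrictions to $U$ and $S'\setminus U$ and dotting with the unit vector $w=w'/\|w'\|_2$ yields
$$|S'\setminus U|\,\|w'\|_2 \;=\; -\,|U|\,\E_U\!\left[(y-\beta'\cdot X)(w\cdot X)\right].$$
Since steps~6--9 and 11--14 were passed, $|U|\leq O(\eps|S'|)$ (each sample of $U$ has $(y-\beta'\cdot X)^2\geq 36\log(1/\eps)\sigma'^2$ while $|U|\E_U[(y-\beta'\cdot X)^2]\leq O(|S'|\eps\log(1/\eps)\sigma'^2)$), so $|S'\setminus U|\geq(1-O(\eps))|S'|$, and it is enough to upper bound the right-hand side by $O(|S'|\,\eps\log(1/\eps)\,\sigma')$.

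I then apply Cauchy--Schwarz to the sum over $U$:
$$|U|\,\bigl|\E_U[(y-\beta'\cdot X)(w\cdot X)]\bigr|\;\leq\;\sqrt{|U|\,\E_U[(y-\beta'\cdot X)^2]}\;\cdot\;\sqrt{|U|\,\E_U[(w\cdot X)^2]}.$$
The first factor is $O(\sqrt{|S'|\,\eps\log(1/\eps)}\,\sigma')$ by the first bound of the lemma. For the second factor, the stated conclusion of the lemma (which bounds only the first moment $|\E_U[w\cdot X]|$) is too weak; however, its proof in fact establishes the stronger second-moment bound $|U|\,\E_U[(w\cdot X)^2]\leq O(|S'|\,\eps\log(1/\eps))$ via the same subtraction: passing step~12 gives $\E_{S'}[(w\cdot X)^2]\leq 1+O(\eps\log(1/\eps))$, Corollary~A.11 of~\cite{DiakonikolasKKL16-icml} gives $\E_{S'\setminus U}[(w\cdot X)^2]\geq 1-O(\eps\log(1/\eps))$, and subtracting together with $|U|\leq O(\eps|S'|)$ yields the claim. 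Multiplying the two square roots produces $|U|\,|\E_U[(y-\beta'\cdot X)(w\cdot X)]|\leq O(|S'|\,\eps\log(1/\eps)\,\sigma')$, so $\|w'\|_2\leq O(\eps\log(1/\eps)\,\sigma')$.

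Finally, condition~4 of Definition~\ref{def:good-set-2} gives that the empirical Kolmogorov distance of $y-\beta'\cdot X$ is within $\eps/10$ of the true one for every $\beta'$, so the interquartile-range estimator $\sigma'$ from step~5 is within a constant factor of the true scale $\sigma_{\beta'}$; thus $\|\E_{S'\setminus U}[(y-\beta'\cdot X)X]\|_2=O(\eps\log(1/\eps)\,\sigma_{\beta'})$, as required. The delicate point is precisely the use of the second-moment bound on $w\cdot X$: a naive application of Cauchy--Schwarz with only the stated first-moment conclusion of the lemma would yield the weaker $\sqrt{\eps\log(1/\eps)}$ rate, and it is essential to extract the stronger bound from the lemma's proof to obtain the correct $\eps\log(1/\eps)$ dependence.
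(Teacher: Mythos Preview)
Your proof is correct and follows essentially the same approach as the paper: use the OLS normal equations $\E_{S'}[(y-\beta'\cdot X)X]=0$ to relate the $S'\setminus U$ average to the $U$ average, then bound the latter via Cauchy--Schwarz using the two parts of the preceding lemma. You have in fact been more careful than the paper, correctly noting that the lemma's \emph{statement} (a first-moment bound on $w\cdot X$) is insufficient and that one must use the second-moment bound $|U|\,\E_U[(w\cdot X)^2]\leq O(|S'|\eps\log(1/\eps))$ that its \emph{proof} actually establishes; the paper's own proof of the corollary silently uses this stronger form as well (its Cauchy--Schwarz line contains a typo but is meant to read $\sqrt{|U|\E_U[(w\cdot X)^2]}$).
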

\begin{proof} This follows by Cauchy-Scwartz on the expectation over $U$. For the $w$ in step 5, which is the normalisation of  $\E_{S' \setminus U}[(y-\beta' \cdot X) X]$, we have
 $|U||\E_{U}[(y-\beta' \cdot X) (w \cdot X)]| \leq \sqrt{|U|\E_U[(y-\beta' \cdot X)^2]|U|\E_U[v \cdot X]} \leq O(|S'|\eps \log(1/\eps) \sigma')$.
Since $\E_{S'}[y-\beta' \cdot X]=0$, we have that $\E_{S' \setminus U}[(y-\beta' \cdot X) X]=(|U|/(|S'|-|U|))\E_{U}[(y-\beta' \cdot X) (w \cdot X)]$ and so $\| \E_{S' \setminus U}[(y-\beta' \cdot X) X]\|_2 \leq O((|S'|/(|S'|-|U|))\eps \log(1/\eps) \sigma')$.
 Since $|U| \leq \eps |S'|$ and $\sigma' \leq (1+O(\eps))\sigma_{\beta'}$, we are done.
\end{proof}

\begin{lemma} \label{lem:alg-gives}
 $\|\E_{S' \setminus U}[(y-\beta' \cdot X)X] - (\beta-\beta')\|_2 \leq  O(\eps \log(1/\eps) \sigma_{\beta'}).$
\end{lemma}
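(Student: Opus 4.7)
The plan is to prove this lemma by paralleling the ``small spectral norm'' case of Proposition~\ref{prop:filter-lr-gaussian-known-cov} (Section~\ref{ssec:alg}), but now applied to the random vector $(y - \beta' \cdot X) X$ in place of $yX$, and comparing the empirical set $S' \setminus U$ against the clean-truncated set $R = R_{\beta', T'}$ from Definition~\ref{def:good-set-2}(\ref{cond:good-for-lin-reg-filter}). By condition~\ref{cond:good-for-lin-reg-filter}(iii), we already have $\|\E_R[(y-\beta'\cdot X)X] - (\beta-\beta')\|_2 \le O(\eps \log(1/\eps)\sigma_{\beta'})$, so by the triangle inequality it suffices to show
\[
\left\|\E_{S' \setminus U}[(y-\beta'\cdot X)X] - \E_R[(y-\beta'\cdot X)X]\right\|_2 \le O(\eps\log(1/\eps)\sigma_{\beta'}).
\]

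The first step is to choose $T' \in [5\sqrt{\ln(1/\eps)},7\sqrt{\ln(1/\eps)}]$ so that $T' \sigma_{\beta'}$ aligns with the cutoff $6\sqrt{\ln(1/\eps)}\sigma'$ defining $U$; since the robust interquartile-range estimator guarantees $\sigma' = (1 \pm O(\eps))\sigma_{\beta'}$, such a $T'$ exists. I would then decompose $S' \setminus U = (R \setminus L'') \cup E''$, where $L''$ is the set of clean points in $R$ missing from $S' \setminus U$ and $E''$ is the set of corrupted points remaining in $S' \setminus U$. Both $|L''|/|S|$ and $|E''|/|S|$ are $O(\eps)$: $|E''| \le |E| \le 2\eps|S|$, while $L''$ consists of clean samples either removed by the adversary or whose $|y-\beta'\cdot X|$ falls in the narrow ``sliver'' between the two thresholds, which by condition~\ref{cond:good-for-one-dim-filter}(ii) carries at most an $O(\eps)$ fraction of clean mass.

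Next, writing
\[
\E_{S'\setminus U}[(y-\beta'\cdot X)X] = \tfrac{|R|-|L''|}{|S'\setminus U|}\E_{R\setminus L''}[(y-\beta'\cdot X)X] + \tfrac{|E''|}{|S'\setminus U|}\E_{E''}[(y-\beta'\cdot X)X],
\]
I would control the $L''$ and $E''$ contributions separately. For $L''$, since $L'' \subseteq R$, an analogue of Lemma~\ref{lem:spectral-bound-L} using condition~\ref{cond:good-for-lin-reg-filter}(ii) gives $(|L''|/|S|)\|M_{L''}\|_2 = O(\eps \log^2(1/\eps)\sigma_{\beta'}^2)$, whence the corresponding mean contribution has $\ell_2$ norm $O(\eps\log(1/\eps)\sigma_{\beta'})$ by Cauchy--Schwarz via $\|M_{L''}\|_2 \ge \|\mu_{L''}-\mu_R\|_2^2$. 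For $E''$, the key input is that step~\ref{step:lin-reg-filter} did not fire, so $\lambda^* \le (1 + O(\eps\log^2(1/\eps)))\sigma'^2$. Combined with condition~\ref{cond:good-for-lin-reg-filter}(iv), which asserts that the second-moment matrix on $R$ is close to $\sigma_{\beta'}^2 I + (\beta-\beta')(\beta-\beta')^T$, the argument of Corollaries~\ref{cor:MSPrimeBound}--\ref{cor:MApproxCor-G} yields $(|E''|/|S'\setminus U|)\|M_{E''}\|_2 = O(\eps \log^2(1/\eps)\sigma_{\beta'}^2)$, and therefore the mean contribution of $E''$ is again $O(\eps\log(1/\eps)\sigma_{\beta'})$ in $\ell_2$ norm.

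The main obstacle will be executing this spectral-norm comparison cleanly: one has to reconcile the slight mismatch between the threshold $6\sqrt{\ln(1/\eps)}\sigma'$ used by the algorithm and the idealized $T' \sigma_{\beta'}$ used in the deterministic conditions, and then carefully track how the $\log^2(1/\eps)$ slack in condition~\ref{cond:good-for-lin-reg-filter}(iv) and in the filter threshold propagates so that the final mean error collapses to $O(\eps\log(1/\eps)\sigma_{\beta'})$ rather than $O(\eps\log^2(1/\eps)\sigma_{\beta'})$. Once this parallel with Section~\ref{ssec:alg} is set up, assembling the pieces via the triangle inequality delivers the stated bound.
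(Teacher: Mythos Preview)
Your proposal is correct and takes essentially the same approach as the paper: the paper's proof is a one-line invocation of the small-spectral-norm case of Section~\ref{ssec:alg}, applied to $S'\setminus U$ with $y$ replaced by $y-\beta'\cdot X$, and your write-up is precisely the unpacking of that invocation through the clean set $R_{\beta',T'}$ and the decomposition into $L''$ and $E''$ pieces. The threshold-alignment and $\log^2(1/\eps)\to\log(1/\eps)$ points you flag as obstacles are exactly the details the paper suppresses; both are handled as you describe (the former by $\sigma'=(1\pm O(\eps))\sigma_{\beta'}$ putting $T'$ in range, the latter by the square-root step $(|E''|/|S'|)\sqrt{\|M_{E''}\|_2}\le\sqrt{\eps\cdot O(\eps\log^2(1/\eps)\sigma_{\beta'}^2)}=O(\eps\log(1/\eps)\sigma_{\beta'})$).
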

\begin{proof} The case of small spectral norm in section \ref{ssec:alg} gave that $\|\beta_{S'}-\beta\|_2/\sigma_y \leq O(\eps \log(1/\eps))$. Translating that into the notation of this section and applying it to $S' \setminus U$ with $y'=y - \beta' \cdot X$ in place of $y$ gives that $\|\E_{S' \setminus U}[(y-\beta' \cdot X) X] - (\beta-\beta')\|_2 \leq  O(\eps \log(1/\eps) \sigma_{\beta'})$. \end{proof}

Combining Corollary \ref{cor:pre-process} and Lemma \ref{lem:alg-gives}, we obtain
\begin{corollary}
$\|\beta-\beta'\|_2 \leq O(\eps \log(1/\eps) \sigma)$.
\end{corollary}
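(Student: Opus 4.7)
The plan is to combine the two preceding results by the triangle inequality and then resolve the implicit dependence on $\|\beta - \beta'\|_2$ that is hidden inside $\sigma_{\beta'}$. First I would write
\[
\|\beta - \beta'\|_2 \;\leq\; \bigl\|\E_{S' \setminus U}[(y-\beta' \cdot X)X] - (\beta-\beta')\bigr\|_2 \;+\; \bigl\|\E_{S' \setminus U}[(y-\beta' \cdot X) X]\bigr\|_2,
\]
and then apply Lemma~\ref{lem:alg-gives} to the first term and Corollary~\ref{cor:pre-process} to the second. Both bounds scale with $\sigma_{\beta'}$, so I immediately obtain $\|\beta - \beta'\|_2 \leq C\,\eps \log(1/\eps)\,\sigma_{\beta'}$ for some absolute constant $C$.

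Next I would unpack the definition $\sigma_{\beta'}^2 = \sigma^2 + \|\beta-\beta'\|_2^2$ to convert this into a self-referential inequality in $x := \|\beta-\beta'\|_2$, namely
\[
x \;\leq\; c\sqrt{\sigma^2 + x^2}, \qquad c := C\,\eps \log(1/\eps).
\]
Squaring gives $x^2(1-c^2) \leq c^2 \sigma^2$, and assuming $\eps$ is sufficiently small that $c \leq 1/\sqrt{2}$ (which we may since the target bound is vacuous otherwise), we conclude $x^2 \leq 2c^2 \sigma^2$, i.e., $\|\beta - \beta'\|_2 \leq \sqrt{2}\,C\,\eps\log(1/\eps)\,\sigma = O(\eps \log(1/\eps)\sigma)$, which is the claimed bound.

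There is no real obstacle here — everything has been done by the two prior results. The only subtlety worth flagging is that a naive triangle inequality would leave the right-hand side scaling with $\sigma_{\beta'}$ rather than with $\sigma$; the self-bootstrapping step that turns $\sqrt{\sigma^2 + \|\beta-\beta'\|_2^2}$ back into $\sigma$ is precisely what makes the final error guarantee independent of $\|\beta\|_2$, which was the stated goal of Theorem~\ref{thm:alg-lr-gaussian-2}.
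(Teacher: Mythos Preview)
Your proof is correct and follows essentially the same approach as the paper: combine Corollary~\ref{cor:pre-process} and Lemma~\ref{lem:alg-gives} via the triangle inequality to get $\|\beta-\beta'\|_2 \leq O(\eps\log(1/\eps))\,\sigma_{\beta'}$, then bootstrap using $\sigma_{\beta'}^2 = \sigma^2 + \|\beta-\beta'\|_2^2$. The only cosmetic difference is that the paper uses the subadditivity bound $\sigma_{\beta'} \leq \sigma + \|\beta-\beta'\|_2$ and rearranges linearly, whereas you square; both are equivalent one-line manipulations.
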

\begin{proof}
By the triangle inequality, $\|\beta-\beta'\|_2 \leq O(\eps \log(1/\eps) \sigma_{\beta'})$. However, recall that
$\sigma_{\beta'} = \sqrt{\sigma^2 + \|\beta-\beta'\|_2} \leq \sigma +  \|\beta-\beta'\|_2$. We thus obtain that
$\|\beta-\beta'\|_2 \leq  O(\eps \log(1/\eps) \sigma) /(1-O(\eps \log(1/\eps))) = O(\eps \log(1/\eps) \sigma)$ as long as $\eps$ is sufficiently small.
\end{proof}

\section{Proof of Theorem~\ref{thm:SQ-lb}: Statistical Query Lower Bounds} \label{sec:sq-app}
We restate Theorem~\ref{thm:SQ-lb} below for convenience:

\vspace{.3cm}\noindent \textbf{Theorem~\ref{thm:SQ-lb}} \emph{
No algorithm given statistical query access to $Q'$, defined as above with unknown noise 
and unknown variances $(1/2) I \preceq \Sigma \preceq I$ and $\sigma^2 \leq 1$, 
gives an output $\beta'$ with $\|\beta'-\beta\|_2 \leq o(\sqrt{\eps})$ on all instances
unless it uses more than  $2^{\Omega(d^c)} d^{4c-2}$ calls to the 
$$\mathrm{STAT}\left(O(d)^{2c-1} e^{O(1/\eps)} \right) \textrm{ or } \mathrm{VSTAT}\left( O(d)^{2-4c}/e^{O(1/\eps)} \right)$$ 
oracles for any $c > 0$.}

Recall that we use the construction of \cite{DKS17-sq}, which intuitively says that if we have a distribution which is standard Gaussian in all except one direction then if the low degree moments match the standard Gaussian, then that direction is hard to find with an SQ algorithm. The idea is that, if we consider consider $X$ conditioned on $y$ for non-zero $\beta$, then $X$ has a non-zero mean in the $\beta$ direction. By adding noise, we can make $X$ conditioned on any $y$ agree with the first three moments of $\normal(0,I)$ and like the construction of \cite{DKS17-sq}, be a standard Gaussian in all except one direction. Then we can show that we cannot find the direction of $\beta$ with an SQ algorithm. 

	The further the mean of $X$ conditioned on $y$ is from $0$, the more noise needs to be added to match the first three moments. Lemma~\ref{lem:A-properties}, which is the main lemma of the lowerbound proof, shows that we can match the first three moments by adding $O(\mu^2)$ fraction of noise when the $X|y$ has mean $\mu$ in the $\beta$ idrection. As long as $\|\beta\|_2=O(\sqrt{\eps})$, after taking the integral over $y$, the overall noise added will still be smaller than $\eps$. Lemma~\ref{lem:uvchi2} establishes the upperbound of the statistical correlation between a pair of distributions under our construction, which allows the classical statical query scheme to be applied to yield the lowerbound.  

The rest of the section formally proves Theorem~\ref{thm:SQ-lb}. To start, recall that $X|y$ is distributed as Gaussian, restated as below: 

\vspace{.3cm}\noindent \textbf{Lemma~\ref{lem:ycondx}} \emph{ Let $Q$ be the joint distribution of $(X,y)$ where $X \sim \normal(0,\Sigma)$ and $y|X \sim \beta^T X + \eta$ where $\beta$ is unknown an $\eta \sim \normal(0,\sigma^2)$. Then $y \sim \normal(0, \sigma_y^2)$ where $\sigma_y^2 = \beta^T \Sigma \beta+\sigma^2$ and $X|y \sim \normal(\frac{\Sigma\beta y}{\sigma_y}, \Sigma-\frac{(\Sigma\beta)(\Sigma\beta)^T}{\sigma_y})$. }

For the simplicity of our construction, we let the variance of $X$ to be $1$ in all except $\beta$ direction while the $\beta$ direction will have smaller variacne. This is because if the corruption affects the mean in $\beta$ direction by much, they also increase the variance significantly. However to match the second moment, we need to keep the corrupted variance as $1$. For the ease of computation, we will take $y$ to have variance $1$ and $X|y$ to have covariance $I-(1/3)vv^T$.

\begin{lemma} If we set $\beta = c_1\sqrt{\eps}v$, $X \sim \normal(0,\Sigma)$ where $\Sigma = I-c_2 vv^T$ and $\sigma^2=1-\beta\Sigma\beta$ for constants $c_1,c_2 \geq 0$, then for any $0 < c_1 \leq 1/10$, there exists a $c_2>0$, such that $\sigma_y=1$ and $X|y \sim \normal(c_1(1-c_2)\sqrt{\eps}yv,I-(1/3)vv^T)$ \end{lemma}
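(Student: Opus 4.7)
The plan is to compute each of $\sigma_y$, $\Sigma\beta$, and the conditional covariance directly from Lemma~\ref{lem:ycondx}, and then appeal to the intermediate value theorem to pin down $c_2$. First I would verify the claim $\sigma_y=1$: since $\sigma^2$ is defined to equal $1-\beta^T\Sigma\beta$, Lemma~\ref{lem:ycondx} immediately gives $\sigma_y^2=\beta^T\Sigma\beta+\sigma^2=1$. For the conditional mean, I would compute
\[
\Sigma\beta=(I-c_2 vv^T)(c_1\sqrt{\eps}\,v)=c_1(1-c_2)\sqrt{\eps}\,v,
\]
so the mean $\tfrac{y}{\sigma_y}\Sigma\beta$ from Lemma~\ref{lem:ycondx} is exactly $c_1(1-c_2)\sqrt{\eps}\,y\,v$, as required.

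Next, I would compute the conditional covariance. Since $\sigma_y=1$, the covariance from Lemma~\ref{lem:ycondx} is
\[
\Sigma-(\Sigma\beta)(\Sigma\beta)^T \;=\; I-c_2 vv^T - c_1^2(1-c_2)^2\eps\, vv^T \;=\; I-\bigl(c_2+c_1^2(1-c_2)^2\eps\bigr)vv^T.
\]
To match $I-(1/3)vv^T$ it suffices to choose $c_2\in(0,1)$ satisfying $f(c_2):=c_2+c_1^2(1-c_2)^2\eps=1/3$. Since $f$ is continuous with $f(0)=c_1^2\eps\leq\eps/100<1/3$ (for $\eps$ small enough, which we may assume) and $f(1/3)=1/3+c_1^2(2/3)^2\eps>1/3$, the intermediate value theorem yields a solution $c_2\in(0,1/3)$. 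This $c_2$ is strictly positive, as required.

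Finally, I would note that this choice of $c_2$ is also consistent with the other assumptions of Theorem~\ref{thm:SQ-lb}: because $c_2<1/3$, the covariance $\Sigma=I-c_2 vv^T$ satisfies $\tfrac12 I\preceq\Sigma\preceq I$, and because $\beta^T\Sigma\beta=c_1^2(1-c_2)\eps\leq 1$, the noise variance $\sigma^2=1-\beta^T\Sigma\beta$ is nonnegative and bounded by $1$. There is no serious obstacle here; the only thing to be slightly careful about is tracking the factor $(1-c_2)$ in $\Sigma\beta$ so that it appears simultaneously in the conditional mean and (squared) in the correction to the conditional covariance, which is precisely what makes the quadratic equation for $c_2$ solvable in the desired range.
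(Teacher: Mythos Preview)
Your proof is correct and follows essentially the same approach as the paper: compute $\sigma_y$, the conditional mean, and the conditional covariance directly from Lemma~\ref{lem:ycondx}, arriving at the equation $c_2+c_1^2(1-c_2)^2\eps=1/3$ for $c_2$. The paper simply asserts that such a $c_2$ exists, whereas you additionally justify this via the intermediate value theorem and check consistency with the hypotheses of Theorem~\ref{thm:SQ-lb}; these extra details are welcome but not a different strategy. (A minor remark: your caveat ``for $\eps$ small enough'' is unnecessary, since $c_1\le 1/10$ already gives $f(0)=c_1^2\eps\le \eps/100<1/3$ for any corruption fraction $\eps\le 1$.)
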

\begin{proof}
By the previous lemma, we have $X|y\sim \normal(c_1(1-c_2)\sqrt{\eps}yv, I - (c_2+(c_1(1-c_2))^2\eps)vv^T)$. Given arbitrary $c_1<1/10$, there exists a $c_2$ such that $(c_2+(c_1(1-c_2))^2\eps)=1/3$.
\end{proof}
We take the joint distribution of $(X,y)$ given by the above lemma to be $Q_v$. We now need to define the corrupted distribution $Q'_v$. We want $X|y$ for any $y$, in $Q'_v$ to be a distribution of the form of our SQ lower bound construction, for which we need a one dimensional distribution that agrees with the first three moments of $\normal(0,1)$, that is close to the distribution of $(v\cdot X)|y$ under $Q_v$, which is $\normal(c_1(1-c_2)\sqrt{\eps}y, 2/3)$.

\begin{lemma} \label{lem:A-properties}
For any $\eps > 0$, $\mu \in \R$, there is a distribution $A_{\mu}$ such that $A_{\mu}$ agree 
with the first $3$ moments of $\normal(0,1)$ and $A_{\mu} =(1-\eps_\mu) \normal(\mu,2/3)+\eps_\mu B_\mu$ 
for some distribution $B_{\mu}$ and $\eps_{\mu}$ satisfying:
\begin{itemize} 
\item If $|\mu| \geq \sqrt{\eps}/10000$, then $\eps_{\mu}/(1-\eps_{\mu}) \leq 36 \mu^2$ 
and $\chi^2(A_{\mu},\normal(0,1)) = e^{O(\max(1/\mu^2,\mu^2))}$.
\item If $|\mu| < \sqrt{\eps}/10000$, then $\eps_{\mu}=\eps$ and  $\chi^2(A_{\mu},\normal(0,1))=e^{O(1/\eps)}$.
\end{itemize}
\end{lemma}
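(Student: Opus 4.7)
The plan is to take $A_\mu = (1-\eps_\mu)\mathcal{N}(\mu, 2/3) + \eps_\mu B_\mu$, where $B_\mu$ is itself a mixture of two Gaussians $q\,\mathcal{N}(c_1, \delta^2)+(1-q)\mathcal{N}(c_2, \delta^2)$ with small common variance $\delta^2 < 1$. Since the first three moments of $\mathcal{N}(\mu,2/3)$ are $\mu$, $\mu^2+2/3$ and $\mu^3+2\mu$, the requirement that $A_\mu$ share the first three moments of $\mathcal{N}(0,1)$ forces $\mathbb{E}[B_\mu] = -(1-\eps_\mu)\mu/\eps_\mu$, $\mathbb{E}[B_\mu^2] = [1-(1-\eps_\mu)(\mu^2+2/3)]/\eps_\mu$ and $\mathbb{E}[B_\mu^3] = -(1-\eps_\mu)(\mu^3+2\mu)/\eps_\mu$. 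The parameters $q, c_1, c_2$ are then recovered from these three target moments (after subtracting $\delta^2$ from the second and $3\delta^2\mathbb{E}[B_\mu]$ from the third) via the standard closed form for the three-moment problem of a two-point distribution, which is feasible whenever the implied variance $\mathbb{E}[B_\mu^2]-\mathbb{E}[B_\mu]^2$ is strictly positive.

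The choice of $\eps_\mu$ is by cases. In the regime $|\mu|\geq \sqrt{\eps}/10000$ I set $\eps_\mu = 36\mu^2/(1+36\mu^2)$, so $\eps_\mu/(1-\eps_\mu)=36\mu^2$ as required; a direct calculation gives the implied variance $11/(1296\mu^2) + 35/36 > 0$, and the two-point moment formulas yield $|c_i| = O(\max(1/|\mu|,\,|\mu|))$. In the regime $|\mu| < \sqrt{\eps}/10000$ I set $\eps_\mu = \eps$; the implied variance is $1/(3\eps) + O(1) > 0$ and the centers satisfy $|c_i| = O(1/\sqrt{\eps})$. In either regime, for $\delta$ sufficiently small the $O(\delta^2)$ perturbation of the moment constraints can be absorbed into a continuous adjustment of $q, c_1, c_2$, so $A_\mu$ is a bona fide probability density of the required form.

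It remains to bound $\chi^2(A_\mu, \mathcal{N}(0,1))$. Writing $A_\mu$ as a mixture of three Gaussians with weights $\alpha_1 = 1-\eps_\mu$, $\alpha_2 = \eps_\mu q$, $\alpha_3 = \eps_\mu(1-q)$, expanding $\int A_\mu^2/\phi$, and applying Cauchy--Schwarz to the off-diagonal cross-terms reduces the task to bounding $\chi^2(\mathcal{N}(\mu,2/3),\mathcal{N}(0,1))$ and $\chi^2(\mathcal{N}(c_i,\delta^2),\mathcal{N}(0,1))$ for $i=1,2$. The closed form $\chi^2(\mathcal{N}(m,v),\mathcal{N}(0,1))+1 = \exp(m^2/(2-v))/\sqrt{v(2-v)}$, valid for $v<2$, gives $\chi^2(\mathcal{N}(\mu,2/3),\cdot) = e^{O(\mu^2)}$ (using $v=2/3$) and $\chi^2(\mathcal{N}(c_i,\delta^2),\cdot) = e^{O(c_i^2)}$ for $\delta$ fixed in, say, $(0,1)$. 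Plugging in the bounds on $|c_i|$ derived above yields $\chi^2(A_\mu,\mathcal{N}(0,1)) = e^{O(\max(1/\mu^2,\,\mu^2))}$ in the first regime and $e^{O(1/\eps)}$ in the second, as required. The main obstacle is the feasibility check, i.e., verifying that the implied variance is positive and that the centers $c_i$ grow no faster than claimed; both reduce to routine algebraic manipulations of the three closed-form expressions for $\mathbb{E}[B_\mu^k]$ in each of the two regimes, together with solving a single cubic equation for the mixture weight $q$ from the third-moment constraint.
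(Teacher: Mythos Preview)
Your approach is correct and genuinely different from the paper's. The paper constructs $A_\mu$ as an explicit mixture of three Gaussians, but splits into \emph{four} sub-regimes (three for $|\mu|\geq\sqrt{\eps}/10000$, depending on whether $|\mu|$ is roughly below $0.3$, between $0.3$ and $0.7$, or above $0.7$, plus the small-$|\mu|$ case), writes down explicit closed-form expressions for all means, variances and weights in each regime, and verifies the moment identities and variance positivity symbolically with Mathematica. Your construction instead fixes $\eps_\mu$ first (either $36\mu^2/(1+36\mu^2)$ or $\eps$) and then invokes the general solvability of the three-moment problem for a two-point distribution, smearing each atom by a small Gaussian so that the $\chi^2$ integrals converge; you only need two regimes and no case-by-case symbolic verification.

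What each approach buys: the paper's explicit formulas make the variance bounds $a,b,c\in(0,2)$ and the location bounds on the component means immediately visible (at the cost of Mathematica verification and four cases). Your route is more conceptual and uniform, reducing feasibility to the single observation that the implied variance of $B_\mu$ is bounded below (by $35/36$ in the first regime and $\Theta(1/\eps)$ in the second), whence the Gauss--quadrature quadratic $x^2-(\tau/V)x-V=0$ always has real roots; the growth of those roots as $O(\max(1/|\mu|,|\mu|))$ or $O(1/\sqrt{\eps})$ then falls out of elementary bounds on $m,V,\tau$. One small remark: once you phrase it via the quadratic for the centered atoms $c_i-m$, there is no cubic to solve for $q$; the weight is recovered afterwards from $q=(m-c_2)/(c_1-c_2)$.
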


Subsection \ref{app-sec:SQ-moment-matching} will be devoted to the proof of the above lemma. Here we show that it suffices to prove Theorem \ref{thm:SQ-lb}. Similarly to the construction in \cite{DKS17-sq}, we define $P_{\mu,v}(x)=A_\mu(v.x) \exp(-||x-(v.x) x||_2^2/2)/\sqrt{2 \pi}$. By Lemma 3.4 of that paper, since $A_\mu$ agrees with the first $3$ moments, we have that for unit vectors $v,v'$,
\begin{equation} \label{eq:old-construction-guarantee}
|\chi_{\normal(0,I)}(P_{\mu, v},P_{\mu, v'})| =  O(\cos^4 \theta \chi^2(A_{\mu},\normal(0,1))) \; .
\end{equation}
Now we need to define a $Q'_v(X,y)$ such that $X|y \sim P_{\mu,v}(X)$ that is a contaminated version of $Q_v$:
\begin{lemma} \label{lem:noisy-def}
If we define $Q'_v(X,y)=P_{\mu(y),v}(X) R(y)$, 
where $$R(y)=G(y)/\left((1-\eps_{\mu(y)}) \int_{-\infty}^\infty G(y')\\/(1-\eps_{\mu(y')}) dy' \right)$$ and $\mu(y)=c_1(1-c_2)\sqrt{\eps}y$, then
$Q'_v(X,y)$ is a distribution with $Q' = (1-\eps)Q+\eps N$ for some distribution $N$ and under $Q'_v$, $X|y \sim P_{\mu(y),v}(X)$. \end{lemma}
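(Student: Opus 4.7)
The plan is to verify in turn the three claims: (a) $Q'_v$ is a probability density, (b) under $Q'_v$ one has $X \mid y \sim P_{\mu(y),v}$, and (c) $Q'_v$ admits a decomposition $Q'_v = (1-\eps) Q_v + \eps N$ with $N$ a probability distribution. Parts (a) and (b) are routine marginal/conditional computations. For every fixed $y$, $P_{\mu(y),v}$ is a probability density on $\R^d$: this follows from Lemma~\ref{lem:A-properties}, since $A_{\mu(y)}$ is a valid density on $\R$ and the remaining factor is the standard Gaussian density on the hyperplane orthogonal to $v$. Integrating $X$ out of $Q'_v(X,y) = P_{\mu(y),v}(X) R(y)$ gives marginal $R(y)$, and $\int R(y)\,dy = \frac{1}{Z}\int G(y)/(1-\eps_{\mu(y)})\,dy = 1$ by the definition of $Z := \int G(y')/(1-\eps_{\mu(y')})\,dy'$. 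Hence $Q'_v$ is a density and $Q'_v(X\mid y) = Q'_v(X,y)/R(y) = P_{\mu(y),v}(X)$.

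For (c), I would expand $P_{\mu(y),v}$ using the mixture representation $A_{\mu(y)} = (1-\eps_{\mu(y)})\normal(\mu(y),2/3) + \eps_{\mu(y)} B_{\mu(y)}$ from Lemma~\ref{lem:A-properties}. Multiplying by the orthogonal standard-Gaussian factor yields
\[
P_{\mu(y),v}(X) \;=\; (1-\eps_{\mu(y)})\, Q_v(X\mid y) \;+\; \eps_{\mu(y)}\, H_{y,v}(X),
\]
where $Q_v(X\mid y) = \normal(\mu(y) v,\, I - (1/3) v v^T)$ is the clean conditional (using Lemma~\ref{lem:ycondx} and $\sigma_y=1$) and $H_{y,v}$ is the density obtained by substituting $B_{\mu(y)}$ for the Gaussian factor. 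Multiplying by $R(y)$, the $(1-\eps_{\mu(y)})$ factor cancels exactly by the definition of $R$: $(1-\eps_{\mu(y)})R(y) = G(y)/Z$. Since $G$ is the marginal of $y$ under $Q_v$, we obtain
\[
Q'_v(X,y) \;=\; \tfrac{1}{Z}\, Q_v(X,y) \;+\; M(X,y),\qquad M(X,y) := \eps_{\mu(y)} R(y) H_{y,v}(X),
\]
with $M \ge 0$ and $\int M = 1 - 1/Z$. Defining $N := \frac{1}{\eps}\big[(1/Z - (1-\eps))Q_v + M\big]$ gives $Q'_v = (1-\eps)Q_v + \eps N$, and $N$ is a probability density provided $1 - 1/Z \le \eps$ (integration to $1$ is automatic).

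The main obstacle is thus the inequality $Z \le 1/(1-\eps)$. Using Lemma~\ref{lem:A-properties}, I bound $1/(1-\eps_{\mu(y)})$ by $1/(1-\eps)$ on $\{|y| < T\}$ where $T = 1/(10000 c_1 (1-c_2))$, and by $1 + 36\mu(y)^2 = 1 + 36 c_1^2(1-c_2)^2 \eps y^2$ on the complement. Integrating against $G$ and using $\E_G[y^2] = 1$ gives
\[
Z \;\le\; 1 + \tfrac{\eps}{1-\eps}\Pr_G[|y| < T] + 36 c_1^2(1-c_2)^2 \eps.
\]
Since $T$ is a fixed constant (independent of $\eps$) and $\Pr_G[|y| < T] < 1$ strictly, requiring the constant $c_1$ to be sufficiently small (which is allowed, since the range $c_1 \le 1/10$ can be further restricted without affecting any prior step) yields $36 c_1^2(1-c_2)^2(1-\eps) \le 1 - \Pr_G[|y|<T]$, which is equivalent to $Z \le 1/(1-\eps)$ and completes the argument.
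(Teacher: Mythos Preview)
Your overall structure is correct and equivalent to the paper's: the paper simply verifies the pointwise inequality $Q'_v(X,y)\ge (1-\eps)Q_v(X,y)$ (using $P_{\mu,v}(X)\ge(1-\eps_\mu)\normal(\mu v,I-(1/3)vv^T)(X)$ and $R(y)\ge(1-\eps)G(y)/(1-\eps_{\mu(y)})$), while you write out the explicit mixture decomposition. Either way, everything reduces to the inequality $Z\le 1/(1-\eps)$, and your bound
\[
Z\;\le\;1+\frac{\eps}{1-\eps}\,\Pr_G[|y|<T]\;+\;36\,c_1^2(1-c_2)^2\,\eps,\qquad T=\frac{1}{10000\,c_1(1-c_2)},
\]
is a correct and clean way to control it.

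The problem is the last paragraph. You claim the needed inequality $36c_1^2(1-c_2)^2(1-\eps)\le \Pr_G[|y|\ge T]$ follows by taking $c_1$ ``sufficiently small,'' treating $T$ as fixed. But $T$ depends on $c_1$: as $c_1\to 0$ one has $T\to\infty$, so the right-hand side $\Pr_G[|y|\ge T]\sim e^{-T^2/2}$ decays super-exponentially in $1/c_1$, while the left-hand side only decays like $c_1^2$. Hence the inequality \emph{fails} for small $c_1$, and ``shrink $c_1$'' goes in exactly the wrong direction.

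The fix is simply to use the actual parameter values of the construction. Since $c_2<1/3$ we have $a:=c_1(1-c_2)\in(2c_1/3,c_1]$; with the paper's choice $c_1\le 1/40$ (or even $c_1\le 1/10$) this gives $T=1/(10000a)<0.01$, so $\Pr_G[|y|\ge T]>0.99$, while $36a^2(1-\eps)\le 36(1/10)^2<0.36$. The inequality therefore holds with plenty of slack, and your argument goes through once you replace the ``take $c_1$ small'' sentence by this direct numerical check.
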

\begin{proof}
First, to show that $R(y)$ and so $Q'_v(x,y)$ are well defined, we need to show that $\int_{-\infty}^\infty G(y')/(1-\eps_{\mu(y')}) dy'$ is finite. Indeed we have that
\begin{align*} 
\int_{-\infty}^\infty G(y')/(1-\eps_{\mu(y')}) dy' & = \int_{-\infty}^\infty G(y')(1+\frac{\eps_{\mu(y)}}{1-\eps_{\mu(y')}}) dy' \\
& \leq 1 + 36(c_1(1-c_2))^2 \eps \int_{-\infty}^\infty G(y') y'^2 dy' +\eps/10000 \\
& \leq  1+ (36(c_1(1-c_2))^2 + 1/1000) \eps \\
& \leq 1/(1-\eps) \;,
\end{align*}
where we have applied Lemma~\ref{lem:A-properties} and the fact that $c_2 > 0, c_1 \leq 1/40$. 
We have that $R(y)$ is non-negative and integrates to $1$. Thus $Q'$ is the joint distribution of $X$ and $y$, $y$ has the distribution $R$ with pdf $R(y)$ and $X|y \sim P_{\mu(y),v}$

 Since for any $\mu, X$, $A_{\mu}(X) \geq (1-\eps_{\mu}) N_{\mu, 2/3}(X)$, where we use $N_{\mu, \Sigma}(X)$ to denote the pdf function of distribution $\normal(\mu,\Sigma)$, we have that $P_{\mu,v}(X) \geq (1-\eps_\mu) N_{\mu v,I-(1/3)v v^T}(X)$.
Since $R(y) \geq (1-\eps)G(y)/(1-\eps_\mu(y))$, we have furthermore that 
$Q'_v(X,y) = P_{\mu(y),v}(X) R(y) \geq (1-\eps) N_{\mu(y) v,I-(1/3)vv^T}(X)G(y)=(1-\eps)Q(x,y)$. 
We can thus write $Q' = (1-\eps)Q+\eps N$ for the distribution $N$ with pdf 
$\normal(x,y) = (1/\eps)(Q'(x,y) -(1-\eps) Q(x,y))$.
\end{proof}

\begin{lemma}\label{lem:uvchi2} For $Q_v'$ as in Lemma \ref{lem:noisy-def}, we have
$$
\chi_S(Q'_{v}, Q'_{v'}) = e^{O(1/\eps)} (v^T v')^4
$$
where $S$ is the joint distribution of $x$ and $y$ when they are independent and $x \sim \normal(0,I)$ and $y \sim R$. 
\end{lemma}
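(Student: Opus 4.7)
\begin{proofsketch}
The plan is to use the product structure of $Q'_v$ (factored as $P_{\mu(y),v}(X) R(y)$) and $S$ (factored as $\normal(0,I)(X) R(y)$) to reduce the computation of $\chi_S(Q'_v, Q'_{v'})$ to an expectation over $y$ of a one-dimensional (in $v$-direction) chi-squared divergence, which we already control via Equation~(\ref{eq:old-construction-guarantee}).

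More concretely, I would first expand the definition of the chi-squared divergence:
\[
\chi_S(Q'_v, Q'_{v'}) = \int\!\!\int \frac{(Q'_v(X,y)-S(X,y))(Q'_{v'}(X,y)-S(X,y))}{S(X,y)}\,dX\,dy \;.
\]
Since both $Q'_v$ and $S$ share the same $y$-marginal $R(y)$, one factor of $R(y)$ cancels, yielding
\[
\chi_S(Q'_v, Q'_{v'}) = \int R(y)\,\chi_{\normal(0,I)}\!\bigl(P_{\mu(y),v},P_{\mu(y),v'}\bigr)\,dy \;.
\]
Applying Equation~(\ref{eq:old-construction-guarantee}) pointwise in $y$, and using $\cos\theta = v\cdot v'$, bounds the inner chi-squared by $O\bigl((v\cdot v')^4 \chi^2(A_{\mu(y)},\normal(0,1))\bigr)$. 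Pulling the $(v\cdot v')^4$ factor out leaves
\[
\chi_S(Q'_v, Q'_{v'}) = O\!\bigl((v\cdot v')^4\bigr) \cdot \int R(y) \,\chi^2(A_{\mu(y)},\normal(0,1))\,dy \;.
\]

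The next step is to show that the remaining integral is $e^{O(1/\eps)}$. Here I would split the range of $y$ according to the two cases of Lemma~\ref{lem:A-properties}. For $|\mu(y)| < \sqrt{\eps}/10000$ (i.e., $y$ in a bounded interval) the integrand is at most $R(y)\,e^{O(1/\eps)}$, and its integral is trivially $e^{O(1/\eps)}$ since $R$ is a probability density. For $|\mu(y)|\ge\sqrt{\eps}/10000$, Lemma~\ref{lem:A-properties} gives $\chi^2(A_{\mu(y)},\normal(0,1))=e^{O(\max(1/\mu(y)^2,\mu(y)^2))}=e^{O(1/\eps)+O(\eps y^2)}$, since in this range $1/\mu(y)^2 = O(1/\eps)$ and $\mu(y)^2 = O(\eps y^2)$. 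Combining with the bound $R(y) \le G(y)/((1-\eps)(1-\eps_{\mu(y)}))$ from Lemma~\ref{lem:noisy-def} and the estimate $1/(1-\eps_{\mu(y)}) \le 1 + O(\eps y^2)$ from Lemma~\ref{lem:A-properties}, the tail contribution is at most
\[
e^{O(1/\eps)} \int G(y)\bigl(1+O(\eps y^2)\bigr) e^{O(\eps y^2)}\,dy \;.
\]

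The main obstacle is controlling this final Gaussian integral: I need the $e^{-y^2/2}$ decay in $G(y)$ to dominate the $e^{O(\eps y^2)}$ growth coming from $\chi^2(A_{\mu(y)},\normal(0,1))$ on the tail. This is true provided $\eps$ is smaller than an absolute constant, since then $\int G(y) e^{c\eps y^2}\,dy = (1-2c\eps)^{-1/2} = O(1)$, and the polynomial factor $1+O(\eps y^2)$ only introduces an extra $O(1)$. Putting the two regions together yields the advertised bound $e^{O(1/\eps)}(v\cdot v')^4$.
\end{proofsketch}
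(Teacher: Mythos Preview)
Your proposal is correct and follows essentially the same route as the paper: factor $Q'_v$ and $S$ through the shared $y$-marginal $R$, reduce to an $R$-expectation of $\chi_{\normal(0,I)}(P_{\mu(y),v},P_{\mu(y),v'})$, apply Equation~(\ref{eq:old-construction-guarantee}), and then split the resulting $y$-integral into the two regimes of Lemma~\ref{lem:A-properties}. Your handling of the tail regime is in fact cleaner than the paper's own write-up: you explicitly keep the $e^{O(\eps y^2)}$ growth from $\chi^2(A_{\mu(y)},\normal(0,1))$ and absorb it into the Gaussian $G(y)$ via $\int G(y)e^{c\eps y^2}\,dy=O(1)$ for small $\eps$, whereas the paper's displayed chain passes through an $O(\int y^2 R(y)\,dy)$ bound that is somewhat loosely justified.
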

\begin{proof}
The chi-square divergence is expressed as: 
\begin{align*}
& 1 + \chi_S(Q'_{v}, Q'_{v'})\\
= & \int Q'_v(x,y) Q'_{v'}(x,y)/S(x,y) dx dy\\
= & \int P_{\mu,v}(x) P_{\mu,v'}(x) R^2(y)/{\bf G}(x) R(y) dx dy  \\
= & \int (1+ \chi_{\normal(0,I)}(P_{\mu,v}(X), P_{\mu,v'}(X))) R(y) dy\\
= & 1+ \int_{-\infty}^{\infty} O((v^T v')^4 \chi^2(A_{\mu(y),\eps},\normal(0,1))) R(y) dy \\
= &1 + O((v^T v')^4 \int_{-\infty}^{\infty} \chi^2(A_{\mu(y),\eps},\normal(0,1))) R(y) dy),
\end{align*}
where we have applied Lemma 3.4 of~\cite{DKS17-sq}. Recall that $\mu(y)=c_1(1-c_2)\sqrt{\eps}y$ 
and by Lemma \ref{lem:A-properties}, If $|\mu| \geq \sqrt{\eps}/10000$, 
then  $\chi^2(A_{\mu}, \normal(0,1)) = e^{O(\max(1/\mu^2,\mu^2)} \leq e^{O(\max(1/\eps,c_1(1-c_2) \eps y^2)} $ 
and if $|\mu| < \sqrt{\eps}/10000$, then  $\chi^2(A_{\mu}, \normal(0,1))=e^{O(1/\eps)}$. 
We thus have that
\begin{align*} 
\int_{-\infty}^{\infty} \chi^2(A_{\mu(y),\eps},\normal(0,1)) R(y) dy& \leq e^{O(1/\eps)} + O\left( \int_{c_1(1-c_2)/10000}^{\infty} y^2 R(y) dy \right) \\
& \leq e^{O(1/\eps)} + O\left( \int_{c_1(1-c_2)/10000}^{\infty} y^2 G(y) (1-\eps)/(1-\eps_{\mu(y)}) dy \right) \\
& \leq e^{O(1/\eps)} + O\left( \int_{c_1(1-c_2)/10000}^{\infty} y^2 G(y) (1-\eps)36 \mu(y)^2 dy \right) \\
& \leq e^{O(1/\eps)} + O\left( \int_{c_1(1-c_2)/10000}^{\infty} y^4 G(y) (1-\eps)36 c_1^2(1-c_2)^2 \eps dy \right) \\
& \leq e^{O(1/\eps)} + O(1) \leq e^{O(1/\eps)} \;.
\end{align*}
We thus have that $\chi_S(Q'_{v}, Q'_{v'}) \leq (v^T v')^4 e^{O(1/\eps)}$, 
as required.
\end{proof}

\begin{proof}[Proof of Theorem \ref{thm:SQ-lb} given Lemma \ref{lem:A-properties}]
The proof now follows that of Proposition 3.3 of \cite{DKS17-sq}. By Lemma 3.7 of~\cite{DKS17-sq}, for any $0 < c < 1/2$, there is a set $S$ of at least $2^{d^c}$ unit vectors in $\mathbb{R}^d$ such that for each pair of distinct $v, v'\in S$, it holds $|v \cdot v'| \le  O(d^{c-1/2})$.  Then, by Lemma~\ref{lem:uvchi2}, we have that for $v,v' \in S$ with $v\ne v'$,it holds $\chi^2_{\normal(0,I)}(P_v, P_{v'} ) \le {(v^T v')}^4e^{O(1/\eps)} = d^{4c-2}e^{O(1/\eps)}$ . And for $v= v' \in S$, it holds that $\chi^2_{\normal(0,I)}(P_v, P_{v'} ) \le e^{O(1/\eps)}$.

We thus have that the set of $P_v$ for $v \in S$ is $(\gamma,\beta)$-corrlated for $\gamma=d^{4c-2}e^{O(1/\eps)}$, $\beta=e^{O(1/\eps)}$. Thus applying Lemma 2.12 of~\cite{DKS17-sq}, 
we obtain that any SQ algorithm requires at least $2^{\Omega(d^c)} d^{4c-2}$ calls to the 
$$\mathrm{STAT}\left(O(d)^{2c-1} e^{O(1/\eps)} \right) \textrm{ or } \mathrm{VSTAT}\left( O(d)^{2-4c}/e^{O(1/\eps)} \right)$$ 
oracle to find $v$ and therfore $\beta$ within better than $O(\sqrt{\eps})$.
Note that $2^{\Omega(n^{c/2})} \geq \Omega(d^4)$ for any $c$. Hence, the total number of required queries is at least 
$2^{\Omega(d^{c/2})}$.
This completes the proof.

\end{proof}

\section{Proof of Lemma \ref{lem:A-properties}} \label{app-sec:SQ-moment-matching}
We restate Lemma~\ref{lem:A-properties} here for convenience. 

\vspace{.5cm}\noindent \textbf{Lemma~\ref{lem:A-properties}.} \emph{For any $\eps > 0$, $\mu \in \R$, there is a distribution $A_{\eps,\mu}$(also written as $A_{\mu}$ for simplicity) such that $A_{\mu}$ agree 
with the first $3$ moments of $\normal(0,1)$ and $A_{\mu} =(1-\eps_\mu) \normal(\mu,2/3)+\eps_\mu B_\mu$ 
for some distribution $B_{\mu}$ and $\eps_{\mu}$ satisfying:
\begin{itemize} 
\item If $|\mu| \geq \sqrt{\eps}/10000$, then $\eps_{\mu}/(1-\eps_{\mu}) \leq 36 \mu^2$ 
and $\chi^2(A_{\mu},\normal(0,1)) = e^{O(\max(1/\mu^2,\mu^2))}$.
\item If $|\mu| < \sqrt{\eps}/10000$, then $\eps_{\mu}=\eps$ and  $\chi^2(A_{\mu},\normal(0,1))=e^{O(1/\eps)}$.
\end{itemize}
}

We split this into a number of cases, each of which will be a mixture of three Gaussians. The parameters and weights of these Gaussians will need to be chosen to make the first three moments the same as that of $\normal(0,1)$ which requires satisfying a cubic equation.

We first deal with the case when $\mu \geq \sqrt{\eps}/10000$. We will need to further split this into cases.

\begin{lemma}\label{lem:peps}
We have the following:
\begin{itemize}
\item
For $0 \le \eps\le 0.42$, the distribution
$P_{1,\eps} = \frac{1}{9}\eps \normal(-\frac{1}{\sqrt{\eps}},a)+\frac{8}{9}\eps \normal(\frac{1}{2\sqrt{\eps}},b)+(1-\eps)\normal(-\frac{\sqrt{\eps}}{3(1-\eps)},c)$, where $a,b,c$ are defined as
\begin{align*}
a= \frac{2}{3}-\frac{1}{3(1-\eps)}-\frac{2\eps}{27(1-\eps)^2}\\
b=\frac{2}{3}-\frac{1}{12(1-\eps)}+\frac{\eps}{108(1-\eps)^2}\\
c=\frac{2}{3},
\end{align*}
has first moment $0$, second moment $1$, third moment $0$ with $a,b\in (0,2)$.

\item For $0.35\le \eps \le 0.78$, the distribution
$P_{2,\eps} = \frac{1}{9}\eps \normal(-\frac{2}{3\sqrt{\eps}},a)+\frac{8}{9}\eps \normal(\frac{1}{3\sqrt{\eps}},b)+(1-\eps)\normal(-\frac{2\sqrt{\eps}}{9(1-\eps)},c)$ where $a,b,c$ are defined as
\begin{align*}
a= -\frac{-162 \epsilon ^3+161 \epsilon ^2+144 \epsilon -135}{243 (\epsilon -1)^2 \epsilon }\\
b=-\frac{-648 \epsilon ^3+1121 \epsilon ^2-342 \epsilon -135}{972 (\epsilon -1)^2 \epsilon }\\
c=\frac{2}{3},
\end{align*}
has first moment $0$, second moment $1$, third moment $0$ with $a,b\in (0,2)$. 

\item
For $0.49 \leq \eps \leq 1$, the distribution 
$P_{3,\eps} = \frac{1}{8}(1-\eps) \normal(-\frac{1}{\sqrt{\frac{9}{8}(1-\eps)}},a)+(1-\eps) \normal(\frac{\sqrt{2}}{3\sqrt{1-\eps}},b)+(\frac{9}{8}\eps-\frac{1}{8})\normal(-\frac{\sqrt{8(1-\eps)}}{(9\eps-1)},c)$ , where $a,b,c$ are defined as
\begin{align*}
a=\frac{2 \left(27 \eps^2-18 \eps+7\right)}{3 \left(27 \eps^2-12
   \eps+1\right)}\\
b=\frac{2}{3}\\
c=\frac{2 \left(243 \eps^3-105 \eps^2-15 \eps+5\right)}{3 (3
   \eps-1) (9 \eps-1)^2}
\end{align*}
has first moment $0$, second moment $1$, third moment $0$ with $a,b\in (0,2)$.

\end{itemize}
\end{lemma}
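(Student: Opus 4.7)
The plan is to verify the lemma by direct moment computation for each of the three mixtures, followed by an elementary analysis of the rational functions defining $a$ and $b$. Since each $P_{i,\eps}$ is a mixture of three Gaussians $\sum_j w_j \normal(\mu_j, \sigma_j^2)$, I would use the identities
\[
\E[X] = \sum_j w_j \mu_j, \quad \E[X^2] = \sum_j w_j(\mu_j^2 + \sigma_j^2), \quad \E[X^3] = \sum_j w_j(\mu_j^3 + 3\mu_j \sigma_j^2),
\]
and substitute the prescribed means, weights, and variances. The parameters $a$, $b$, $c$ were engineered precisely so that these three equations are satisfied; in particular, $c = 2/3$ is forced by convenience and the other two unknowns $a, b$ are determined by solving the linear system obtained from the second and third moment equations (the first moment equation is already satisfied by the choice of means and weights, since, e.g., for $P_{1,\eps}$ we have $\frac{1}{9}\eps \cdot (-\frac{1}{\sqrt{\eps}}) + \frac{8}{9}\eps \cdot \frac{1}{2\sqrt{\eps}} + (1-\eps)\cdot(-\frac{\sqrt{\eps}}{3(1-\eps)}) = -\frac{\sqrt{\eps}}{9} + \frac{4\sqrt{\eps}}{9} - \frac{\sqrt{\eps}}{3} = 0$).

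For each of $P_{1,\eps}$, $P_{2,\eps}$, $P_{3,\eps}$ I would carry out this substitution in turn. The second moment condition reduces, after clearing the mean-squared contributions, to a linear equation of the form $\frac{1}{9}\eps\, a + \frac{8}{9}\eps\, b + (1-\eps) c = 1 - \text{(sum of }w_j\mu_j^2\text{)}$, and the third moment condition similarly reduces to $\frac{1}{9}\eps\, a \mu_{1} + \frac{8}{9}\eps\, b \mu_{2} + (1-\eps) c \mu_{3} = -\frac{1}{3}\text{(sum of }w_j\mu_j^3\text{)}$. Since $c$ is fixed at $2/3$, these become two linear equations in $a$ and $b$; solving them yields exactly the stated formulas. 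This is purely a mechanical symbolic computation (readily done by hand or verified in a computer algebra system) and is not the main obstacle.

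The nontrivial step is verifying $a, b \in (0, 2)$ for each of the three prescribed intervals of $\eps$. Take for instance the formula
\[
a(\eps) = \frac{2}{3} - \frac{1}{3(1-\eps)} - \frac{2\eps}{27(1-\eps)^2}
\]
from $P_{1,\eps}$. I would show that $a$ is a decreasing function of $\eps$ on $[0, 0.42]$ (by checking that its derivative is negative, which reduces to a sign check on a low-degree polynomial in $\eps/(1-\eps)$), and then verify the endpoint bounds $a(0) = 1/3 \in (0,2)$ and $a(0.42) > 0$ by direct numerical substitution. The same strategy applies to $b$ in each case, and to the more complicated rational functions appearing for $P_{2,\eps}$ and $P_{3,\eps}$: bound the numerator and denominator polynomials on the specified $\eps$-interval using standard monotonicity/discriminant arguments, and check endpoint values. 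Since each of these rational functions has degree at most $3$ in numerator and denominator, the sign analysis on a compact interval is finite and routine.

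The main obstacle, if any, is keeping the algebra organized: three separate mixtures each require three moment equations and two range verifications, and the expressions for $P_{3,\eps}$ in particular involve $(9\eps-1)^2$ in the denominator, so care must be taken that the chosen $\eps$-interval $[0.49, 1]$ stays safely away from $\eps = 1/9$. I would therefore structure the proof as three parallel cases, each with a ``moment verification'' sub-lemma (pure algebra) followed by a ``range verification'' sub-lemma (monotonicity plus endpoint evaluation), noting that the intervals $[0, 0.42]$, $[0.35, 0.78]$, $[0.49, 1]$ overlap and together cover $[0, 1]$, which is what is needed for the application in Lemma~\ref{lem:A-properties}.
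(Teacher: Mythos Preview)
Your proposal is correct and essentially matches the paper's approach: the paper's entire proof is the single sentence ``We verified these facts with the symbolic computation function of Mathematica,'' so your plan to carry out the moment identities and rational-function range checks is exactly what such a verification amounts to, just spelled out explicitly rather than delegated to a CAS.
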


\begin{proof}
We verified these facts with the symbolic computation function of Mathematica.
\end{proof}

 Then we consider the remaining case when $\mu<\sqrt{\eps}/10000$.
\begin{lemma}\label{lem:smally}
Given $\mu<\sqrt{\eps}/10000$, the distribution $P_{4,\mu,\eps} = \eps_1 \normal(\mu_1,\sigma_1)+(1-\eps_1) \normal(\mu_2,\sigma_2)+(1-\eps) \normal(\mu,2/3)$ has first three moments as $0,1,0$ and that  $|\mu_1|,|\mu_2|<2/\sqrt{\eps}$ $0.9<\sigma_1,\sigma_2<1.1$.
\end{lemma}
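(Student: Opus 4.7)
The plan is to reduce the problem to a two-point moment matching problem by fixing both variances. Set $\sigma_1 = \sigma_2 = 1$, which trivially satisfies the $0.9 < \sigma_i < 1.1$ requirement; this leaves three free parameters $\eps_1, \mu_1, \mu_2$ to match the three moment conditions. Computing the first three moments of $P_{4,\mu,\eps}$ and equating them to $0, 1, 0$, the required moments of the two-Gaussian part $B = \eps_1 \normal(\mu_1,1) + (1-\eps_1)\normal(\mu_2,1)$ become $m_1 = -(1-\eps)\mu/\eps$, $m_2 = 1/(3\eps) + 2/3 - (1-\eps)\mu^2/\eps$, and $m_3 = -(1-\eps)(\mu^3 + 2\mu)/\eps$. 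Using the standard moment formulas for a Gaussian, this reduces to the two-point moment problem of finding $\eps_1, \mu_1, \mu_2$ with $\eps_1 \mu_1 + (1-\eps_1)\mu_2 = m_1$, $\eps_1 \mu_1^2 + (1-\eps_1)\mu_2^2 = m_2 - 1$, and $\eps_1 \mu_1^3 + (1-\eps_1)\mu_2^3 = m_3 - 3m_1$.

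For the base case $\mu = 0$, the target moments become $(0,\, 1/(3\eps) - 1/3,\, 0)$, and there is an explicit symmetric solution: take $\eps_1 = 1/2$ and $\mu_1 = -\mu_2 = \sqrt{1/(3\eps) - 1/3}$. Since this value is $\Theta(1/\sqrt{\eps})$, indeed $|\mu_1|, |\mu_2| < 2/\sqrt{\eps}$ for sufficiently small $\eps$ (and one can check the constant $2$ works across the regime of interest).

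For general $|\mu| < \sqrt{\eps}/10000$, I would use the implicit function theorem, viewing the three moment equations as a map $F : (\eps_1, \mu_1, \mu_2) \mapsto (W_1, W_2, W_3)$ from parameter space to moment space. At the symmetric base point the Jacobian of $F$ can be computed explicitly, and it is non-singular (the determinant scales like $|\mu_1|^3 = \Theta(\eps^{-3/2}) \neq 0$, reflecting the fact that the two mass points are well separated). Consequently, the perturbation to the target moments caused by changing $\mu$ from $0$ to a value of size at most $\sqrt{\eps}/10000$ produces a corresponding perturbation in $(\eps_1, \mu_1, \mu_2)$ of order $|\mu|/\sqrt{\eps} \ll 1$, which keeps $|\mu_i|$ safely below $2/\sqrt{\eps}$ and keeps $\eps_1 \in (0,1)$.

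The main obstacle is verifying the quantitative non-degeneracy of the Jacobian and tracking constants carefully enough to confirm that the perturbed $(\mu_1,\mu_2)$ remains strictly inside the required interval uniformly over the allowed range of $\mu$ and $\eps$; this is a routine but somewhat tedious calculation. An alternative, which avoids the implicit function theorem, would be to explicitly solve the two-point moment problem by recognizing that $\mu_1, \mu_2$ are roots of a quadratic whose coefficients are rational functions of $m_1, m_2, m_3$ and $\eps_1$, then eliminate to a single cubic-like equation in one parameter and verify by direct estimation that the roots lie in the desired range; this parallels the Mathematica-verified constructions used for $P_{1,\eps}, P_{2,\eps}, P_{3,\eps}$ in Lemma~\ref{lem:peps}.
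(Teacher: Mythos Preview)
Your approach is correct in outline and genuinely different from the paper's. You fix $\sigma_1=\sigma_2=1$ and reduce to a classical two-point moment problem in three unknowns $(\eps_1,\mu_1,\mu_2)$, solve the symmetric base case $\mu=0$ explicitly, and then perturb via the inverse/implicit function theorem. The paper instead leaves $\sigma_1,\sigma_2$ free (this is why the conclusion only asserts $0.9<\sigma_i<1.1$ rather than $\sigma_i=1$), imposes the auxiliary constraints $\mu_2=-c\mu_1$, $\eps_1\mu_1^3+\eps_2\mu_2^3=0$, and a convenient normalization of the second-moment contribution, and then solves the resulting system in closed form: a quadratic gives $c$ and $\mu_1$ explicitly as functions of $\mu,\eps$, after which $\sigma_1,\sigma_2$ come from a $2\times 2$ linear system. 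What the paper gains is explicit formulas that can be bounded directly, with no perturbative argument; what your route gains is conceptual simplicity (the problem collapses to the textbook two-atom moment problem) at the cost of either an IFT step or the explicit quadratic elimination you mention as an alternative.

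Two small points. First, your claim that the Jacobian determinant scales like $|\mu_1|^3$ is off by one: at the base point $(\tfrac12,a,-a)$ with $a=\sqrt{(1-\eps)/(3\eps)}$, a direct computation gives determinant $4a^4$, not $\Theta(a^3)$. This does not affect the argument. Second, to make the IFT step uniform in $\eps$ you should rescale: with $\tilde\mu_i=\mu_i/a$ the moment map $(\eps_1,\tilde\mu_1,\tilde\mu_2)\mapsto (W_1/a,W_2/a^2,W_3/a^3)$ is independent of $\eps$, has nonsingular Jacobian at $(\tfrac12,1,-1)$, and the target perturbation in these rescaled coordinates is $O(\mu/\sqrt{\eps})=O(1/10000)$ in every entry. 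This makes the ``tedious calculation'' you flag essentially a one-line application of the inverse function theorem with an $\eps$-free map.
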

\begin{proof}
Let $\mu_2 =  -c \mu_1$, to simplify the problem, we require $\eps_1 \mu_1^3+\eps_2 \mu_2^3=0$ and hence $\eps_1 = c^3 \eps_2=\frac{c^3}{1+c^3}\eps$. The first moment equation requires 
$$\frac{c^3}{1+c^3}\eps \mu_1- \frac{1}{1+c^3}\eps c\mu_1+(1-\eps)\mu=0 \;.$$
The second moment condition requires
$$
\frac{c^3}{1+c^3}\eps(\mu_1^2+\sigma_1)+ \frac{1}{1+c^3}\eps(c^2\mu_1^2+\sigma_2)+(1-\eps)(\mu^2+2/3)=1.
$$
The third moment condition requires
\begin{align*}
\frac{c^3}{1+c^3}\eps(\mu_1^3 + 3\mu_1 \sigma_1) - \frac{1}{1+c^3}\eps(c^3\mu_1^3 + 3c\mu_1 \sigma_2)+(1-\eps)(\mu^3+2\mu).\\
=\frac{3c^3}{1+c^3}\eps\mu_1 \sigma_1 - \frac{3c}{1+c^3}\eps\mu_1 \sigma_2+(1-\eps)(\mu^3+2\mu)=0.
\end{align*}
To simplify the problem, we require 
$$
1 - \frac{c^3+c^2}{1+c^3}\eps \mu_1^2-(1-\eps)(\mu^2+2/3)=\eps \;,
$$
which also implies
$$
\frac{c^3}{1+c^3}\sigma_1+ \frac{1}{1+c^3}\sigma_2=1
$$
by the second moment equation. We can solve for the value of $c$ and $\mu_1$ together using the first moment condition. The solution is the following:
\begin{align*}
\mu_1 = \frac{\left(1- 3 \mu ^2\right) \left(3 \mu ^2 (1-\epsilon)-\sqrt{3} \sqrt{\mu ^2
   (\epsilon -1) \left(9 \mu ^2+3 \mu ^2 \epsilon -4 \epsilon \right)}\right)}{6 \mu 
   \left(\epsilon -3 \mu ^2\right)}\\
c = \frac{\left(2-3 \mu ^2\right) \epsilon -3 \mu ^2-\sqrt{3} \sqrt{\mu ^2 (\epsilon -1)
   \left(9 \mu ^2+3 \mu ^2 \epsilon -4 \epsilon \right)}}{2 \left(\epsilon -3 \mu
   ^2\right)}
\end{align*}
The range of $\sqrt{-
   \left(9 \mu ^2+3 \mu ^2 \epsilon -4 \epsilon \right)}$ is $[1.99\sqrt{\eps},2\sqrt{\eps}]$. Assume that $\mu\le \sqrt{\eps}/10000$, we have that $0.99<c<1.01$ and $-\sqrt{\frac{4(1-\eps)}{\eps}}<\mu_1<-\sqrt{\frac{(1-\eps)}{2\eps}}$. Plugging the range of $\mu_1$ into the second moment condition yields:
$$
   \frac{3c^3}{1+c^3} \sigma_1 - \frac{3c}{1+c^3} \sigma_2=\frac{(1-\eps)(\mu^3+2\mu)}{\eps\mu_1}\le 0.01. 
$$
Solving the linear equations we get $0.9<\sigma_1,\sigma_2<1.1$.
\end{proof}

Finally we can combine the corrupted distributions constructed in different cases into a single corrupted distribution $A_{\mu,\eps}$ by the following definition.
\begin{definition}
\[
A_{\mu,\eps} = 
\begin{cases}
P_{1,\eps_\mu} \text{ where } \eps \text{ is the solution of } \frac{\sqrt{\eps_\mu}}{3(1-\eps_\mu)}=\mu& \sqrt{\eps}/10000 \le \mu\le 0.3\\
P_{2,\eps_\mu} \text{ where } \eps \text{ is the solution of } \frac{\sqrt{\eps}}{6(1-\eps_\mu)}=\mu& 0.3<\mu<0.7\\
P_{3,\eps_\mu} \text{ where } \eps_\mu=1-\frac{2}{9\mu^2} & 0.7\le \mu \\
P_{4,\mu, \eps}  &\mu \le \sqrt{\eps}/10000\\
\end{cases}
\]
\end{definition}
\begin{lemma}
$A_{\mu,\eps}$ is well-defined for any $\mu$ and has first three moments as $0,1,0$.
\end{lemma}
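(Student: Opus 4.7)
\begin{proofsketch}
The lemma makes two claims: well-definedness of $A_{\mu,\eps}$ for every $\mu$, and the moment identities. The second claim will follow essentially for free from Lemma~\ref{lem:peps} and Lemma~\ref{lem:smally} once well-definedness is established, so the real work is confirming that the implicit/explicit choices of $\eps_\mu$ in each of the four pieces of the definition land in the parameter ranges required by the corresponding component lemma. By symmetry we may assume $\mu\ge 0$ throughout.

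The plan is to walk through the four cases in order. For Case~3 ($\mu\ge 0.7$), $\eps_\mu = 1 - 2/(9\mu^2)$ is explicit; we check that $\eps_\mu\ge 1 - 2/(9\cdot 0.49) > 0.49$, which is the lower endpoint required by Lemma~\ref{lem:peps} (third bullet), and that the mean $\sqrt{2}/(3\sqrt{1-\eps_\mu})$ of the weight $(1-\eps_\mu)$ component of $P_{3,\eps_\mu}$ simplifies to $\mu$ under this substitution. For Case~4 ($\mu\le \sqrt{\eps}/10000$), Lemma~\ref{lem:smally} already exhibits explicit values of $\eps_1,\mu_1,\mu_2,\sigma_1,\sigma_2$ and confirms that the variances lie in $(0.9,1.1)$ and the means are bounded, so $P_{4,\mu,\eps}$ is directly a valid mixture. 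For Cases~1 and~2, I would study the functions $g_1(\eps) = \sqrt{\eps}/(3(1-\eps))$ and $g_2(\eps) = 2\sqrt{\eps}/(9(1-\eps))$, which are the magnitudes of the means of the $\normal(\cdot,2/3)$ components of $P_{1,\eps}$ and $P_{2,\eps}$ respectively. Each is continuous and strictly increasing on $(0,1)$, vanishes at $0$, and blows up at $1$, so by the intermediate value theorem each $\mu$ in the prescribed range admits a unique $\eps_\mu$. A short endpoint calculation shows $g_1([0,0.42])\supseteq [0,0.3]$ and $g_2([0.35,0.78])\supseteq (0.3,0.7)$, placing $\eps_\mu$ squarely within the intervals where Lemma~\ref{lem:peps} applies.

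Once $\eps_\mu$ is in the valid range, the first three moments of $A_{\mu,\eps}$ are exactly $0,1,0$: Cases~1--3 are immediate from the corresponding bullets of Lemma~\ref{lem:peps}, while Case~4 is Lemma~\ref{lem:smally}. The only subtlety is matching seams at the boundaries $\mu = \sqrt{\eps}/10000$, $0.3$, and $0.7$: since the definition uses half-open conventions, each $\mu$ falls into exactly one branch, and we just need the corresponding $\eps_\mu$ to be well-defined on a closed interval containing the branch to absorb the endpoint. This is already implicit in the endpoint calculations above.

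The one genuine obstacle is the bookkeeping in Cases~1 and~2, reconciling the implicit equation in the definition with the actual mean of the weight-$(1-\eps)$ component of $P_{1,\eps}$ respectively $P_{2,\eps}$, and checking that the interval of $\mu$-values covered under the constraints $0\le\eps_\mu\le 0.42$ and $0.35\le\eps_\mu\le 0.78$ meets flush with the adjacent cases. Everything else reduces to verifying monotonicity of explicit rational functions on subintervals of $(0,1)$ and substituting the resulting $\eps_\mu$ into the moment identities already proved in Lemmas~\ref{lem:peps} and~\ref{lem:smally}.
\end{proofsketch}
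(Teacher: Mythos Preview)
Your proof sketch is correct and follows essentially the same approach as the paper: verify that in each of the four branches the implicitly or explicitly chosen $\eps_\mu$ lands in the parameter range where the corresponding $P_{i,\eps}$ from Lemma~\ref{lem:peps} (or $P_{4,\mu,\eps}$ from Lemma~\ref{lem:smally}) is well-defined, after which the moment identities are inherited directly. The paper's own proof is terser---it simply records the numerical endpoint values of $\eps_\mu$ in each branch---whereas you spell out the monotonicity of $g_1,g_2$ and the intermediate value argument that guarantees existence and uniqueness of the solution, and you flag the sign/mean reconciliation in Cases~1--2; both are useful additions but not a different strategy.
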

\begin{proof}
We need to verify that in each case $\eps_\mu$ lies in the correct range for $P_{1,\eps_\mu},P_{2,\eps_\mu},P_{3,\eps_\mu}$ to be well-defined. When $\mu\le 0.3$, the solution of $\frac{\sqrt{\eps}}{3(1-\eps)}=\mu$ is less than $0.35$. When $0.3 \le \mu\le 0.7$, the solution of $\frac{2\sqrt{\eps}}{9(1-\eps)}=\mu$ is between $0.48$ and $0.75$. When $0.7 \le \mu$, $1-\frac{2}{9\mu^2}$ is greater than $0.54$.
\end{proof}

\begin{lemma}\label{lem:amu}
$A_{\eps,\mu}=(1-\eps_\mu)\normal(\mu,2/3)+\eps_\mu B_{\eps,\mu}$ for some distribution $B_{\eps,\mu}$ and when $\mu \geq \sqrt{\eps}/10000$, $36\mu^2\ge \frac{{\eps_\mu}}{(1-\eps_\mu)}$.
\end{lemma}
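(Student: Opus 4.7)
The plan is to verify both claims by a case analysis following the piecewise definition of $A_{\mu,\eps}$. In each of the four cases, the construction is an explicit mixture of three Gaussians, and in three of the cases one of the three components has weight $1-\eps_\mu$ and variance $2/3$. The first task will be to check that this distinguished component equals $(1-\eps_\mu)\normal(\mu,2/3)$ (possibly after a symmetric reflection $\mu\mapsto -\mu$, since the whole problem is invariant under this reflection because the Gaussian lower bound construction only depends on the direction $v$ through $(v\cdot v')^4$). Then $B_{\eps,\mu}$ is defined as the remainder divided by $\eps_\mu$; validity as a probability distribution follows because the other two Gaussian components already have non-negative weights in $P_{1,\eps}$, $P_{2,\eps}$, $P_{3,\eps}$, $P_{4,\mu,\eps}$ and by Lemma~\ref{lem:peps} and Lemma~\ref{lem:smally} the variances of those components lie in $(0,2)$, so they are well-defined Gaussians.

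For the bound $36\mu^2\ge \eps_\mu/(1-\eps_\mu)$ when $\mu\ge \sqrt{\eps}/10000$, I would handle each case separately using the defining equation for $\eps_\mu$:
\begin{itemize}
\item In Case 1 ($\sqrt{\eps}/10000\le \mu\le 0.3$) the defining relation $\sqrt{\eps_\mu}/(3(1-\eps_\mu))=\mu$ squares to $\eps_\mu/(1-\eps_\mu)=9\mu^2(1-\eps_\mu)\le 9\mu^2\le 36\mu^2$.
\item In Case 2 ($0.3<\mu<0.7$) the analogous relation (matching the last component of $P_{2,\eps}$, which has mean $\pm 2\sqrt{\eps_\mu}/(9(1-\eps_\mu))$) gives $\eps_\mu/(1-\eps_\mu)\le (81/4)\mu^2\le 36\mu^2$.
\item In Case 3 ($\mu\ge 0.7$) we have $\eps_\mu=1-\tfrac{2}{9\mu^2}$ explicitly, so $\eps_\mu/(1-\eps_\mu)=\tfrac{9\mu^2}{2}-1\le 36\mu^2$.
\end{itemize}
In each case I would also check that $\eps_\mu$ falls inside the interval for which Lemma~\ref{lem:peps} guarantees the auxiliary variances $a,b\in(0,2)$, so that the resulting mixture is an actual distribution.

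The main (but routine) obstacle is the bookkeeping in Case 2, where the equation defining $\eps_\mu$ from $\mu$ (as written in the definition of $A_{\mu,\eps}$) has to be reconciled with the mean of the distinguished Gaussian appearing in $P_{2,\eps}$, and where the ranges $0.35\le\eps_\mu\le 0.78$ required by Lemma~\ref{lem:peps} must be checked at the endpoints $\mu=0.3$ and $\mu=0.7$; similarly one should verify the Case~1/Case~3 overlaps of $\eps_\mu$ with the feasibility ranges. The case $\mu<\sqrt{\eps}/10000$ plays no role in the $36\mu^2$ bound (the lemma statement excludes it), but for completeness one confirms the mixture identity $A_{\eps,\mu}=\eps_1\normal(\mu_1,\sigma_1)+(1-\eps_1)\normal(\mu_2,\sigma_2)+(1-\eps)\normal(\mu,2/3)$ directly from the definition of $P_{4,\mu,\eps}$ in Lemma~\ref{lem:smally}, with $B_{\eps,\mu}$ then the normalised sum of the first two terms.
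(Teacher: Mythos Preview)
Your proposal is correct and follows essentially the same case-by-case argument as the paper: in each regime you square (or rearrange) the defining relation between $\mu$ and $\eps_\mu$ to get the bound $\eps_\mu/(1-\eps_\mu)\le 9\mu^2$, $(81/4)\mu^2$, or $\tfrac{9}{2}\mu^2-1$, all of which are at most $36\mu^2$. You are in fact more careful than the paper's own proof, which only writes out the inequality and leaves the mixture decomposition (including the sign/reflection issue you flag and the Case~2 discrepancy between the definition and the mean of the distinguished component of $P_{2,\eps}$) implicit.
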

\begin{proof}
When $\mu \le 0.3$, we have $\frac{\sqrt{\eps}}{3(1-\eps)}=\mu$ which yields $\frac{{\eps}}{(1-\eps)^2}=9\mu^2\ge \frac{{\eps}}{(1-\eps)}$. When $0.3 \le \mu \le 0.7$, we have $\frac{2\sqrt{\eps}}{9(1-\eps)}=\mu$ which yields $\frac{{\eps}}{(1-\eps)^2}=\frac{81}{4}\mu^2\ge \frac{{\eps}}{(1-\eps)}$. When $\mu \ge 0.7$, we have $\frac{\eps}{(1-\eps)} = \frac{9}{2}\mu^2-1$.
\end{proof}

Finally, to complete the proof of Lemma \ref{lem:A-properties}, we will need to bound $\chi^2(A_{\eps,\mu}, \normal(0,1))$. Notice that by Fact~\ref{fact:chi2} and Fact~\ref{fact:chi2cor}, we have $\chi^2(\normal(\mu_1,a),\normal(\mu_2,b)=e^{O(\mu_1^2+\mu_2^2)}$ and $\chi^2_{\normal(0,1)}(\normal(\mu_1,a),\normal(\mu_2,b)=e^{O(\mu_1^2+\mu_2^2)}$ for constant $a,b\in (0,2)$. In the case where $\mu\le \sqrt{\eps}/10000$, it is straightforward to verify that all the means of the Gaussian distributions are $O(\max(\frac{1}{\mu^2},\mu^2))$. Hence by applying Fact~\ref{fact:mixchi2} repeatedly, we claim that $\chi^2(A_{\eps,\mu},\normal(0,1))=e^{O(\max(\frac{1}{\mu^2},\mu^2))}$ when $\mu\ge \sqrt{\eps}/10000$. In the case where  $\mu\le \sqrt{\eps}/10000$, the means of all the gaussian of $P_{4,\mu,\eps}$  are all bounded by $O(1/\sqrt{\eps})$ regardless of $\mu$ and hence we have $\chi^2(P_{4,\mu,\eps},\normal(0,1))=e^{O(1/\eps)}$. Hence, the proof of Lemma~\ref{lem:A-properties} is complete.


The following three technical facts regarding the chi-square distance and the Gaussian distribution can be verified easily, we omit some of the proof.
\begin{fact}\label{fact:mixchi2}
For distributions $B,C,D$ and $w\in [0,1]$, we have that $\chi^2 (wB+(1-w)C,D)=w^2\chi^2(B,D)+ (1 - w)^2\chi^2(C,D) + 2w(1 - w)\chi_D(B,C)$.
\end{fact}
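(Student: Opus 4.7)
The plan is to verify the identity by direct algebraic manipulation of the definitions. Recall that for distributions $P,Q$ with $Q$ dominating $P$, the chi-square divergence can be written as $\chi^2(P,Q)=\int (P(x)-Q(x))^2/Q(x)\,dx$, and the chi-square correlation is $\chi_D(B,C)=\int (B(x)-D(x))(C(x)-D(x))/D(x)\,dx$.

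First I would use the fact that $w+(1-w)=1$ to rewrite the mixture's deviation from $D$ as
\[
(wB+(1-w)C)(x)-D(x) \;=\; w(B(x)-D(x))+(1-w)(C(x)-D(x)).
\]
Squaring this pointwise gives three terms: $w^2(B-D)^2$, $(1-w)^2(C-D)^2$, and the cross-term $2w(1-w)(B-D)(C-D)$. Dividing by $D(x)$ and integrating, each of the first two terms yields $w^2\chi^2(B,D)$ and $(1-w)^2\chi^2(C,D)$ respectively by definition, while the cross-term yields exactly $2w(1-w)\chi_D(B,C)$. Summing these recovers the claimed identity.

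The only subtlety is that one must be sure that all the integrals exist (i.e., that $B,C$ are absolutely continuous with respect to $D$), but this is a standing assumption whenever $\chi^2(B,D)$ and $\chi^2(C,D)$ are both finite, which is the only case in which the right-hand side of the identity is meaningful. There is no real obstacle here — the lemma is a two-line algebraic expansion of the definitions, and I would present it as such in the appendix.
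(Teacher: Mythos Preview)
Your proof is correct and follows essentially the same approach as the paper: a direct expansion of the square followed by identification of each term with the corresponding $\chi^2$ or $\chi_D$ quantity. The only cosmetic difference is that you work with the centered form $\int (P-D)^2/D$ while the paper expands $\int P^2/D = 1+\chi^2(P,D)$ and tracks the extra $+1$'s, which cancel at the end; both routes are equally valid one-line computations.
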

\begin{proof}
\begin{align*}
1 + \chi^2(wB + (1 - w)C,D) =   \int (wB(x) + (1 - w)C(x))^2/D(x)dx\\
=w^2 \int B^2(x)/D(x)dx+(1-w)^2  \int C(x)^2/D(x)dx +2w(1-w)  \int B(x)C(x)/D(x)dx \\
= w^2(1 + \chi^2(B, D)) + (1 - w)^2(1 + \chi^2(C, D)) + 2w(1 - w)(1 + \chi_D(B, C))\\
= 1 + w^2\chi^2(B,D) + (1 - w)^2\chi^2(C,D) + 2w(1 - w)\chi_D(B,C) .
\end{align*}
\end{proof}
\begin{fact}\label{fact:chi2}
$$
\chi^2(\normal(\mu_1,\sigma_1^2),\normal(\mu_2,\sigma_2^2)) =  \frac{\sigma_2^2}{\sigma_1\sqrt{2\sigma_2^2 - \sigma_1^2}}
    e^{\frac{(\mu_1-\mu_2)^2}{2\sigma_2^2 - \sigma_1^2}} - 1.
$$
\begin{fact}\label{fact:chi2cor}
$$
\chi^2_{\normal(0,1)}(\normal(\mu_1, \sigma_1^2),\normal(\mu_2, \sigma_2^2))=\frac{\exp \left(-\frac{\mu_1^2 \left(\sigma_2^2-1\right)+2 \mu_1 \mu_2+\mu_2^2 \left(\sigma_1^2-1\right)}{2 \sigma_1^2 \left(\sigma_2^2-1\right)-2 \sigma_2^2}\right)}{ \sqrt{\sigma_1^2+\sigma_2^2-\sigma_1^2\sigma_2^2}}-1
$$
\end{fact}

\end{fact}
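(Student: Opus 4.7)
\begin{proofof}{Fact~\ref{fact:chi2cor}}
The plan is a direct computation from the definition of chi-square divergence relative to a reference distribution, followed by a Gaussian integral and algebraic simplification. By definition,
$$
1 + \chi^2_{\normal(0,1)}(\normal(\mu_1,\sigma_1^2),\normal(\mu_2,\sigma_2^2)) = \int_{-\infty}^{\infty} \frac{\normal(\mu_1,\sigma_1^2)(x)\,\normal(\mu_2,\sigma_2^2)(x)}{\normal(0,1)(x)}\,dx.
$$
Substituting the three Gaussian densities, the prefactor becomes $\frac{1}{\sqrt{2\pi}\,\sigma_1\sigma_2}$, and the three exponentials combine into $\exp(-\tfrac{A}{2}x^2 + Bx - \tfrac{C}{2})$ with
$$
A = \frac{1}{\sigma_1^2} + \frac{1}{\sigma_2^2} - 1 = \frac{\sigma_1^2 + \sigma_2^2 - \sigma_1^2 \sigma_2^2}{\sigma_1^2 \sigma_2^2},\quad B = \frac{\mu_1}{\sigma_1^2} + \frac{\mu_2}{\sigma_2^2},\quad C = \frac{\mu_1^2}{\sigma_1^2} + \frac{\mu_2^2}{\sigma_2^2}.
$$

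The next step is to complete the square in $x$, obtaining $-\tfrac{A}{2}(x - B/A)^2 + \tfrac{B^2}{2A} - \tfrac{C}{2}$, and apply the standard Gaussian integral $\int e^{-A(x-x_0)^2/2}\,dx = \sqrt{2\pi/A}$ (valid whenever $A>0$, i.e.\ whenever $\sigma_1^2 + \sigma_2^2 > \sigma_1^2\sigma_2^2$; outside this regime the integral diverges and the $\chi^2$ is formally $+\infty$). Combining with the prefactor gives
$$
\frac{1}{\sqrt{2\pi}\,\sigma_1\sigma_2}\cdot\sqrt{\frac{2\pi}{A}}\cdot\exp\!\left(\frac{B^2}{2A}-\frac{C}{2}\right) = \frac{1}{\sqrt{\sigma_1^2+\sigma_2^2-\sigma_1^2\sigma_2^2}}\cdot\exp\!\left(\frac{B^2}{2A}-\frac{C}{2}\right),
$$
which already matches the claimed denominator of the lemma.

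The remaining work, and the only real obstacle, is to simplify the exponent into the stated closed form. Clearing denominators, one finds
$$
\frac{B^2}{A} - C = \frac{(\mu_1\sigma_2^2 + \mu_2\sigma_1^2)^2 - (\mu_1^2\sigma_2^2 + \mu_2^2\sigma_1^2)(\sigma_1^2+\sigma_2^2-\sigma_1^2\sigma_2^2)}{\sigma_1^2\sigma_2^2(\sigma_1^2+\sigma_2^2-\sigma_1^2\sigma_2^2)}.
$$
Expanding the numerator, the $\mu_1^2\sigma_2^4$ and $\mu_2^2\sigma_1^4$ terms cancel, and the common factor $\sigma_1^2\sigma_2^2$ pulls out, leaving
$$
\frac{B^2}{A} - C = \frac{2\mu_1\mu_2 + \mu_1^2(\sigma_2^2-1) + \mu_2^2(\sigma_1^2-1)}{\sigma_1^2+\sigma_2^2-\sigma_1^2\sigma_2^2}.
$$
Finally, using the identity $2(\sigma_1^2+\sigma_2^2-\sigma_1^2\sigma_2^2) = -\bigl(2\sigma_1^2(\sigma_2^2-1) - 2\sigma_2^2\bigr)$ rewrites $\tfrac{B^2}{2A} - \tfrac{C}{2}$ in exactly the form appearing in the lemma statement, after which subtracting $1$ from the whole expression yields the claimed formula.
\end{proofof}
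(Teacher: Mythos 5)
Your computation for Fact~\ref{fact:chi2cor} is correct, and it fills a gap: the paper simply asserts these facts (``we omit some of the proof''), so there is no paper argument to compare against. Your plan---write out the ratio of Gaussian densities, collect the quadratic form in $x$, complete the square, evaluate the Gaussian integral, then do the algebra on the exponent---is exactly what one would do. I re-derived each step and it all checks out: the denominator $\sqrt{\sigma_1^2 + \sigma_2^2 - \sigma_1^2\sigma_2^2}$ drops out as you say, the $\mu_1^2\sigma_2^4$ and $\mu_2^2\sigma_1^4$ terms cancel in the numerator and the factor $\sigma_1^2\sigma_2^2$ pulls out, and the sign flip $2(\sigma_1^2+\sigma_2^2-\sigma_1^2\sigma_2^2) = -\bigl(2\sigma_1^2(\sigma_2^2-1)-2\sigma_2^2\bigr)$ converts your clean form into the slightly awkward form printed in the lemma. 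You also correctly flag the convergence condition $\sigma_1^2 + \sigma_2^2 > \sigma_1^2\sigma_2^2$, which the paper leaves implicit (and which holds in the regime the paper actually uses, $\sigma_1^2,\sigma_2^2 \in (0,2)$ and not both large).

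One small thing worth noting: the statement as printed actually nests two facts---Fact~\ref{fact:chi2}, which gives $\chi^2\bigl(\normal(\mu_1,\sigma_1^2),\normal(\mu_2,\sigma_2^2)\bigr)$, and Fact~\ref{fact:chi2cor}, the reference-measure version you prove. You only address the second. The first is an entirely analogous Gaussian integral (ratio $\normal(\mu_1,\sigma_1^2)^2 / \normal(\mu_2,\sigma_2^2)$, complete the square, convergence for $2\sigma_2^2 > \sigma_1^2$), so it would be natural to add a sentence saying that the same method gives Fact~\ref{fact:chi2} as a special case of the calculation with one density squared.
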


\end{document}